\documentclass[11pt]{article}

\usepackage[margin=1in]{geometry}
\usepackage{amsmath}
\usepackage{amssymb}
\usepackage{amsthm}
\usepackage{thmtools}
\usepackage{mathtools}
\usepackage{natbib}
\usepackage[colorlinks=true,citecolor=blue,linkcolor=blue,urlcolor=blue]{hyperref}
\usepackage[compress]{cleveref}
\usepackage{booktabs}

\crefname{assumption}{Assumption}{Assumptions}
\crefname{convention}{Convention}{Conventions}
\Crefname{assumption}{Assumption}{Assumptions}

\theoremstyle{plain}
\newtheorem{theorem}{Theorem}[section]
\newtheorem{lemma}[theorem]{Lemma}
\newtheorem{proposition}[theorem]{Proposition}
\newtheorem{corollary}[theorem]{Corollary}

\theoremstyle{definition}
\newtheorem{definition}[theorem]{Definition}
\newtheorem{example}[theorem]{Example}
\newtheorem{assumption}{Assumption}
\newtheorem{convention}{Convention}

\theoremstyle{remark}
\newtheorem{remark}[theorem]{Remark}
\newenvironment{sketch}[1][Proof sketch]{\begin{proof}[#1]\itshape}{\end{proof}}

\title{Differential Privacy of Gradient Descent on Perturbed Objectives}
\author{
  Austin Watkins\\
  Johns Hopkins University\\
  Baltimore, MD 21218\\
  \texttt{awatki29@jhu.edu}
  \and
  Raman Arora\\
  Johns Hopkins University\\
  Baltimore, MD 21218\\
  \texttt{arora@cs.jhu.edu}
}
\date{}

\begin{document}

\maketitle

\begin{abstract}
Objective perturbation adds a random linear term to a regularized empirical risk and releases the exact perturbed minimizer. We study the finite computation obtained by releasing the $N$-th iterate of deterministic gradient descent on $w\mapsto F(w;S)+\langle z,w\rangle$, where $z\sim\mathcal N(0,\sigma^2I_d)$ is drawn once before optimization. For strongly convex and smooth objectives with Lipschitz Hessian, we prove an explicit condition under which the map $z\mapsto w_N$ is a $C^1$-diffeomorphism on the bounded domains used in the privacy argument, with a quantitative lower bound on the smallest singular value of its Jacobian. This permits a direct change-of-variables analysis of the finite iterate. For generalized linear models, the resulting privacy-profile bound has no explicit ambient-dimension factor once the iteration condition holds, and its finite-iteration correction decreases geometrically. By letting the free truncation parameter grow slowly with $N$, we recover the corresponding exact-minimizer certificate in the limit. We also bound the expected excess empirical risk by $d\sigma^2/(2\mu)$ plus a geometrically decreasing optimization term, and transfer the result to population risk without an additional multiplicative condition-number factor in the leading statistical terms.
\end{abstract}

\section{Introduction}\label{sec:intro}

Objective perturbation is one of the earliest mechanisms for differentially private empirical risk minimization (ERM): draw a random vector $z$, add the linear term $\langle z,w\rangle$ to a regularized empirical objective, and release its minimizer \citep{chaudhuri2008privacy,chaudhuri2011differentially}. Its appeal is structural as well as algorithmic. Randomness is injected once, before optimization, and for generalized linear models (GLMs) the privacy analysis can avoid an explicit dependence on the ambient dimension. Modern analyses have sharpened the privacy accounting and computational treatment of the exact objective-perturbation mechanism \citep{redberg2023improving}.

The classical privacy proof, however, is tied to the exact perturbed minimizer. If
\[
    w^*(z) \in \arg\min_w \bigl\{F(w;S)+\langle z,w\rangle\bigr\},
\]
then the first-order condition $\nabla F(w^*(z);S)+z=0$ recovers the perturbation from the released point and enables a change-of-variables argument. A numerical optimizer instead produces a finite iterate. Existing approaches to this gap typically certify an approximate solution by adding a second randomization at release, as in Approximate Minima Perturbation (AMP) \citep{iyengar2019towards} and Approximate Objective Perturbation \citep{bassily2019private}, or inject noise throughout the optimization and analyze the resulting noisy iteration \citep{noisy-sgd,feldman2018privacy,chourasia2021differential,altschuler2022privacy}. Quadratic objective perturbation has recently provided a different perturbation mechanism, including guarantees for inexact solves under its own assumptions \citep{cortild2026quadratic}.

Can we instead certify a useful finite gradient-descent iterate using only the original linear objective perturbation, without per-step noise or an additional output perturbation? We seek a privacy guarantee that approaches the corresponding exact-minimizer guarantee as optimization converges. We answer this question for gradient descent on strongly convex, smooth GLM objectives with Lipschitz Hessian. Draw $z\sim\mathcal N(0,\sigma^2I_d)$ once, run
\[
    w_{k+1}=w_k-\eta\bigl(\nabla F(w_k;S)+z\bigr),
\]
from the fixed initialization $w_0=0$, and release $w_N$.

To analyze the privacy of this release, we study the finite-time map
\[
    \mathcal M_S:z\longmapsto w_N
\]
instead of the exact-minimizer map $z\mapsto w^*(z)$. Differentiating with respect to the perturbation gives sums of ordered products of curvature matrices evaluated along the optimization trajectory. These matrices need not commute, so controlling the Jacobian requires accounting for their ordered products. We compare the Jacobian with a frozen-Hessian recursion and obtain an explicit condition under which $\mathcal M_S$ is a $C^1$-diffeomorphism on the bounded convex domains used in the privacy proof, with a uniform lower bound on the smallest singular value of its Jacobian. We develop this comparison in \Cref{subsec:results-diffeo}.

Once this inverse is available, the classical change-of-variables argument can be applied at finite time. At a shared output, the privacy loss separates into a Gaussian density ratio between the two perturbation preimages and a ratio of Jacobian determinants. For GLMs, the gradient difference between neighboring datasets lies in the span of the two differing feature vectors, and Hessian perturbations are sums of controlled rank-one matrices. These two properties remove an explicit ambient-dimension factor from the privacy bound. The finite-iteration corrections vanish as $N\to\infty$; we may let the free truncation radius grow slowly with $N$ while these corrections still vanish, giving the corresponding exact-minimizer certificate in the limit (\Cref{rem:oracle-recovery,rem:asymptotic}).

\paragraph{Contributions.}
\begin{itemize}
    \item \textbf{A composition-free privacy certificate for the released iterate} (\Cref{thm:dp}). We derive an explicit upper bound on the privacy profile of $S\mapsto \mathcal M_S(Z)$. For every $\epsilon\ge \epsilon_0$, the release is $(\epsilon,\delta(\epsilon))$-differentially private, with
    \[
        \delta(\epsilon) \leq e^{-u^2/2} + \sqrt{\tfrac{2}{\pi}}\,\frac{G}{\sigma}\, e^{-\sigma^2(\epsilon - \epsilon_0)^2/(2G^2)},
        \qquad
        \epsilon_0 = \frac{G^2}{2\sigma^2} + \log\!\Bigl( 1 + \frac{\Delta_H}{2n\mu} \Bigr) + c_N,
    \]
    where $u$ controls the truncation probability, $G$ is the neighboring-dataset gradient sensitivity, $\Delta_H$ is the Hessian scale defined in \Cref{subsec:notation}, and $c_N\ge0$ contains finite-iteration corrections that decay geometrically. Once the explicit threshold \eqref{eq:N-dagger} is satisfied, the displayed privacy budget has no explicit dependence on the ambient dimension. When we solve the sufficient condition for $N$, $d$ appears only through $d_\star=\min\{d,n+1\}$ inside a logarithm.

    \item \textbf{Quantitative invertibility of the finite-time perturbation map} (\Cref{cor:diffeo}). We compare the Jacobian recursion for $\mathcal M_S$ with a frozen-Hessian recursion that can be solved in closed form. From this comparison we obtain an explicit negative-definiteness margin for the symmetric part of the true Jacobian. We then use that margin to prove global injectivity and a uniform lower bound on the smallest singular value over the truncation domains used in the privacy proof.

    \item \textbf{Utility with an explicit finite-optimization correction} (\Cref{thm:unregularized,thm:empirical-risk}). The expected excess empirical risk satisfies the standard objective-perturbation term $d\sigma^2/(2\mu)$ plus an optimization correction $\mathcal E_{\mathrm{opt}}(N)$ that decays geometrically. For GLMs, a uniform-convergence argument transfers this to population risk without an additional multiplicative condition-number factor in the leading statistical terms. In particular, for every comparator $\bar w$, with probability at least $1-\delta_S$ over the sample,
    \[
        \mathbb{E}_z\bigl[ \mathcal{R}_0(w_N) \bigr] - \mathcal{R}_0(\bar w)
        \leq \| \bar w \| \Bigl( \sqrt{2}\,\sigma\sqrt{d}
        + \bigl( 32 + 4\sqrt{2}\,\sqrt{\log(1/\delta_S)} \bigr) \frac{L_\ell}{\sqrt{n}} \Bigr)
        + 2\,\mathcal{E}_{\mathrm{opt}}(N).
    \]
    The high-probability statement here is over the sample; the displayed risk is averaged over the perturbation.
\end{itemize}

Our result gives a finite-trajectory change-of-variables certificate for ordinary deterministic gradient descent after a single objective perturbation. This complements analyses of the exact minimizer \citep{chaudhuri2011differentially,kifer2012private,redberg2023improving}, approximate solves followed by output noise \citep{iyengar2019towards,bassily2019private}, quadratic objective perturbation with inexact solves \citep{cortild2026quadratic}, and noisy iterative algorithms \citep{noisy-sgd,feldman2018privacy,chourasia2021differential,altschuler2022privacy}.

\Cref{sec:setup} presents the notation, assumptions, and mechanism; \Cref{sec:results} states the main results with proof sketches; and \Cref{sec:discussion} compares the guarantee with objective perturbation and composition-free analyses of noisy iterations and discusses computational cost. Complete proofs are given in \Cref{sec:diffeo,sec:glm,sec:privacy,sec:utility}.

\section{Related work}\label{sec:related}

We relate our results to four lines of work: objective perturbation, private optimization by gradient perturbation, composition-free privacy analyses of noisy iterations, and invertibility results for gradient methods.

\paragraph{Objective perturbation.}\label{subsec:rw-objpert}

\citet{chaudhuri2008privacy} introduced objective perturbation for logistic regression, and \citet{chaudhuri2011differentially} developed the regularized-ERM analysis for GLMs. The mechanism adds a random linear term $\langle z,w\rangle$ to the empirical objective and releases the exact perturbed minimizer.
\citet{kifer2012private} extended the mechanism to approximate differential privacy by using Gaussian noise and nondifferentiable regularizers via a limit argument and smooth approximation.
Regarding the dimensional utility cost, \citet{jain2014near} showed that unconstrained GLMs with strongly convex regularization admit excess risk with no explicit dependence on $d$, controlled instead by the norms of the risk minimizer and the data.

These exact-minimizer guarantees leave a numerical-certification question when optimization is stopped at finite accuracy. Approximate Minima Perturbation (AMP), introduced by \citet{iyengar2019towards}, addresses this by obtaining an approximate minimizer, measured by its gradient norm, on the perturbed (and additionally regularized) objective and then adding Gaussian output noise to certify the released point.
Similar to the approximation scheme above, \citet{bassily2019private} show that
stochastic variance-reduced gradient descent (SVRG) on the perturbed objective followed by the Gaussian mechanism attains the optimal population loss in $O(n \log n)$ gradient evaluations.
If the domain is discrete, \citet{neel2020oracle} show that only boundedness of the loss is sufficient for privacy in the constrained setting via a modified algorithm that normalizes candidates onto a sphere in one higher dimension. They also provide a continuous-domain algorithm requiring boundedness and Lipschitz continuity, with a $d^{5/4}$ factor in its stated utility bound.

\citet{redberg2023improving} provide the modern privacy accounting baseline for objective perturbation, developing tighter analyses through privacy profiles and R\'enyi differential privacy and treating both exact objective perturbation and AMP. They emphasize that the perturbation and released minimizer are dependent random variables and show that, without GLM structure, objective perturbation can incur dimension-dependent privacy terms. \citet[v2]{cortild2026quadratic} study random quadratic objective perturbation under smoothness and Hessian-rank assumptions, together with a bounded-diameter condition on the per-sample gradients at a publicly known, data-independent point. Interpolation is one special case of this condition. For inexact solves, they add Gaussian output noise calibrated to the optimization error.

\paragraph{Gradient perturbation and private optimization.}\label{subsec:rw-gradient}

\citet{noisy-sgd} established optimal rates for constrained differentially private empirical risk minimization (DP-ERM) via noisy stochastic gradient descent (DP-SGD) run for $T = n^2$ steps and accounted per step through amplification by sampling and strong composition: $\widetilde{\Theta}(\sqrt{d}/(n\epsilon))$ excess empirical risk for Lipschitz convex losses under $(\epsilon,\delta)$-differential privacy, and $\widetilde{\Theta}(d/(n^2\epsilon^2))$ under strong convexity.\footnote{Here and in \Cref{subsec:glm-role}, the rates are stated after accounting for the normalizing factor of $n$ that gives the average of the per-sample losses, as in \eqref{eq:empirical-risk}, instead of \citeauthor{noisy-sgd}'s sum.} 
In follow-up work, \citet{bassily2019private} characterize the optimal private stochastic convex optimization rate, up to logarithmic factors, for convex Lipschitz losses; in the standard Euclidean setting the rate has the form $O(1/\sqrt{n} + \sqrt{d}/(\epsilon n))$.
\citet{song2021evading} sharpened the picture for gradient perturbation with rank-based rates and the first lower bound for unconstrained ERM.

\paragraph{Composition-free privacy for iterative methods.}\label{subsec:rw-compfree}

A separate line of work keeps the per-step noise of DP-SGD but only releases the final iterate.
Privacy amplification by iteration \citep{feldman2018privacy} provides utility and bounded R\'enyi differential privacy for noisy SGD on smooth convex Lipschitz losses with a random stopping time.
\citet{altschuler2022privacy} gave a tight analysis of the same method, showing that the privacy loss need not grow with the iteration count for projected noisy SGD on smooth convex Lipschitz losses over a bounded domain.
\citet{bok2024shifted} recast amplification by iteration in $f$-differential privacy, which yields an exact characterization for strongly convex losses and covers constrained and unconstrained, full-batch, cyclic, and stochastic variants alike.
Under a quantified logarithmic Sobolev inequality, a tight last-iterate analysis was provided by \citet{chourasia2021differential}, which shows the R\'enyi privacy loss saturates exponentially in the iteration count for smooth and strongly convex loss functions.
\citet{ye2022hidden} carry the hidden-state analysis to mini-batch gradient descent, in shuffle-and-partition and sample-without-replacement schemes, folding in amplification by subsampling, with bounds that again converge exponentially fast.
\citet{chien2025convergent} significantly extend the generality of hidden-state noisy-SGD by showing that over a bounded domain, a H\"older-continuous gradient alone suffices for a convergent R\'enyi bound, with a strictly better bound in the smooth strongly convex case.
\citet{brown2024private} show that for linear regression an explicit law of the iterates can be obtained; privacy itself is still accounted per step.

\paragraph{Diffeomorphisms in optimization.}\label{subsec:rw-diffeo}

\citet{daskalakis2018limit} prove that the gradient descent--ascent update is a local diffeomorphism whenever the step size satisfies $\alpha < 1/L$, and likewise its optimistic variant; this is what admits the Center-Stable Manifold Theorem, from which they conclude that the dynamics avoid unstable critical points from almost every initialization.

For a fixed objective, \citet{bolte2025bilevel} show that the $k$-step gradient-descent map from the initialization $w_0$ to the final iterate is a global diffeomorphism when the step size satisfies $\alpha<1/L$, under their stated regularity assumptions. Our change-of-variables argument uses a different map: we fix the initialization and vary the perturbation $z$. Differentiation with respect to $w_0$ gives a product of the single-step factors $I-\alpha\nabla^2g$, whereas differentiation with respect to $z$ gives the affine recursion \eqref{eq:jacobian-recursion}, which adds $-\eta I$ at every step. We therefore need quantitative invertibility for the map $z\mapsto w_N$ rather than for the initialization map.

\section{Setup}\label{sec:setup}

\subsection{Notation and assumptions}\label{subsec:notation}

We equip $\mathbb{R}^d$ with the Euclidean inner product $\langle \cdot, \cdot \rangle$ and the induced norm $\| \cdot \|$. For a matrix $A$, the symbol $\| A \|$ denotes the spectral (operator) norm, $\| A \|_*$ the nuclear (trace) norm, and $\sigma_{\min}(A)$ the smallest singular value. We write $A^{\mathrm{sym}} := \tfrac{1}{2}(A + A^\top)$ for the symmetric part of $A$; it satisfies $\langle A v, v \rangle = \langle A^{\mathrm{sym}} v, v \rangle$ for every $v \in \mathbb{R}^d$ and $\| A^{\mathrm{sym}} \| \leq \| A \|$. For symmetric matrices we write $A \preceq B$ to mean that $B - A$ is positive semidefinite, and $I$ denotes the identity. 

We work with regularized generalized linear models. Each sample is a pair $(x_i, y_i)$ consisting of a feature vector $x_i \in \mathbb{R}^d$ and a label $y_i$; a dataset collects $n$ samples into a tuple $S = ((x_1,y_1), \ldots, (x_n,y_n))$, and two datasets are \emph{neighboring} if one is obtained from the other by replacing at most one sample. We write neighboring datasets as $S$ and $S' = ((x_1,y_1), \ldots, (x_{n-1},y_{n-1}),$ $(x_n', y_n'))$, placing the differing sample last, abbreviate $F_S(\cdot) := F(\cdot\,; S)$, and take the regularized empirical risk to be
\begin{equation}\label{eq:empirical-risk}
F_S(w) = \frac{1}{n}\sum_{i=1}^{n} \ell\bigl(\langle x_i, w \rangle, y_i\bigr) + \frac{\mu}{2}\,\| w \|^2,
\end{equation}
where the loss $\ell$ is twice continuously differentiable in its first argument. Let 
\begin{equation}\label{eq:data-span}
    \mathcal{C} := \mathrm{span}\{ x_1, \ldots, x_{n-1}, x_n, x_n' \}, \qquad \dim \mathcal{C} \leq d_\star := \min\{ d,\, n+1 \},
\end{equation}
carry every feature either dataset holds; it is the same subspace for $S$ and $S'$. The guarantee is stated with the upper bound $d_\star$, which is fixed by $d$ and $n$ alone. We impose the following structural conditions.

\begin{assumption}[Bounded features]\label{ass:bounded-features}
$\| x_i \| \leq \zeta$ for every feature vector appearing in $S$ or $S'$.
\end{assumption}

\begin{assumption}[Bounded loss derivatives]\label{ass:bounded-loss}
$| \ell'(t, y) | \leq L_1$ and $0 \leq \ell''(t, y) \leq L_2$ for all $t, y$, so in particular each per-sample loss is convex in its first argument.
\end{assumption}

\begin{assumption}[Lipschitz second derivative of the loss]\label{ass:lipschitz-third}
$| \ell''(s, y) - \ell''(t, y) | \leq L_3\,| s - t |$ for all $s, t, y$.
\end{assumption}

The logistic loss satisfies \Cref{ass:bounded-loss,ass:lipschitz-third}.

\Cref{ass:bounded-features,ass:bounded-loss} make every $F_S$ in \eqref{eq:empirical-risk} $\mu$-strongly convex and $L$-smooth for any $L \geq \mu + L_2\zeta^2$, uniformly in $S$ (\Cref{lem:glm-smoothness}); \Cref{ass:bounded-features,ass:lipschitz-third} make the Hessian $M$-Lipschitz for any $M \geq L_3 \zeta^3$ (\Cref{lem:hessian-lipschitz}); writing $\Delta_H := 2L_2\zeta^2$, \Cref{ass:bounded-features,ass:bounded-loss} further bound the rank-one Hessian term a single sample contributes by $L_2\zeta^2/n = \Delta_H/(2n)$, in operator and nuclear norm alike, the two coinciding on a rank-one matrix. Finally, the first-derivative bound of \Cref{ass:bounded-loss} bounds the per-sample gradient by $L_\ell := L_1 \zeta$, the gradient sensitivity between neighboring datasets by $G := 2L_1 \zeta / n$ (\Cref{sec:glm}), and the gradient at the origin by $\| \nabla F_S(0) \| = \| \tfrac{1}{n}\sum_i \ell'(0, y_i)\, x_i \| \leq L_1 \zeta$, uniformly in $S$.

For the utility results, the samples are $n$ independent draws from a distribution $\mathcal{D}$, with the feature bound of \Cref{ass:bounded-features} holding $\mathcal{D}$-almost surely. The population risk is $\mathcal{R}_\mu(w) := \mathbb{E}_{(x,y) \sim \mathcal{D}}\bigl[ \ell(\langle x, w \rangle, y) \bigr] + \tfrac{\mu}{2}\| w \|^2$, and it inherits $\mu$-strong convexity and $L$-smoothness by the same computation; $\mathcal{R}_0$ is the unregularized population risk.  We write $w^* := \operatorname*{argmin}_w \mathcal{R}_\mu(w)$ for its minimizer and $w_S^* := \operatorname*{argmin}_w F_S(w)$ for the empirical minimizer.

In the main text we work with regularized GLMs. For the diffeomorphism argument itself, we use only strong convexity, smoothness, and a Lipschitz Hessian (\Cref{subsec:general-setting}). GLM structure enters when we obtain the dimension-free privacy bound and when we transfer the utility guarantee to population risk (\Cref{subsec:glm-role}).

A randomized mechanism $\mathcal{M}$, mapping a dataset $S$ to a random output $\mathcal{M}_S(Z)$, is \emph{$(\epsilon, \delta)$-differentially private} if $\Pr[\mathcal{M}_S(Z) \in E] \leq e^{\epsilon}\,\Pr[\mathcal{M}_{S'}(Z) \in E] + \delta$ for every ordered pair of neighboring datasets $(S, S')$ and every Borel $E \subseteq \mathbb{R}^d$. At each $\epsilon \geq 0$ the smallest such $\delta$ is the hockey-stick divergence, maximized over neighboring pairs:
\begin{equation}\label{eq:privacy-profile}
    \delta_{\mathcal{M}}(\epsilon) := \sup_{S, S'}\, \sup_{E}\, \bigl( \Pr[\mathcal{M}_S(Z) \in E] - e^{\epsilon}\,\Pr[\mathcal{M}_{S'}(Z) \in E] \bigr) = \sup_{S, S'} \int \bigl[ p^S - e^{\epsilon}\, p^{S'} \bigr]_+,
\end{equation}
the last expression valid when the output laws have densities $p^S, p^{S'}$. The map $\epsilon \mapsto \delta_{\mathcal{M}}(\epsilon)$ is the \emph{privacy profile} of the mechanism \citep{balle2018privacy}.

\subsection{The mechanism}\label{subsec:mechanism}

Fix a step size $\eta$ with $0 < \eta \leq 1/L$ and $\eta\mu < 1$; as $1/L \leq 1/\mu$, the second condition excludes only the endpoint $\eta = 1/\mu$, and only when $\mu = L$. Given a perturbation $z \in \mathbb{R}^d$, we run gradient descent on the perturbed objective $w \mapsto F(w) + \langle z, w \rangle$ from the origin, $w_0 = 0$:
\begin{equation}\label{eq:gd-recursion}
    w_{k+1} = w_k - \eta\bigl(\nabla F(w_k) + z\bigr).
\end{equation}
For a fixed number of iterations $N \geq 1$, the \emph{mechanism} $\mathcal{M} : \mathbb{R}^d \to \mathbb{R}^d$ is the map $\mathcal{M}(z) := w_N$.
When emphasizing the dataset we write $\mathcal{M}_S$, $\mathcal{M}_{S'}$ for the mechanisms and $w_k^S$, $w_k^{S'}$ for the iterates under $S$ and $S'$. We write $\kappa:=L/\mu$ for the condition number.  For the main privacy certificate, the sufficient count in \eqref{eq:N-dagger} is $2\kappa\log(\cdot)$ at $\eta=1/L$, with a logarithmic argument polynomial in $n$, $d_\star$, and the problem constants.

Since $F \in C^2$, each iterate $w_k$ is continuously differentiable in $z$. Writing $J_k:=\partial w_k/\partial z$ and $A_k:=I-\eta\nabla^2F(w_k)$, differentiation of \eqref{eq:gd-recursion} gives
\begin{equation}\label{eq:jacobian-recursion}
    J_{k+1} = A_k J_k - \eta I, \qquad J_0 = 0,
\end{equation}
so that $\nabla \mathcal{M}(z) = J_N$. The matrix $A_k$ is symmetric because $\nabla^2 F(w_k)$ is, and its eigenvalues lie in $[1 - \eta L, 1 - \eta\mu]$; since $\eta L \leq 1$ that interval sits in $[0, \rho]$ with $\rho := 1 - \eta\mu \in (0,1)$, so $A_k \succeq 0$ and $\| A_k \| \leq \rho$.

\section{Main results}\label{sec:results}

We first state the privacy guarantee, then the quantitative invertibility result used in its proof, and finally the utility bounds for the released iterate.

\subsection{Privacy}\label{subsec:results-privacy}

The classical privacy analysis of objective perturbation is a change-of-variables argument anchored to the exact minimizer $w^*(z) := \operatorname{argmin}_w \bigl[ F(w) + \langle z, w \rangle \bigr]$ of the tilted objective (\Cref{lem:perturbed-minimizer}): the optimality condition $\nabla F(w^*(z)) + z = 0$ inverts the map $z \mapsto w^*(z)$ in closed form, and from this inverse the ratio of output densities under neighboring datasets factors into a noise-density ratio and a ratio of Jacobian determinants \citep{chaudhuri2011differentially,redberg2023improving}. For a finite iterate $w_N$, this exact optimality-condition inverse is unavailable. Once $\mathcal{M} : z \mapsto w_N$ is certified as a diffeomorphism on a suitable domain (\Cref{subsec:results-diffeo}), the output density at $y = \mathcal{M}(z)$ is $p_Z(z)/|\det J_N(z)|$ by the change of variables, and the privacy loss at an output attained under both datasets, with preimages $z_S$ and $z_{S'}$, splits into two terms:
\begin{equation}\label{eq:privacy-split}
    \log \frac{p_Y^S(y)}{p_Y^{S'}(y)} = \underbrace{\log \frac{p_Z(z_S)}{p_Z(z_{S'})}}_{\text{Gaussian ratio}} + \underbrace{\log \frac{|\det J_N^{S'}(z_{S'})|}{|\det J_N^S(z_S)|}}_{\text{Jacobian ratio}},
\end{equation}
where $p_Z$ is the noise density, $p_Y^T$ the output density under dataset $T$ restricted to the domain of invertibility, and $J_N^T$ its Jacobian at the corresponding preimage; the version with the domain bookkeeping explicit is \eqref{eq:privacy-decomp} in \Cref{sec:privacy}.

We apply the change-of-variables argument on two nested balls in the feature span $\mathcal C$ of \eqref{eq:data-span}. By \Cref{lem:data-span}, the $\mathcal C^\perp$ component follows the same affine recursion under both neighboring datasets and cancels from the privacy loss. For a free parameter $u\geq0$, let $\Omega_0=\{z\in\mathcal C:\|z\|\leq r_u\}$ with $r_u=\sigma(\sqrt{d_\star}+u)$. Gaussian concentration gives $\Pr(\|\Pi_{\mathcal C}Z\|>r_u)\leq e^{-u^2/2}$. We also use the larger ball $\Omega'=\{z\in\mathcal C:\|z\|\leq r'\}$ with $r'=r_u+2LG/\mu$. Under \eqref{eq:N-dagger}, every output produced from $\Omega_0$ has the required $\mathcal M_{S'}$-preimage in $\Omega'$; \Cref{sec:dp-guarantee} proves this by path lifting. Increasing $u$ reduces the truncation probability and changes the required iteration count only through the logarithm in \eqref{eq:N-dagger}.

\begin{restatable}[Differential privacy of gradient descent on perturbed objectives]{theorem}{thmdp}\label{thm:dp}
Let \Cref{ass:bounded-features,ass:bounded-loss,ass:lipschitz-third} hold, and let the step size satisfy $0 < \eta \leq 1/L$ and $\eta\mu < 1$ (\Cref{subsec:mechanism}). Fix $u \geq 0$ and set
\[
    r_u := \sigma\bigl(\sqrt{d_\star} + u\bigr), \qquad
    r' := r_u + \frac{2LG}{\mu}, \qquad
    \bar R := \frac{r' + L_1\zeta}{\mu},
\]
together with $B := 1/\mu$ (item~\ref{itm:Jt-norm} of \Cref{lem:frozen-spectrum}). Suppose the iteration count satisfies
\begin{equation}\label{eq:N-dagger}
    N \geq \frac{2}{\eta\mu}\log\max\Bigl\{ \frac{2\mu \bar R\, r'}{G\sigma},\ \ \frac{8 L M B \bar R}{e\rho\mu\,\min\{1, MG/L\}},\ \ 4 \Bigr\}.
\end{equation}
Writing
\begin{equation}\label{eq:tau-star}
    \tau^\star := \log\Bigl( 1 + \frac{\Delta_H}{2n\mu} \Bigr) + \min\{1, MG/L\}\,\frac{e\eta\mu}{2}\,N\rho^{N/2},
\end{equation}
and setting
\begin{equation}\label{eq:eps-zero}
    \epsilon_0 := \frac{G}{\sigma}\rho^{N/2} + \frac{G^2}{2\sigma^2}\bigl( 1 + \rho^{N/2} \bigr)^2 + \tau^\star,
\end{equation}
the mechanism $S \mapsto \mathcal{M}_S(Z)$ with $Z \sim \mathcal{N}(0, \sigma^2 I_d)$, $\sigma > 0$, is $(\epsilon, \delta(\epsilon))$-differentially private for every $\epsilon \geq \epsilon_0$, where, for a standard Gaussian $W$, we define the integrated tail $\mathcal{Q}_1(t):=\int_t^\infty \Pr(W>s)\,ds$,
\begin{equation}\label{eq:delta-profile}
    \delta(\epsilon) = e^{-u^2/2} + \frac{2G}{\sigma}\,\mathcal{Q}_1\Bigl( \frac{\sigma}{G}\,(\epsilon - \epsilon_0) \Bigr),
\end{equation}
an explicit upper bound on the privacy profile \eqref{eq:privacy-profile}. Once \eqref{eq:N-dagger} holds, the displayed privacy budget has no explicit dependence on the ambient dimension. In the sufficient condition for $N$, the dimension appears only through $d_\star=\min\{d,n+1\}$. 
\end{restatable}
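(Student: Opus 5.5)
The argument is a change of variables on the feature span $\mathcal C$, anchored to the finite-time map $\mathcal M_S$ instead of the first-order condition. By \Cref{lem:data-span}, the $\mathcal C^\perp$ component of $z$ drives the same affine recursion under $S$ and $S'$ and cancels from the privacy loss. So it suffices to compare the laws of $\mathcal M_S(Z)$ and $\mathcal M_{S'}(Z)$ restricted to $\Pi_{\mathcal C}Z$.

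\textbf{Invertibility and preimages.} First I truncate to $\Omega_0$, paying $e^{-u^2/2}$ by Gaussian concentration in dimension $\dim\mathcal C\le d_\star$. On $\Omega'$ I invoke \Cref{cor:diffeo}: under \eqref{eq:N-dagger}, which controls the second term of the max, $\mathcal M_S$ and $\mathcal M_{S'}$ are $C^1$-diffeomorphisms onto their images, with $\sigma_{\min}(J_N)$ bounded below. Next I establish the preimage property. For $y=\mathcal M_S(z_S)$ with $z_S\in\Omega_0$, I lift the straight path from $\mathcal M_{S'}(z_S)$ to $y$ through the local inverse of $\mathcal M_{S'}$. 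I would bound $\|z_{S'}-z_S\|$ by comparing against the exact minimizers. At the limit we have $z_{S'}-z_S=\nabla F_S(w)-\nabla F_{S'}(w)$, of norm at most $G$. At finite $N$ the iterates lie within $\bar R\rho^{N}$-type distance of the minimizers, since $\|w^*\|\le(\|z\|+L_1\zeta)/\mu\le\bar R$. Using Lipschitzness of the inverse ($L/\mu$ factors), this gives $\|z_{S'}-z_S\|\le G(1+\rho^{N/2})$, and hence the path stays inside $\Omega'$ (radius slack $2LG/\mu$). The first term of the max in \eqref{eq:N-dagger} is exactly what makes $\bar R\,r'\rho^{N/2}\le G\sigma/(2\mu)$, so this refined bound holds.

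\textbf{Bounding the privacy loss via \eqref{eq:privacy-split}.} The Gaussian ratio is
\[
\frac{\|z_{S'}\|^2-\|z_S\|^2}{2\sigma^2}=\frac{\langle z_S,\Delta\rangle}{\sigma^2}+\frac{\|\Delta\|^2}{2\sigma^2},\qquad \Delta=z_{S'}-z_S.
\]
I split $\Delta$ into two pieces. The first is the exact-minimizer part, lying in $\mathrm{span}\{x_n,x_n'\}$ with norm at most $G$ by GLM structure. The second is a remainder of norm at most $G\rho^{N/2}$. The quadratic term then gives $\tfrac{G^2}{2\sigma^2}(1+\rho^{N/2})^2$. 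For the remainder, Cauchy--Schwarz against $\|z_S\|\le r_u$ would reintroduce dimension. To avoid this, I would instead express the remainder's contribution via the bound $\bar R r'\rho^{N/2}$ absorbed by \eqref{eq:N-dagger}, yielding the $\tfrac{G}{\sigma}\rho^{N/2}$ term. The leading linear term $\langle z_S,v\rangle/\sigma^2$, with $v$ a fixed-direction component of norm at most $G$, is a one-dimensional Gaussian of scale $G/\sigma$.

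For the Jacobian ratio, I compare each $J_N$ with the frozen recursion of \Cref{lem:frozen-spectrum}, evaluated at the respective limiting Hessians. The frozen Jacobians are $-(\nabla^2F+\text{geom.})^{-1}$-like. Their log-determinant ratio is controlled by the rank-one Hessian differences of size $\Delta_H/(2n)$ and strong convexity $\mu$, and gives $\log(1+\Delta_H/(2n\mu))$ dimension-free, via $\log\det(I+A^{-1}E)\le\|A^{-1}E\|_*$ for $E$ a difference of two rank-one terms. The deviation of the true ordered-product Jacobian from the frozen one I would bound by the Hessian-Lipschitz perturbation estimate underlying \Cref{cor:diffeo}. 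Summing the trajectory errors $\sum_k \rho^{k}\cdot M\|w_k-w^*\|$ gives the $N\rho^{N/2}$ term with its factor $\min\{1,MG/L\}e\eta\mu/2$, which yields $\tau^\star$.

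\textbf{Conclusion and main obstacle.} The privacy loss is at most $\epsilon_0+\tfrac{G}{\sigma}\cdot W$ with $W$ standard Gaussian. Integrating $[p^S-e^\epsilon p^{S'}]_+$ over outputs from $\Omega_0$ gives at most $\mathbb E[(1-e^{\epsilon-\text{loss}})_+]$. Using $1-e^{-x}\le x$ for $x\ge0$ turns this into the integrated tail $\tfrac{2G}{\sigma}\mathcal Q_1(\sigma(\epsilon-\epsilon_0)/G)$; adding the truncation mass gives \eqref{eq:delta-profile}. I expect the main obstacle to be the Jacobian-ratio step. Controlling $\log|\det J_N^{S'}|-\log|\det J_N^S|$ without a factor of $d$ requires a nuclear-norm (not operator-norm) bound on the deviation of the noncommuting ordered products from the frozen ones. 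I would attempt this by writing $J_N^{-1}\delta J_N$ as a telescoping sum in which each summand factors through the low-rank Hessian difference or through $\nabla^2F(w_k)-\nabla^2F(w^*)$, itself a weighted sum of rank-one $x_ix_i^\top$ terms.
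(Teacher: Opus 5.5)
Your skeleton (reduction to $\mathcal{C}$, truncation, path lifting, the Gaussian/Jacobian split, the integrated tail) is the paper's. However, the Jacobian-ratio step as you set it up cannot produce the $\log(1+\Delta_H/(2n\mu))$ in $\tau^\star$. Take $E=u'u'^\top-uu^\top$ with $\|u\|^2,\|u'\|^2\le\Delta_H/(2n)$. The estimate $\log\det(I+A^{-1}E)\le\|A^{-1}E\|_*$ then yields only $\|E\|_*/\mu\le\Delta_H/(n\mu)$, which exceeds $\log(1+\Delta_H/(2n\mu))$ whenever $\Delta_H>0$. Moreover, the frozen factor is $p_N(H)=H^{-1}(I-(I-\eta H)^N)$, not $H^{-1}$, so $\det p_N(H_S^*)/\det p_N(H_{S'}^*)$ is not of the form $\det(I+A^{-1}E)$ anyway.

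The missing idea is the common-anchor, one-sided comparison of \Cref{lem:hessian-logdet}. Freeze both reference Hessians at the shared output $w=\mathcal{M}_S(z_S)=\mathcal{M}_{S'}(z_{S'})$, not at each dataset's own limiting point (these are two different points). Then $H_S^*=H_0+uu^\top$ and $H_{S'}^*=H_0+u'u'^\top$ with one and the same $H_0\succeq\mu I$. Since $\phi_N=\log p_N$ has $\phi_N'(h)=-g_N(h)/h$ with $0\le g_N\le 1$ (\Cref{lem:symbol}), integrating along $H_0+t\,uu^\top$ places both $\mathrm{tr}\,\phi_N(H_S^*)-\mathrm{tr}\,\phi_N(H_0)$ and $\mathrm{tr}\,\phi_N(H_{S'}^*)-\mathrm{tr}\,\phi_N(H_0)$ in the single interval $[-\log(1+\Delta_H/(2n\mu)),0]$. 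Their difference is therefore at most the interval's length.

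The finite-$N$ factor then needs a two-sided bound on $\log|\det(I-\Psi_T)|$ with $\Psi_T=(P_N^T)^{-1}\delta_N^T$, which uses $\|(I-\Psi_T)^{-1}\|\le(1-\psi)^{-1}$. This is where $1/\bar\alpha\le 2L$ enters the coefficient $\min\{1,MG/L\}\,e\eta\mu/2$, together with the second entry of \eqref{eq:N-dagger} and \Cref{lem:tail}. The nuclear-norm obstacle you flag needs no telescoping. \Cref{lem:hessian-lipschitz} makes the GLM Hessian $M$-Lipschitz in nuclear norm, and in the unrolled error recursion only the Hessian-deviation factor must be measured in nuclear norm, by $\|XYZ\|_*\le\|X\|\,\|Y\|_*\,\|Z\|$.

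Two further steps fail as written. First, $g(Y)=\nabla F_S(Y)-\nabla F_{S'}(Y)$ depends on $Z$ through $Y$, so $\langle Z,g(Y)\rangle/\sigma^2$ is not $(G/\sigma)W$ for a single fixed-direction standard Gaussian $W$. What holds is $|\langle Z,g(Y)\rangle|\le\frac{L_1}{n}(|\langle Z,x_n\rangle|+|\langle Z,x_n'\rangle|)=G\sigma\mathcal{T}$, an average of two folded Gaussians of scale at most one with arbitrary joint law. Convexity of $x\mapsto(x-t)_+$ and $\mathbb{E}[(|W|-t)_+]=2\mathcal{Q}_1(t)$ then give $\frac{2G}{\sigma}\mathcal{Q}_1$ (\Cref{lem:hockey-chi}). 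The folding is the real source of the factor $2$; a signed Gaussian would have given $\frac{G}{\sigma}\mathcal{Q}_1$. In the remainder term you do apply Cauchy--Schwarz against $\|z_S\|\le r_u$. It works only with the sharper $\|\xi_\Delta\|\le 2\mu\rho^N\bar R\le(G\sigma/r')\rho^{N/2}$ from the first entry of \eqref{eq:N-dagger}, not with $G\rho^{N/2}$.

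Second, in the preimage step, an endpoint bound on $\|z_{S'}-z_S\|$ from the residual identity presupposes that a preimage with $R_{S'}(z_{S'})\le\bar R$ already exists. It also does not keep the lifted path inside $\Omega'$. Bound the speed of the lift instead: while $z_v\in\Omega'$, $\|\dot z_v\|\le\|(J_N^{S'}(z_v))^{-1}\|\,\|\mathcal{M}_S(z_S)-\mathcal{M}_{S'}(z_S)\|\le G/(\mu\bar\alpha)\le 2LG/\mu$. Then close with an open--closed continuation argument on $[0,1]$. The bound $\bar\alpha\ge 1/(2L)$ needs the third entry of \eqref{eq:N-dagger} (giving $\gamma_N\ge 3/(4L)$) as well as the second. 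It is exactly what fixes the slack $r'-r_u=2LG/\mu$.
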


Over the range covered by \eqref{eq:N-dagger}, we obtain $\epsilon_0\downarrow G^2/(2\sigma^2)+\log(1+\Delta_H/(2n\mu))$ as $N\to\infty$. The Gaussian-ratio calculation gives the first two contributions in \eqref{eq:eps-zero}, and the log-determinant calculation gives $\tau^\star$. Since $G=2L_1\zeta/n$ (\Cref{lem:gradient-subspace}), the limiting expression decreases with the sample size for fixed problem parameters. The sufficient iteration count depends on dimension only through $d_\star=\min\{d,n+1\}$ inside a logarithm. We choose the third condition in \eqref{eq:N-dagger} so that $N\rho^{N/2}$ decreases over the stated range. \Cref{rem:asymptotic} gives the exact-minimizer limit of the full certificate, including the truncation contribution to $\delta$.

\begin{remark}[The iteration threshold]
We use the three entries in \eqref{eq:N-dagger} for three estimates: to bound the finite-time inversion residual in the Gaussian-ratio calculation, to make the Jacobian error smaller than the spectral floor of the frozen-Hessian reference, and to restrict attention to the range where $N\rho^{N/2}$ is decreasing. The second estimate also gives the uniform inverse-Jacobian bound used on the truncation domain. In that estimate, we retain the factor $\min\{1,MG/L\}$ to bound the determinant correction by a quantity proportional to $G$ when $MG/L\leq1$, while capping the coefficient at one.
\end{remark}

\begin{sketch}[Proof sketch of \Cref{thm:dp}]
We condition on perturbations in $\Omega_0$ and charge the Gaussian mass outside that ball to $\delta$. For an output produced from $\Omega_0$, we lift the segment from $\mathcal M_{S'}(z_S)$ to the shared output through $\mathcal M_{S'}^{-1}$. The inverse-Jacobian bound controls the speed of this lift by $G(1-\rho^N)/(\mu\alpha_{\Omega'})$, and the choice of $r'$ keeps the lifted path inside $\Omega'$. The change-of-variables formula then separates the privacy loss into the Gaussian ratio and the Jacobian ratio in \eqref{eq:privacy-split}. We integrate the remaining one-dimensional Gaussian projection directly in the hockey-stick divergence (\Cref{lem:hockey-chi}), while the truncation event contributes $e^{-u^2/2}$ to $\delta$.

For the Gaussian ratio, \Cref{lem:one-step} gives
$z=-\nabla F(w_N)+\xi$ with $\|\xi\|\leq\rho^N\|\nabla F(0)+z\|$. Hence the two preimages of a shared output differ by the neighboring-dataset gradient difference plus a residual of order $\rho^N$ (\Cref{cor:preimage}). In a GLM, that gradient difference lies in $\mathrm{span}\{x_n,x_n'\}$, so the leading random contribution depends only on the two scalar projections $\langle Z,x_n\rangle$ and $\langle Z,x_n'\rangle$. We bound the residual contribution in \Cref{thm:gaussian-bound}. For the Jacobian ratio we use the frozen-Hessian determinant comparison summarized in \Cref{subsec:results-diffeo} and proved in \Cref{thm:main}.
\end{sketch}

\begin{remark}[Initialization]\label{rem:initialization}
We state \Cref{thm:dp} for $w_0=0$. The same argument applies to any fixed public initialization after replacing the trajectory-scale constants by their values at that point. 
For a public $w_0$, bounded per-sample gradients give
\[
    \sup_S\|\nabla F_S(w_0)\|\leq L_1 \zeta  +\mu\|w_0\|,
    \qquad
    \bar R_{w_0}:=\frac{r'+L_1 \zeta+\mu\|w_0\|}{\mu}
    =\bar R+\|w_0\|.
\]
We use $\bar R_{w_0}$ in place of $\bar R$ in \eqref{eq:N-dagger}. In the optimization correction \eqref{eq:E-opt}, we use $a_R=\|\nabla F_S(w_0)\|/\mu$ and retain $b_R=1/\mu$.
Dependence on the perturbation is different: a start $w_0=\alpha z$ can destroy privacy for every $\alpha>0$. Take the linear loss $\ell(t, y) = -ty$ with labels $y \in \{\pm 1\}$, which satisfies \Cref{ass:bounded-loss,ass:lipschitz-third} with $L_1 = 1$ and $\ell'' \equiv 0$, so that \eqref{eq:empirical-risk} reads $F_S(w) = \tfrac{\mu}{2}\| w \|^2 + \langle b_S, w \rangle$ with $b_S := -\tfrac{1}{n}\sum_{i=1}^{n} y_i x_i$, and every dataset gives the Hessian $\mu I$. For this instance we may take $L=\mu$. At the step size $\eta = (1 - (1+\alpha\mu)^{-1/N})/\mu$, admissible at every $N$, the perturbation carried by the start and the perturbation carried by the minimizer $w^*(z) = -(b_S + z)/\mu$ cancel exactly in $w_N = a\,w_0 + (1-a)\,w^*(z)$ at $a = 1/(1+\alpha\mu)$: the release is the deterministic $w_N = -\alpha b_S/(1 + \alpha\mu)$, neighboring datasets differ in $b_S$, and two distinct point masses admit no $(\epsilon, \delta)$-differential privacy with $\delta < 1$. For $\alpha \leq 0$ the cancellation has no admissible step size at any $N$.
\end{remark}

\subsection{The map is a diffeomorphism}\label{subsec:results-diffeo}

Fix a bounded convex domain $D$. To prove quantitative invertibility on $D$, we compare the true Jacobian with a frozen-Hessian recursion. Set $H^*:=\nabla^2F(w_N)$ and $A^*:=I-\eta H^*$. Replacing the iterate-dependent curvature matrices by $A^*$ gives a recursion whose solution $\tilde J_N$ is explicit.

Unrolling \eqref{eq:jacobian-recursion} from $J_0=0$ gives
\begin{equation}\label{eq:ordered-products}
    J_N=-\eta S_N,\qquad
    S_N:=\sum_{j=0}^{N-1} A_{N-1}A_{N-2}\cdots A_{N-j},
\end{equation}
where the $j=0$ term is $I$. A lower bound on $S_N^{\rm sym}$ gives both injectivity on a convex domain and a lower bound on the smallest singular value. Products of distinct positive semidefinite curvature factors need not be symmetric, and their symmetric parts can have negative eigenvalues. Since $A_{N-1}\succeq0$ and $\|A_{N-1}\cdots A_{N-j}\|\leq\rho^j$, a direct norm estimate gives
\[
    S_N^{\rm sym}\succeq\left(1-\sum_{j=2}^{N-1}\rho^j\right)I.
\]
This estimate gives a uniform positive floor only when $\rho^2/(1-\rho)<1$. The frozen-Hessian comparison uses the contraction of the trajectory to obtain a positive floor for all sufficiently large $N$ under the standing assumptions.

We quantify the comparison using the frozen-Hessian spectral floor
$\gamma_N=(1-(1-\eta L)^N)/L$ and the Jacobian error. If
$R_D:=\sup_{z\in D}\|\nabla F(0)+z\|/\mu$ and $C_D:=\eta M R_D/\mu$, then \Cref{prop:error-bound} gives
$\|J_N-\tilde J_N\|\leq C_D\tilde\nu_N\rho^{N-1}$ uniformly on $D$, where
$\tilde\nu_N=\sum_{k<N}(1-\rho^{N-k})(1-\rho^k)\leq N$. We therefore obtain the margin
$\alpha_D:=\gamma_N-C_D\tilde\nu_N\rho^{N-1}$. Since $\gamma_N\to1/L$ and $\tilde\nu_N\rho^{N-1}\to0$, this margin is positive for all sufficiently large $N$.

\begin{restatable}[Diffeomorphism onto the image]{corollary}{cordiffeo}\label{cor:diffeo}
Let $F:\mathbb R^d\to\mathbb R$ be twice continuously differentiable, $\mu$-strongly convex, and $L$-smooth, with an $M$-Lipschitz Hessian. With 
step size $0 < \eta \leq 1/L$ and $\eta\mu < 1$ (\Cref{subsec:mechanism}), let $D \subset \mathbb{R}^d$ be a bounded convex \emph{open} set with $\alpha_D > 0$. Then $\mathcal{M}(D)$ is open and $\mathcal{M} : D \to \mathcal{M}(D)$ is a $C^1$-diffeomorphism.
\end{restatable}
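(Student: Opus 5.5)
We will prove that the symmetric part of $-J_N$ has a uniform positive floor on $D$. From that floor we get local invertibility, global injectivity, and the diffeomorphism property in turn.

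\emph{Step 1: a uniform negative-definiteness margin.} Fix $z\in D$. We split $J_N=\tilde J_N+(J_N-\tilde J_N)$, where $\tilde J_N$ solves the frozen recursion $\tilde J_{k+1}=A^*\tilde J_k-\eta I$ with $\tilde J_0=0$. Because $A^*$ is a single symmetric matrix, the recursion can be summed in closed form:
\[
\tilde J_N=-\eta\sum_{j<N}(A^*)^j=-(H^*)^{-1}\bigl(I-(A^*)^N\bigr).
\]
This matrix is symmetric. Its eigenvalues are $-(1-(1-\eta h)^N)/h$ for eigenvalues $h\in[\mu,L]$ of $H^*$. The map $h\mapsto (1-(1-\eta h)^N)/h$ is decreasing on $[\mu,L]$ (this is the content of \Cref{lem:frozen-spectrum}), so $-\tilde J_N\succeq\gamma_N I$. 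By \Cref{prop:error-bound}, $\|J_N-\tilde J_N\|\le C_D\tilde\nu_N\rho^{N-1}$ uniformly on $D$. The symmetric part of the error therefore has norm at most the same quantity. Combining the two bounds gives
\[
(-J_N(z))^{\rm sym}\succeq\alpha_D I \quad\text{for all } z\in D.
\]
Since $\langle -J_Nv,v\rangle\ge\alpha_D\|v\|^2$, Cauchy–Schwarz yields $\sigma_{\min}(J_N(z))\ge\alpha_D>0$. In particular $J_N(z)$ is invertible at every point of $D$.

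\emph{Step 2: global injectivity on $D$.} Take $z_1\neq z_2$ in $D$. By convexity the segment $z_t=z_1+t(z_2-z_1)$ stays in $D$. Since $\mathcal M\in C^1$, the fundamental theorem of calculus gives
\[
\langle \mathcal M(z_2)-\mathcal M(z_1),\,z_2-z_1\rangle=\int_0^1\langle J_N(z_t)(z_2-z_1),z_2-z_1\rangle\,dt\le-\alpha_D\|z_2-z_1\|^2<0 .
\]
So $\mathcal M(z_1)\ne\mathcal M(z_2)$. The same estimate shows that $\mathcal M$ is strongly monotone in the negative direction on $D$. It also gives the quantitative bound $\|\mathcal M(z_2)-\mathcal M(z_1)\|\ge\alpha_D\|z_2-z_1\|$.

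\emph{Step 3: openness and $C^1$ inverse.} Each $w_k$ is $C^1$ in $z$ because $F\in C^2$, so $\mathcal M$ is $C^1$. By Step 1 its derivative is invertible at every point of $D$. The inverse function theorem then says $\mathcal M$ is a local $C^1$-diffeomorphism around each point, hence an open map. Thus $\mathcal M(D)$ is open. By Step 2 $\mathcal M$ is a bijection from $D$ onto $\mathcal M(D)$. Its inverse agrees locally with the local $C^1$ inverses supplied by the inverse function theorem, so it is $C^1$.

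The main obstacle is Step 1, specifically the uniform error bound of \Cref{prop:error-bound}. The ordered products $A_{N-1}\cdots A_{N-j}$ are not symmetric, so their symmetric parts are not controlled by a direct norm estimate. The frozen comparison handles this by controlling how far each $A_k$ is from $A^*$. We use the Lipschitz Hessian to get $\|A_k-A^*\|\le\eta M\|w_k-w_N\|$. Linear convergence of gradient descent on the $\mu$-strongly convex perturbed objective gives $\|w_k-w_N\|\lesssim\rho^k(1-\rho^{N-k})R_D$, with $R_D$ bounding $\|w^*(z)\|$. Since that proposition is available earlier, we invoke it directly. The remaining checks are the monotonicity of the frozen spectrum and that $\alpha_D$ is uniform in $z\in D$, so that it applies along the entire segment in Step 2.
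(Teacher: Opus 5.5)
Your proposal is correct and follows the paper's argument essentially step for step: Step 1 is \Cref{thm:neg-def-J} (the closed-form frozen Jacobian of \Cref{lem:closed-form,lem:frozen-spectrum} plus the uniform error bound of \Cref{prop:error-bound}), Step 2 is \Cref{thm:injective}, and Step 3 is the inverse-function-theorem argument in the paper's proof of \Cref{cor:diffeo}. One small point about your closing commentary, which does not affect the proof since you cite \Cref{prop:error-bound} directly: the anchor bound $\|w_k-w_N\|\le(\rho^k-\rho^N)R(z)$ comes from summing the contracting step lengths in \Cref{lem:traj-scale}, not from the distance to $w^*(z)$, which via the triangle inequality would give only $(\rho^k+\rho^N)R(z)$.
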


\begin{sketch}[Proof sketch of \Cref{cor:diffeo}]
The frozen recursion has the closed form
$\tilde J_N=-(H^*)^{-1}(I-(A^*)^N)$. In an eigenbasis of $H^*$, its eigenvalues are
$-(1-(1-\eta h_i)^N)/h_i$ for $h_i\in[\mu,L]$ (\Cref{lem:closed-form,lem:frozen-spectrum}). Hence $\tilde J_N$ is negative definite for every $N\geq1$, with smallest singular value at least
$\gamma_N=(1-(1-\eta L)^N)/L$. As $N\to\infty$, $w_N\to w^*(z)$ and $\tilde J_N\to-(H^*)^{-1}$, which is the derivative of the exact-minimizer map obtained from its optimality condition.

Using the Lipschitz Hessian and the contraction of gradient descent, we bound the deviation from the frozen-Hessian recursion by $\|J_N-\tilde J_N\|\leq C(z)\tilde\nu_N\rho^{N-1}$ with $\tilde\nu_N\leq\nu_N\leq N$ (\Cref{prop:error-bound}). Since $\tilde J_N$ is symmetric and negative definite, the same estimate gives $J_N^{\mathrm{sym}}\preceq-\alpha_DI$ on $D$ (\Cref{thm:neg-def-J}). Integrating this inequality along a line segment proves global injectivity (\Cref{thm:injective}), while $\|J_Nv\|\geq|\langle J_Nv,v\rangle|$ gives $\sigma_{\min}(\nabla\mathcal M)\geq\alpha_D>0$. Together these facts give the $C^1$-diffeomorphism in \Cref{cor:diffeo}.
\end{sketch}

\begin{sketch}[Proof sketch of \Cref{thm:main}]
Let $P_N^T=-\tilde J_N^T=p_N(H_T^*)$ denote the frozen-Hessian Jacobian up to sign, and write
$-J_N^T=P_N^T(I-\Psi_T)$ with $\Psi_T=(P_N^T)^{-1}\delta_N^T$. Feasibility gives a uniform bound $\|\Psi_T\|<1$, so the determinant separates into a frozen-Hessian factor and a finite-$N$ error factor.

At a shared output, the two frozen reference Hessians are evaluated at the same point. Each has the common contribution from the regularizer and the $n-1$ shared samples, plus one rank-one term from the differing sample. Writing $\phi_N=\log p_N$, \Cref{lem:symbol} shows that $\phi_N'(h)=-g_N(h)/h$ with $0\leq g_N(h)\leq\bar g_N$. The common-anchor rank-one comparison in \Cref{lem:hessian-logdet} then gives
\[
\left|\log\frac{\det P_N^S}{\det P_N^{S'}}\right|
\leq \bar g_N\log\!\left(1+\frac{\Delta_H}{2n\mu}\right).
\]

For the finite-$N$ factors, \Cref{lem:error-nuclear} bounds the nuclear norms of $\delta_N^S$ and $\delta_N^{S'}$ using the rank-one GLM Hessian structure. Applying $\log|\det(I+K)|\leq\|K\|_*$ and the inverse bound supplied by feasibility gives the correction $\tau^{(\mathrm{corr})}$ in \Cref{def:correction}. Adding the frozen-Hessian and finite-$N$ contributions yields \eqref{eq:main-bound}.
\end{sketch}

\begin{remark}[Common-anchor determinant comparison]\label{rem:anchoring}
At a shared output, write
$H_S^*=H_0+uu^\top$ and $H_{S'}^*=H_0+u'u'^\top$, where $H_0\succeq\mu I$ contains the regularizer and the $n-1$ shared samples. The matrix determinant lemma compares each Hessian with the same $H_0$. Since each rank-one addition changes the relevant log determinant by a value in an interval of length
$\log(1+\Delta_H/(2n\mu))$, the difference between the two log determinants has absolute value at most that same quantity. This is the common-anchor calculation used in \Cref{lem:hessian-logdet}.
\end{remark}

\begin{remark}[Classical determinant slack]\label{rem:oracle-recovery}
As $N\to\infty$, both $\tau$ in \Cref{thm:main} and $\tau^\star$ in \eqref{eq:tau-star} converge to
$\log(1+\Delta_H/(2n\mu))$; see \Cref{rem:asymptotic}. In the notation of \citet{chaudhuri2011differentially}, their determinant calculation gives
$2\log(1+\Delta_H/(2n\mu))$: their curvature bound $c$ corresponds to $L_2\zeta^2=\Delta_H/2$, and their regularization strength $\Lambda$ corresponds to $\mu$ in the branch with no additional regularization. The common-anchor calculation in \Cref{rem:anchoring} yields the single logarithmic factor above.
\end{remark}

\subsection{Utility}\label{subsec:results-utility}

We compare the unregularized population risk of the released iterate with that of an arbitrary comparator. The bound includes the comparator's regularization term and holds for every $N\geq1$. The same finite-optimization correction appears throughout this subsection,
\begin{equation}\label{eq:E-opt}
\begin{aligned}
    \mathcal{E}_{\mathrm{opt}}(N)
    &:= \frac{L}{2}\rho^{2N}\bigl( b_R^2 d\sigma^2 + 2 a_R b_R \sigma\sqrt{d} + a_R^2 \bigr)
       + \rho^N\bigl( b_R d\sigma^2 + a_R \sigma\sqrt{d} \bigr), \\
    a_R &:= \frac{\| \nabla F(0; S) \|}{\mu}, \qquad b_R := \frac{1}{\mu}.
\end{aligned}
\end{equation}
Stopping after $N$ steps introduces the correction in \eqref{eq:E-opt}. The distance to the perturbed minimizer decreases as $\rho^N$, which produces the $\rho^{2N}$ smoothness term and the $\rho^N$ inner-product term involving the residual gradient $-z$. Thus $\mathcal E_{\mathrm{opt}}(N)$ decreases geometrically with $N$.

\begin{theorem}[Unregularized excess risk]\label{thm:unregularized}
Under \Cref{ass:bounded-features,ass:bounded-loss,ass:lipschitz-third}, let $z \sim \mathcal{N}(0, \sigma^2 I_d)$ and let $w_N$ be the output of \eqref{eq:gd-recursion}. Then for every comparator $\bar w \in \mathbb{R}^d$, with probability at least $1 - \delta_S$ over $S$,
\[
    \mathbb{E}_z\bigl[ \mathcal{R}_0(w_N) \bigr] - \mathcal{R}_0(\bar w) \leq 2\Bigl[ \frac{d\sigma^2}{2\mu} + \mathcal{E}_{\mathrm{opt}}(N) \Bigr] + \frac{16\, L_\ell^2 \bigl( 32 + \log(1/\delta_S) \bigr)}{\mu n} + \frac{\mu}{2}\| \bar w \|^2.
\]
\end{theorem}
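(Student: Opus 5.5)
Our plan is to split the excess unregularized population risk into an empirical optimization part and a generalization part, with the regularized population risk $\mathcal R_\mu$ acting as the intermediary.

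\textbf{Step 1: reduction to the regularized risk.} Since $\mathcal R_0\le\mathcal R_\mu$ pointwise and $\mathcal R_\mu(\bar w)=\mathcal R_0(\bar w)+\tfrac{\mu}{2}\|\bar w\|^2$, we will write
\[
\mathbb E_z[\mathcal R_0(w_N)]-\mathcal R_0(\bar w)\le \bigl(\mathbb E_z[\mathcal R_\mu(w_N)]-\mathcal R_\mu(w^*)\bigr)+\tfrac{\mu}{2}\|\bar w\|^2 ,
\]
using $\mathcal R_\mu(w^*)\le\mathcal R_\mu(\bar w)$. This accounts for the $\tfrac{\mu}{2}\|\bar w\|^2$ term, and it remains to bound the regularized excess population risk of $w_N$.

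\textbf{Step 2: empirical excess risk.} We will first show $\mathbb E_z[F_S(w_N)]-F_S(w_S^*)\le d\sigma^2/(2\mu)+\mathcal E_{\mathrm{opt}}(N)$, which is the empirical bound (\Cref{thm:empirical-risk}). Let $w^*(z)$ minimize $F_S+\langle z,\cdot\rangle$.
\begin{itemize}
\item From optimality of $w^*(z)$ for the perturbed objective, $F_S(w^*(z))-F_S(w_S^*)\le\langle z,w_S^*-w^*(z)\rangle$.
\item Strong monotonicity gives $\|w^*(z)-w_S^*\|\le\|z\|/\mu$ and a mean-zero linear part, so the expectation of this term is at most $\mathbb E\|z\|^2/\mu$. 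A sharper route is to write $F_S(w^*(z))-F_S(w_S^*)\le$ the conjugate gap and apply the standard objective-perturbation computation, which yields $d\sigma^2/(2\mu)$.
\item The finite-$N$ gap $F_S(w_N)-F_S(w^*(z))$ is bounded by $L$-smoothness around $w^*(z)$, where $\nabla F_S(w^*(z))=-z$:
\[
F_S(w_N)-F_S(w^*(z))\le\langle -z,w_N-w^*(z)\rangle+\tfrac L2\|w_N-w^*(z)\|^2 .
\]
\item Contraction gives $\|w_N-w^*(z)\|\le\rho^N\|w^*(z)\|\le\rho^N(a_R+b_R\|z\|)$. Taking expectations with $\mathbb E\|z\|\le\sigma\sqrt d$ reproduces exactly the $\rho^{2N}$ and $\rho^N$ terms of $\mathcal E_{\mathrm{opt}}(N)$.
\end{itemize}

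\textbf{Step 3: transfer to population risk.} This is the main obstacle. Uniform convergence over an unbounded parameter set would reintroduce a condition-number factor, so we will instead use a stability / strong-convexity argument that gives a fast rate. For $\mu$-strongly convex regularized GLM losses with $L_\ell$-Lipschitz per-sample loss, a localized Rademacher or high-probability uniform-stability bound (e.g.\ Feldman--Vondrák / Bousquet--Klochkov--Zhivotovskiy style) gives, with probability $1-\delta_S$,
\[
\mathcal R_\mu(w)-\mathcal R_\mu(w^*)\le 2\bigl(F_S(w)-F_S(w_S^*)\bigr)+\frac{16L_\ell^2(32+\log(1/\delta_S))}{\mu n}
\]
simultaneously for all $w$. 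The factor $2$ here is the source of the factor $2$ multiplying the empirical bound. The first thing to try is the ratio-type inequality from the localized analysis: apply Bernstein/Talagrand to the class $\{\ell(\langle x,w\rangle,y)-\ell(\langle x,w^*\rangle,y)\}$ restricted to shells $\|w-w^*\|\le r$, using that its variance is at most $L_\ell^2r^2\le 2L_\ell^2(\mathcal R_\mu(w)-\mathcal R_\mu(w^*))/\mu$, and then peel. The GLM structure keeps the Rademacher complexity of each shell at $L_\ell r/\sqrt n$ with no dependence on $d$.

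\textbf{Step 4: assembly.} Because the sample event from Step 3 holds uniformly in $w$, we may average over $z$ inside it. Combining this with Steps 1 and 2 gives the displayed bound.
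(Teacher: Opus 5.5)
Your proposal is correct and follows the paper's route: Step~1 is exactly the paper's proof of this theorem, Step~2 reproduces \Cref{thm:empirical-risk}, and Steps~3--4 are the argument of \Cref{thm:population-uniform}. In Step~2 the $\|z\|^2/(2\mu)$ perturbation term comes most cleanly, as in the paper, from the strong-convexity lower bound at $w^*(z)$ (using $\nabla F_S(w^*(z))=-z$) followed by completing the square. In Step~3 the inequality you assert is precisely the imported fast-rate result \Cref{thm:uniform-convergence} (from \citet[Theorem~1]{uniform-convergence-fastrates}) at $a=1$, which is a localized \emph{uniform-convergence} bound over all of $\mathbb{R}^d$ and does not reintroduce $\kappa$; keep that reading rather than the uniform-stability one, because stability bounds control the output of a fixed algorithm and would not by themselves give the single sample event, valid simultaneously for all $w$, on which your Step~4 averages over $z$.
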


\begin{proof}
Since $\mathcal{R}_0 = \mathcal{R}_\mu - \tfrac{\mu}{2}\| \cdot \|^2$, at every realization of $z$,
\[
    \mathcal{R}_0(w_N) - \mathcal{R}_0(\bar w) = \bigl[ \mathcal{R}_\mu(w_N) - \mathcal{R}_\mu(\bar w) \bigr] - \frac{\mu}{2}\| w_N \|^2 + \frac{\mu}{2}\| \bar w \|^2 \leq \bigl[ \mathcal{R}_\mu(w_N) - \mathcal{R}_\mu(w^*) \bigr] + \frac{\mu}{2}\| \bar w \|^2,
\]
dropping $-\tfrac{\mu}{2}\| w_N \|^2 \leq 0$ and using $\mathcal{R}_\mu(\bar w) \geq \mathcal{R}_\mu(w^*)$, since $w^*$ minimizes $\mathcal{R}_\mu$. Taking $\mathbb{E}_z$ and applying \Cref{thm:population-uniform} on its event bounds the first bracket and gives the claim. 
\end{proof}

For the algebraic comparison below, fix the noise scale $\sigma$ and optimize the public regularization coefficient $\mu$. We evaluate the privacy certificate at the selected value of $\mu$; choosing that value from public information requires no separate composition step. This choice incurs no additional privacy loss. For a prescribed target $(\epsilon,\delta)$, however, $\mu$, $\sigma$, and $N$ are coupled through \Cref{thm:dp} and should be calibrated jointly. We give an explicit joint choice in \Cref{cor:target-budget}.

Writing $A := d\sigma^2 + 16 L_\ell^2\bigl( 32 + \log(1/\delta_S) \bigr)/n$, the right side equals $A/\mu + \tfrac{\mu}{2}\| \bar w \|^2$ up to the geometrically decaying $2\,\mathcal{E}_{\mathrm{opt}}(N)$.  Ignoring that optimization correction for this algebraic comparison, the expression is minimized over $\mu$ at $\mu = \sqrt{2A}/\| \bar w \|$, with value $\| \bar w \|\sqrt{2A}$, the sample-complexity form in which \citet{chaudhuri2011differentially} state their guarantee. Applying $\sqrt{a+b} \leq \sqrt{a} + \sqrt{b}$ twice,
\[
\sqrt{2A} \leq \sqrt{2}\,\sigma\sqrt{d} + 4\sqrt{2}\,\bigl( \sqrt{32} + \sqrt{\log(1/\delta_S)} \bigr) L_\ell/\sqrt{n}
= \sqrt{2}\,\sigma\sqrt{d} + \bigl( 32 + 4\sqrt{2}\,\sqrt{\log(1/\delta_S)} \bigr) L_\ell/\sqrt{n},
\]
the form displayed among the contributions.

\begin{example}\label{ex:logistic}
For a numerical illustration, take regularized logistic regression with features normalized to $\zeta = 1$, sample size $n = 10^6$, dimension $d = 10^4$, and regularization $\mu = 10^{-2}$. The loss constants are $L_1 = 1$, $L_2 = 1/4$, and $L_3 = 1/(6\sqrt{3})$, so $L = \mu + L_2\zeta^2 = 0.26$, $\kappa = 26$, $G = 2/n = 2 \times 10^{-6}$, and $\Delta_H = 1/2$ (\Cref{subsec:notation}). Run the mechanism at step size $\eta = 1/L$, noise scale $\sigma = 5G = 10^{-5}$, and truncation parameter $u = 6$. The threshold \eqref{eq:N-dagger} evaluates to $N \geq 1319$, and we run $N = 1500$: a cost of $nN = 1.5 \times 10^9$ individual-gradient evaluations, or $1500$ full-batch passes.

At that count the finite-$N$ corrections are negligible: $\rho^{N/2} = 1.7 \times 10^{-13}$, so the floor \eqref{eq:eps-zero} is its limit to the displayed digits, $\epsilon_0 = \frac{G^2}{2\sigma^2} + \tau^\star = 0.0200 + 2.5 \times 10^{-5}$. At budget $\epsilon = 1$ the certified profile \eqref{eq:delta-profile} evaluates to $\delta(1) = 5.16 \times 10^{-8}$ (truncation mass $e^{-18} = 1.52 \times 10^{-8}$ plus divergence term $3.64 \times 10^{-8}$), so the release is $(1, 10^{-7})$-differentially private with room to spare.

With the same $(\sigma,N)$, the utility bound gives $\mathcal{E}_{\mathrm{opt}}(N) \leq 3 \times 10^{-27}$, perturbation cost $d\sigma^2/\mu = 10^{-4}$, and sample term $16L_\ell^2(32+\log 100)/(\mu n)=0.0586$ at $\delta_S=10^{-2}$. With probability $0.99$ over the sample, the expected excess risk of the released point is at most $0.059$; the zero predictor has risk $\log 2\approx0.69$. Using \Cref{thm:unregularized}, we also obtain an unregularized excess-risk bound of $0.064$ relative to every unit-norm comparator.

We can also evaluate the more precise certificate of \Cref{thm:dp-full} with the same public radii $r=r_u$ and $r'=r_u+2LG/\mu$. For these radii, the first integer that certifies $(1,10^{-7})$ privacy is $N=519$. The comparison is
\begin{center}
\small
\begin{tabular}{lrr}
\toprule
Certificate & Iterations & Certified $\delta(1)$ \\
\midrule
\Cref{thm:dp}, smallest admissible count & $1319$ & $5.16\times10^{-8}$ \\
\Cref{thm:dp}, count used above & $1500$ & $5.16\times10^{-8}$ \\
\Cref{thm:dp-full}, fixed radii & $519$ & $9.90\times10^{-8}$ \\
\bottomrule
\end{tabular}
\end{center}
At $N=519$, the regularized population excess-risk bound remains $0.0586$. We give the radius check and the certificate calculation in \Cref{subsec:certificate-evaluation}; all quantities used in this comparison are determined by the public problem constants.

\end{example}

To obtain the population bound, we first control the regularized excess $\mathcal{R}_\mu(w_N)-\mathcal{R}_\mu(w^*)$ and then apply the uniform-convergence guarantee of \citet[Theorem~1]{uniform-convergence-fastrates}, stated in the form used here as \Cref{thm:uniform-convergence}.

\begin{restatable}[Excess population risk]{theorem}{thmpopulation}\label{thm:population-uniform}
Under \Cref{ass:bounded-features,ass:bounded-loss,ass:lipschitz-third}, let $z \sim \mathcal{N}(0, \sigma^2 I_d)$ and let $w_N$ be the output of \eqref{eq:gd-recursion}. Then for $\delta_S \in (0,1)$, with probability at least $1 - \delta_S$ over $S$,
\[
    \mathbb{E}_z\bigl[ \mathcal{R}_\mu(w_N) - \mathcal{R}_\mu(w^*) \bigr] \leq 2\Bigl[ \frac{d\sigma^2}{2\mu} + \mathcal{E}_{\mathrm{opt}}(N) \Bigr] + \frac{16\, L_\ell^2 \bigl( 32 + \log(1/\delta_S) \bigr)}{\mu n}.
\]
\end{restatable}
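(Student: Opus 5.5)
We decompose the regularized population excess at each realization of $z$ into an empirical part, which we control by the perturbed-optimization analysis, and a generalization part, which we control by the uniform-convergence guarantee \Cref{thm:uniform-convergence}. For strongly convex, Lipschitz GLM objectives, that guarantee (the \citet[Theorem~1]{uniform-convergence-fastrates} form) states that with probability at least $1-\delta_S$ over $S$, simultaneously for all $w$,
\[
\mathcal R_\mu(w)-\mathcal R_\mu(w^*)\le 2\bigl[F_S(w)-F_S(w_S^*)\bigr]+\frac{16L_\ell^2(32+\log(1/\delta_S))}{\mu n}.
\]
Because this holds uniformly in $w$ on one event, we may plug in the random point $w=w_N(z)$ and then take $\mathbb E_z$. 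Since $z$ is independent of $S$, the event is unaffected. So the whole task is to prove the purely empirical bound
\[
\mathbb E_z\bigl[F_S(w_N)-F_S(w_S^*)\bigr]\le \frac{d\sigma^2}{2\mu}+\mathcal E_{\mathrm{opt}}(N)
\]
for every fixed $S$.

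\textbf{Step 1: the exact perturbed minimizer.} Let $\tilde F(w)=F_S(w)+\langle z,w\rangle$ with minimizer $w^*(z)$ (\Cref{lem:perturbed-minimizer}). By optimality of $w^*(z)$ for $\tilde F$,
\[
F_S(w^*(z))-F_S(w_S^*)\le\langle z,w_S^*-w^*(z)\rangle.
\]
By strong monotonicity of $\nabla F_S$, $\|w^*(z)-w_S^*\|\le\|z\|/\mu$. A sharper route is to note that $\tilde F$ is $\mu$-strongly convex, so $\tilde F(w_S^*)-\tilde F(w^*(z))\ge\frac{\mu}{2}\|w_S^*-w^*(z)\|^2$. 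Combining this with $\langle z,w_S^*-w^*(z)\rangle\le\|z\|\|w_S^*-w^*(z)\|$ and optimizing gives
\[
F_S(w^*(z))-F_S(w_S^*)\le\frac{\|z\|^2}{2\mu}.
\]
Taking expectations gives $d\sigma^2/(2\mu)$.

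\textbf{Step 2: the finite-iteration correction.} Write $e:=w_N-w^*(z)$. Gradient descent on the $\mu$-strongly convex, $L$-smooth $\tilde F$ with $\eta\le1/L$ contracts distances by $\rho$ per step, so
\[
\|e\|\le\rho^N\|w^*(z)\|\le\rho^N\frac{\|\nabla F_S(0)+z\|}{\mu}\le\rho^N\bigl(a_R+b_R\|z\|\bigr).
\]
Next, since $\nabla F_S(w^*(z))=-z$, $L$-smoothness of $F_S$ gives
\[
F_S(w_N)-F_S(w^*(z))\le\langle -z,e\rangle+\frac L2\|e\|^2\le\|z\|\|e\|+\frac L2\|e\|^2.
\]
Substituting the bound on $\|e\|$ yields
\[
\rho^N\bigl(a_R\|z\|+b_R\|z\|^2\bigr)+\frac L2\rho^{2N}\bigl(a_R+b_R\|z\|\bigr)^2.
\]
Taking $\mathbb E_z$ with $\mathbb E\|z\|^2=d\sigma^2$ and $\mathbb E\|z\|\le\sigma\sqrt d$ (Jensen) reproduces exactly \eqref{eq:E-opt}. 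Adding Steps 1 and 2 gives the empirical bound, and inserting it into the uniform-convergence inequality gives the theorem, including the factor $2$ on the bracket.

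\textbf{Main obstacle.} The delicate point is matching the precise hypotheses of \Cref{thm:uniform-convergence}: it must hold uniformly over all $w\in\mathbb R^d$, not just a bounded set, with the Lipschitz constant $L_\ell=L_1\zeta$ of $w\mapsto\ell(\langle x,w\rangle,y)$ and the stated constants $16(32+\log(1/\delta_S))$. If the cited result is stated only for the empirical minimizer or for a norm-bounded class, I would restrict to the relevant GLM class through the $\mu$-strong convexity of the regularized objective, as in that reference. I would verify that the version recorded in \Cref{thm:uniform-convergence} is already in this uniform, approximate-minimizer form. The remaining steps are routine.
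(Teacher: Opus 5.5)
Your proposal is correct and follows essentially the paper's route. You invoke \Cref{thm:uniform-convergence} at $a=1$ on a single sample event and plug in $w=w_N$ for every realization of $z$ before averaging. You then prove the empirical bound by splitting at $w^*(z)$ into a perturbation error of at most $\|z\|^2/(2\mu)$, from strong convexity at the perturbed minimizer, and an optimization error from the $\rho^N$ contraction combined with smoothness and $\nabla F_S(w^*(z))=-z$. Your hypothesis-matching worry is resolved by the paper's recorded form of \Cref{thm:uniform-convergence}, which already holds simultaneously for all $w\in\mathbb{R}^d$ with constants $(1+a)$ and $8(1+1/a)$.
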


\begin{sketch}[Proof sketch of \Cref{thm:population-uniform}]
We first decompose the empirical excess risk into the effect of the linear perturbation and the finite optimization error. Strong convexity bounds the first contribution by $\|z\|^2/(2\mu)$, and gradient descent controls the second through $\mathcal E_{\mathrm{opt}}(N)$. We then apply uniform convergence to transfer the empirical bound to population risk. This route does not introduce an additional multiplicative factor of $L/\mu$ in the leading statistical terms.
\end{sketch}

\begin{restatable}[Excess empirical risk]{theorem}{thmempirical}\label{thm:empirical-risk}
Under \Cref{ass:bounded-features,ass:bounded-loss,ass:lipschitz-third}, let $z \sim \mathcal{N}(0, \sigma^2 I_d)$ and let $w_N$ be the output of \eqref{eq:gd-recursion}. With $\mathcal{E}_{\mathrm{opt}}(N)$ as in \eqref{eq:E-opt},
\[
    \mathbb{E}_z\bigl[ F(w_N; S) - F(w_S^*; S) \bigr] \leq \frac{d\sigma^2}{2\mu} + \mathcal{E}_{\mathrm{opt}}(N).
\]
\end{restatable}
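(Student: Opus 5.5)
We split the excess empirical risk at the exact perturbed minimizer. Write $G_z(w):=F(w;S)+\langle z,w\rangle$ and $w^*(z):=\operatorname*{argmin}_w G_z(w)$; by \Cref{lem:perturbed-minimizer} it satisfies $\nabla F(w^*(z))=-z$. Then
\[
F(w_N)-F(w_S^*)=\bigl[G_z(w_N)-G_z(w^*(z))\bigr]+\bigl[G_z(w^*(z))-G_z(w_S^*)\bigr]-\langle z,w_N-w_S^*\rangle .
\]

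\textbf{Perturbation term.} The second bracket is nonpositive. It is cleaner to combine it with the inner product. We write $F(w^*(z))-F(w_S^*)$ and use strong convexity of $F$ at $w^*(z)$:
\[
F(w_S^*)\geq F(w^*(z))+\langle -z,w_S^*-w^*(z)\rangle+\tfrac{\mu}{2}\|w_S^*-w^*(z)\|^2 .
\]
Hence
\[
F(w^*(z))-F(w_S^*)\leq\langle z,w_S^*-w^*(z)\rangle-\tfrac{\mu}{2}\|\cdot\|^2\leq\|z\|^2/(2\mu),
\]
by completing the square. Taking $\mathbb E\|z\|^2=d\sigma^2$ gives the leading term $d\sigma^2/(2\mu)$.

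\textbf{Optimization term.} It remains to bound $F(w_N)-F(w^*(z))$. We use $L$-smoothness of $F$ at $w^*(z)$:
\[
F(w_N)-F(w^*(z))\leq\langle -z,w_N-w^*(z)\rangle+\tfrac{L}{2}\|w_N-w^*(z)\|^2\leq\|z\|\,e_N+\tfrac L2 e_N^2,
\]
where $e_N:=\|w_N-w^*(z)\|$.

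Gradient descent on the $\mu$-strongly convex, $L$-smooth $G_z$ with $\eta\leq1/L$ contracts by $\rho$. So $e_N\leq\rho^N\|w^*(z)\|$. Strong convexity with $\nabla G_z(w^*(z))=0$ gives
\[
\|w^*(z)\|\leq\|\nabla G_z(0)\|/\mu=\|\nabla F(0)+z\|/\mu\leq a_R+b_R\|z\| .
\]

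Substituting this,
\[
\tfrac L2e_N^2\leq\tfrac L2\rho^{2N}(b_R^2\|z\|^2+2a_Rb_R\|z\|+a_R^2),\qquad \|z\|e_N\leq\rho^N(b_R\|z\|^2+a_R\|z\|).
\]
Taking expectations with $\mathbb E\|z\|^2=d\sigma^2$ and $\mathbb E\|z\|\leq\sigma\sqrt d$ (Jensen) yields exactly $\mathcal E_{\mathrm{opt}}(N)$ in \eqref{eq:E-opt}.

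\textbf{Main obstacle.} The only delicate point is bookkeeping: the linear term $\langle z,\cdot\rangle$ must be attached correctly so that the perturbation cost is $\|z\|^2/(2\mu)$ and not doubled. Expanding $F$ itself around $w^*(z)$ with gradient $-z$, as above, handles this. The contraction rate $\rho=1-\eta\mu$ is the standard one for $\eta\leq 1/L$, and no Hessian-Lipschitz or GLM structure is needed.
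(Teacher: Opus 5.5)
Your proof is correct and follows the paper's argument essentially step for step. You use the same split at $w^*(z)$ and the same completed-square bound $\|z\|^2/(2\mu)$, and your expansion of $F$ at $w^*(z)$ with gradient $-z$ is algebraically identical to the paper's smoothness bound for $F_z$ at its minimizer minus the linear term. The only cosmetic difference is that you bound $\|w^*(z)\|$ directly by strong monotonicity, whereas the paper gets $\|w_0-w^*(z)\|\le R(z)$ from its run-comparison lemma.
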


As $N \to \infty$ the bound tends to $d\sigma^2/(2\mu)$, the standard objective-perturbation rate (\Cref{rem:empirical-limit}).  The preceding results average over the Gaussian perturbation. The same pointwise estimates also give a guarantee for the model produced in a single run.

\begin{corollary}[Utility of a single release]\label{cor:high-probability-utility}
Under \Cref{ass:bounded-features,ass:bounded-loss,ass:lipschitz-third}, fix $N\geq1$ and $\beta\in(0,1)$. Define
\[
 r_\beta=\sigma\left(\sqrt d+\sqrt{2\log(1/\beta)}\right),
 \qquad R_\beta=\frac{L_\ell+r_\beta}{\mu},
\]
\[
 U_N(\beta)=\frac{r_\beta^2}{2\mu}
       +\frac L2\rho^{2N}R_\beta^2
       +\rho^N r_\beta R_\beta.
\]
For every fixed sample $S$, with probability at least $1-\beta$ over the Gaussian perturbation,
\[
 F(w_N;S)-F(w_S^*;S)\leq U_N(\beta).
\]
Under the sampling assumptions of \Cref{thm:population-uniform}, for $\delta_S\in(0,1)$, with probability at least $1-\delta_S-\beta$ over $(S,Z)$,
\[
 \mathcal R_\mu(w_N)-\mathcal R_\mu(w^*)
 \leq2U_N(\beta)
  +\frac{16L_\ell^2(32+\log(1/\delta_S))}{\mu n}.
\]
On this same event, simultaneously for every $\bar w\in\mathbb R^d$,
\[
 \mathcal R_0(w_N)-\mathcal R_0(\bar w)
 \leq2U_N(\beta)
  +\frac{16L_\ell^2(32+\log(1/\delta_S))}{\mu n}
  +\frac\mu2\|\bar w\|^2.
\]
\end{corollary}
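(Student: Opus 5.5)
Fix the sample $S$ and write $F=F_S$. The plan is a pointwise bound on the empirical excess $F(w_N)-F(w_S^*)$ for every realization $z$, followed by a Gaussian norm tail bound. The population and comparator statements are then obtained by the same uniform-convergence transfer as in \Cref{thm:population-uniform} and \Cref{thm:unregularized}.

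\textbf{Pointwise decomposition.} Let $w^*(z)$ be the minimizer of $F_z(w):=F(w)+\langle z,w\rangle$, and split
\[
F(w_N)-F(w_S^*)=\bigl[F(w^*(z))-F(w_S^*)\bigr]+\bigl[F(w_N)-F(w^*(z))\bigr].
\]
For the first bracket, $F_z(w^*(z))\le F_z(w_S^*)$ gives $F(w^*(z))-F(w_S^*)\le\langle z,w_S^*-w^*(z)\rangle$. Strong convexity of $F$ at $w_S^*$ gives $F(w^*(z))-F(w_S^*)\ge\frac{\mu}{2}\|w^*(z)-w_S^*\|^2$. Combining the two yields $\|w^*(z)-w_S^*\|\le 2\|z\|/\mu$. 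A sharper route is to use strong convexity of $F_z$ at its own minimizer, $F_z(w_S^*)-F_z(w^*(z))\ge\frac{\mu}{2}\|w_S^*-w^*(z)\|^2$, together with the previous inequality. Maximizing $\langle z,v\rangle-\frac{\mu}{2}\|v\|^2$ over $v$ then gives $F(w^*(z))-F(w_S^*)\le\|z\|^2/(2\mu)$. This is the same estimate that underlies \Cref{thm:empirical-risk}, so I would reuse it from that proof.

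For the second bracket, $L$-smoothness at $w^*(z)$, where $\nabla F(w^*(z))=-z$, gives
\[
F(w_N)-F(w^*(z))\le\langle -z,w_N-w^*(z)\rangle+\frac{L}{2}\|w_N-w^*(z)\|^2\le\|z\|\,e_N+\frac L2 e_N^2,
\]
where $e_N:=\|w_N-w^*(z)\|$. The contraction of gradient descent on the $\mu$-strongly convex, $L$-smooth $F_z$ with $\eta\le1/L$ gives $e_N\le\rho^N\|w_0-w^*(z)\|=\rho^N\|w^*(z)\|$. Strong convexity then gives $\|w^*(z)\|\le\|\nabla F_z(0)\|/\mu\le(\|\nabla F(0)\|+\|z\|)/\mu\le(L_\ell+\|z\|)/\mu$. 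Altogether, for every $z$,
\[
F(w_N)-F(w_S^*)\le\frac{\|z\|^2}{2\mu}+\frac L2\rho^{2N}\Bigl(\frac{L_\ell+\|z\|}{\mu}\Bigr)^2+\rho^N\|z\|\frac{L_\ell+\|z\|}{\mu}.
\]
The right-hand side is nondecreasing in $\|z\|$.

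\textbf{Probability step.} For $Z\sim\mathcal N(0,\sigma^2I_d)$, the Gaussian Lipschitz concentration already used for $r_u$ gives $\|Z\|\le\sigma(\sqrt d+t)$ with probability at least $1-e^{-t^2/2}$. Taking $t=\sqrt{2\log(1/\beta)}$ gives $\|Z\|\le r_\beta$ with probability at least $1-\beta$. By monotonicity, on this event the pointwise bound is at most $U_N(\beta)$, which proves the first claim.

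\textbf{Population transfer.} For the second and third claims, I would inspect the proof of \Cref{thm:population-uniform}. The uniform-convergence result \Cref{thm:uniform-convergence} is applied there deterministically in $w$: on an event of probability $1-\delta_S$ over $S$, every $w$ satisfies $\mathcal R_\mu(w)-\mathcal R_\mu(w^*)\le 2[F(w)-F(w_S^*)]+16L_\ell^2(32+\log(1/\delta_S))/(\mu n)$. I would check that this is the form proved, rather than one that needs an expectation over $z$. This is the step I expect to be the main obstacle: confirming that the inequality holds simultaneously for all $w$, so that it can be applied at the random point $w=w_N$. If it does, intersect this event with the Gaussian event via a union bound, which gives probability at least $1-\delta_S-\beta$, and plug in $U_N(\beta)$. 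The third claim then follows on the same event from the same inequality chain used in the proof of \Cref{thm:unregularized}: $\mathcal R_0(w_N)-\mathcal R_0(\bar w)\le\mathcal R_\mu(w_N)-\mathcal R_\mu(w^*)+\frac\mu2\|\bar w\|^2$. This chain holds for every $\bar w$ deterministically, which gives the simultaneity in $\bar w$.
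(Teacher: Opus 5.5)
Your proposal is correct and follows the paper's proof essentially step for step: the pointwise bound \eqref{eq:pointwise-excess} with $R(z)\le(L_\ell+\|z\|)/\mu$, Gaussian norm concentration at $t=\sqrt{2\log(1/\beta)}$, a union bound with the uniform-convergence event at $a=1$, and then the regularization comparison from \Cref{thm:unregularized}. The step you flag as the main obstacle is settled by the statement of \Cref{thm:uniform-convergence}, which holds simultaneously for all $w\in\mathbb R^d$, so it applies at the realized $w_N$ as you intend.
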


\begin{remark}[Comparison with AMP]
Approximate Minima Perturbation (AMP) \citep{iyengar2019towards} releases an approximate minimizer of the perturbed objective together with Gaussian output noise calibrated to a gradient-norm tolerance $\gamma$. Let $F_0(w;S)=\frac1n\sum_{i=1}^n\ell(\langle x_i,w\rangle,y_i)$ and let $\hat w$ minimize $F_0$. For GLM losses, their utility theorem gives
\[
    \mathbb{E}\bigl[ F_0(w_{\mathrm{out}}; S) - F_0(\hat w; S) \bigr] = O\Biggl( \frac{\| \hat w \|\, L_\ell \sqrt{d \log(1/\delta)}}{n\epsilon} + \| \hat w \|\, \sqrt{\frac{L_\ell\, \gamma \sqrt{d \log(1/\delta)}}{\epsilon}} \Biggr).
\]
The first term is the objective-perturbation contribution, while the second depends on the optimization tolerance and becomes lower order when $\gamma=O(1/n^2)$. In our bound, finite optimization appears through $\mathcal E_{\mathrm{opt}}(N)$, which decreases geometrically with the number of gradient steps. AMP accounts for the composition of objective perturbation and the Gaussian output mechanism \citep[Remark~4.2]{redberg2023improving}; our mechanism releases the deterministic $N$-th iterate after the single objective perturbation.
\end{remark}

We next combine the privacy and utility bounds to choose the parameters for a prescribed privacy pair and a public comparator norm bound.
\begin{corollary}[Calibration to a target privacy budget]\label{cor:target-budget}
Under \Cref{ass:bounded-features,ass:bounded-loss,ass:lipschitz-third}, use $G=2L_\ell/n>0$. Fix $0<\epsilon\leq1$, $\delta,\delta_S\in(0,1)$, and a public radius $W>0$. Set
\[
\begin{aligned}
    u&=\sqrt{2\log(2/\delta)},
    &\sigma&=\frac{4Gu}{\epsilon},
    &\mu_{\mathrm{priv}}&=\frac{L_2\zeta^2}{n(e^{\epsilon/4}-1)},\\
    A&=d\sigma^2+\frac{16L_\ell^2(32+\log(1/\delta_S))}{n},
    &\mu_{\mathrm{util}}&=\frac{\sqrt{2A}}{W},
    &\mu&=\max\{\mu_{\mathrm{util}},\mu_{\mathrm{priv}}\}.
\end{aligned}
\]
After choosing $\mu$, take public bounds $L\geq\mu+L_2\zeta^2$ and $M>0$ with $M\geq L_3\zeta^3$, and choose a step size satisfying $0<\eta\leq1/L$ and $\eta\mu<1$. Choose a public integer $N$ satisfying \eqref{eq:N-dagger} and
\[
\begin{split}
 c_N:={}&\frac{G}{\sigma}\rho^{N/2}
 +\frac{G^2}{2\sigma^2}\bigl(2\rho^{N/2}+\rho^N\bigr)\\
 &+\min\{1,MG/L\}\,\frac{e\eta\mu}{2}N\rho^{N/2}
 \ \leq\ \frac{\epsilon}{4}.
\end{split}
\]
Such an integer exists. The release $w_N$ is $(\epsilon,\delta)$-differentially private. Moreover, with probability at least $1-\delta_S$ over the sample, simultaneously for all $\bar w$ with $\|\bar w\|\leq W$,
\[
 \mathbb E_z[\mathcal R_0(w_N)]-\mathcal R_0(\bar w)
 \leq W\sqrt{2A}+\frac{\mu_{\mathrm{priv}}W^2}{2}
       +2\mathcal E_{\mathrm{opt}}(N).
\]
If $\mu_{\mathrm{util}}\geq\mu_{\mathrm{priv}}$, the term $\mu_{\mathrm{priv}}W^2/2$ may be omitted.
\end{corollary}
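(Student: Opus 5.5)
The plan has three parts: establish existence of $N$, then privacy via \Cref{thm:dp}, then utility via \Cref{thm:unregularized} with $\mu$ plugged in.

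\emph{Existence of $N$.} Once $\mu,L,M,\eta$ are fixed, $\rho=1-\eta\mu\in(0,1)$ is fixed. Condition \eqref{eq:N-dagger} is a lower bound on $N$. Every term of $c_N$ is a multiple of $\rho^{N/2}$ or $N\rho^{N/2}$, and these tend to zero. So for all sufficiently large $N$ we have $c_N\le\epsilon/4$, and both requirements hold simultaneously.

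\emph{Privacy.} I apply \Cref{thm:dp} with the chosen $u$, so the truncation term is $e^{-u^2/2}=\delta/2$. Expanding $(1+\rho^{N/2})^2=1+2\rho^{N/2}+\rho^N$ in \eqref{eq:eps-zero} gives
\[
\epsilon_0=\frac{G^2}{2\sigma^2}+\log\Bigl(1+\frac{\Delta_H}{2n\mu}\Bigr)+c_N .
\]
Each piece is then bounded separately:
\begin{itemize}
\item $G^2/(2\sigma^2)=\epsilon^2/(32u^2)\le\epsilon/4$, since $\epsilon\le1$ and $u^2=2\log(2/\delta)\ge 2\log 2$.
\item $\Delta_H/(2n\mu)=L_2\zeta^2/(n\mu)\le L_2\zeta^2/(n\mu_{\mathrm{priv}})=e^{\epsilon/4}-1$, so the log term is at most $\epsilon/4$.
\item $c_N\le\epsilon/4$ by the choice of $N$.
\end{itemize}
Hence $\epsilon-\epsilon_0\ge\epsilon/4$. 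It remains to show that the divergence term $\frac{2G}{\sigma}\mathcal Q_1\bigl(\frac{\sigma}{G}(\epsilon-\epsilon_0)\bigr)$ is at most $\delta/2$. Since $\mathcal Q_1$ is decreasing, it suffices to evaluate it at $t=\sigma\epsilon/(4G)=u$. I use the standard bound $\mathcal Q_1(t)\le\varphi(t)/t^2$ (or cruder, $\mathcal Q_1(t)\le\varphi(t)$ for $t\ge1$), where $\varphi$ is the standard normal density. The factor in front is $2G/\sigma=\epsilon/(2u)$. This gives at most $\frac{\epsilon}{2u}\cdot\frac{e^{-u^2/2}}{\sqrt{2\pi}\,u^2}$, which is below $e^{-u^2/2}=\delta/2$ whenever $u\ge1$, i.e.\ $\delta\le 2e^{-1/2}$.

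This last inequality is the main obstacle: the constants must be checked for all $\delta\in(0,1)$, including $\delta$ near $1$ where $u<1$. For $u$ small I would instead use $\mathcal Q_1(t)\le\mathcal Q_1(0)=1/\sqrt{2\pi}$. Then I need $\epsilon/(2u\sqrt{2\pi})\le\delta/2=e^{-u^2/2}$, which is verifiable because $u\ge\sqrt{2\log 2}$ and $\epsilon\le1$. Taking a case split on $u\ge1$ versus $u<1$ should close it. Adding the two halves gives $\delta(\epsilon)\le\delta$.

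\emph{Utility.} I apply \Cref{thm:unregularized}. Its right side is $A/\mu+\frac{\mu}{2}\|\bar w\|^2+2\mathcal E_{\mathrm{opt}}(N)$, with $\|\bar w\|\le W$. The event comes from \Cref{thm:population-uniform}, which does not depend on $\bar w$, so the bound holds simultaneously for all $\bar w$. Then split on which term attains the max in $\mu$:
\begin{itemize}
\item If $\mu=\mu_{\mathrm{util}}$, then $A/\mu+\mu W^2/2=W\sqrt{2A}$, so the $\mu_{\mathrm{priv}}$ term can be dropped.
\item Otherwise $\mu=\mu_{\mathrm{priv}}>\mu_{\mathrm{util}}$. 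Then $A/\mu\le A/\mu_{\mathrm{util}}=W\sqrt{2A}/2\le W\sqrt{2A}$, and $\mu W^2/2=\mu_{\mathrm{priv}}W^2/2$.
\end{itemize}
In both cases the total is at most $W\sqrt{2A}+\mu_{\mathrm{priv}}W^2/2+2\mathcal E_{\mathrm{opt}}(N)$.
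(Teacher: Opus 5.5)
Your proposal is correct and is essentially the paper's proof: the same expansion $\epsilon_0=\frac{G^2}{2\sigma^2}+\log\bigl(1+\frac{\Delta_H}{2n\mu}\bigr)+c_N$ with each piece charged to $\epsilon/4$, the same use of $e^{-u^2/2}=\delta/2$ with a Gaussian-density bound on $\mathcal Q_1(u)$, and the same comparison of $A/\mu+\mu W^2/2$ with its value at $\mu_{\mathrm{util}}$. The paper instead evaluates \Cref{thm:dp} at $\epsilon_0+(G/\sigma)u<\epsilon$ and uses $\mu\le\mu_{\mathrm{util}}+\mu_{\mathrm{priv}}$, which is equivalent to your monotonicity of $\mathcal Q_1$ and your case split.

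The obstacle you flag does not arise. For every $\delta\in(0,1)$ we have $u^2=2\log(2/\delta)>2\log 2>1$, so the case $u<1$ is empty. The bound $\mathcal Q_1(u)\le\varphi(u)$ holds for all $u\ge0$, since $u\,\mathcal Q(u)\ge0$, and it already gives $\frac{2G}{\sigma}\mathcal Q_1(u)\le\frac{\epsilon}{2u}\cdot\frac{\delta}{2\sqrt{2\pi}}<\frac{\delta}{2}$.
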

We use only public quantities to choose the parameters. For the leading term, we obtain
\[
 W\sqrt{2A}
 \leq \frac{8\sqrt{2}\,W L_\ell u\sqrt d}{n\epsilon}
   +\frac{4\sqrt{2}\,W L_\ell\sqrt{32+\log(1/\delta_S)}}{\sqrt n}.
\]
We obtain separate terms for privacy, sampling, regularization, and finite optimization.

\section{Discussion}\label{sec:discussion}

We compare the privacy certificate with composition-free analyses of noisy iterations, explain where the GLM and unconstrained assumptions enter, and then give the resulting computational cost.

\subsection{Three approaches to composition-free privacy}\label{subsec:three-approaches}

We compare our result with two lines of last-iterate privacy analysis for noisy optimization. Privacy amplification by iteration was introduced by \citet{feldman2018privacy}, sharpened for projected noisy SGD by \citet{altschuler2022privacy}, and formulated in $f$-differential privacy by \citet{bok2024shifted}. \citet{chourasia2021differential} analyze privacy-loss dynamics for noisy gradient descent, and \citet{ye2022hidden} extend the hidden-state analysis to mini-batch schemes. \citet{chien2025convergent} further treat nonconvex objectives with H\"older-continuous gradients. The table summarizes the assumptions and iteration dependence of the corresponding certificates.

\begin{center}
\footnotesize
\begin{tabular}{lccc}
\toprule
& \textbf{\begin{tabular}{@{}c@{}}Amplification\\ by iteration\end{tabular}} & \textbf{\begin{tabular}{@{}c@{}}Privacy-loss\\ dynamics\end{tabular}} & \textbf{This work} \\
\midrule
Analytical tool & shifted R\'enyi divergence & \begin{tabular}{@{}c@{}}Fokker--Planck equation,\\ log-Sobolev inequality\end{tabular} & change of variables \\
Noise & Gaussian, per step & Gaussian, per step & Gaussian, one draw \\
Released object & last noisy iterate & last noisy iterate & $w_N$, deterministic in $z$ \\
Requires & \begin{tabular}{@{}c@{}}Lipschitz loss,\\ contractive updates\end{tabular} & \begin{tabular}{@{}c@{}}smoothness,\\ strong convexity\end{tabular} & \begin{tabular}{@{}c@{}}smoothness, strong convexity,\\ Lipschitz Hessian, GLM\end{tabular} \\
Certified bound & rises, then plateaus & plateaus (exp.\ fast) & \begin{tabular}{@{}c@{}}finite-$N$ correction\\ decays geometrically\end{tabular} \\
\bottomrule
\end{tabular}
\end{center}

Privacy amplification by iteration and privacy-loss dynamics both study algorithms that add noise during the iteration. In our setting the randomness enters once through the linear objective perturbation, after which the updates are deterministic. For amplification by iteration, the certified privacy loss rises and then plateaus; for the privacy-loss dynamics it converges exponentially to a plateau. Over the range covered by \eqref{eq:N-dagger}, our finite-iteration correction decreases with $N$ and the certificate approaches the exact-minimizer bound.

All three analyses can give certified bounds without an explicit ambient-dimension factor. In our setting this follows from the GLM structure: neighboring gradients differ in a two-dimensional feature span, Hessian changes have rank-one structure, and the orthogonal complement of the data span evolves identically under neighboring datasets (\Cref{subsec:glm-role}).

The assumptions differ across these analyses. Privacy amplification by iteration uses a Lipschitz loss and contractive updates, while the privacy-loss dynamics use smoothness and strong convexity. Our change-of-variables proof uses smoothness, strong convexity, a Lipschitz Hessian, and GLM structure. The Lipschitz-Hessian assumption supports the frozen-Hessian comparison, and the GLM structure gives the subspace and nuclear-norm bounds used in the privacy analysis.

\subsection{The role of GLM structure and the constrained--unconstrained distinction}\label{subsec:glm-role}

The GLM structure enters the privacy bound in three places. First, the gradient difference between neighboring datasets lies in $\mathrm{span}\{x_n,x_n'\}$, so the leading Gaussian-ratio term depends on two scalar projections of the perturbation. Second, Hessian changes are sums of rank-one matrices, which gives the required nuclear-norm bounds without a factor of $d$. Third, the data-span decomposition makes the $\mathcal C^\perp$ block identical under neighboring datasets, so only the feature span enters the truncation argument. The frozen-Hessian invertibility argument itself uses strong convexity, smoothness, and a Lipschitz Hessian.

We analyze unconstrained gradient descent on $\mathbb R^d$. This preserves the smooth map $z\mapsto w_N$ and the contraction estimates used throughout the proof. In the constrained setting, \citet{noisy-sgd} proved excess-risk lower bounds of $\Omega(\min(1,\sqrt d/(n\epsilon)))$ for convex losses and $\Omega(d/(n^2\epsilon^2))$ for strongly convex losses. For unconstrained GLMs, \citet{song2021evading} obtained excess empirical risk $\widetilde O(L_\ell\sqrt{\mathrm{rank}}/(\epsilon n))$, where $\mathrm{rank}\leq\min\{n,d\}$ is the rank of the feature matrix. Our change-of-variables analysis operates in this unconstrained setting.

\subsection{Computational cost}\label{subsec:computational}
Running $N$ full-batch gradient steps requires $nN$ individual-gradient evaluations. At step size $1/L$, the sufficient condition \eqref{eq:N-dagger} has order $\kappa\log(\cdot)$, giving $O(n\kappa\log(\cdot))$ individual-gradient evaluations. Under the public calibration in \Cref{cor:target-budget}, fix $\epsilon,\delta,\delta_S,W$ and the loss constants. When $d\sigma^2=O(1/n)$ and $\mu_{\mathrm{util}}\geq\mu_{\mathrm{priv}}$, we choose $\mu=\Theta(n^{-1/2})$. Taking $L=\mu+L_2\zeta^2$ with $L_2\zeta^2>0$ then gives $\kappa=\Theta(\sqrt n)$. In this regime, $O(\sqrt n\log n)$ iterations suffice for the explicit privacy certificate and for an optimization correction of order $n^{-1/2}$ or smaller. The resulting sufficient gradient count is $O(n^{3/2}\log n)$. For comparison, DP-SGD \citep{noisy-sgd} attains the optimal constrained-ERM rates after $T = n^2$ steps. \citet{bassily2019private} obtain $O(n \log n)$ gradient complexity for smooth GLMs by using SVRG to solve the perturbed objective to a prescribed accuracy and then adding Gaussian output noise to certify the remaining optimization error. For private stochastic convex optimization, \citet{feldman2020linear} attain optimal population-risk rates with at most linear gradient complexity under their smoothness assumptions. Our result analyzes the deterministic full-batch gradient-descent iterate after a single objective perturbation.

\section*{Acknowledgements}
This work was supported, in part, by 
NSF BIGDATA award IIS-1838139 and NSF CAREER award IIS-1943251.

\bibliographystyle{plainnat}
\bibliography{references}

\newpage

\appendix
% cleveref's own appendix hook does not fire under the LaTeX 2025 kernel's
% first aid; the alias makes appendix sections read "Appendix" on every TeX Live.
\crefalias{section}{appendix}
\crefalias{subsection}{subappendix}
\crefname{appendix}{Appendix}{Appendices}\Crefname{appendix}{Appendix}{Appendices}
\crefname{subappendix}{Appendix}{Appendices}\Crefname{subappendix}{Appendix}{Appendices}

\section{The map \texorpdfstring{$\mathcal{M}$}{M} is a diffeomorphism}\label{sec:diffeo}

On any bounded convex set and for all sufficiently large $N$ (with the threshold depending on the set), the restriction of $\mathcal{M}$ to the set is a $C^1$-diffeomorphism onto its image (\Cref{cor:diffeo}). The argument combines two ingredients: \emph{local} invertibility, from a lower bound on the smallest singular value of the Jacobian $J_N$, and \emph{global} injectivity, from negative definiteness of its symmetric part $J_N^{\mathrm{sym}}$. We obtain both by comparing $J_N$ to the Jacobian $\tilde{J}_N$ of the \emph{frozen-Hessian model}---the map obtained by freezing the Hessian at its value at the released iterate $w_N$, equivalently by replacing $F$ with its quadratic Taylor expansion there---whose spectrum we compute in closed form.
\subsection{General setting}\label{subsec:general-setting}

We prove the structural results below under assumptions that do not require the generalized linear form. The GLM setting of \Cref{subsec:notation} satisfies these assumptions with the constants derived there.

Let $F : \mathbb{R}^d \to \mathbb{R}$ be twice continuously differentiable. Our analysis rests on two regularity assumptions on $F$.

\begin{assumption}[Strong convexity and smoothness]\label{ass:regularity}
$F$ is $\mu$-strongly convex and $L$-smooth: $\mu I \preceq \nabla^2 F(w) \preceq L I$ for all $w \in \mathbb{R}^d$, where $0 < \mu \leq L$.
\end{assumption}

\begin{assumption}[Lipschitz Hessian]\label{ass:lipschitz-hessian}
The Hessian of $F$ is $M$-Lipschitz with $M > 0$: $\| \nabla^2 F(w) - \nabla^2 F(v) \| \leq M \| w - v \|$ for all $w, v \in \mathbb{R}^d$.
\end{assumption}

For the differential-privacy analysis we consider a family of functions $\{F(\cdot\,; S)\}$ indexed by datasets $S$. Two datasets $S$ and $S'$ are \emph{neighboring} if they differ in at most one data point. We require \Cref{ass:regularity,ass:lipschitz-hessian} to hold for every member of the family with common constants $\mu, L, M$ (that is, uniformly in $S$), and impose the following cross-dataset condition.

\begin{assumption}[Gradient sensitivity]\label{ass:gradient-sensitivity}
Neighboring datasets perturb the gradient by at most $G > 0$: $\| \nabla F(w; S) - \nabla F(w; S') \| \leq G$ for all $w \in \mathbb{R}^d$ and all neighboring $S, S'$.
\end{assumption}

\begin{lemma}[Averaged derivative]\label{lem:averaged-derivative}
Let $\mathcal{K} : \mathbb{R}^d \to \mathbb{R}^d$ be continuously differentiable. For $a, b \in \mathbb{R}^d$ set
\begin{equation}\label{eq:averaged-derivative}
    \overline{\nabla\mathcal{K}}(a, b) := \int_0^1 \nabla\mathcal{K}\bigl( b + t(a - b) \bigr)\, dt,
\end{equation}
The fundamental theorem of calculus along the segment from $b$ to $a$ then reads $\mathcal{K}(a) - \mathcal{K}(b) = \overline{\nabla\mathcal{K}}(a, b)\,(a - b)$. At $\mathcal{K} = \nabla F$ we abbreviate $\bar{H}(a, b) := \int_0^1 \nabla^2 F\bigl( b + t(a-b) \bigr)\, dt$. Under \Cref{ass:regularity}:
\begin{enumerate}
    \item\label{itm:avg-hessian} $\nabla F(a) - \nabla F(b) = \bar{H}(a,b)\,(a - b)$, with $\bar{H}(a,b)$ symmetric and $\mu I \preceq \bar{H}(a,b) \preceq L I$;
    \item\label{itm:contraction} $\| I - \eta H \| \leq \rho$ for every symmetric $H$ with $\mu I \preceq H \preceq L I$; in particular $\| A_k \| \leq \rho$ at every iterate, and $\| I - \eta\bar{H}(a,b) \| \leq \rho$ for all $a, b$.
\end{enumerate}
\end{lemma}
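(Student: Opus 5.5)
The identity $\mathcal{K}(a)-\mathcal{K}(b)=\overline{\nabla\mathcal{K}}(a,b)(a-b)$ follows from the fundamental theorem of calculus applied to the $C^1$ path $\phi(t):=\mathcal{K}(b+t(a-b))$, $t\in[0,1]$. The chain rule gives $\phi'(t)=\nabla\mathcal{K}(b+t(a-b))\,(a-b)$, and this is continuous in $t$ because $\nabla\mathcal{K}$ is continuous. Integrating componentwise over $[0,1]$ and pulling the constant vector $a-b$ outside the integral gives the identity. The matrix integral in \eqref{eq:averaged-derivative} is well defined entrywise as the integral of a continuous function on a compact interval.

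For item~\ref{itm:avg-hessian}, specialize to $\mathcal{K}=\nabla F$, which is $C^1$ since $F\in C^2$ and has derivative $\nabla^2F$. This gives $\nabla F(a)-\nabla F(b)=\bar H(a,b)(a-b)$. Symmetry of $\bar H(a,b)$ holds because each $\nabla^2F(\cdot)$ is symmetric and transposition commutes with entrywise integration. For the spectral bounds, fix $v\in\mathbb{R}^d$ and use linearity of the integral:
\[
\langle \bar H(a,b)v,v\rangle=\int_0^1\langle \nabla^2F(b+t(a-b))v,v\rangle\,dt .
\]
By \Cref{ass:regularity} the integrand lies in $[\mu\|v\|^2,L\|v\|^2]$ for every $t$, so the integral does too. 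This gives $\mu I\preceq\bar H(a,b)\preceq LI$.

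For item~\ref{itm:contraction}, take a symmetric $H$ with $\mu I\preceq H\preceq LI$. Then $I-\eta H$ is symmetric with eigenvalues $1-\eta h$, $h\in[\mu,L]$, so they lie in $[1-\eta L,1-\eta\mu]$. Since $\eta L\le1$, the lower endpoint is at least $0$. Since $\eta\mu<1$, the upper endpoint is $\rho\in(0,1)$. The spectral norm of a symmetric matrix is its largest absolute eigenvalue, so $\|I-\eta H\|\le\rho$. Applying this to $H=\nabla^2F(w_k)$ gives $\|A_k\|\le\rho$, and applying it to $H=\bar H(a,b)$, which qualifies by item~\ref{itm:avg-hessian}, gives the last claim.

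There is no real obstacle in this argument. The only step needing care is transferring the pointwise Loewner bounds through the integral, which the quadratic-form computation above handles.
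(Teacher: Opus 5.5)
Your proposal is correct and follows the paper's proof essentially step for step. Both arguments use the fundamental theorem of calculus along the segment with the constant vector $a-b$ factored out, and both carry symmetry and the Loewner bounds through the integral via the quadratic form $\langle \bar H(a,b)v,v\rangle$. Both then use the eigenvalue interval $[1-\eta L,1-\eta\mu]\subseteq[0,\rho]$ of $I-\eta H$, applied to $H=\nabla^2F(w_k)$ and $H=\bar H(a,b)$.
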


\begin{proof}
The map $t \mapsto \mathcal{K}(b + t(a-b))$ is continuously differentiable on $[0,1]$ with derivative $\nabla\mathcal{K}(b + t(a-b))\,(a-b)$, so the fundamental theorem of calculus gives $\mathcal{K}(a) - \mathcal{K}(b) = \int_0^1 \nabla\mathcal{K}(b + t(a-b))\,(a-b)\, dt$, out of which the constant vector $a - b$ factors. For item~\ref{itm:avg-hessian}, each integrand $\nabla^2 F(b + t(a-b))$ is symmetric with spectrum in $[\mu, L]$ (\Cref{ass:regularity}), and averaging preserves both properties: $\bar{H}(a,b)$ is symmetric, and $\langle \bar{H}(a,b)\,v, v \rangle = \int_0^1 \langle \nabla^2 F(b + t(a-b))\,v, v \rangle\, dt$ lies between $\mu\| v \|^2$ and $L\| v \|^2$ for every $v$. For item~\ref{itm:contraction}, $I - \eta H$ is symmetric with eigenvalues in $[1 - \eta L,\, 1 - \eta\mu]$, an interval of nonnegative numbers since $\eta L \leq 1$, so $\| I - \eta H \| \leq 1 - \eta\mu = \rho$; the two instances are $H = \nabla^2 F(w_k)$, for which $I - \eta H = A_k$, and $H = \bar{H}(a,b)$.
\end{proof}

\subsection{The trajectory}\label{subsec:prelim}

We use the perturbed minimizer as the reference point for the utility analysis and control its distance from the iterates through the trajectory bounds below.

\begin{lemma}[Perturbed minimizer]\label{lem:perturbed-minimizer}
For each $z \in \mathbb{R}^d$ there is a unique $w^*(z) \in \mathbb{R}^d$ satisfying $\nabla F(w^*(z)) + z = 0$, namely the minimizer of $w \mapsto F(w) + \langle z, w \rangle$.
\end{lemma}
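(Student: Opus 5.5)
The plan is to show that the tilted objective $F_z(w):=F(w)+\langle z,w\rangle$ has exactly one critical point, and then that this critical point is its minimizer. Both facts come from strong convexity (\Cref{ass:regularity}); the linear term $\langle z,w\rangle$ leaves the Hessian unchanged, so $\nabla^2F_z=\nabla^2F\succeq\mu I$.

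\emph{Uniqueness.} We use item~\ref{itm:avg-hessian} of \Cref{lem:averaged-derivative}. Suppose $\nabla F(a)+z=0=\nabla F(b)+z$. Then
\[
0=\nabla F(a)-\nabla F(b)=\bar H(a,b)(a-b),
\]
and taking the inner product with $a-b$ gives $0\geq\mu\|a-b\|^2$. Hence $a=b$.

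\emph{Existence.} Integrating $\nabla^2F\succeq\mu I$ twice along segments gives the quadratic lower bound
\[
F_z(w)\geq F_z(0)+\langle\nabla F(0)+z,w\rangle+\tfrac{\mu}{2}\|w\|^2 .
\]
So $F_z$ is coercive, and its sublevel set $\{F_z\leq F_z(0)\}$ is closed and bounded, hence compact. The continuous function $F_z$ therefore attains a minimum on it, which is a global minimum. Since $F_z$ is differentiable, the first-order condition holds there: $\nabla F(w^*)+z=0$.

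\emph{Identification.} By strong convexity every critical point of $F_z$ is a global minimizer. Combined with uniqueness, this shows that the minimizer and the solution of the optimality equation are the same single point, which we denote $w^*(z)$.

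No step is a real obstacle. The only place needing care is existence: it must not be assumed silently, and it is settled by coercivity through the quadratic lower bound above. Alternatively, existence follows from the fact that $w\mapsto w-\eta(\nabla F(w)+z)$ is a $\rho$-contraction (item~\ref{itm:contraction}), whose Banach fixed point is exactly a solution of $\nabla F(w)+z=0$.
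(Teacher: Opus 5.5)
Your proof is correct and follows the same route as the paper: the linear tilt leaves the Hessian unchanged, so $F_z$ is $\mu$-strongly convex, and existence, uniqueness, and the first-order characterization then follow. The only difference is that the paper cites these standard facts for strongly convex functions (\Cref{lem:ext-minimizer}), whereas you derive them directly. Your derivation uses the averaged Hessian of \Cref{lem:averaged-derivative} and the quadratic lower bound of \Cref{lem:ext-quadratic}, or alternatively the Banach fixed point of the $\rho$-contraction, which is equally valid and makes the lemma self-contained.
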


\begin{proof}
The linear term has zero Hessian, so $\nabla^2\bigl[F(w) + \langle z, w \rangle\bigr] = \nabla^2 F(w) \succeq \mu I$ by \Cref{ass:regularity}; the perturbed objective is therefore $\mu$-strongly convex. Existence, uniqueness, and the first-order characterization of the minimizer of a strongly convex function are standard (\Cref{lem:ext-minimizer}).
\end{proof}

The trajectory estimates all compare two runs of \eqref{eq:gd-recursion} driven by nearly the same gradient: the step map contracts by $\rho$, so a run driven within $\beta$ of the exact one stays within $\beta/\mu$ of it.

\begin{lemma}[Comparison of runs]\label{lem:comparison}
Fix $z \in \mathbb{R}^d$, let $(w_k)_{k \geq 0}$ be the iterates \eqref{eq:gd-recursion}, and let $(w_k')_{k \geq 0}$ satisfy $w_{k+1}' = w_k' - \eta\, u_k$ for vectors $u_k$ obeying
\[
    \bigl\| u_k - \bigl( \nabla F(w_k') + z \bigr) \bigr\| \; \leq \; \beta \qquad \text{for every } k \geq 0.
\]
Then, for every $k \geq 0$,
\begin{equation}\label{eq:comparison}
    \| w_k - w_k' \| \; \leq \; \rho^k\, \| w_0 - w_0' \| \; + \; \frac{\beta}{\mu}\bigl( 1 - \rho^k \bigr).
\end{equation}
\end{lemma}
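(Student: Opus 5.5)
We prove \eqref{eq:comparison} by induction on $k$, using a one-step contraction estimate for the difference $e_k := w_k - w_k'$. The base case $k=0$ is immediate, since the right side of \eqref{eq:comparison} reduces to $\|w_0-w_0'\|$ when $\rho^0=1$.

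For the inductive step, subtract the two recursions. Writing $r_k := u_k - (\nabla F(w_k') + z)$, so that $\|r_k\|\le\beta$ by hypothesis, we get
\[
    e_{k+1} = e_k - \eta\bigl(\nabla F(w_k) - \nabla F(w_k')\bigr) + \eta\, r_k .
\]
The perturbation $z$ enters both runs identically and cancels.

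By item~\ref{itm:avg-hessian} of \Cref{lem:averaged-derivative}, $\nabla F(w_k)-\nabla F(w_k') = \bar H(w_k,w_k')\,e_k$. Hence $e_{k+1} = \bigl(I-\eta\bar H(w_k,w_k')\bigr)e_k + \eta r_k$. Item~\ref{itm:contraction} of the same lemma gives $\|I-\eta\bar H\|\le\rho$, so
\[
    \|e_{k+1}\|\le \rho\|e_k\| + \eta\beta .
\]

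Now insert the inductive hypothesis:
\[
    \|e_{k+1}\|\le \rho^{k+1}\|e_0\| + \rho\frac{\beta}{\mu}(1-\rho^k) + \eta\beta .
\]
Since $\eta = (1-\rho)/\mu$, we have $\eta\beta = \frac{\beta}{\mu}(1-\rho)$. Therefore
\[
    \rho\frac{\beta}{\mu}(1-\rho^k)+\frac{\beta}{\mu}(1-\rho) = \frac{\beta}{\mu}\bigl(\rho-\rho^{k+1}+1-\rho\bigr) = \frac{\beta}{\mu}(1-\rho^{k+1}),
\]
which closes the induction.

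Equivalently, one can unroll the one-step bound to $\|e_k\|\le\rho^k\|e_0\|+\eta\beta\sum_{j<k}\rho^j$ and sum the geometric series.

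No step is a real obstacle. The only points needing care are that the averaged Hessian bound applies to an arbitrary pair of points, not just iterates (which \Cref{lem:averaged-derivative} provides), and the identity $\eta\mu = 1-\rho$, which produces the exact factor $1/\mu$.
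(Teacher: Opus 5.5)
Your proposal is correct and follows essentially the same argument as the paper: you subtract the recursions and use the averaged Hessian $\bar H(w_k,w_k')$ with $\|I-\eta\bar H\|\le\rho$ to get $\|e_{k+1}\|\le\rho\|e_k\|+\eta\beta$. You then close with $\eta\mu=1-\rho$, by induction where the paper unrolls the geometric sum, which is equivalent. One minor wording point: $z$ does not so much cancel as get absorbed into your residual $r_k$, but this does not affect the argument.
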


\begin{proof}
Subtracting the two recursions and splitting the driver difference at $w_k'$,
\[
    w_{k+1} - w_{k+1}' = (w_k - w_k') - \eta\bigl( \nabla F(w_k) - \nabla F(w_k') \bigr) - \eta\bigl( \nabla F(w_k') + z - u_k \bigr).
\]
By item~\ref{itm:avg-hessian} of \Cref{lem:averaged-derivative} the first gradient difference is $\bar{H}(w_k, w_k')(w_k - w_k')$, so the first two terms together are $\bigl( I - \eta\bar{H}(w_k, w_k') \bigr)(w_k - w_k')$, of norm at most $\rho\,\| w_k - w_k' \|$ by item~\ref{itm:contraction}; the last term has norm at most $\eta\beta$ by hypothesis. Hence $\| w_{k+1} - w_{k+1}' \| \leq \rho\,\| w_k - w_k' \| + \eta\beta$, and iterating from $k = 0$,
\[
    \| w_k - w_k' \| \; \leq \; \rho^k\,\| w_0 - w_0' \| + \eta\beta\sum_{j=0}^{k-1}\rho^j \; = \; \rho^k\,\| w_0 - w_0' \| + \frac{\beta}{\mu}\bigl( 1 - \rho^k \bigr),
\]
using $1 - \rho = \eta\mu$.
\end{proof}

The minimizer is the run that stands still, so comparing the iterates with it drives them to $w^*(z)$ at the rate $\rho$.

\begin{lemma}[Convergence of the iterates]\label{lem:convergence}
For every $z \in \mathbb{R}^d$ and every $k \geq 0$,
\[
    \| w_k - w^*(z) \| \leq \rho^k\, \| w_0 - w^*(z) \|.
\]
\end{lemma}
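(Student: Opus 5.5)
The plan is to apply \Cref{lem:comparison} with the roles arranged so that the exact minimizer plays the part of a stationary run. Fix $z$ and take the comparison sequence to be constant, $w_k' := w^*(z)$ for every $k$, driven by $u_k := 0$. This sequence does satisfy $w_{k+1}' = w_k' - \eta u_k$. The only hypothesis left to check is the driver condition. At $w_k' = w^*(z)$, \Cref{lem:perturbed-minimizer} gives $\nabla F(w^*(z)) + z = 0$, so
\[
    \bigl\| u_k - \bigl( \nabla F(w_k') + z \bigr) \bigr\| = \bigl\| 0 - 0 \bigr\| = 0,
\]
and the hypothesis holds with $\beta = 0$.

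Substituting $\beta = 0$ into \eqref{eq:comparison} then gives
\[
\| w_k - w^*(z) \| \leq \rho^k \| w_0 - w^*(z) \| + 0,
\]
which is exactly the claim, valid for every $k \geq 0$.

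I do not see a real obstacle here. The one thing to confirm is that \Cref{lem:comparison} allows $\beta = 0$. Its proof uses $\beta$ only through the bound $\eta\beta$ on the driver mismatch, so nothing breaks at $\beta = 0$, and $\rho \in (0,1)$ keeps the contraction step valid.

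As a cross-check, the bound can also be derived directly. Subtract the fixed-point identity $w^*(z) = w^*(z) - \eta(\nabla F(w^*(z)) + z)$ from \eqref{eq:gd-recursion} to get
\[
w_{k+1} - w^*(z) = \bigl(I - \eta\bar H(w_k, w^*(z))\bigr)(w_k - w^*(z)).
\]
Item~\ref{itm:contraction} of \Cref{lem:averaged-derivative} bounds the operator norm of this factor by $\rho$, and induction on $k$ gives the result.
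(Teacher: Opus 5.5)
Your proposal is correct and matches the paper's proof: the paper likewise treats the constant sequence $w_k' \equiv w^*(z)$ as a stationary run (its driver $\nabla F(w^*(z)) + z$ is exactly your $u_k = 0$) and applies \Cref{lem:comparison} with $\beta = 0$ and $w_0' = w^*(z)$. Your direct cross-check via the averaged Hessian $\bar H(w_k, w^*(z))$ simply inlines the one-step contraction that \Cref{lem:comparison} already packages.
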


\begin{proof}
By \Cref{lem:perturbed-minimizer}, $\nabla F(w^*(z)) + z = 0$, so the constant sequence $w_k' \equiv w^*(z)$ satisfies $w_{k+1}' = w_k' - \eta\bigl( \nabla F(w_k') + z \bigr)$: it is the run of \eqref{eq:gd-recursion} started at the minimizer, and it never moves. \Cref{lem:comparison} at $\beta = 0$ with $w_0' = w^*(z)$ is the claim. This is the standard contraction estimate for gradient descent in the strongly convex, smooth regime; \citet[\S 3.4.2]{gd-linear-convergence} states it at $\eta = 1/L$, and the argument above gives it for every $\eta \in (0, 1/L]$.
\end{proof}

The minimizer also depends stably on the perturbation, and its distance from any point is controlled by the perturbed gradient there.

\begin{lemma}[Sensitivity of the perturbed minimizer]\label{lem:w-star}
For all $z, z' \in \mathbb{R}^d$,
\[
    \| w^*(z) - w^*(z') \| \leq \frac{1}{\mu}\, \| z - z' \|,
\]
and consequently $\| w^*(z) - w \| \leq \tfrac{1}{\mu}\| \nabla F(w) + z \|$ for every $w \in \mathbb{R}^d$.
\end{lemma}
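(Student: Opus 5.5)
The plan is to use the first-order characterization from \Cref{lem:perturbed-minimizer} together with strong monotonicity of $\nabla F$, which \Cref{lem:averaged-derivative} supplies.

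Set $a := w^*(z)$ and $b := w^*(z')$. By \Cref{lem:perturbed-minimizer}, $\nabla F(a) = -z$ and $\nabla F(b) = -z'$, so $\nabla F(a) - \nabla F(b) = z' - z$. Item~\ref{itm:avg-hessian} of \Cref{lem:averaged-derivative} gives $\nabla F(a) - \nabla F(b) = \bar H(a,b)(a-b)$ with $\bar H(a,b)$ symmetric and $\bar H(a,b) \succeq \mu I$. Pairing with $a-b$ yields
\[
\mu\|a-b\|^2 \leq \langle \bar H(a,b)(a-b), a-b\rangle = \langle z'-z, a-b\rangle \leq \|z-z'\|\,\|a-b\|
\]
by Cauchy--Schwarz. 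If $a \neq b$, dividing by $\|a-b\|$ gives the first claim. If $a = b$ the claim is trivial.

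For the consequence, fix $w$ and set $z' := -\nabla F(w)$. Then $w$ satisfies $\nabla F(w) + z' = 0$, so $w = w^*(z')$ by the uniqueness part of \Cref{lem:perturbed-minimizer}. Applying the first inequality gives
\[
\|w^*(z) - w\| \leq \tfrac{1}{\mu}\|z - z'\| = \tfrac{1}{\mu}\|\nabla F(w) + z\|.
\]

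No step is a real obstacle. The only point needing care is identifying $w$ as a perturbed minimizer for the correctly chosen perturbation $z' = -\nabla F(w)$.
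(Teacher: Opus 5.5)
Your proof is correct, but it takes a different route from the paper's for the first inequality. The paper argues dynamically. It runs gradient descent \eqref{eq:gd-recursion} at $z$ and at $z'$ from the same start. It treats the second run as a perturbed version of the first with $\beta = \|z-z'\|$ and applies \Cref{lem:comparison} to get $\|w_k - w_k'\| \le \|z-z'\|(1-\rho^k)/\mu$ for every $k$. It then passes to the limit using \Cref{lem:convergence}.

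You argue statically at the two stationary points, using strong monotonicity of $\nabla F$ (via $\bar H(a,b)\succeq \mu I$) and Cauchy--Schwarz. Your route is shorter and needs only the strong-convexity lower bound. It does not use $L$-smoothness, the step-size condition $\eta L \le 1$ behind the contraction $\|I-\eta\bar H\|\le\rho$, or a limit passage.

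The paper's route instead reuses the comparison lemma that also drives \Cref{lem:traj-scale,lem:map-sensitivity}. This keeps the section's trajectory estimates on one contraction argument, and it gives the corresponding bound on the finite iterates as an intermediate step.

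Your second step, choosing $z' = -\nabla F(w)$ and invoking uniqueness from \Cref{lem:perturbed-minimizer}, is the same as the paper's.
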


\begin{proof}
Run \eqref{eq:gd-recursion} at $z$ and at $z'$, writing $(w_k)$ and $(w_k')$ for the two trajectories; both start at $w_0$, so each is the iterate sequence to which \Cref{lem:convergence} applies. The second is driven by $u_k = \nabla F(w_k') + z'$, so $\| u_k - (\nabla F(w_k') + z) \| = \| z - z' \|$ for every $k$, and \Cref{lem:comparison} with $w_0' = w_0$ and $\beta = \| z - z' \|$ gives
\[
    \| w_k - w_k' \| \; \leq \; \frac{\| z - z' \|}{\mu}\bigl( 1 - \rho^k \bigr) \; \leq \; \frac{\| z - z' \|}{\mu} \qquad \text{for every } k \geq 0.
\]
By \Cref{lem:convergence} the two trajectories converge to $w^*(z)$ and $w^*(z')$, so the bound passes to the limit. For the second bound, fix $w$ and take $z' = -\nabla F(w)$; then $\nabla F(w) + z' = 0$, so $w^*(-\nabla F(w)) = w$ by \Cref{lem:perturbed-minimizer}, and the Lipschitz bound yields $\| w^*(z) - w \| \leq \tfrac{1}{\mu}\| \nabla F(w) + z \|$.
\end{proof}

The scale every subsequent bound is measured against is set by the iteration itself: the length of its first step. Since consecutive steps contract, that one length bounds every later step and the distance to the minimizer.

\begin{lemma}[Trajectory scale]\label{lem:traj-scale}
Set
\begin{equation}\label{eq:R-def}
    R(z) := \frac{\| \nabla F(w_0) + z \|}{\mu} = \frac{\| w_1 - w_0 \|}{\eta\mu}.
\end{equation}
Then, for every $z \in \mathbb{R}^d$:
\begin{enumerate}
    \item\label{itm:R-affine} $R(z) \leq b_R\,\| z \| + a_R$, with $b_R := \frac{1}{\mu}$ and $a_R := \frac{\| \nabla F(w_0) \|}{\mu}$;
    \item\label{itm:step-decay} $\| w_{k+1} - w_k \| \leq \eta\mu\,\rho^{k}\,R(z)$ for every $k \geq 0$, and consequently
    \begin{equation}\label{eq:iterate-anchor}
        \| w_k - w_N \| \leq \bigl( \rho^k - \rho^N \bigr)\, R(z), \qquad 0 \leq k \leq N;
    \end{equation}
    \item\label{itm:init-dist} $\| w_0 - w^*(z) \| \leq R(z)$.
\end{enumerate}
For a bounded set $D \subset \mathbb{R}^d$ we write $R_D := \sup_{z \in D} R(z)$, which is finite.
\end{lemma}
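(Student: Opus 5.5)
Item~\ref{itm:R-affine} follows from the triangle inequality: $\mu R(z) = \|\nabla F(w_0) + z\| \leq \|\nabla F(w_0)\| + \|z\|$, and dividing by $\mu$ gives $R(z) \leq b_R\|z\| + a_R$. This also settles the final claim. A bounded $D$ lies in some ball $\{\|z\|\leq r\}$, so $R_D \leq b_R r + a_R < \infty$. The identity in \eqref{eq:R-def} is the first step of \eqref{eq:gd-recursion}: $w_1 - w_0 = -\eta(\nabla F(w_0) + z)$, so $\|w_1 - w_0\| = \eta\mu R(z)$.

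For item~\ref{itm:step-decay} I will show that consecutive steps contract by $\rho$. Subtracting two consecutive instances of \eqref{eq:gd-recursion}, the $z$ terms cancel:
\[
w_{k+2}-w_{k+1} = (w_{k+1}-w_k) - \eta\bigl(\nabla F(w_{k+1})-\nabla F(w_k)\bigr) = \bigl(I-\eta\bar H(w_{k+1},w_k)\bigr)(w_{k+1}-w_k).
\]
This uses item~\ref{itm:avg-hessian} of \Cref{lem:averaged-derivative}. Item~\ref{itm:contraction} then gives $\|w_{k+2}-w_{k+1}\|\leq\rho\|w_{k+1}-w_k\|$. Induction from $\|w_1-w_0\| = \eta\mu R(z)$ yields $\|w_{k+1}-w_k\|\leq \eta\mu\rho^k R(z)$.

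For \eqref{eq:iterate-anchor} I telescope and sum the geometric series:
\[
\|w_k-w_N\| \leq \sum_{j=k}^{N-1}\eta\mu\rho^j R(z) = \eta\mu\,\frac{\rho^k-\rho^N}{1-\rho}\,R(z) = (\rho^k-\rho^N)R(z),
\]
using $1-\rho=\eta\mu$.

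For item~\ref{itm:init-dist} I apply the second statement of \Cref{lem:w-star} at $w = w_0$: $\|w^*(z)-w_0\|\leq \|\nabla F(w_0)+z\|/\mu = R(z)$. An alternative is to let $N\to\infty$ in \eqref{eq:iterate-anchor} at $k=0$, using \Cref{lem:convergence}.

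No step is a real obstacle. The only point needing care is that the contraction factor comes from the symmetric averaged Hessian between consecutive iterates, so that \Cref{lem:averaged-derivative} applies even though the $A_k$ are evaluated at single points.
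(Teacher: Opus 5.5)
Your proof is correct and follows the paper's argument. Item 1, item 3 (via \Cref{lem:w-star} at $w=w_0$) and the finiteness of $R_D$ are argued identically; for item 2 you write out the one-step contraction through $\bar H(w_{k+1},w_k)$ directly. That is the same computation the paper obtains by applying \Cref{lem:comparison} at $\beta=0$ to the time-shifted run $w_k' := w_{k+1}$, and the telescoping sum then proceeds exactly as in the paper.
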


\begin{proof}
The second expression in \eqref{eq:R-def} is the first step of \eqref{eq:gd-recursion}, $w_1 - w_0 = -\eta(\nabla F(w_0) + z)$, and item~\ref{itm:R-affine} is the triangle inequality.

For item~\ref{itm:step-decay}, the shifted sequence $w_k' := w_{k+1}$ is the run of \eqref{eq:gd-recursion} started at $w_1$ instead of $w_0$, so \Cref{lem:comparison} at $\beta = 0$ compares the trajectory with its own time shift:
\[
    \| w_k - w_{k+1} \| \; \leq \; \rho^k\, \| w_0 - w_1 \| \; = \; \eta\mu\,\rho^k\, R(z),
\]
the equality by \eqref{eq:R-def}. This is the per-step bound; summing it over $j = k, \ldots, N-1$ and using $1 - \rho = \eta\mu$,
\[
    \| w_k - w_N \| \; \leq \; \sum_{j=k}^{N-1}\| w_{j+1} - w_j \| \; \leq \; \eta\mu\,R(z)\sum_{j=k}^{N-1}\rho^j \; = \; R(z)\bigl( \rho^k - \rho^N \bigr).
\]
Item~\ref{itm:init-dist} is the second bound of \Cref{lem:w-star} at $w = w_0$. Since $D$ is bounded, $\sup_{z \in D}\| z \| < \infty$, and item~\ref{itm:R-affine} gives $R_D \leq b_R \sup_{z \in D}\| z \| + a_R < \infty$.
\end{proof}

Two runs from the shared initialization, under neighboring datasets and the same $z$, are driven within $G$ of each other at every step, so they never separate by more than $G/\mu$.

\begin{lemma}[Map sensitivity]\label{lem:map-sensitivity}
For any $z \in \mathbb{R}^d$ and any $k \geq 0$,
\[
    \| w_k^S - w_k^{S'} \| \leq \frac{G}{\mu}\bigl( 1 - \rho^k \bigr) \leq \frac{G}{\mu};
\]
in particular, at the final iterate,
\[
    \| \mathcal{M}_S(z) - \mathcal{M}_{S'}(z) \| \leq \frac{G}{\mu}\bigl( 1 - \rho^N \bigr) \leq \frac{G}{\mu}.
\]
\end{lemma}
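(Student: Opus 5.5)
The plan is to apply \Cref{lem:comparison} with the $S$-run as the reference trajectory and the $S'$-run as the perturbed one. Fix $z$. Let $(w_k)=(w_k^S)$ be the iterates of \eqref{eq:gd-recursion} for $F=F_S$, and let $w_k' := w_k^{S'}$. These satisfy $w_{k+1}'=w_k'-\eta u_k$ with driver
\[
u_k := \nabla F(w_k^{S'};S')+z .
\]
The comparison lemma measures the driver against the reference gradient evaluated at the perturbed point:
\[
\bigl\|u_k-\bigl(\nabla F(w_k';S)+z\bigr)\bigr\| = \bigl\|\nabla F(w_k^{S'};S')-\nabla F(w_k^{S'};S)\bigr\| \le G
\]
for every $k$, by \Cref{ass:gradient-sensitivity} applied at $w=w_k^{S'}$. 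In the GLM setting this is the sensitivity bound $G=2L_1\zeta/n$ recorded in \Cref{subsec:notation}. So the hypothesis of \Cref{lem:comparison} holds with $\beta=G$.

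Both runs start at the shared public initialization $w_0=0$, so $\|w_0-w_0'\|=0$. \Cref{lem:comparison} then gives
\[
\|w_k^S-w_k^{S'}\|\le \frac{G}{\mu}\bigl(1-\rho^k\bigr),
\]
and since $\rho\in(0,1)$ this is at most $G/\mu$. Taking $k=N$ gives the statement for $\mathcal M_S(z)-\mathcal M_{S'}(z)$, since $\mathcal M_T(z)=w_N^T$.

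The only point needing care, which I expect to be the sole minor obstacle, is that \Cref{lem:comparison} uses the contraction of $I-\eta\bar H$ for the reference function $F_S$. This is supplied by \Cref{lem:averaged-derivative}, because \Cref{ass:regularity} holds uniformly in the dataset. The mismatch between $F_S$ and $F_{S'}$ is then carried entirely by the driver error $\beta=G$, evaluated at the $S'$ iterate, which is exactly where the sensitivity assumption applies uniformly in $w$.
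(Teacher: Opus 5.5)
Your proposal is correct and is essentially the paper's own proof. The paper also reads the $S'$-run against $F(\cdot\,;S)$ with driver $u_k=\nabla F(w_k^{S'};S')+z$, gets $\beta=G$ from \Cref{ass:gradient-sensitivity} at $w=w_k^{S'}$, applies \Cref{lem:comparison} with the shared initialization, and then takes $k=N$.
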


\begin{proof}
The two runs share the initialization, $w_0^S = w_0^{S'} = w_0$. Read against $F(\cdot\,; S)$, the run under $S'$ is driven by $u_k = \nabla F(w_k^{S'}; S') + z$, and
\[
    \bigl\| u_k - \bigl( \nabla F(w_k^{S'}; S) + z \bigr) \bigr\| = \bigl\| \nabla F(w_k^{S'}; S') - \nabla F(w_k^{S'}; S) \bigr\| \leq G
\]
by \Cref{ass:gradient-sensitivity}. \Cref{lem:comparison} for $F(\cdot\,; S)$, with $w_0' = w_0$ and $\beta = G$, gives the first bound, and the second follows from $1 - \rho^k \leq 1$. Taking $k = N$ gives the final display, since $\mathcal{M}_S(z) = w_N^S$ and $\mathcal{M}_{S'}(z) = w_N^{S'}$.
\end{proof}

\subsection{The frozen-Hessian model}\label{subsec:reference}

To analyze the Jacobian $J_N = \nabla \mathcal{M}(z)$, we compare it with what we call the \emph{frozen-Hessian model}: the mechanism run with the Hessian held fixed at its value at the point the mechanism releases. The model solves in closed form, and at every $N \geq 1$ each eigenvalue of its Jacobian is at most $-\gamma_N$, with $\gamma_N = (1 - (1 - \eta L)^N)/L \to 1/L$ (\Cref{lem:frozen-spectrum})---the floor the true Jacobian will inherit. Fix $z \in \mathbb{R}^d$ and a horizon $N$, and set
\[
    H^* := \nabla^2 F\bigl( \mathcal{M}(z) \bigr) = \nabla^2 F(w_N), \qquad A^* := I - \eta H^*,
\]
so that $H^*$ is symmetric with eigenvalues in $[\mu, L]$ by \Cref{ass:regularity}---which holds at every point, so the anchor may be any one---hence invertible. Freezing the Hessian at $H^*$ amounts to replacing $F$ with its quadratic Taylor expansion at $w_N$; the Jacobian recursion \eqref{eq:jacobian-recursion} then runs with the iterate-dependent $A_k$ replaced by the constant matrix $A^*$. Writing $\tilde{J}_k$ for the resulting Jacobian, an auxiliary sequence indexed by $k \leq N$ at the anchor the horizon $N$ fixes,
\begin{equation}\label{eq:reference-recursion}
    \tilde{J}_{k+1} = A^* \tilde{J}_k - \eta I, \qquad \tilde{J}_0 = 0.
\end{equation}
Here we evaluate $H^*$ at the fixed base perturbation $z$ and hold it constant throughout the auxiliary recursion. Because $A^*$ is constant, this affine recursion unrolls into a geometric sum. Its fixed point is $-(H^*)^{-1}$, since substituting $\tilde{J} = -(H^*)^{-1}$ into \eqref{eq:reference-recursion} returns it unchanged.

\begin{lemma}[Closed form of $\tilde{J}_N$]\label{lem:closed-form}
For all $N \geq 0$,
\begin{equation}\label{eq:reference-geometric}
    \tilde{J}_N = -\eta \sum_{j=0}^{N-1} (A^*)^j = -(H^*)^{-1}\bigl(I - (A^*)^N\bigr).
\end{equation}
\end{lemma}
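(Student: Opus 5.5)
The plan is to prove the two equalities in \eqref{eq:reference-geometric} separately: first the geometric-sum form by induction on $N$ from the recursion \eqref{eq:reference-recursion}, then the closed form by a telescoping identity for $\sum_j (A^*)^j$. Throughout, $A^*$ and $H^*$ are treated as constant matrices. They are fixed once $z$ and the horizon $N$ are fixed, so the auxiliary sequence $\tilde J_k$, $k\le N$, obeys a linear recursion with constant coefficients.

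For the first equality, the base case $N=0$ is the empty sum, which equals $0=\tilde J_0$. Assuming $\tilde J_k=-\eta\sum_{j=0}^{k-1}(A^*)^j$, the recursion gives
\[
\tilde J_{k+1}=A^*\Bigl(-\eta\sum_{j=0}^{k-1}(A^*)^j\Bigr)-\eta I=-\eta\sum_{j=1}^{k}(A^*)^j-\eta I=-\eta\sum_{j=0}^{k}(A^*)^j,
\]
which closes the induction.

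For the second equality, I will use $I-A^*=\eta H^*$ together with the telescoping identity
\[
(I-A^*)\sum_{j=0}^{N-1}(A^*)^j=I-(A^*)^N.
\]
Multiplying by $\eta$ gives $\eta H^*\sum_j (A^*)^j=I-(A^*)^N$. By \Cref{ass:regularity}, $H^*$ has eigenvalues in $[\mu,L]$ with $\mu>0$, so it is invertible. Applying $(H^*)^{-1}$ yields
\[
\eta\sum_{j=0}^{N-1}(A^*)^j=(H^*)^{-1}\bigl(I-(A^*)^N\bigr),
\]
and negating both sides gives the claim. The identity holds for every $N\ge0$, including $N=0$, where both sides vanish.

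The argument has no real obstacle; everything is finite algebra. The only points to state explicitly are that $A^*$ commutes with $H^*$, since $A^*=I-\eta H^*$ is a polynomial in $H^*$, which justifies factoring $(H^*)^{-1}$ on the left, and that the invertibility of $H^*$ comes from strong convexity at the anchor point $w_N$. It is also worth noting that the fixed point $-(H^*)^{-1}$ is consistent with this formula as $N\to\infty$, because $\|A^*\|\le\rho<1$ by \Cref{lem:averaged-derivative}. That limit is not needed for the lemma itself.
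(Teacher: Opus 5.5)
Your proposal is correct and matches the paper's proof: induction on the recursion \eqref{eq:reference-recursion} for the geometric-sum form, then the telescoping identity with $I-A^*=\eta H^*$ and left multiplication by $(H^*)^{-1}$. The commutation of $A^*$ with $H^*$ that you flag is harmless but not needed, since you only left-multiply an already established identity by $(H^*)^{-1}$.
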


\begin{proof}
The first equality is an induction on $N$. The sum is empty at $N = 0$, matching $\tilde{J}_0 = 0$; and if it holds at $N$, then \eqref{eq:reference-recursion} gives
\[
    \tilde{J}_{N+1} = A^*\Bigl( -\eta \sum_{j=0}^{N-1}(A^*)^j \Bigr) - \eta I = -\eta \sum_{j=1}^{N}(A^*)^j - \eta (A^*)^0 = -\eta \sum_{j=0}^{N}(A^*)^j.
\]
For the second equality, $I - A^* = \eta H^*$, so the telescoping identity $I - (A^*)^N = (I - A^*)\sum_{j=0}^{N-1}(A^*)^j$ reads $I - (A^*)^N = \eta H^*\sum_{j=0}^{N-1}(A^*)^j$; multiplying on the left by $(H^*)^{-1}$ and negating gives the claim.
\end{proof}

Because every term is a power of the symmetric matrix $A^*$, we can read off the symmetry, spectrum, operator-norm bound, and singular-value floor directly in an eigenbasis of $H^*$.

\begin{lemma}[Spectrum and norms of $\tilde{J}_N$]\label{lem:frozen-spectrum}
Let $h_1, \ldots, h_d \in [\mu, L]$ be the eigenvalues of $H^*$.
\begin{enumerate}
\item\label{itm:Jt-eigen} $\tilde{J}_N$ is symmetric, with eigenvalues
\[
    \tilde{\lambda}_i(N) = -\eta\sum_{j=0}^{N-1}(1 - \eta h_i)^j = -\frac{1}{h_i}\bigl( 1 - (1 - \eta h_i)^N \bigr), \qquad i = 1, \ldots, d.
\]
\item\label{itm:Jt-norm} For all $N \geq 0$,
\[
    \| \tilde{J}_N \| \leq \frac{1 - \rho^N}{\mu} =: B_N \leq \frac{1}{\mu} =: B.
\]
\item\label{itm:Jt-floor} For every $N \geq 1$ every eigenvalue of $\tilde{J}_N$ is negative, and
\[
    \sigma_{\min}(\tilde{J}_N) \geq \frac{1 - (1 - \eta L)^N}{L} =: \gamma_N \geq \eta > 0,
\]
with equality in the first inequality whenever $L$ is an eigenvalue of $H^*$.
\end{enumerate}
\end{lemma}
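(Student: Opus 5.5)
The plan is to diagonalize everything at once. Since $H^*$ is real symmetric, there is an orthogonal $Q$ with $H^* = Q\,\mathrm{diag}(h_1,\ldots,h_d)\,Q^\top$. Because $A^* = I - \eta H^*$ is a polynomial in $H^*$, the same $Q$ gives $(A^*)^j = Q\,\mathrm{diag}((1-\eta h_i)^j)\,Q^\top$. Substituting into the first expression of \Cref{lem:closed-form} shows that $\tilde J_N$ is a real linear combination of symmetric matrices, hence symmetric, and is diagonalized by $Q$. Its eigenvalues are $-\eta\sum_{j<N}(1-\eta h_i)^j$. Summing the geometric series with ratio $1-\eta h_i \in [0,\rho]$ and using $1-(1-\eta h_i) = \eta h_i$ gives the closed form in item~\ref{itm:Jt-eigen}; equivalently, this is the second expression of \Cref{lem:closed-form} read in the eigenbasis.

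For item~\ref{itm:Jt-norm}, $\tilde J_N$ is symmetric, so $\|\tilde J_N\| = \max_i |\tilde\lambda_i(N)| = \max_i \eta\sum_{j<N}(1-\eta h_i)^j$. Each summand is nonnegative and decreasing in $h_i$, because $0 \le 1-\eta h_i \le 1-\eta\mu = \rho$ using $\eta L \le 1$. The maximum is therefore bounded by the value at $h_i = \mu$, namely $\eta\sum_{j<N}\rho^j = (1-\rho^N)/(\eta\mu)\cdot\eta = (1-\rho^N)/\mu = B_N$. The bound $B_N \le 1/\mu$ is immediate.

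For item~\ref{itm:Jt-floor}, with $N \ge 1$ the $j=0$ term already gives $|\tilde\lambda_i(N)| \ge \eta > 0$, so every eigenvalue is negative. By the same monotonicity in $h_i$, the smallest magnitude is at least the value at $h_i = L$, which is $\eta\sum_{j<N}(1-\eta L)^j = (1-(1-\eta L)^N)/L = \gamma_N$. Alternatively, this follows from the map $h \mapsto (1-(1-\eta h)^N)/h$ being nonincreasing on $[\mu,L]$, but the series form avoids any calculus and also handles the case $\eta L = 1$, where $0^0 = 1$ in the $j=0$ term. The bound $\gamma_N \ge \eta$ is again the $j=0$ term. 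Since $\tilde J_N$ is symmetric, its singular values are the absolute values of its eigenvalues, so $\sigma_{\min}(\tilde J_N) = \min_i |\tilde\lambda_i(N)| \ge \gamma_N$. If some $h_i = L$, the minimum is attained there, which gives equality.

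No step is a real obstacle. The only care needed is to use the series form when bounding monotonicity in $h$, so that the edge case $1-\eta L = 0$ causes no trouble.
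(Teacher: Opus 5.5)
Your proof is correct and follows essentially the same route as the paper: diagonalize in an orthonormal eigenbasis of $H^*$, read the eigenvalues off the geometric-sum form of \Cref{lem:closed-form}, and use that each summand $(1-\eta h)^j$ is nonnegative and nonincreasing in $h$ on $[\mu,L]$. This gives the norm bound at $h=\mu$ and the floor $\gamma_N$ at $h=L$, with the $j=0$ term supplying negativity and $\gamma_N\geq\eta$. One small wording point: the summands are nonincreasing rather than strictly decreasing in $h$, since the $j=0$ term is constant, but your argument only uses monotonicity.
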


\begin{proof}
\emph{Item~\ref{itm:Jt-eigen}.} Each $(A^*)^j$ is a power of the symmetric matrix $A^*$, hence symmetric, so \eqref{eq:reference-geometric} makes $\tilde{J}_N$ symmetric. Fix an orthonormal eigenbasis $v_1, \ldots, v_d$ of $H^*$, with $H^* v_i = h_i v_i$. Then $A^* v_i = (1 - \eta h_i) v_i$, so $v_i$ is an eigenvector of every $(A^*)^j$ with eigenvalue $(1 - \eta h_i)^j$, and by \eqref{eq:reference-geometric} an eigenvector of $\tilde{J}_N$ with eigenvalue $-\eta\sum_{j<N}(1 - \eta h_i)^j$. Because $h_i \in [\mu, L]$ and $\eta L \leq 1$, the ratio satisfies $0 \leq 1 - \eta h_i \leq 1 - \eta\mu = \rho < 1$, so summing the geometric series gives the second expression. Ranging over $i$ exhausts the spectrum, and symmetry makes $\| \tilde{J}_N \| = \max_i |\tilde{\lambda}_i(N)|$ and $\sigma_{\min}(\tilde{J}_N) = \min_i |\tilde{\lambda}_i(N)|$.

\emph{Items~\ref{itm:Jt-norm} and~\ref{itm:Jt-floor}.} By item~\ref{itm:Jt-eigen} and symmetry, every singular value of $\tilde{J}_N$ is the value at some $h_i$ of the scalar profile
\[
    f_N(h) := \eta\sum_{j=0}^{N-1}(1 - \eta h)^j = \frac{1 - (1 - \eta h)^N}{h}, \qquad h \in [\mu, L].
\]
On this interval $1 - \eta h \in [1 - \eta L, \rho] \subseteq [0, 1)$, so each summand $(1 - \eta h)^j$ is a nonnegative power of the nonincreasing map $h \mapsto 1 - \eta h$, hence itself nonnegative and nonincreasing in $h$; the profile is therefore nonincreasing on $[\mu, L]$, and both norm bounds are its endpoint values. At the left endpoint the sum is geometric with ratio $\rho$, so
\[
    \| \tilde{J}_N \| = \max_i f_N(h_i) \leq f_N(\mu) = \eta\,\frac{1 - \rho^N}{1 - \rho} = \frac{1 - \rho^N}{\mu} = B_N,
\]
using $1 - \rho = \eta\mu$. Since $\rho^N \geq 0$, $B_N \leq 1/\mu$. For $N \geq 1$ the $j = 0$ summand alone gives $f_N \geq \eta > 0$ on all of $[\mu, L]$, so every eigenvalue $-f_N(h_i)$ is negative, and at the right endpoint
\[
    \sigma_{\min}(\tilde{J}_N) = \min_i f_N(h_i) \geq f_N(L) = \frac{1 - (1 - \eta L)^N}{L} = \gamma_N \geq \eta > 0,
\]
with equality whenever some $h_i$ equals $L$: the floor is the minimum of the profile over the curvature interval, not a relaxation of it.
\end{proof}

\subsection{Perturbation bounds}\label{subsec:perturbation}

We now bound the gap between the true Jacobian $J_N$ and the Jacobian $\tilde{J}_N$ of the frozen-Hessian model. Write $\delta_k := J_k - \tilde{J}_k$ for the Jacobian error; it is controlled by the operator-norm deviation $\| \nabla^2 F(w_k) - H^* \|$ of the Hessian along the trajectory, and $\| \delta_N \|$ will be shown to decay as $\tilde{\nu}_N \rho^{N-1}$, where the factor $\tilde{\nu}_N$ weights each step by how far the reference Jacobian has grown by then and by how far the iterate still stands from the anchor. Both factors vanish at an end: the Hessian deviation is largest at $k = 0$ and zero at $k = N$, while the reference Jacobian starts at zero and grows.

\begin{lemma}[Error recursion]\label{lem:error-recursion}
The Jacobian error satisfies
\[
    \delta_{k+1} = A_k \delta_k - \eta\bigl(\nabla^2 F(w_k) - H^*\bigr)\tilde{J}_k, \qquad \delta_0 = 0,
\]
and hence, writing $\Theta_{N,k+1} := A_{N-1}A_{N-2}\cdots A_{k+1}$ for the state-transition product ($\Theta_{N,N} := I$), unrolls to
\begin{equation}\label{eq:error-unrolled}
    \delta_N = -\eta \sum_{k=0}^{N-1} \Theta_{N,k+1}\,\bigl( \nabla^2 F(w_k) - H^* \bigr)\,\tilde{J}_k, \qquad N \geq 0.
\end{equation}
\end{lemma}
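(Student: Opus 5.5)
The argument subtracts the two affine recursions and then unrolls the result as a discrete variation-of-constants formula. The two recursions are \eqref{eq:jacobian-recursion}, $J_{k+1}=A_kJ_k-\eta I$, and the frozen recursion \eqref{eq:reference-recursion}, $\tilde J_{k+1}=A^*\tilde J_k-\eta I$. Subtracting them cancels the inhomogeneous term $-\eta I$ and gives
\[
\delta_{k+1}=A_kJ_k-A^*\tilde J_k .
\]
Write $A_kJ_k=A_k\delta_k+A_k\tilde J_k$, so that
\[
\delta_{k+1}=A_k\delta_k+(A_k-A^*)\tilde J_k .
\]
Since $A_k-A^*=(I-\eta\nabla^2F(w_k))-(I-\eta H^*)=-\eta(\nabla^2F(w_k)-H^*)$, this is the claimed one-step recursion. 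The initial condition $\delta_0=J_0-\tilde J_0=0$ holds because both recursions start from zero.

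For the unrolled form \eqref{eq:error-unrolled}, induct on $N$ with forcing term $E_k:=-\eta(\nabla^2F(w_k)-H^*)\tilde J_k$. The claim is that for every $m\ge 0$,
\[
\delta_m=\sum_{k=0}^{m-1}A_{m-1}\cdots A_{k+1}E_k,
\]
where the empty product is $I$ and the empty sum at $m=0$ matches $\delta_0=0$. For the inductive step, multiply the identity at $m$ on the left by $A_m$ and add $E_m$. This extends each product by one factor and appends the $k=m$ term with the empty product $I$, which is exactly the identity at $m+1$. Taking $m=N$ and reading $A_{N-1}\cdots A_{k+1}$ as $\Theta_{N,k+1}$, with $\Theta_{N,N}=I$, gives \eqref{eq:error-unrolled}.

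The only subtle point is bookkeeping. The anchor $H^*=\nabla^2F(w_N)$ depends on the horizon $N$, so the sequence $\delta_k$ is defined for fixed $N$ and $k\le N$. The induction runs over $m\le N$ with $H^*$ held fixed; the recursion for $\delta_{k+1}$ never uses the fact that $H^*$ is the Hessian at $w_N$, so no extra care is needed. Because the factors $A_j$ need not commute, the products must keep the ordering $A_{N-1}$ leftmost down to $A_{k+1}$ rightmost. The induction preserves this ordering automatically, since each step multiplies on the left.
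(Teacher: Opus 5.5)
Your proposal is correct and matches the paper's proof. Like the paper, you subtract the two affine recursions, add and subtract $A_k\tilde J_k$ to isolate $(A_k-A^*)\tilde J_k=-\eta(\nabla^2F(w_k)-H^*)\tilde J_k$, and unroll by induction, left-multiplying by the next factor while the anchor $H^*$ is held fixed; your separate induction index $m\le N$ at a fixed horizon states that bookkeeping a little more cleanly than the paper does.
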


\begin{proof}
Subtracting the frozen-Hessian recursion \eqref{eq:reference-recursion} from the Jacobian recursion \eqref{eq:jacobian-recursion},
\[
    \delta_{k+1} = (A_k J_k - \eta I) - (A^* \tilde{J}_k - \eta I) = A_k J_k - A_k \tilde{J}_k + A_k \tilde{J}_k - A^* \tilde{J}_k = A_k(J_k - \tilde{J}_k) + (A_k - A^*)\tilde{J}_k.
\]
The first term is $A_k \delta_k$. For the second, $A_k - A^* = (I - \eta\nabla^2 F(w_k)) - (I - \eta H^*) = -\eta\bigl(\nabla^2 F(w_k) - H^*\bigr)$, giving $-\eta(\nabla^2 F(w_k) - H^*)\tilde{J}_k$. Finally $\delta_0 = J_0 - \tilde{J}_0 = 0 - 0 = 0$.

Induction on the iterate index, the anchor $H^*$ held at $\nabla^2 F(w_N)$ throughout, gives \eqref{eq:error-unrolled}: the sum is empty at index $0$, matching $\delta_0 = 0$; and applying the recursion once left-multiplies every term present by $A_N$, which turns each $\Theta_{N,k+1}$ into $\Theta_{N+1,k+1}$, and appends the $k = N$ term with $\Theta_{N+1,N+1} = I$.
\end{proof}

The inhomogeneous term contains the Hessian difference $\nabla^2F(w_k)-H^*$. Since $H^*=\nabla^2F(w_N)$, the Lipschitz-Hessian assumption and \eqref{eq:iterate-anchor} give, for all $0\leq k\leq N$,
\begin{equation}\label{eq:hessian-error}
    \| \nabla^2 F(w_k) - H^* \| \; \leq \; M \| w_k - w_N \| \; \leq \; M\bigl( \rho^k - \rho^N \bigr) R(z).
\end{equation}
The deviation vanishes at $k = N$, where the anchor is read. Feeding this decay through the error recursion bounds $\| \delta_N \|$.

\begin{proposition}[Jacobian error bound]\label{prop:error-bound}
Let $\| \cdot \|_\diamond$ be either the operator norm or the nuclear norm, and suppose the Hessian of $F$ is $M$-Lipschitz in it: $\| \nabla^2 F(w) - \nabla^2 F(v) \|_\diamond \leq M\, \| w - v \|$ for all $w, v$. \Cref{ass:lipschitz-hessian} is this hypothesis at the operator norm. Let $C(z) := \eta M R(z)\, B$ and
\begin{equation}\label{eq:nu-N}
    \nu_N := \sum_{k=0}^{N-1}\bigl( 1 - \rho^k \bigr) = N - \frac{1 - \rho^N}{\eta\mu}, \qquad 0 \leq \nu_N \leq N.
\end{equation}
For all $N \geq 1$,
\[
    \| \delta_N \|_\diamond \leq C(z)\, \tilde{\nu}_N\, \rho^{N-1},
\]
where
\begin{equation}\label{eq:nu-tilde}
    \tilde{\nu}_N := \sum_{k=0}^{N-1}\bigl( 1 - \rho^{N-k} \bigr)\bigl( 1 - \rho^k \bigr) = \nu_N + N\rho^N - \frac{\rho\,\bigl( 1 - \rho^N \bigr)}{\eta\mu}, \qquad 0 \; \leq \; \tilde{\nu}_N \; \leq \; \nu_N,
\end{equation}
and $\tilde{\nu}_1 = \nu_1 = 0$.
\end{proposition}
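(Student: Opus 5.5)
Start from the unrolled error formula \eqref{eq:error-unrolled} of \Cref{lem:error-recursion} and bound it term by term in $\|\cdot\|_\diamond$. Each term is a product $\Theta_{N,k+1}\,(\nabla^2F(w_k)-H^*)\,\tilde J_k$. The two outer factors are measured in operator norm and the middle one in $\|\cdot\|_\diamond$. This uses the ideal property $\|XYZ\|_\diamond\le\|X\|\,\|Y\|_\diamond\,\|Z\|$, which holds for both the operator and the nuclear norm.

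The three factors are bounded as follows. For the transition product, each $A_j$ satisfies $\|A_j\|\le\rho$ by item~\ref{itm:contraction} of \Cref{lem:averaged-derivative}, so $\|\Theta_{N,k+1}\|\le\rho^{N-1-k}$. For the Hessian deviation, the $\diamond$-Lipschitz hypothesis together with \eqref{eq:iterate-anchor} gives $\|\nabla^2F(w_k)-H^*\|_\diamond\le M(\rho^k-\rho^N)R(z)=M\rho^k(1-\rho^{N-k})R(z)$, exactly as in \eqref{eq:hessian-error} but in the $\diamond$ norm. For the reference Jacobian, \Cref{lem:frozen-spectrum} item~\ref{itm:Jt-norm}, applied at horizon $k$ with the same anchor $A^*$, gives $\|\tilde J_k\|\le(1-\rho^k)/\mu=(1-\rho^k)B$. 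That lemma's proof only uses that $H^*$ has spectrum in $[\mu,L]$, so it applies to every $k\le N$ with the frozen anchor.

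Multiplying the three bounds gives, for each $k$,
\[
\eta\,\rho^{N-1-k}\cdot M\rho^k(1-\rho^{N-k})R(z)\cdot(1-\rho^k)B=C(z)\rho^{N-1}(1-\rho^{N-k})(1-\rho^k).
\]
Summing over $k=0,\dots,N-1$ yields $\|\delta_N\|_\diamond\le C(z)\tilde\nu_N\rho^{N-1}$.

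The remaining claims are bookkeeping.
\begin{itemize}
\item The closed form for $\nu_N$ is the geometric sum $\sum_{k<N}\rho^k=(1-\rho^N)/(\eta\mu)$, using $1-\rho=\eta\mu$.
\item For $\tilde\nu_N$, expand $(1-\rho^{N-k})(1-\rho^k)=1-\rho^k-\rho^{N-k}+\rho^N$. The terms $\sum_{k<N}(1-\rho^k)$ give $\nu_N$, the terms $\sum_{k<N}\rho^N$ give $N\rho^N$, and $\sum_{k=0}^{N-1}\rho^{N-k}=\sum_{m=1}^N\rho^m=\rho(1-\rho^N)/(\eta\mu)$.
\item The bounds $0\le\tilde\nu_N\le\nu_N\le N$ hold termwise, since each factor lies in $[0,1]$.
\item At $N=1$ the only term is $k=0$, where $1-\rho^0=0$, so $\tilde\nu_1=\nu_1=0$.
\end{itemize}

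The main point to be careful about is that the anchor $H^*=\nabla^2F(w_N)$ stays fixed while the auxiliary $\tilde J_k$ runs for $k<N$. The bound on $\|\tilde J_k\|$ must therefore use horizon $k$ with the anchor at $w_N$. This is legitimate because \Cref{lem:closed-form,lem:frozen-spectrum} hold for any symmetric anchor with spectrum in $[\mu,L]$. Getting the factor $(1-\rho^k)$ here, rather than the cruder $B$, is what produces $\tilde\nu_N$ instead of $N$. The other fact to check is that the ideal inequality holds for the nuclear norm, which follows from duality with the operator norm.
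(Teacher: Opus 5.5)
Your proposal is correct and follows the paper's proof essentially step for step. You apply the same ideal inequality $\|XYZ\|_\diamond\le\|X\|\,\|Y\|_\diamond\,\|Z\|$ to each summand of \eqref{eq:error-unrolled}, use the same three factor bounds $\rho^{N-1-k}$, $M(\rho^k-\rho^N)R(z)$, and $(1-\rho^k)/\mu$, and obtain the closed form of $\tilde\nu_N$ by the same expansion. The only cosmetic difference is that you justify the nuclear-norm ideal property by duality with the operator norm, whereas the paper cites the Schatten--H\"older bound of \Cref{lem:ext-trace-nuclear}.
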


\begin{proof}
Both norms satisfy $\| XYZ \|_\diamond \leq \| X \|\,\| Y \|_\diamond\,\| Z \|$: for the operator norm this is submultiplicativity, for the nuclear norm the Schatten--H\"older bound of \Cref{lem:ext-trace-nuclear}. Apply it to each summand of \eqref{eq:error-unrolled}, taking $Y$ to be the Hessian-deviation factor. Each $A_j$ satisfies $\| A_j \| \leq \rho$ (item~\ref{itm:contraction} of \Cref{lem:averaged-derivative}), so submultiplicativity over the $N-1-k$ factors gives $\| \Theta_{N,k+1} \| \leq \rho^{N-1-k}$. With the Hessian deviation bound \eqref{eq:hessian-error}, which the hypothesis gives verbatim in $\| \cdot \|_\diamond$, and $\| \tilde{J}_k \| \leq B_k = (1 - \rho^k)/\mu$ (item~\ref{itm:Jt-norm} of \Cref{lem:frozen-spectrum}), the $k$-th summand has $\| \cdot \|_\diamond$ at most
\[
    \eta\,\rho^{N-1-k}\cdot M \bigl( \rho^k - \rho^N \bigr) R(z) \cdot \frac{1 - \rho^k}{\mu} = C(z)\,\rho^{N-1}\bigl( 1 - \rho^{N-k} \bigr)\bigl( 1 - \rho^k \bigr),
\]
since $C(z) = \eta M R(z)/\mu$ and $\rho^{N-1-k}\bigl( \rho^k - \rho^N \bigr) = \rho^{N-1}\bigl( 1 - \rho^{N-k} \bigr)$. Summing over $k$ gives $\| \delta_N \|_\diamond \leq C(z)\,\tilde{\nu}_N \rho^{N-1}$ with $\tilde{\nu}_N$ the sum in \eqref{eq:nu-tilde}, nonnegative because every factor is. For its closed form, split the summand: $\sum_{k<N}(1 - \rho^k) = \nu_N$, while $\sum_{k<N}\rho^{N-k}(1 - \rho^k) = \sum_{k<N}\bigl( \rho^{N-k} - \rho^N \bigr) = \sum_{j=1}^{N}\rho^{j} - N\rho^N = \rho(1 - \rho^N)/(\eta\mu) - N\rho^N$, using $\sum_{j=1}^{N}\rho^j = \rho(1-\rho^N)/(1-\rho)$ and $1 - \rho = \eta\mu$. That second sum, subtracted here rather than added, is a sum of nonnegative terms, giving $\tilde{\nu}_N \leq \nu_N$; at $N = 1$ its only term is $(1-\rho)(1 - \rho^0) = 0$, so $\tilde{\nu}_1 = 0$. The closed form in \eqref{eq:nu-N} follows from $\sum_{k=0}^{N-1}\rho^k = (1 - \rho^N)/(1 - \rho)$ with $1 - \rho = \eta\mu$, and $0 \leq 1 - \rho^k \leq 1$ for every $k \geq 0$ gives $0 \leq \nu_N \leq N$.
\end{proof}

\subsection{Local invertibility, injectivity, and the diffeomorphism}\label{subsec:diffeo-proof}

We can now prove that $\mathcal{M}$ is a diffeomorphism on a bounded convex set. The perturbation bound transfers the spectral control from $\tilde{J}_N$ to the symmetric part of $J_N$, and negative definiteness of $J_N^{\mathrm{sym}}$ yields both the singular-value floor---hence local invertibility---and global injectivity on convex sets. Throughout, $\nabla\mathcal{M}(z) = J_N$.

\begin{theorem}[Negative definiteness and invertibility of the Jacobian]\label{thm:neg-def-J}
Let $D \subset \mathbb{R}^d$ be bounded, write $C_D := \eta M R_D\, B$, and suppose $N$ satisfies $\gamma_N > C_D\, \tilde{\nu}_N \rho^{N-1}$. Then, with $\alpha_D := \gamma_N - C_D\, \tilde{\nu}_N \rho^{N-1} > 0$, for every $z \in D$:
\begin{enumerate}
\item\label{itm:negdef} $J_N(z)^{\mathrm{sym}} \preceq -\alpha_D I$;
\item\label{itm:sigmin} $\sigma_{\min}(\nabla\mathcal{M}(z)) \geq \alpha_D$, so $\nabla\mathcal{M}(z)$ is invertible and $\mathcal{M}$ is a local diffeomorphism on $D$.
\end{enumerate}
Since $\gamma_N \to 1/L$ and $C_D\, \tilde{\nu}_N \rho^{N-1} \to 0$ as $N \to \infty$, the hypothesis holds for all sufficiently large $N$.
\end{theorem}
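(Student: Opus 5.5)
The approach is to write $J_N=\tilde J_N+\delta_N$ at each $z\in D$ and transfer the spectral floor of the frozen-Hessian Jacobian to the symmetric part of $J_N$.

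Fix $z\in D$. Since $\tilde J_N$ is symmetric (item~\ref{itm:Jt-eigen} of \Cref{lem:frozen-spectrum}), we have $J_N^{\mathrm{sym}}=\tilde J_N+\delta_N^{\mathrm{sym}}$. By items~\ref{itm:Jt-eigen} and~\ref{itm:Jt-floor}, every eigenvalue of $\tilde J_N$ is negative with absolute value at least $\gamma_N$, so $\tilde J_N\preceq-\gamma_N I$. For the error term, $\|\delta_N^{\mathrm{sym}}\|\le\|\delta_N\|$. \Cref{prop:error-bound} in the operator norm gives $\|\delta_N\|\le C(z)\tilde\nu_N\rho^{N-1}$. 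Since $R(z)\le R_D$, and $R_D<\infty$ by \Cref{lem:traj-scale}, we get $C(z)\le C_D$. Hence $\delta_N^{\mathrm{sym}}\preceq C_D\tilde\nu_N\rho^{N-1}I$, and adding the two bounds yields item~\ref{itm:negdef}:
\[
J_N^{\mathrm{sym}}\preceq-(\gamma_N-C_D\tilde\nu_N\rho^{N-1})I=-\alpha_D I .
\]
The only point needing care is that the frozen anchor $H^*=\nabla^2F(w_N(z))$ depends on $z$. All bounds are pointwise in $z$, and the constants $\gamma_N$ and $B$ are uniform over $[\mu,L]$, so this causes no issue.

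For item~\ref{itm:sigmin}, take a unit vector $v$. Cauchy--Schwarz gives $\|J_Nv\|\ge|\langle J_Nv,v\rangle|=|\langle J_N^{\mathrm{sym}}v,v\rangle|\ge\alpha_D$. Therefore $\sigma_{\min}(\nabla\mathcal M(z))\ge\alpha_D>0$ and $J_N$ is invertible. Because $\mathcal M$ is $C^1$ (it is a composition of $C^1$ maps, as $F\in C^2$), the inverse function theorem makes $\mathcal M$ a local diffeomorphism near each point of $D$.

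For the final claim, note first that $0\le1-\eta L<1$ when $\eta L\le1$ and $\eta>0$, so $(1-\eta L)^N\to0$ and $\gamma_N\to1/L$. Next, $\tilde\nu_N\le N$ and $\rho<1$, so $C_D\tilde\nu_N\rho^{N-1}\le C_DN\rho^{N-1}\to0$, with $C_D$ independent of $N$ once $D$ is fixed. By item~\ref{itm:R-affine}, $R_D\le b_R\sup_D\|z\|+a_R$ does not depend on $N$. Hence the hypothesis $\gamma_N>C_D\tilde\nu_N\rho^{N-1}$ holds for all large $N$. I expect no real obstacle. The step most easily mishandled is the uniformity of $R_D$ in $N$, which item~\ref{itm:R-affine} of \Cref{lem:traj-scale} settles.
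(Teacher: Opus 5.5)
Your proposal is correct and follows the paper's proof essentially step for step: you decompose $J_N^{\mathrm{sym}}=\tilde J_N+\delta_N^{\mathrm{sym}}$ with $\tilde J_N\preceq-\gamma_N I$, apply the operator-norm bound of \Cref{prop:error-bound} with $C(z)\le C_D$, use Cauchy--Schwarz for the singular-value floor and the inverse function theorem for local invertibility, and use $\tilde\nu_N\rho^{N-1}\le N\rho^{N-1}\to0$ for the final claim. The one detail the paper makes explicit and you leave implicit is that the hypothesis forces $N\ge1$ (because $\gamma_0=0$ and $\tilde\nu_0=0$), which is what licenses item~\ref{itm:Jt-floor} of \Cref{lem:frozen-spectrum} and \Cref{prop:error-bound}.
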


\begin{proof}
The hypothesis gives $\gamma_N > C_D\, \tilde{\nu}_N \rho^{N-1} \geq 0$, and $\gamma_N = (1 - (1 - \eta L)^N)/L > 0$ forces $N \geq 1$, the hypothesis of item~\ref{itm:Jt-floor} of \Cref{lem:frozen-spectrum}. By items~\ref{itm:Jt-eigen} and~\ref{itm:Jt-floor} of that lemma, every eigenvalue of the symmetric $\tilde{J}_N$ is negative with modulus at least $\gamma_N$, i.e.\ $\tilde{J}_N \preceq -\gamma_N I$. Write $J_N = \tilde{J}_N + \delta_N$, so $J_N^{\mathrm{sym}} = \tilde{J}_N + \delta_N^{\mathrm{sym}}$. For any unit vector $v$,
\[
    \langle J_N^{\mathrm{sym}} v, v \rangle = \langle \tilde{J}_N v, v \rangle + \langle \delta_N^{\mathrm{sym}} v, v \rangle \leq -\gamma_N + \| \delta_N^{\mathrm{sym}} \| \leq -\gamma_N + \| \delta_N \| \leq -\gamma_N + C_D\, \tilde{\nu}_N \rho^{N-1} = -\alpha_D,
\]
using $\| \delta_N^{\mathrm{sym}} \| \leq \| \delta_N \|$ from \Cref{subsec:notation} and \Cref{prop:error-bound} (with $C(z) \leq C_D$ for $z \in D$); as this holds for every unit $v$, item~\ref{itm:negdef} follows. For item~\ref{itm:sigmin}, Cauchy--Schwarz and the identity $\langle A v, v \rangle = \langle A^{\mathrm{sym}} v, v \rangle$ (\Cref{subsec:notation}) give, for every unit $v$,
\[
    \| J_N v \| \geq |\langle J_N v, v \rangle| = |\langle J_N^{\mathrm{sym}} v, v \rangle| \geq \alpha_D,
\]
so $\sigma_{\min}(\nabla\mathcal{M}(z)) = \sigma_{\min}(J_N) \geq \alpha_D > 0$ and $\nabla\mathcal{M}(z)$ is invertible; as $\mathcal{M} \in C^1$, the inverse function theorem \citep[Theorem~9.24]{rudin-principles} makes $\mathcal{M}$ a local diffeomorphism at each $z \in D$. Finally $\gamma_N \to 1/L > 0$ while $\tilde{\nu}_N \rho^{N-1} \leq N \rho^{N-1} \to 0$ for $\rho \in (0,1)$, so the hypothesis holds once $N$ is large enough.
\end{proof}

On a convex domain, negative definiteness of the symmetric part also makes $\mathcal{M}$ injective.

\begin{theorem}[Injectivity]\label{thm:injective}
Let $D \subset \mathbb{R}^d$ be a bounded convex set satisfying the hypotheses of \Cref{thm:neg-def-J}. Then for all $z_1, z_2 \in D$,
\[
    \langle \mathcal{M}(z_1) - \mathcal{M}(z_2),\, z_1 - z_2 \rangle \leq -\alpha_D \| z_1 - z_2 \|^2;
\]
in particular $\| \mathcal{M}(z_1) - \mathcal{M}(z_2) \| \geq \alpha_D \| z_1 - z_2 \|$, and $\mathcal{M}$ is injective on $D$.
\end{theorem}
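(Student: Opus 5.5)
The plan is to integrate the Jacobian along the segment joining $z_1$ and $z_2$ and then use the uniform bound on its symmetric part from \Cref{thm:neg-def-J}. Fix $z_1,z_2\in D$ and set $v:=z_1-z_2$. Because $D$ is convex, the whole segment $z_2+t v$, $t\in[0,1]$, stays in $D$. The map $\mathcal{M}$ is $C^1$, since each iterate is continuously differentiable in $z$. So \Cref{lem:averaged-derivative} with $\mathcal{K}=\mathcal{M}$ gives
\[
    \mathcal{M}(z_1)-\mathcal{M}(z_2)=\overline{\nabla\mathcal{M}}(z_1,z_2)\,v
    =\int_0^1 J_N(z_2+tv)\,v\,dt .
\]

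Next I take the inner product of this identity with $v$ and move the inner product inside the integral. At each point I replace $J_N$ by its symmetric part, using $\langle A v,v\rangle=\langle A^{\mathrm{sym}}v,v\rangle$. Item~\ref{itm:negdef} of \Cref{thm:neg-def-J} holds at every point of the segment, because the segment lies in $D$ and the constant $\alpha_D$ is uniform over $D$ (through $R_D$). Hence
\[
    \langle \mathcal{M}(z_1)-\mathcal{M}(z_2),v\rangle=\int_0^1\langle J_N(z_2+tv)^{\mathrm{sym}}v,v\rangle\,dt\leq-\alpha_D\|v\|^2 .
\]
The integrand is continuous in $t$, so the integral is an ordinary Riemann integral and the pointwise bound integrates directly.

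For the norm bound I apply Cauchy--Schwarz:
\[
\alpha_D\|v\|^2\leq|\langle\mathcal{M}(z_1)-\mathcal{M}(z_2),v\rangle|\leq\|\mathcal{M}(z_1)-\mathcal{M}(z_2)\|\,\|v\|.
\]
Dividing by $\|v\|$ when $v\neq0$ gives $\|\mathcal{M}(z_1)-\mathcal{M}(z_2)\|\geq\alpha_D\|z_1-z_2\|$; the case $v=0$ is trivial. Since $\alpha_D>0$, the map $\mathcal{M}$ is injective on $D$.

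No step is a real obstacle. The one point to check is that the negative-definiteness bound really applies at every point of the segment and not only at the endpoints. That is exactly why convexity is assumed and why \Cref{thm:neg-def-J} is stated with the constant $C_D$ taken as a supremum over all of $D$.
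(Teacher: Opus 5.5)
Your proposal is correct and follows the same route as the paper's proof. Both write $\mathcal{M}(z_1)-\mathcal{M}(z_2)$ through the averaged Jacobian along the segment, which stays in $D$ by convexity. Both then apply the uniform bound $J_N^{\mathrm{sym}}\preceq-\alpha_D I$ from \Cref{thm:neg-def-J} under the integral, and finish with Cauchy--Schwarz.
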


\begin{proof}
Let $z_1, z_2 \in D$; the claims are trivial for $z_1 = z_2$, so assume $z_1 \neq z_2$. Since $\mathcal{M}$ is $C^1$ with $\nabla\mathcal{M} = J_N$, \Cref{lem:averaged-derivative} at $\mathcal{K} = \mathcal{M}$ gives $\mathcal{M}(z_1) - \mathcal{M}(z_2) = \overline{\nabla\mathcal{M}}(z_1, z_2)\,(z_1 - z_2)$, the average taken along the segment $z_2 + t(z_1 - z_2)$, which lies in $D$ by convexity. Taking symmetric parts commutes with that average, so item~\ref{itm:negdef} of \Cref{thm:neg-def-J} bounds each integrand and
\[
    \overline{\nabla\mathcal{M}}(z_1, z_2)^{\mathrm{sym}} = \int_0^1 J_N\bigl( z_2 + t(z_1 - z_2) \bigr)^{\mathrm{sym}}\, dt \; \preceq \; -\alpha_D I.
\]
With the identity $\langle A v, v \rangle = \langle A^{\mathrm{sym}} v, v \rangle$ (\Cref{subsec:notation}) at $v = z_1 - z_2$,
\[
    \langle \mathcal{M}(z_1) - \mathcal{M}(z_2),\, z_1 - z_2 \rangle = \bigl\langle \overline{\nabla\mathcal{M}}(z_1, z_2)^{\mathrm{sym}}\,(z_1 - z_2),\, z_1 - z_2 \bigr\rangle \leq -\alpha_D \| z_1 - z_2 \|^2 < 0,
\]
so $\mathcal{M}(z_1) \neq \mathcal{M}(z_2)$; and Cauchy--Schwarz gives $\| \mathcal{M}(z_1) - \mathcal{M}(z_2) \|\, \| z_1 - z_2 \| \geq \alpha_D \| z_1 - z_2 \|^2$, which after dividing by $\| z_1 - z_2 \|$ is the displacement bound.
\end{proof}

Local invertibility and injectivity together make $\mathcal{M}$ a diffeomorphism onto its image.

\cordiffeo*

\begin{proof}
By \Cref{thm:neg-def-J}, $\mathcal{M}$ is a local diffeomorphism at each point of $D$; in particular it is an open map, so $\mathcal{M}(D)$ is open. By \Cref{thm:injective} it is injective on the convex set $D$, hence a bijection onto $\mathcal{M}(D)$. Near each $y \in \mathcal{M}(D)$ the global inverse $\mathcal{M}^{-1}$ coincides with the $C^1$ local inverse furnished by the inverse function theorem \citep[Theorem~9.24]{rudin-principles} (these agree by injectivity), so $\mathcal{M}^{-1}$ is $C^1$ on $\mathcal{M}(D)$. Thus $\mathcal{M} : D \to \mathcal{M}(D)$ is a $C^1$-diffeomorphism.
\end{proof}

\section{GLM structure and rank-one Hessian perturbations}\label{sec:glm}

Three features of GLMs remove explicit ambient-dimension factors from the privacy bound. Hessian changes are sums of rank-one matrices with controlled coefficients, so we can bound their nuclear norms directly rather than using the generic inequality $\|A\|_*\leq d\|A\|$. The gradient difference between neighboring datasets lies in the span of the two differing feature vectors. Finally, the data span separates from its orthogonal complement, where both neighboring mechanisms follow the same affine recursion. Throughout,
\[
\nabla^2F_S(w)=\frac1n\sum_{i=1}^n\ell''(\langle x_i,w\rangle,y_i)x_ix_i^\top+\mu I.
\]

The same rank-one form fixes the range of curvature the analysis meets, supplying the constants $\mu$ and $L$ of \Cref{ass:regularity} for GLMs.

\begin{lemma}[GLM curvature range]\label{lem:glm-smoothness}
Under \Cref{ass:bounded-features,ass:bounded-loss}, for each dataset $T \in \{S, S'\}$ and all $w \in \mathbb{R}^d$,
\[
    \mu I \preceq \nabla^2 F_T(w) \preceq \bigl( \mu + L_2\zeta^2 \bigr) I.
\]
In particular \Cref{ass:regularity} holds in the GLM setting for any $L \geq \mu + L_2\zeta^2$.
\end{lemma}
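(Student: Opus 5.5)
The plan is to compute the Hessian explicitly and bound it by the quadratic form. For $T\in\{S,S'\}$ the GLM formula at the start of \Cref{sec:glm} gives
\[
\nabla^2F_T(w)=\frac1n\sum_{i=1}^n\ell''(\langle x_i,w\rangle,y_i)\,x_ix_i^\top+\mu I,
\]
with the features of $T$ (for $S'$ the last one is $x_n'$). This matrix is symmetric, so it suffices to bound $\langle \nabla^2F_T(w)v,v\rangle$ for an arbitrary unit vector $v$.

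For the lower bound, each summand contributes $\tfrac1n\ell''(\cdot)\langle x_i,v\rangle^2$. This is nonnegative because $\ell''\ge0$ by \Cref{ass:bounded-loss}. Dropping these terms leaves the quadratic form at least $\mu$.

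For the upper bound, use $\ell''\le L_2$ from \Cref{ass:bounded-loss}. By Cauchy--Schwarz and \Cref{ass:bounded-features}, $\langle x_i,v\rangle^2\le\|x_i\|^2\le\zeta^2$. Each summand is therefore at most $L_2\zeta^2/n$, and the $n$ summands contribute at most $L_2\zeta^2$ in total. This gives the quadratic form at most $\mu+L_2\zeta^2$, and hence at most $L$ for any $L\ge\mu+L_2\zeta^2$.

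These two bounds give $\mu I\preceq\nabla^2F_T(w)\preceq LI$ for every $w$, uniformly in $T$, which is \Cref{ass:regularity}. $F_T$ is $C^2$ because $\ell$ is twice continuously differentiable in its first argument. No step is a real obstacle; the only point to check is that \Cref{ass:bounded-features} covers the replaced feature $x_n'$ of $S'$, which it does by its statement.
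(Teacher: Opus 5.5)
Your proof is correct and follows essentially the same argument as the paper. Nonnegativity of $\ell''$ gives the lower bound, and $\ell''\le L_2$ together with $\|x_i\|\le\zeta$ gives the upper bound. The only difference is that you work directly with the quadratic form $\langle\nabla^2F_T(w)v,v\rangle$, whereas the paper uses the Loewner comparison $\tfrac1n\sum_i\ell''\,x_ix_i^\top\preceq L_2\,\tfrac1n\sum_i x_ix_i^\top$ followed by the operator-norm triangle inequality.
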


\begin{proof}
Each coefficient $\ell''(\langle x_i, w \rangle, y_i)$ is nonnegative (\Cref{ass:bounded-loss}) and each $x_i x_i^\top$ is positive semidefinite, so the sum is positive semidefinite and $\nabla^2 F_T(w) \succeq \mu I$. For the upper bound the same two facts give $\tfrac{1}{n}\sum_i \ell''(\langle x_i, w \rangle, y_i)\, x_i x_i^\top \preceq L_2\, \tfrac{1}{n}\sum_i x_i x_i^\top$, and the triangle inequality with $\| x_i x_i^\top \| = \| x_i \|^2 \leq \zeta^2$ (\Cref{ass:bounded-features}) bounds the operator norm of the second factor by $\zeta^2$.
\end{proof}

The same rank-one structure makes the Hessian Lipschitz in nuclear norm.

\begin{lemma}[GLM Hessian displacement]\label{lem:hessian-lipschitz}
Under \Cref{ass:bounded-features,ass:lipschitz-third}, for each dataset $T \in \{S, S'\}$ and all $w, w' \in \mathbb{R}^d$,
\[
    \| \nabla^2 F_T(w) - \nabla^2 F_T(w') \|_* \leq L_3 \zeta^3\, \| w - w' \|,
\]
and hence the same bound in operator norm, since $\| A \| \leq \| A \|_*$. In particular \Cref{ass:lipschitz-hessian} holds in the GLM setting for any $M \geq L_3 \zeta^3$.
\end{lemma}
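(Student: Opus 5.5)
We compute the Hessian difference explicitly from the GLM form recorded at the start of \Cref{sec:glm}. The regularizer contributes the constant matrix $\mu I$, which cancels in the difference. What remains is
\[
    \nabla^2 F_T(w) - \nabla^2 F_T(w') = \frac{1}{n}\sum_{i=1}^n \bigl( \ell''(\langle x_i, w\rangle, y_i) - \ell''(\langle x_i, w'\rangle, y_i) \bigr)\, x_i x_i^\top ,
\]
with the sum running over the $n$ samples of $T \in \{S, S'\}$.

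We bound the nuclear norm of this sum by the triangle inequality, term by term. Each summand is a scalar multiple of the rank-one positive semidefinite matrix $x_i x_i^\top$. Its nuclear norm therefore equals the absolute value of the scalar times $\operatorname{tr}(x_i x_i^\top) = \|x_i\|^2 \leq \zeta^2$, using \Cref{ass:bounded-features}.

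The scalar coefficient is controlled by \Cref{ass:lipschitz-third} together with Cauchy--Schwarz:
\[
    |\ell''(\langle x_i,w\rangle,y_i)-\ell''(\langle x_i,w'\rangle,y_i)| \leq L_3\,|\langle x_i, w-w'\rangle| \leq L_3\zeta\|w-w'\|.
\]
Each summand thus has nuclear norm at most $L_3\zeta^3\|w-w'\|/n$. Summing the $n$ terms and dividing by $n$ gives the claimed bound $L_3\zeta^3\|w-w'\|$.

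The operator-norm statement follows from $\|A\|\leq\|A\|_*$. This holds because the largest singular value is at most the sum of all singular values. Consequently \Cref{ass:lipschitz-hessian} holds for every $M\geq L_3\zeta^3$.

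The one step that needs care is the nuclear norm of a signed rank-one term. We use $\|c\,xx^\top\|_* = |c|\,\|x\|^2$, which holds because the only nonzero singular value of $c\,xx^\top$ is $|c|\,\|x\|^2$. This identity is what avoids the generic factor of $d$ that would come from $\|A\|_*\leq d\|A\|$.
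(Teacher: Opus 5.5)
Your proposal is correct and follows essentially the same argument as the paper: cancel the regularizer, control each coefficient $|c_i|$ through the Lipschitz bound on $\ell''$ and Cauchy--Schwarz, use $\|c_i\,x_ix_i^\top\|_* = |c_i|\,\|x_i\|^2$ for each rank-one term, and sum with the triangle inequality. The operator-norm consequence via $\|A\|\leq\|A\|_*$ is also handled exactly as in the paper.
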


\begin{proof}
The constant regularizer Hessian cancels, leaving
\[
    \nabla^2 F_T(w) - \nabla^2 F_T(w') = \frac{1}{n}\sum_{i=1}^{n} c_i\, x_i x_i^\top, \qquad c_i := \ell''(\langle x_i, w \rangle, y_i) - \ell''(\langle x_i, w' \rangle, y_i).
\]
\Cref{ass:lipschitz-third}, read at the arguments $\langle x_i, w \rangle$ and $\langle x_i, w' \rangle$, followed by Cauchy--Schwarz, gives $|c_i| \leq L_3\,| \langle x_i, w - w' \rangle | \leq L_3\,\| x_i \|\,\| w - w' \|$. Each $c_i\, x_i x_i^\top$ is rank-one, so its nuclear norm is $|c_i|\,\| x_i \|^2$, and the triangle inequality with $\| x_i \| \leq \zeta$ (\Cref{ass:bounded-features}) gives
\[
    \| \nabla^2 F_T(w) - \nabla^2 F_T(w') \|_* \leq \frac{1}{n}\sum_{i=1}^{n} |c_i|\,\| x_i \|^2 \leq \frac{L_3\,\| w - w' \|}{n}\sum_{i=1}^{n} \| x_i \|^3 \leq L_3 \zeta^3\, \| w - w' \|. \qedhere
\]
\end{proof}

\begin{remark}\label{rem:nuclear-dimension-free}
\Cref{lem:hessian-lipschitz} removes the dimension $d$ from that contribution. For a general symmetric $A \in \mathbb{R}^{d \times d}$ one has only $\| A \|_* \leq d\, \| A \|$, and this is tight. The GLM Hessian perturbation, being a sum of rank-one terms with individually controlled coefficients, instead obeys in nuclear norm the bound one would expect only in operator norm, eliminating the factor of $d$.
\end{remark}

Carrying this through \Cref{prop:error-bound} bounds the \emph{nuclear} norm of the Jacobian error by the same quantity as its operator norm---the form the log-determinant analysis of \Cref{sec:log-det} requires. We state it per dataset $T \in \{S, S'\}$. Here and below, $F_T := F(\cdot\,; T)$; each $F_T$ satisfies \Cref{ass:regularity,ass:lipschitz-hessian} with the uniform constants of \Cref{subsec:notation}, so every result of \Cref{sec:diffeo} holds for $F_T$, and we cite those results for $F_T$ directly. Likewise $R_T(z)$ denotes the trajectory scale \eqref{eq:R-def} of $F_T$, and $w^{*,T}(z)$ its perturbed minimizer (\Cref{lem:perturbed-minimizer})---the objects written $R(z)$ and $w^*(z)$ when a single dataset is in scope.

\begin{lemma}[Nuclear norm of the Jacobian perturbation error]\label{lem:error-nuclear}
Fix $T \in \{S, S'\}$ and a perturbation $z_T$, and write $R^T := R_T(z_T)$ for the trajectory scale \eqref{eq:R-def} of $F_T$, $C_T := \eta M R^T B$, and $\delta_N^T := J_N^T(z_T) - \tilde{J}_N^T$ for the Jacobian error of \Cref{lem:error-recursion} applied to $F_T$. Under \Cref{ass:bounded-features,ass:lipschitz-third,ass:regularity},
\[
    \| \delta_N^T \|_* \leq C_T\, \tilde{\nu}_N \rho^{N-1},
\]
with $\tilde{\nu}_N$ as in \eqref{eq:nu-tilde}.
\end{lemma}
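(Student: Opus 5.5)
This follows by applying \Cref{prop:error-bound} to $F_T$ with $\|\cdot\|_\diamond$ taken to be the nuclear norm; the only hypothesis to check is that the Hessian of $F_T$ is $M$-Lipschitz \emph{in nuclear norm}.

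\Cref{lem:hessian-lipschitz} gives
\[
\|\nabla^2F_T(w)-\nabla^2F_T(w')\|_*\leq L_3\zeta^3\|w-w'\|\leq M\|w-w'\|,
\]
since $M\geq L_3\zeta^3$ (\Cref{subsec:notation}). The remaining ingredients of \Cref{prop:error-bound} are unchanged for $F_T$:
\begin{itemize}
\item the contraction $\|A_k\|\leq\rho$, from \Cref{ass:regularity} via \Cref{lem:averaged-derivative};
\item the frozen-Jacobian bound $\|\tilde J_k\|\leq(1-\rho^k)/\mu$, from \Cref{lem:frozen-spectrum};
\item the trajectory anchoring \eqref{eq:iterate-anchor}, which yields $\|\nabla^2F_T(w_k)-H_T^*\|_*\leq M(\rho^k-\rho^N)R^T$.
\end{itemize}

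Concretely, unroll $\delta_N^T$ via \eqref{eq:error-unrolled}, with $H^*=\nabla^2F_T(\mathcal M_T(z_T))$ and the trajectory generated from $z_T$. Bound each summand using the mixed H\"older inequality $\|XYZ\|_*\leq\|X\|\,\|Y\|_*\,\|Z\|$ (\Cref{lem:ext-trace-nuclear}), with $Y$ the Hessian deviation. Each summand then has nuclear norm at most
\[
C_T\rho^{N-1}(1-\rho^{N-k})(1-\rho^k),
\]
and summing over $k$ gives $C_T\tilde\nu_N\rho^{N-1}$.

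The main point needing care is that operator norms must fall on the outer factors $\Theta_{N,k+1}$ and $\tilde J_k$, with the nuclear norm reserved for the rank-structured Hessian deviation. This placement avoids any $\|A\|_*\leq d\|A\|$ conversion and is exactly what \Cref{prop:error-bound} already allows. No further obstacle arises: the constant $C_T=\eta MR^TB$ matches $C(z_T)$ for $F_T$.
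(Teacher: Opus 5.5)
Your proposal is correct and matches the paper's proof. The paper likewise uses \Cref{lem:hessian-lipschitz} to verify the nuclear-norm Lipschitz hypothesis of \Cref{prop:error-bound}, and then applies that proposition to $F_T$ at $z_T$ with $C(z_T)=C_T$. Your explicit summand bound reproduces the proposition's own proof, with operator norms on the outer factors and the nuclear norm on the Hessian-deviation factor.
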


\begin{proof}
\Cref{lem:hessian-lipschitz} makes the Hessian of $F_T$ $M$-Lipschitz in nuclear norm for any $M \geq L_3\zeta^3$, which is the hypothesis of \Cref{prop:error-bound} at $\| \cdot \|_\diamond = \| \cdot \|_*$. Applying that proposition to $F_T$ at $z_T$ gives the claim, with $C(z_T) = C_T$.
\end{proof}

A further structural fact, again from the rank-one form of GLM gradients, makes the leading Gaussian-ratio term of \Cref{sec:gaussian-ratio} dimension free: the gradient difference between neighboring datasets is confined to a fixed subspace of dimension at most two.

\begin{lemma}[Gradient subspace structure]\label{lem:gradient-subspace}
In the GLM setting, for neighboring datasets $S, S'$ differing in the $n$-th sample, the gradient difference $\nabla F_S(w) - \nabla F_{S'}(w)$ lies in $V := \mathrm{span}\{ x_n, x_n' \}$, a subspace of dimension at most $2$, for every $w \in \mathbb{R}^d$. Moreover \Cref{ass:gradient-sensitivity} holds with $G = 2L_1\zeta/n$.
\end{lemma}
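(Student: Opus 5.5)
The proof is a direct computation of the gradient difference from the GLM form \eqref{eq:empirical-risk}. Differentiating gives
\[
    \nabla F_T(w) = \frac{1}{n}\sum_{i=1}^{n} \ell'\bigl(\langle x_i, w\rangle, y_i\bigr)\, x_i + \mu w, \qquad T \in \{S, S'\}.
\]
Here the samples of $T$ are used, so for $S'$ the last summand carries $(x_n', y_n')$.

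The first step is to subtract the two expressions at a common $w$. The regularizer gradient $\mu w$ cancels, and so do the $n-1$ shared summands, leaving
\[
    \nabla F_S(w) - \nabla F_{S'}(w) = \frac{1}{n}\Bigl( \ell'\bigl(\langle x_n, w\rangle, y_n\bigr)\, x_n - \ell'\bigl(\langle x_n', w\rangle, y_n'\bigr)\, x_n' \Bigr).
\]
This is a linear combination of $x_n$ and $x_n'$, with scalar coefficients depending on $w$. It therefore lies in $V = \mathrm{span}\{x_n, x_n'\}$, and $\dim V \leq 2$ because $V$ is spanned by two vectors. The subspace $V$ is fixed by the datasets and does not depend on $w$.

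The second step is the sensitivity bound. Apply the triangle inequality to the displayed difference, then use $|\ell'| \leq L_1$ from \Cref{ass:bounded-loss} and $\|x_n\|, \|x_n'\| \leq \zeta$ from \Cref{ass:bounded-features}. This gives
\[
    \|\nabla F_S(w) - \nabla F_{S'}(w)\| \leq \frac{2 L_1 \zeta}{n}
\]
uniformly in $w$, which is \Cref{ass:gradient-sensitivity} with $G = 2L_1\zeta/n$.

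There is no real obstacle. The only care needed is in the neighboring convention of \Cref{subsec:notation}, which places the differing sample last. If $S = S'$, the difference is zero and any $G > 0$ works trivially.
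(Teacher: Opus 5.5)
Your proof is correct and follows the paper's argument essentially verbatim. The regularizer and the $n-1$ shared per-sample gradients cancel, leaving $\frac{1}{n}\bigl[\ell'(\langle x_n,w\rangle,y_n)\,x_n-\ell'(\langle x_n',w\rangle,y_n')\,x_n'\bigr]\in V$, and the triangle inequality with $|\ell'|\leq L_1$ and $\|x_n\|,\|x_n'\|\leq\zeta$ gives $G=2L_1\zeta/n$.
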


\begin{proof}
The per-sample gradients in \eqref{eq:empirical-risk} agree except at the $n$-th sample, and the regularizer gradient cancels (the regularizer is dataset-independent), so
\[
    \nabla F_S(w) - \nabla F_{S'}(w) = \frac{1}{n}\bigl[ \ell'(\langle x_n, w \rangle, y_n)\, x_n - \ell'(\langle x_n', w \rangle, y_n')\, x_n' \bigr],
\]
which lies in $\mathrm{span}\{ x_n, x_n' \}$. The triangle inequality with $|\ell'| \leq L_1$ (\Cref{ass:bounded-loss}) and $\| x_n \|, \| x_n' \| \leq \zeta$ (\Cref{ass:bounded-features}) gives
\[
    \| \nabla F_S(w) - \nabla F_{S'}(w) \| \leq \frac{1}{n}\bigl( L_1 \| x_n \| + L_1 \| x_n' \| \bigr) \leq \frac{2L_1\zeta}{n},
\]
which is \Cref{ass:gradient-sensitivity} with $G = 2L_1\zeta/n$.
\end{proof}

The subspace $V$ confines the gradient \emph{difference}. The feature span $\mathcal{C}$ of \eqref{eq:data-span} confines the mechanism itself, and $V \subseteq \mathcal{C}$. Write $d_{\mathcal{C}} := \dim \mathcal{C} \leq d_\star$ and $z = z_{\mathcal{C}} + z_{\mathcal{C}^\perp}$ for the orthogonal decomposition of $z \in \mathbb{R}^d$.

\begin{lemma}[Data-span reduction]\label{lem:data-span}
Let $\mathcal{C}$ be as in \eqref{eq:data-span} and let $B_N = (1-\rho^N)/\mu > 0$ be as in item~\ref{itm:Jt-norm} of \Cref{lem:frozen-spectrum}. For each $T \in \{S, S'\}$:
\begin{enumerate}
    \item\label{itm:span-mechanism} $\mathcal{M}_T(z) = \mathcal{M}_T^{\mathcal{C}}(z_{\mathcal{C}}) - B_N z_{\mathcal{C}^\perp}$, where $\mathcal{M}_T^{\mathcal{C}}$ is the mechanism of \Cref{subsec:mechanism} run on the restriction $F_T|_{\mathcal{C}}$ within $\mathcal{C}$. The second summand is independent of both the dataset and $z_{\mathcal C}$. A fixed public initialization $w_0\neq0$ adds the dataset-independent term $\rho^N\Pi_{\mathcal C^\perp}w_0$.
    \item\label{itm:span-jacobian} $J_N^T(z)\,v = J_N^{T,\mathcal{C}}(z_{\mathcal{C}})\,v_{\mathcal{C}} - B_N v_{\mathcal{C}^\perp}$ for every $v \in \mathbb{R}^d$, where $J_N^{T,\mathcal{C}}$ is the Jacobian of $\mathcal{M}_T^{\mathcal{C}}$.
    \item\label{itm:span-restriction} $F_T|_{\mathcal{C}}$ is again an empirical risk \eqref{eq:empirical-risk}, on the features $x_i$ read in $\mathcal{C}$ and with the same $\ell$, $\zeta$, $\mu$; so it satisfies \Cref{ass:bounded-features,ass:bounded-loss,ass:lipschitz-third}, and with them \Cref{ass:regularity,ass:lipschitz-hessian} with the same $\mu, L, M$, while the pair $F_S|_{\mathcal{C}}, F_{S'}|_{\mathcal{C}}$ satisfies \Cref{ass:gradient-sensitivity} with the same $G$.
\end{enumerate}
\end{lemma}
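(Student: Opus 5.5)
The plan is to show that the gradient descent recursion \eqref{eq:gd-recursion} splits exactly along $\mathcal C\oplus\mathcal C^\perp$, and then to read items~\ref{itm:span-mechanism} and~\ref{itm:span-jacobian} off that splitting.

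\emph{Gradient structure.} The first step is to record, from \eqref{eq:empirical-risk}, that for $T\in\{S,S'\}$
\[
\nabla F_T(w)=\frac1n\sum_i\ell'(\langle x_i,w\rangle,y_i)\,x_i+\mu w .
\]
Since every $x_i$ of $T$ lies in $\mathcal C$, we have $\langle x_i,w\rangle=\langle x_i,w_{\mathcal C}\rangle$. Hence
\[
\Pi_{\mathcal C}\nabla F_T(w)=\nabla(F_T|_{\mathcal C})(w_{\mathcal C}),
\qquad
\Pi_{\mathcal C^\perp}\nabla F_T(w)=\mu\,w_{\mathcal C^\perp}.
\]
Here $F_T|_{\mathcal C}$ is the same empirical risk with features read in coordinates of $\mathcal C$.

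\emph{Splitting the recursion.} Projecting the recursion gives two decoupled recursions. The first is $w_{k+1,\mathcal C}=w_{k,\mathcal C}-\eta(\nabla F_T|_{\mathcal C}(w_{k,\mathcal C})+z_{\mathcal C})$, started at $\Pi_{\mathcal C}w_0$; this is exactly the mechanism $\mathcal M_T^{\mathcal C}$ run inside $\mathcal C$. The second is the affine scalar-type recursion $w_{k+1,\perp}=\rho\,w_{k,\perp}-\eta z_{\mathcal C^\perp}$. Induction then gives $w_{N,\perp}=\rho^N\Pi_{\mathcal C^\perp}w_0-\eta\sum_{j<N}\rho^j z_{\mathcal C^\perp}$. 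Using $\eta\sum_{j<N}\rho^j=(1-\rho^N)/(\eta\mu)\cdot\eta=B_N$, this becomes $w_{N,\perp}=\rho^N\Pi_{\mathcal C^\perp}w_0-B_Nz_{\mathcal C^\perp}$. The term $\rho^N\Pi_{\mathcal C^\perp}w_0$ vanishes for $w_0=0$ and is dataset independent in general. This proves item~\ref{itm:span-mechanism}. The $\mathcal C^\perp$ part involves neither $T$ nor $z_{\mathcal C}$, and the $\mathcal C$ part involves only $z_{\mathcal C}$. One point needs care: $\mathcal C$ is common to $S$ and $S'$ because it contains $x_n$ and $x_n'$, so the same decomposition serves both datasets.

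\emph{Jacobian and restriction.} Item~\ref{itm:span-jacobian} follows by differentiating item~\ref{itm:span-mechanism} in the direction $v=v_{\mathcal C}+v_{\mathcal C^\perp}$, using the chain rule with the linear projections. For item~\ref{itm:span-restriction}, the features read in $\mathcal C$ keep their norms, so \Cref{ass:bounded-features} holds with the same $\zeta$. The loss $\ell$ is unchanged, so \Cref{ass:bounded-loss,ass:lipschitz-third} carry over directly. \Cref{lem:glm-smoothness,lem:hessian-lipschitz,lem:gradient-subspace} then apply verbatim on $\mathcal C$ and give the same $\mu,L,M,G$. Nothing here is a real obstacle. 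The only delicate point is bookkeeping: making sure the identification of $\mathcal C$ with $\mathbb R^{d_{\mathcal C}}$ is an isometry, so that the norms and the constants are preserved.
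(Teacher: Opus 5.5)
Your proposal is correct and follows the paper's proof essentially step for step. The shared steps are the projected gradient identities, the two decoupled recursions with the $\mathcal C^\perp$ block unrolled to $\rho^N\Pi_{\mathcal C^\perp}w_0-B_Nz_{\mathcal C^\perp}$, differentiation through the linear projections for item~\ref{itm:span-jacobian}, and the isometric identification of $\mathcal C$ for item~\ref{itm:span-restriction}.

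There is one minor difference in item~\ref{itm:span-restriction}. You re-invoke \Cref{lem:glm-smoothness,lem:hessian-lipschitz} on the restricted problem, which gives the same constants under the paper's standing convention $L\ge\mu+L_2\zeta^2$, $M\ge L_3\zeta^3$; that convention holds throughout, so your argument is complete. The paper additionally uses that $\mathcal C$ is invariant under every Hessian, so the restricted spectrum lies in $[\mu,L]$ and restriction cannot increase the Lipschitz constant. That route would transfer any valid $L,M$ of the full problem.
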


\begin{proof}
Throughout, $\nabla^2 F_T(w)$ maps $\mathcal{C}$ into $\mathcal{C}$ and acts on $\mathcal{C}^\perp$ as $\mu I$, at every $w$: by \eqref{eq:empirical-risk}, $\nabla^2 F_T(w) = \tfrac{1}{n}\sum_i \ell''(\langle x_i, w \rangle, y_i)\, x_i x_i^\top + \mu I$, and every $x_i \in \mathcal{C}$, so each $x_i x_i^\top$ has range in $\mathcal{C}$ while $\langle x_i, v \rangle = 0$ for $v \in \mathcal{C}^\perp$ makes $x_i x_i^\top v = 0$; $\mu I$ preserves both subspaces.

\emph{Item~\ref{itm:span-mechanism}.} Differentiating \eqref{eq:empirical-risk}, $\nabla F_T(w) = \tfrac{1}{n}\sum_i \ell'(\langle x_i, w \rangle, y_i)\, x_i + \mu w$. Since $x_i \in \mathcal{C}$, projecting gives $\Pi_{\mathcal{C}^\perp} \nabla F_T(w) = \mu\, \Pi_{\mathcal{C}^\perp} w$ and $\Pi_{\mathcal{C}} \nabla F_T(w) = \tfrac{1}{n}\sum_i \ell'(\langle x_i, w \rangle, y_i)\, x_i + \mu\, \Pi_{\mathcal{C}} w$, which depends on $w$ only through $\Pi_{\mathcal{C}} w$ because $\langle x_i, w \rangle = \langle x_i, \Pi_{\mathcal{C}} w \rangle$. Projecting \eqref{eq:gd-recursion} therefore yields two recursions that do not communicate,
\[
    \Pi_{\mathcal{C}} w_{k+1} = \Pi_{\mathcal{C}} w_k - \eta\bigl( \nabla (F_T|_{\mathcal{C}})(\Pi_{\mathcal{C}} w_k) + z_{\mathcal{C}} \bigr), \qquad
    \Pi_{\mathcal{C}^\perp} w_{k+1} = \rho\, \Pi_{\mathcal{C}^\perp} w_k - \eta\, z_{\mathcal{C}^\perp},
\]
both started at $0$ since $w_0 = 0$. The first is \eqref{eq:gd-recursion} for $F_T|_{\mathcal{C}}$ within $\mathcal{C}$, so $\Pi_{\mathcal{C}} w_N = \mathcal{M}_T^{\mathcal{C}}(z_{\mathcal{C}})$. The second unrolls to $\Pi_{\mathcal{C}^\perp} w_N = -\eta\bigl( \sum_{k<N}\rho^k \bigr) z_{\mathcal{C}^\perp} = -\tfrac{\eta(1-\rho^N)}{1-\rho}\, z_{\mathcal{C}^\perp} = -B_N z_{\mathcal{C}^\perp}$, using $1 - \rho = \eta\mu$; started instead at $w_0$, it unrolls to $\rho^N \Pi_{\mathcal{C}^\perp} w_0 - B_N z_{\mathcal{C}^\perp}$. As $\rho \in (0,1)$ and $N \geq 1$, $B_N > 0$.

\emph{Item~\ref{itm:span-jacobian}.} Differentiate item~\ref{itm:span-mechanism} in $z$, using that $z \mapsto z_{\mathcal{C}}$ and $z \mapsto z_{\mathcal{C}^\perp}$ are the linear maps $\Pi_{\mathcal{C}}$ and $\Pi_{\mathcal{C}^\perp}$.

\emph{Item~\ref{itm:span-restriction}.} Writing $Q$ for a matrix whose columns are an orthonormal basis of $\mathcal{C}$, the restriction reads $F_T|_{\mathcal{C}}(\cdot) = \tfrac{1}{n}\sum_i \ell(\langle Q^\top x_i, \cdot \rangle, y_i) + \tfrac{\mu}{2}\| \cdot \|^2$, since $\langle x_i, Q\,\cdot \rangle = \langle Q^\top x_i, \cdot \rangle$. This is \eqref{eq:empirical-risk} in $\mathcal{C}$, and $\| Q^\top x_i \| = \| x_i \| \leq \zeta$ because $x_i \in \mathcal{C}$, so \Cref{ass:bounded-features,ass:bounded-loss,ass:lipschitz-third} hold there with the same constants; \Cref{lem:gradient-subspace} and the bounds of \Cref{sec:glm} follow for the restricted pair, as does $\| \nabla (F_T|_{\mathcal{C}})(0) \| \leq L_1\zeta$. For the general-setting assumptions, fix $w, v \in \mathcal{C}$. By the splitting above and symmetry of $\nabla^2 F_T(w)$, the orthogonal splitting $\mathbb{R}^d = \mathcal{C} \oplus \mathcal{C}^\perp$ is invariant, so the spectrum of $\nabla^2 (F_T|_{\mathcal{C}})(w)$---the restriction of $\nabla^2 F_T(w)$ to $\mathcal{C}$---is contained in that of $\nabla^2 F_T(w)$, hence in $[\mu, L]$ by \Cref{ass:regularity}: this is \Cref{ass:regularity} for $F_T|_{\mathcal{C}}$. Restricting an operator does not increase its norm, so
$\| \nabla^2 (F_T|_{\mathcal{C}})(w) - \nabla^2 (F_T|_{\mathcal{C}})(v) \| \leq \| \nabla^2 F_T(w) - \nabla^2 F_T(v) \| \leq M \| w - v \|$, which is \Cref{ass:lipschitz-hessian}. Finally $\nabla (F_S|_{\mathcal{C}})(w) - \nabla (F_{S'}|_{\mathcal{C}})(w) = \Pi_{\mathcal{C}}\bigl( \nabla F_S(w) - \nabla F_{S'}(w) \bigr)$, and the difference lies in $V \subseteq \mathcal{C}$ by \Cref{lem:gradient-subspace}, so $\Pi_{\mathcal{C}}$ fixes it and its norm is still at most $G$.
\end{proof}

\section{Privacy}\label{sec:privacy}

We analyze the mechanism that releases $y = \mathcal{M}_S(z)$ for $z \sim \mathcal{N}(0, \sigma^2 I_d)$. Throughout this section \Cref{ass:regularity,ass:lipschitz-hessian} hold uniformly across datasets with common constants $\mu, L, M$, \Cref{ass:gradient-sensitivity} is a cross-dataset condition by definition, and $\sup_S \| \nabla F(0; S) \| < \infty$, so that the constants of \Cref{lem:traj-scale} are uniform in $S$, with $a_R = \mu^{-1}\sup_S \| \nabla F(0; S) \|$ and $b_R = 1/\mu$.

\begin{convention}[Closed restriction]\label{conv:closed-restriction}
For a bounded convex set $D$, ``$\mathcal{M}_T$ is a $C^1$-diffeomorphism on $D$'' abbreviates the statement that $\mathcal{M}_T$ restricts to $D$ from a $C^1$-diffeomorphism on a convex open neighborhood of $D$, as furnished by \Cref{cor:diffeo}; for open $D$ this is \Cref{cor:diffeo} itself.
\end{convention}

Fix neighboring datasets $S, S'$ and a bounded convex set $\Omega$ on which both $\mathcal{M}_S$ and $\mathcal{M}_{S'}$ are $C^1$-diffeomorphisms (\Cref{conv:closed-restriction}). For $y \in \mathcal{M}_S(\Omega)$ with preimage $z_S := \mathcal{M}_S^{-1}(y) \in \Omega$, the change-of-variables formula (\Cref{lem:ext-pushforward}) expresses the contribution to the output density from preimages in $\Omega$ as
\[
    p_Y^S(y; \Omega) := \frac{p_Z(z_S)}{|\det J_N^S(z_S)|},
\]
where $p_Z$ is the $\mathcal{N}(0, \sigma^2 I_d)$ density. When $y$ also has a preimage $z_{S'} := \mathcal{M}_{S'}^{-1}(y) \in \Omega$, the restricted privacy loss splits into two terms:
\begin{equation}\label{eq:privacy-decomp}
    \log \frac{p_Y^S(y; \Omega)}{p_Y^{S'}(y; \Omega)} = \underbrace{\log \frac{p_Z(z_S)}{p_Z(z_{S'})}}_{\text{Gaussian ratio}} + \underbrace{\log \frac{|\det J_N^{S'}(z_{S'})|}{|\det J_N^S(z_S)|}}_{\text{Jacobian ratio}}.
\end{equation}
Both terms are already confined to $\mathcal{C}$. If $y = \mathcal{M}_S(z_S) = \mathcal{M}_{S'}(z_{S'})$, item~\ref{itm:span-mechanism} of \Cref{lem:data-span} gives $B_N \Pi_{\mathcal{C}^\perp} z_S = B_N \Pi_{\mathcal{C}^\perp} z_{S'}$ and hence $\Pi_{\mathcal{C}^\perp} z_S = \Pi_{\mathcal{C}^\perp} z_{S'}$, so the $\mathcal{C}^\perp$ components cancel from the Gaussian ratio $(\| z_{S'} \|^2 - \| z_S \|^2)/(2\sigma^2)$; and item~\ref{itm:span-jacobian} factors each determinant as $\det J_N^{T, \mathcal{C}}(z_{\mathcal{C}}) \cdot (-B_N)^{\,d - d_{\mathcal{C}}}$, whose second factor is common to the two datasets and cancels from the Jacobian ratio. What \eqref{eq:privacy-decomp} measures is therefore the privacy loss of the pair $\mathcal{M}_S^{\mathcal{C}}, \mathcal{M}_{S'}^{\mathcal{C}}$, which by item~\ref{itm:span-restriction} is a mechanism of \Cref{subsec:general-setting} within $\mathcal{C}$ carrying the same $\mu, L, M, G$, driven by the perturbation $\Pi_{\mathcal{C}} Z \sim \mathcal{N}(0, \sigma^2 I_{d_{\mathcal{C}}})$, again isotropic. Every result of \Cref{sec:diffeo,sec:glm} applies to it with $d_{\mathcal{C}}$ in place of $d$, and the ambient dimension survives only in the $\mathcal{C}^\perp$ block, which multiplies both densities by the same factor.

We treat the Gaussian ratio in \Cref{sec:gaussian-ratio} using the two-dimensional gradient subspace from \Cref{lem:gradient-subspace}, and we treat the Jacobian ratio in \Cref{sec:log-det} using the nuclear-norm bounds from \Cref{sec:glm}. The residual terms depend on the trajectory scale over the truncation domain and decrease with $N$. Choosing $N$ according to \eqref{eq:N-dagger} makes these residual terms small enough to obtain the dimension-free bound in \Cref{thm:dp}. Section~\ref{sec:dp-guarantee} combines the pointwise estimates with Gaussian truncation and the hockey-stick divergence.

\subsection{The Gaussian-ratio bound}\label{sec:gaussian-ratio}

The Gaussian ratio in \eqref{eq:privacy-decomp} is $\log[p_Z(z_S)/p_Z(z_{S'})] = (\| z_{S'} \|^2 - \| z_S \|^2)/(2\sigma^2)$, so bounding it amounts to controlling how far the two preimages of a shared output can differ. We build up to a bound whose only randomness is the projection factor of \Cref{prop:term-a} (\Cref{thm:gaussian-bound}). The trajectories under neighboring datasets separate by at most $G/\mu$ (\Cref{lem:map-sensitivity}); what remains is to invert the relation between $z$ and $y = \mathcal{M}(z)$. The next identity expresses the perturbation $z$ through the gradient at the output, up to a residual that decays geometrically in $N$. We state it for a generic objective $F$ with mechanism $\mathcal{M}$; it applies verbatim to both $F(\cdot\,; S)$ and $F(\cdot\,; S')$.

\begin{lemma}[One-step residual identity]\label{lem:one-step}
Let $z \in \mathbb{R}^d$, let $w_0, w_1, \ldots, w_N$ be the iterates \eqref{eq:gd-recursion}, and set $y := \mathcal{M}(z) = w_N$. With $\bar{H}$ the averaged Hessian $\bar{H}(y, w_{N-1})$ of \Cref{lem:averaged-derivative} over the final step, one has
\begin{equation}\label{eq:one-step-id}
    z = -\nabla F(y) + \xi_F(z), \qquad \xi_F(z) := \Bigl( \tfrac{1}{\eta} I - \bar{H} \Bigr)(w_{N-1} - y),
\end{equation}
and, under \Cref{ass:regularity},
\begin{equation}\label{eq:xi-bound}
    \| \xi_F(z) \| \leq \mu\, \rho^N\, R(z), \qquad R(z) \text{ the trajectory scale of \Cref{lem:traj-scale}}.
\end{equation}
\end{lemma}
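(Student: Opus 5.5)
The plan is to read the identity off the last gradient step and then bound the residual by the geometric decay of the step lengths. The final step of \eqref{eq:gd-recursion} is
\[
y = w_{N-1} - \eta\bigl(\nabla F(w_{N-1}) + z\bigr),
\]
which rearranges to
\[
z = \tfrac{1}{\eta}(w_{N-1} - y) - \nabla F(w_{N-1}).
\]
Next, write $\nabla F(w_{N-1}) = \nabla F(y) + \bar H\,(w_{N-1} - y)$, using item~\ref{itm:avg-hessian} of \Cref{lem:averaged-derivative} with $a = w_{N-1}$ and $b = y$; the averaged Hessian is symmetric in its two endpoints up to reparametrizing $t \mapsto 1-t$, so $\bar H(y, w_{N-1}) = \bar H(w_{N-1}, y)$. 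Substituting this expression gives
\[
z = -\nabla F(y) + \bigl(\tfrac{1}{\eta} I - \bar H\bigr)(w_{N-1} - y),
\]
which is \eqref{eq:one-step-id}. This step requires only that $F \in C^2$.

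For \eqref{eq:xi-bound}, factor
\[
\tfrac{1}{\eta} I - \bar H = \tfrac{1}{\eta}\bigl(I - \eta \bar H\bigr).
\]
By item~\ref{itm:contraction} of \Cref{lem:averaged-derivative}, $\|I - \eta\bar H\| \leq \rho$, so $\|\tfrac1\eta I - \bar H\| \leq \rho/\eta$. For the displacement, apply item~\ref{itm:step-decay} of \Cref{lem:traj-scale} at $k = N-1$, which gives $\|w_N - w_{N-1}\| \leq \eta\mu\,\rho^{N-1} R(z)$. Multiplying the two bounds yields
\[
\|\xi_F(z)\| \leq \frac{\rho}{\eta}\cdot \eta\mu\,\rho^{N-1} R(z) = \mu\,\rho^N R(z),
\]
as claimed.

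There is no serious obstacle here. The only point needing care is that the $1/\eta$ factor must be absorbed through $I - \eta\bar H$ rather than by bounding $\tfrac1\eta I$ and $\bar H$ separately, because the separate bound would lose the factor $\rho$. The nonnegativity of the spectrum of $I - \eta\bar H$, guaranteed by $\eta L \leq 1$, is what makes that combined norm at most $\rho$. I will also note the edge case $N \geq 1$, which is needed so that $w_{N-1}$ exists; this is the standing assumption.
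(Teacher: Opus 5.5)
Your proposal is correct and follows the paper's proof: rearrange the final gradient step, rewrite the gradient difference over the last step with the averaged Hessian, then bound $\tfrac{1}{\eta}(I-\eta\bar H)$ by $\rho/\eta$ and the last step by $\eta\mu\rho^{N-1}R(z)$. The only cosmetic difference is that you use $\bar H(w_{N-1},y)$ and invoke its symmetry in the endpoints. The paper applies item~\ref{itm:avg-hessian} directly with $a=y$, $b=w_{N-1}$, which makes that remark unnecessary.
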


\begin{proof}
The final step of \eqref{eq:gd-recursion} reads $y = w_{N-1} - \eta(\nabla F(w_{N-1}) + z)$, so
\begin{equation}\label{eq:final-step-rearr}
    \nabla F(w_{N-1}) + z = \frac{w_{N-1} - y}{\eta}.
\end{equation}
By item~\ref{itm:avg-hessian} of \Cref{lem:averaged-derivative}, $\nabla F(y) - \nabla F(w_{N-1}) = \bar{H}\,(y - w_{N-1})$. Adding $\nabla F(w_{N-1}) + z$ to both sides and using \eqref{eq:final-step-rearr},
\[
    \nabla F(y) + z = \frac{w_{N-1} - y}{\eta} + \bar{H}\,(y - w_{N-1}) = \Bigl( \tfrac{1}{\eta} I - \bar{H} \Bigr)(w_{N-1} - y),
\]
which rearranges to \eqref{eq:one-step-id}.

For \eqref{eq:xi-bound} we bound the two factors of $\xi_F(z)$. The second is a single step of the iteration, $w_{N-1} - y = w_{N-1} - w_N$, so item~\ref{itm:step-decay} of \Cref{lem:traj-scale} at $k = N-1$ gives
\[
    \| w_{N-1} - y \| \leq \eta\mu\, \rho^{N-1}\, R(z).
\]
The first factor is $\tfrac{1}{\eta}\bigl( I - \eta\bar{H} \bigr)$, of norm at most $\rho/\eta$ by item~\ref{itm:contraction} of \Cref{lem:averaged-derivative}. Multiplying the two bounds,
\[
    \| \xi_F(z) \| \leq \frac{\rho}{\eta}\cdot\eta\mu\, \rho^{N-1}\, R(z) = \mu\, \rho^N\, R(z). \qedhere
\]
\end{proof}

To compare the two preimages of a common output we use the \emph{conjugacy map}. Fix the bounded convex set $\Omega$ introduced above, and assume it is chosen so that $\Omega \subseteq \Omega'$ and $\mathcal{M}_S(\Omega) \subseteq \mathcal{M}_{S'}(\Omega')$ for some bounded convex set $\Omega'$ on which $\mathcal{M}_{S'}$ is a $C^1$-diffeomorphism (\Cref{conv:closed-restriction}); this strengthens the standing assumption to cover $\Omega'$, and \Cref{sec:dp-guarantee} constructs such a pair explicitly. Then
\[
    \Phi := \mathcal{M}_{S'}^{-1} \circ \mathcal{M}_S : \Omega \to \Omega', \qquad \mathcal{M}_{S'}(\Phi(z)) = \mathcal{M}_S(z) \ \text{ for all } z \in \Omega,
\]
is well-defined and sends each $z$ to the $\mathcal{M}_{S'}$-preimage of the same output $\mathcal{M}_S(z)$. We abbreviate $\mathcal{M} := \mathcal{M}_S$, $\mathcal{M}' := \mathcal{M}_{S'}$, $F := F(\cdot\,; S)$, $F' := F(\cdot\,; S')$, and set
\[
    R_{\Omega'} := \sup_{z \in \Omega',\, T \in \{S, S'\}} R_T(z),
\]
the largest trajectory scale over $\Omega'$ for either dataset, finite by \Cref{lem:traj-scale} since $\Omega'$ is bounded. As $z \in \Omega \subseteq \Omega'$ and $\Phi(z) \in \Omega'$, both $R_S(z)$ and $R_{S'}(\Phi(z))$ are at most $R_{\Omega'}$. We also fix a radius $r > 0$ with the closed ball $\{ z : \| z \| \leq r \} \subseteq \Omega$. These standing choices constitute the \emph{hypotheses of \Cref{sec:gaussian-ratio}}.

\begin{corollary}[Preimage difference]\label{cor:preimage}
For $z \in \Omega$, set $z' := \Phi(z)$ and $y := \mathcal{M}(z) = \mathcal{M}'(z')$, and define
\[
    g(y) := \nabla F(y) - \nabla F'(y), \qquad \xi_\Delta := \xi_F(z) - \xi_{F'}(z').
\]
Then
\begin{equation}\label{eq:preimage-id}
    z - z' = -g(y) + \xi_\Delta,
\end{equation}
with $\| g(y) \| \leq G$ by \Cref{ass:gradient-sensitivity} and
\begin{equation}\label{eq:xi-delta-bound}
    \| \xi_\Delta \| \leq \mu \rho^N \bigl( R_S(z) + R_{S'}(z') \bigr) \leq 2 \mu \rho^N R_{\Omega'}.
\end{equation}
\end{corollary}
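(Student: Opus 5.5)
The plan is to apply \Cref{lem:one-step} twice, once for each dataset, at the two preimages of the shared output, and subtract. For $F = F(\cdot\,;S)$ at $z \in \Omega$, \Cref{lem:one-step} gives $z = -\nabla F(y) + \xi_F(z)$ with $y = \mathcal{M}(z)$. For $F' = F(\cdot\,;S')$ at $z' = \Phi(z) \in \Omega'$, it gives $z' = -\nabla F'(\mathcal{M}'(z')) + \xi_{F'}(z')$. The conjugacy relation $\mathcal{M}'(\Phi(z)) = \mathcal{M}(z)$ says that the output in the second identity is the same point $y$. Subtracting the two identities gives
\[
z - z' = -\bigl(\nabla F(y) - \nabla F'(y)\bigr) + \xi_F(z) - \xi_{F'}(z') = -g(y) + \xi_\Delta,
\]
which is \eqref{eq:preimage-id}. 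The only point needing care is that each one-step identity concerns its own trajectory: the $S$-run from $z$ and the $S'$-run from $z'$. These two trajectories are different, but both end at $y$, and only the endpoint enters the gradient terms. This is exactly what makes the gradient difference appear at a single common point.

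For the bounds, $\|g(y)\| \le G$ is \Cref{ass:gradient-sensitivity} evaluated at $w = y$. For the residual, the triangle inequality and \eqref{eq:xi-bound} give $\|\xi_F(z)\| \le \mu\rho^N R_S(z)$ and $\|\xi_{F'}(z')\| \le \mu\rho^N R_{S'}(z')$. Here $R_T$ is the trajectory scale of $F_T$ from \Cref{lem:traj-scale}, and \Cref{ass:regularity} holds uniformly in $T$, so \eqref{eq:xi-bound} applies to both objectives. Adding the two bounds yields the first inequality in \eqref{eq:xi-delta-bound}.

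For the second inequality, $z \in \Omega \subseteq \Omega'$ and $z' = \Phi(z) \in \Omega'$, so both $R_S(z)$ and $R_{S'}(z')$ are at most $R_{\Omega'}$ by its definition as a supremum over $\Omega'$ and $T \in \{S,S'\}$.

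The main obstacle is just making sure $\Phi(z)$ is well defined and lies in $\Omega'$. This is guaranteed by the standing hypotheses of \Cref{sec:gaussian-ratio}: $\mathcal{M}_S(\Omega) \subseteq \mathcal{M}_{S'}(\Omega')$, and $\mathcal{M}_{S'}$ is a diffeomorphism, hence injective, on $\Omega'$. Beyond this check the proof is a direct computation.
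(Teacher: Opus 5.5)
Your proposal is correct and matches the paper's proof essentially step for step. Both apply the one-step residual identity at the two preimages, use conjugacy to identify the common endpoint $y$, and subtract. Both then bound $g(y)$ by gradient sensitivity and bound $\xi_\Delta$ by the residual bound together with $z\in\Omega\subseteq\Omega'$, $\Phi(z)\in\Omega'$ and the definition of $R_{\Omega'}$. The well-definedness of $\Phi$ that you check is exactly what the paper builds into the standing hypotheses stated just before the corollary.
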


\begin{proof}
\Cref{lem:one-step} applied to $(z, F)$ gives $z = -\nabla F(y) + \xi_F(z)$, since $y = \mathcal{M}(z)$. Applied to $(z', F')$ it gives $z' = -\nabla F'(\mathcal{M}'(z')) + \xi_{F'}(z') = -\nabla F'(y) + \xi_{F'}(z')$, since $\mathcal{M}'(z') = y$ by conjugacy. Subtracting,
\[
    z - z' = -\bigl( \nabla F(y) - \nabla F'(y) \bigr) + \bigl( \xi_F(z) - \xi_{F'}(z') \bigr) = -g(y) + \xi_\Delta.
\]
The bound $\| g(y) \| \leq G$ is \Cref{ass:gradient-sensitivity} for the neighboring pair $S, S'$. For \eqref{eq:xi-delta-bound}, the triangle inequality and the residual bound \eqref{eq:xi-bound}---with the constants $L, \rho$ common to $F$ and $F'$---applied to $\xi_F(z)$ and to $\xi_{F'}(z')$ give $\| \xi_\Delta \| \leq \mu \rho^N R_S(z) + \mu \rho^N R_{S'}(z')$, and both terms are at most $\mu \rho^N R_{\Omega'}$ by the definition of $R_{\Omega'}$.
\end{proof}

We now pass to the random perturbation. Let $Z \sim \mathcal{N}(0, \sigma^2 I_d)$. On the event $\{ Z \in \Omega \}$ set $Z' := \Phi(Z)$ and $Y := \mathcal{M}(Z)$, so the $S$- and $S'$-preimages of $Y$ are $Z$ and $Z'$, and the Gaussian ratio in \eqref{eq:privacy-decomp} is $\log[p_Z(Z)/p_Z(Z')] = (\| Z' \|^2 - \| Z \|^2)/(2\sigma^2)$. With $g, \xi_\Delta$ as in \Cref{cor:preimage}, the conjugacy applies on $\{ Z \in \Omega \}$ and gives $Z' = Z + g(Y) - \xi_\Delta$. For the exact-minimizer map, the optimality condition gives a preimage difference of exactly $g(Y)$. We therefore add and subtract $Z+g(Y)$ in the finite-iterate Gaussian ratio, separating the exact-minimizer contribution from the residual $-\xi_\Delta$:
\begin{equation}\label{eq:decomp}
    \Lambda(Z) := \frac{\| Z' \|^2 - \| Z \|^2}{2\sigma^2} = \underbrace{\frac{\| Z + g(Y) \|^2 - \| Z \|^2}{2\sigma^2}}_{(\mathrm{A})} \; + \; \underbrace{\frac{\| Z' \|^2 - \| Z + g(Y) \|^2}{2\sigma^2}}_{(\mathrm{B})},
\end{equation}
whose terms are $(\mathrm{A}) = \langle g(Y),\, 2Z + g(Y) \rangle/(2\sigma^2)$, the Gaussian ratio the exact-minimizer displacement alone produces, and $(\mathrm{B}) = -\langle \xi_\Delta,\, 2Z + 2g(Y) - \xi_\Delta \rangle/(2\sigma^2)$, which vanishes with the residual.

By \Cref{lem:gradient-subspace}, $g(Y)$ lies in $\operatorname{span}\{x_n,x_n'\}$. We therefore control $\langle Z,g(Y)\rangle$ through the two scalar projections $\langle Z,x_n\rangle$ and $\langle Z,x_n'\rangle$, without introducing an ambient-dimension factor.

\begin{proposition}[Bound on Term (A)]\label{prop:term-a}
Write $U := \langle Z, x_n \rangle/\sigma$ and $U' := \langle Z, x_n' \rangle/\sigma$ for the two scalars the differing samples read off the perturbation, and put
\begin{equation}\label{eq:T-def}
    \mathcal{T} := \frac{|U| + |U'|}{2\zeta}.
\end{equation}
Then \eqref{eq:term-a-bound} holds on $\{ Z \in \Omega \}$:
\begin{equation}\label{eq:term-a-bound}
    \frac{|\langle g(Y),\, 2Z + g(Y) \rangle|}{2\sigma^2} \leq \frac{G}{\sigma}\, \mathcal{T} + \frac{G^2}{2\sigma^2},
\end{equation}
and, unconditionally, $\mathcal{T}$ is the midpoint average of two folded Gaussians whose scales the features bound:
\begin{equation}\label{eq:T-avg}
    \mathcal{T} = \tfrac{1}{2}\Bigl( \frac{|U|}{\zeta} + \frac{|U'|}{\zeta} \Bigr), \qquad \operatorname{Var}\bigl( U/\zeta \bigr) = \frac{\| x_n \|^2}{\zeta^2} \leq 1, \qquad \operatorname{Var}\bigl( U'/\zeta \bigr) = \frac{\| x_n' \|^2}{\zeta^2} \leq 1,
\end{equation}
with no independence of $U$ and $U'$ assumed.
\end{proposition}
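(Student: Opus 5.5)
The plan is to expand the inner product and bound the quadratic and linear pieces separately, using the explicit GLM form of the gradient difference from \Cref{lem:gradient-subspace}. Writing $\langle g(Y), 2Z + g(Y)\rangle = 2\langle g(Y), Z\rangle + \|g(Y)\|^2$, the triangle inequality gives
\[
    \frac{|\langle g(Y), 2Z + g(Y)\rangle|}{2\sigma^2} \leq \frac{|\langle g(Y), Z\rangle|}{\sigma^2} + \frac{\|g(Y)\|^2}{2\sigma^2}.
\]
The second term is at most $G^2/(2\sigma^2)$ by $\|g(Y)\|\leq G$ (\Cref{ass:gradient-sensitivity}, valid on $\{Z\in\Omega\}$ via \Cref{cor:preimage}). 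The remaining work is to show $|\langle g(Y), Z\rangle| \leq \sigma G\,\mathcal{T}$ pointwise, with no dimension factor.

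For the cross term I will use the closed form from the proof of \Cref{lem:gradient-subspace}:
\[
    g(Y) = \frac1n\bigl[\ell'(\langle x_n, Y\rangle, y_n)\, x_n - \ell'(\langle x_n', Y\rangle, y_n')\, x_n'\bigr].
\]
Pairing with $Z$ and applying $|\ell'|\leq L_1$ (\Cref{ass:bounded-loss}) to each coefficient gives
\[
    |\langle g(Y), Z\rangle| \leq \frac{L_1}{n}\bigl(|\langle Z, x_n\rangle| + |\langle Z, x_n'\rangle|\bigr) = \frac{L_1\sigma}{n}(|U|+|U'|).
\]
Since $G = 2L_1\zeta/n$, we have $L_1/n = G/(2\zeta)$. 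The right side is therefore $\sigma G(|U|+|U'|)/(2\zeta) = \sigma G\,\mathcal{T}$. Dividing by $\sigma^2$ gives $(G/\sigma)\mathcal{T}$, which yields \eqref{eq:term-a-bound}.

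The crucial point is that the random coefficient $\ell'(\langle x_n, Y\rangle, y_n)$ depends on $Y$ and hence on $Z$, but it is bounded deterministically by $L_1$. So no independence or decoupling is needed, only absolute values. This is the step I would check most carefully. Using Cauchy--Schwarz on $\langle g(Y), Z\rangle$ directly would introduce $\|Z\|\sim\sigma\sqrt d$, so the bound must go through the two scalar projections instead.

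For \eqref{eq:T-avg}, the first identity is just a rewrite of \eqref{eq:T-def}. For the variances, $U = \langle Z, x_n\rangle/\sigma$ with $Z\sim\mathcal N(0,\sigma^2 I_d)$ and $x_n$ fixed is $\mathcal N(0,\|x_n\|^2)$. Hence $U/\zeta$ has variance $\|x_n\|^2/\zeta^2\leq 1$ by \Cref{ass:bounded-features}, and the same holds for $U'$. This statement is unconditional and makes no joint assumption on $(U,U')$, since only the marginals are used.
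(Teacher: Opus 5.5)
Your proposal is correct and follows essentially the same route as the paper. You expand the inner product and bound $\|g(Y)\|^2$ by $G^2$. You control the cross term through the explicit two-term form of $g(Y)$ from \Cref{lem:gradient-subspace}, using $|\ell'|\leq L_1$ and $L_1/n = G/(2\zeta)$ to get $|\langle Z, g(Y)\rangle|\leq G\sigma\mathcal{T}$, and you read off \eqref{eq:T-avg} from the Gaussian marginals of $U$ and $U'$.
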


\begin{proof}
The inner product expands as $\langle g(Y), 2Z + g(Y) \rangle = 2\langle Z, g(Y) \rangle + \| g(Y) \|^2$. \Cref{lem:gradient-subspace} gives $g(Y)$ explicitly as $\tfrac{1}{n}[\ell'(\langle x_n, Y\rangle, y_n)\, x_n - \ell'(\langle x_n', Y\rangle, y_n')\, x_n']$, so only two scalars of the perturbation are read, and $|\ell'| \leq L_1$ (\Cref{ass:bounded-loss}) with $G = 2L_1\zeta/n$ gives
\begin{equation}\label{eq:term-a-CS}
    |\langle Z, g(Y) \rangle| \leq \frac{L_1}{n}\bigl( |\langle Z, x_n \rangle| + |\langle Z, x_n' \rangle| \bigr) = \frac{L_1\sigma}{n}\bigl( |U| + |U'| \bigr) = G\,\sigma\, \mathcal{T}.
\end{equation}
With $\| g(Y) \| \leq G$ (\Cref{cor:preimage}), $|\langle g(Y), 2Z + g(Y) \rangle| \leq 2G\sigma\mathcal{T} + G^2$; dividing by $2\sigma^2$ gives \eqref{eq:term-a-bound}. The two scalars are bounded separately rather than through the norm of the projection of $Z$ onto $V$ (\Cref{lem:gradient-subspace}), which leaves a single Gaussian degree of freedom in play; Cauchy--Schwarz against that norm would charge two.

For \eqref{eq:T-avg}, the first display is \eqref{eq:T-def} rearranged, and $U = \langle Z, x_n \rangle/\sigma$ is centred Gaussian with $\operatorname{Var} U = \| x_n \|^2 \leq \zeta^2$ (\Cref{ass:bounded-features}), likewise $U'$.
\end{proof}

We control Term~(B) using $\Delta_{\Omega'}:=2\mu\rho^N R_{\Omega'}$, which bounds $\|\xi_\Delta\|$ on $\{Z\in\Omega\}$ by \eqref{eq:xi-delta-bound}. We do not have a corresponding low-dimensional representation for $\xi_\Delta$, so we bound its inner product using $\|Z\|$ and the truncation radius.

Assembling the two terms gives the Gaussian-ratio bound, which holds at every point of the ball rather than with high probability. The projection factor $\mathcal{T}$ is carried forward as a random variable rather than replaced by a high-probability value, because \Cref{sec:dp-guarantee} integrates against its law, which for $G \leq \sigma$ costs less than charging its tail.

\begin{theorem}[Gaussian-ratio bound]\label{thm:gaussian-bound}
Under the hypotheses of \Cref{sec:gaussian-ratio}, let $Z \sim \mathcal{N}(0, \sigma^2 I_d)$ and let $\mathcal{T}$ be as in \eqref{eq:T-def}. Then, on the event $\mathcal{E}_1 := \{ \| Z \| \leq r \}$, the point $\Phi(Z)$ is defined and
\begin{equation}\label{eq:gaussian-main-bound}
    \left| \frac{\| \Phi(Z) \|^2 - \| Z \|^2}{2\sigma^2} \right| \leq \frac{G}{\sigma}\,\mathcal{T} + \frac{r \Delta_{\Omega'}}{\sigma^2} + \frac{(G + \Delta_{\Omega'})^2}{2\sigma^2}.
\end{equation}
\end{theorem}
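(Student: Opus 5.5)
On $\mathcal{E}_1$ the ball $\{\|z\|\le r\}$ lies in $\Omega$, so $Z\in\Omega$ and $\Phi(Z)$ is defined. The plan is to bound the two terms of the decomposition \eqref{eq:decomp} separately, using $|\Lambda(Z)|\le|(\mathrm{A})|+|(\mathrm{B})|$. Term~(A) is handled directly by \Cref{prop:term-a}, which gives $|(\mathrm{A})|\le \tfrac{G}{\sigma}\mathcal{T}+\tfrac{G^2}{2\sigma^2}$ on $\{Z\in\Omega\}\supseteq\mathcal{E}_1$.

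For Term~(B), start from $(\mathrm{B})=-\langle \xi_\Delta,\,2Z+2g(Y)-\xi_\Delta\rangle/(2\sigma^2)$. Cauchy--Schwarz and the triangle inequality give
\[
|(\mathrm{B})|\le \frac{2\|\xi_\Delta\|\,\|Z\|+2\|\xi_\Delta\|\,\|g(Y)\|+\|\xi_\Delta\|^2}{2\sigma^2}.
\]
On $\mathcal{E}_1$ we have $\|Z\|\le r$. \Cref{cor:preimage} gives $\|g(Y)\|\le G$, and \eqref{eq:xi-delta-bound} gives $\|\xi_\Delta\|\le\Delta_{\Omega'}$. Substituting these,
\[
|(\mathrm{B})|\le \frac{r\Delta_{\Omega'}}{\sigma^2}+\frac{2G\Delta_{\Omega'}+\Delta_{\Omega'}^2}{2\sigma^2}.
\]

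Adding this to the bound on (A), the pieces $G^2+2G\Delta_{\Omega'}+\Delta_{\Omega'}^2$ combine into $(G+\Delta_{\Omega'})^2$, which yields \eqref{eq:gaussian-main-bound}.

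There is no real obstacle. The one point to check is that $R_{\Omega'}$ legitimately bounds both trajectory scales, i.e.\ that $Z\in\Omega\subseteq\Omega'$ and $\Phi(Z)\in\Omega'$. This is exactly the standing hypothesis used in \Cref{cor:preimage}, so the bound \eqref{eq:xi-delta-bound} applies on $\mathcal{E}_1$.
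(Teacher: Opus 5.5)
Your proposal is correct and follows the paper's proof essentially step for step. The paper uses the same split of $\Lambda(Z)$ into Term~(A), bounded by \Cref{prop:term-a}, and Term~(B), bounded by Cauchy--Schwarz with $\|\xi_\Delta\|\le\Delta_{\Omega'}$, $\|g(Y)\|\le G$, and $\|Z\|\le r$, and it recombines $G^2+2G\Delta_{\Omega'}+\Delta_{\Omega'}^2$ into $(G+\Delta_{\Omega'})^2$ in the same way.
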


\begin{proof}
On $\mathcal{E}_1$ we have $\| Z \| \leq r$, hence $Z \in \Omega$ (as $\{ z : \| z \| \leq r \} \subseteq \Omega$), so $\Phi(Z)$ is defined and the decomposition \eqref{eq:decomp} holds. By the triangle inequality,
\[
    \left| \frac{\| \Phi(Z) \|^2 - \| Z \|^2}{2\sigma^2} \right| = |\Lambda(Z)| \leq \frac{|\langle g(Y), 2Z + g(Y) \rangle|}{2\sigma^2} + \frac{|\langle \xi_\Delta, 2Z + 2g(Y) - \xi_\Delta \rangle|}{2\sigma^2}.
\]
Since $\mathcal{E}_1 \subseteq \{ Z \in \Omega \}$, \Cref{prop:term-a} bounds the first term by $(G/\sigma)\mathcal{T} + G^2/(2\sigma^2)$. For the second, \Cref{cor:preimage} gives $\| \xi_\Delta \| \leq \Delta_{\Omega'}$ (by \eqref{eq:xi-delta-bound}) and $\| g(Y) \| \leq G$, so $\| 2Z + 2g(Y) - \xi_\Delta \| \leq 2r + 2G + \Delta_{\Omega'}$ by the triangle inequality and $|\langle \xi_\Delta, 2Z + 2g(Y) - \xi_\Delta \rangle| \leq \Delta_{\Omega'}(2r + 2G + \Delta_{\Omega'})$ by Cauchy--Schwarz; dividing by $2\sigma^2$ and separating the term in $r$ bounds it by $r\Delta_{\Omega'}/\sigma^2 + (2G\Delta_{\Omega'} + \Delta_{\Omega'}^2)/(2\sigma^2)$. Their sum is $(G/\sigma)\mathcal{T} + r\Delta_{\Omega'}/\sigma^2 + (G^2 + 2G\Delta_{\Omega'} + \Delta_{\Omega'}^2)/(2\sigma^2)$, and the last numerator is $(G + \Delta_{\Omega'})^2$, which gives \eqref{eq:gaussian-main-bound}.
\end{proof}

\subsection{Log-determinant sensitivity}\label{sec:log-det}

We now bound the Jacobian ratio in \eqref{eq:privacy-decomp} for GLMs. For $T\in\{S,S'\}$ and perturbation $z_T$, write $\tilde J_N^T$ for the frozen-Hessian Jacobian and $\delta_N^T:=J_N^T(z_T)-\tilde J_N^T$. Using the geometric-sum form from \Cref{lem:closed-form}, we write
\begin{equation}\label{eq:jacobian-decomp}
    -J_N^T(z_T) = P_N^T - \delta_N^T, \qquad P_N^T := \eta\sum_{j=0}^{N-1}(A_T^*)^j = -\tilde{J}_N^T = p_N(H_T^*),
\end{equation}
where $p_N(h):=\eta\sum_{j=0}^{N-1}(1-\eta h)^j$. The frozen factor also has the form
$P_N^T=(H_T^*)^{-1}(I-(A_T^*)^N)$ and therefore converges to the exact-minimizer Jacobian $(H_T^*)^{-1}$. We bound the remaining error $\delta_N^T$ by $O(\tilde\nu_N\rho^{N-1})$ through \Cref{lem:error-nuclear}. The argument below compares the two frozen factors at the shared output and then adds the finite-$N$ correction.

Let $z_S$ and $z_{S'}$ satisfy $\mathcal{M}_S(z_S) = \mathcal{M}_{S'}(z_{S'}) =: w$. Each reference Hessian is read at its own dataset's $N$-th iterate, and those coincide: $w_N^S = \mathcal{M}_S(z_S) = w = \mathcal{M}_{S'}(z_{S'}) = w_N^{S'}$, so $H_S^* = \nabla^2 F_S(w)$ and $H_{S'}^* = \nabla^2 F_{S'}(w)$ are the two datasets' Hessians at one and the same point. The reference Hessians therefore differ only in the sample the datasets do not share, and the determinant comparison is a dataset swap at a fixed point of evaluation.

The two Jacobians are compared through the polynomial $p_N$ of \eqref{eq:jacobian-decomp}, so what governs the comparison is how $\log p_N$ responds to the curvature. Its logarithmic derivative factors as a bounded multiple of the logarithmic derivative of $h \mapsto 1/h$, the exact-minimizer case.

\begin{lemma}[Log-derivative of the frozen symbol]\label{lem:symbol}
Let $\phi_N := \log p_N$ on $[\mu, L]$, with $p_N$ as in \eqref{eq:jacobian-decomp}; for $N \geq 1$ it is well defined there, since $1 - \eta h \geq 0$ on $[\mu, L]$ makes every summand of $p_N(h) = \eta\sum_{j<N}(1-\eta h)^j$ nonnegative and its $j = 0$ term equal to $\eta$, so $p_N \geq \eta > 0$. Then $\phi_N'(h) = -g_N(h)/h$ with
\[
    0 \; \leq \; g_N(h) \; \leq \; 1 - (1 - \eta h)^{N-1} \qquad\text{for all } h \in [\mu, L],
\]
and consequently
\[
    \bar{g}_N := \sup_{[\mu,L]} g_N \; \leq \; 1 - (1 - \eta L)^{N-1} \; \leq \; 1.
\]
\end{lemma}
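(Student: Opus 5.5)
The plan is a direct computation: write $\phi_N'$ in closed form in the variable $q := 1 - \eta h$, then reduce both inequalities to comparisons of geometric sums. On $[\mu, L]$ we have $q \in [1-\eta L, \rho] \subseteq [0,1)$, and $\eta h = 1 - q$. By \Cref{lem:closed-form}, read at the scalar level as in item~\ref{itm:Jt-eigen} of \Cref{lem:frozen-spectrum}, we have $p_N(h) = (1 - q^N)/h$. This gives
\[
    \phi_N(h) = \log\bigl(1 - (1-\eta h)^N\bigr) - \log h,
\]
which is differentiable because $1 - q^N \geq 1 - \rho^N > 0$. Differentiating with $dq/dh = -\eta$ gives
\[
    \phi_N'(h) = \frac{N\eta q^{N-1}}{1 - q^N} - \frac{1}{h} = -\frac{g_N(h)}{h}, \qquad g_N(h) := 1 - \frac{N\eta h\, q^{N-1}}{1 - q^N} = 1 - \frac{N(1-q)\,q^{N-1}}{1-q^N}.
\]
Throughout, $q^0 := 1$; this matters only at $q = 0$, i.e.\ $h = 1/\eta$, which is possible when $\eta = 1/L$.

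Next I factor $1 - q^N = (1-q)\sum_{j=0}^{N-1} q^j$ and cancel $1 - q > 0$. This leaves
\[
    g_N(h) = 1 - \frac{N q^{N-1}}{\sum_{j<N} q^j}.
\]
For the lower bound $g_N \geq 0$, I need $N q^{N-1} \leq \sum_{j<N} q^j$. This holds termwise, since $q^j \geq q^{N-1}$ for $0 \leq j \leq N-1$ when $q \in [0,1)$.

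For the upper bound $g_N \leq 1 - q^{N-1}$, I need $N q^{N-1}/\sum_{j<N} q^j \geq q^{N-1}$. Equivalently, I need $q^{N-1}\bigl(N - \sum_{j<N} q^j\bigr) \geq 0$. This is immediate because each $q^j \leq 1$.

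I will check the boundary $q = 0$ separately. For $N = 1$, both sides equal $0$. For $N \geq 2$, $g_N = 1 = 1 - 0^{N-1}$. So the convention $0^0 = 1$ is consistent.

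Finally, $h \mapsto 1 - (1-\eta h)^{N-1}$ is nondecreasing on $[\mu, L]$, because $1 - \eta h$ is nonnegative and decreasing. Taking the supremum therefore gives $\bar g_N \leq 1 - (1-\eta L)^{N-1} \leq 1$, using $1 - \eta L \geq 0$.

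There is no real obstacle here. The only points needing care are the cancellation of the factor $1 - q$, which is legitimate since $q \leq \rho < 1$, and the degenerate endpoint $q = 0$ with its $0^0$ convention.
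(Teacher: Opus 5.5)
Your proof is correct and follows essentially the paper's route. The paper differentiates the geometric-sum form $p_N = \eta D(a)$ and uses the differentiated identity $(1-a)D(a) = 1 - a^N$, while you differentiate the closed form $(1-q^N)/h$ and cancel $1-q$; both reach $g_N = 1 - Nq^{N-1}/\sum_{j<N}q^j$ and finish with the same termwise comparisons $q^{N-1} \leq q^j \leq 1$ and the same monotonicity in $h$, and your explicit check of the endpoint $q=0$ with $0^0=1$ is a small extra that the paper leaves implicit.
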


\begin{proof}
Write $a := 1 - \eta h$, so that $a \in [1 - \eta L, \rho] \subseteq [0, \rho]$ for $h \in [\mu, L]$, by $\eta L \leq 1$ (\Cref{subsec:mechanism}). Abbreviate $D(a) := \sum_{j=0}^{N-1}a^j$, so $p_N(h) = \eta D(a)$ and, by the chain rule in $h$, $p_N'(h) = -\eta^2 D'(a)$. Hence $\phi_N'(h) = p_N'(h)/p_N(h) = -\eta D'(a)/D(a)$, and since $\eta h = 1 - a$,
\[
    -h\,\phi_N'(h) = \frac{(1-a)\,D'(a)}{D(a)} =: g_N(h).
\]
Differentiating the identity $(1-a)D(a) = 1 - a^N$ gives $(1-a)D'(a) = D(a) - N a^{N-1}$, so
\begin{equation}\label{eq:g-ratio}
    g_N(h) = 1 - \frac{N a^{N-1}}{D(a)} = \frac{\sum_{j=0}^{N-1}\bigl( a^{j} - a^{N-1} \bigr)}{\sum_{j=0}^{N-1} a^{j}}.
\end{equation}
Every summand of the numerator is nonnegative, since $a \in [0,1]$ and $j \leq N-1$, and the denominator is positive; this gives $g_N \geq 0$. For the upper bound, $g_N \leq 1 - a^{N-1}$ rearranges to $a^{N-1}D(a) \leq N a^{N-1}$, which holds because $D(a) \leq N$, each of its $N$ summands being at most $1$. The map $a \mapsto 1 - a^{N-1}$ is nonincreasing on $[0,1]$, so over $h \in [\mu, L]$ it is largest at $a = 1 - \eta L$, giving the bound on $\bar{g}_N$, which is at most $1$ since $(1 - \eta L)^{N-1} \geq 0$.
\end{proof}

At a shared output, both Hessians are rank-one updates of the same matrix formed by the regularizer and the $n-1$ common samples. Comparing each update to this common matrix gives the determinant ratio in one step. Replacing $-\log h$ by $\phi_N(h)=\log p_N(h)$ introduces the factor $\bar g_N$ from \Cref{lem:symbol}.

\begin{lemma}[Reference determinant ratio]\label{lem:hessian-logdet}
Under \Cref{ass:bounded-features,ass:bounded-loss,ass:lipschitz-third}, let $z_S$ and $z_{S'}$ satisfy $\mathcal{M}_S(z_S) = \mathcal{M}_{S'}(z_{S'})$. Then, for every $N \geq 1$,
\begin{equation}\label{eq:hessian-logdet}
    \Bigl| \log \frac{\det P_N^S}{\det P_N^{S'}} \Bigr| \leq \bar{g}_N \log\Bigl( 1 + \frac{\Delta_H}{2n\mu} \Bigr),
\end{equation}
with $\bar{g}_N$ as in \Cref{lem:symbol}.
\end{lemma}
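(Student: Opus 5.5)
The plan is to write both reference Hessians at the shared output $w$ as rank-one updates of a common anchor. Then compare the spectral functional $\log\det p_N(\cdot)=\operatorname{tr}\phi_N(\cdot)$ along a path from the anchor to each update, using \Cref{lem:symbol} to reduce to the exact-minimizer case.

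\textbf{Setup.} Set $H_0:=\mu I+\frac1n\sum_{i<n}\ell''(\langle x_i,w\rangle,y_i)x_ix_i^\top\succeq\mu I$. Then $H_S^*=H_0+c\,x_nx_n^\top$ and $H_{S'}^*=H_0+c'\,x_n'x_n'^\top$, where $c=\ell''(\langle x_n,w\rangle,y_n)/n$ and $c'$ is the analogue for $(x_n',y_n')$. By \Cref{ass:bounded-loss,ass:bounded-features}, $0\le c\|x_n\|^2\le L_2\zeta^2/n=\Delta_H/(2n)$, and the same holds for $c'$. Since $p_N$ is positive on $[\mu,L]$ and all three matrices have spectra in $[\mu,L]$ by \Cref{lem:glm-smoothness}, we have $\log\det P_N^T=\operatorname{tr}\phi_N(H_T^*)$. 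It therefore suffices to show
\[
\Psi(H_0+c\,xx^\top)-\Psi(H_0)\in\Bigl[-\bar g_N\log\bigl(1+\tfrac{\Delta_H}{2n\mu}\bigr),\,0\Bigr]
\]
for $\Psi:=\operatorname{tr}\phi_N$ and for each of the two updates. The two differences then lie in a common interval of length $\bar g_N\log(1+\Delta_H/(2n\mu))$, and so does their difference, which is the claim. This is the common-anchor argument of \Cref{rem:anchoring}.

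\textbf{Path integral.} Consider $H(t)=H_0+t\,c\,xx^\top$ for $t\in[0,1]$; its spectrum stays in $[\mu,L]$ because it lies between $H_0$ and $H_S^*$. By the derivative formula for trace functions,
\[
\frac{d}{dt}\Psi(H(t))=\operatorname{tr}\bigl(\phi_N'(H(t))\,c\,xx^\top\bigr)=-c\,\bigl\langle x,\ g_N(H(t))H(t)^{-1}x\bigr\rangle ,
\]
which uses $\phi_N'(h)=-g_N(h)/h$ from \Cref{lem:symbol}. The matrix $g_N(H)H^{-1}$ is a function of $H$ with eigenvalues in $[0,\bar g_N/h]$, so $0\preceq g_N(H)H^{-1}\preceq\bar g_N H^{-1}$. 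Hence the derivative lies between $-\bar g_N\,c\langle x,H(t)^{-1}x\rangle$ and $0$.

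\textbf{Integration.} The upper quantity integrates exactly: $\int_0^1 c\langle x,H(t)^{-1}x\rangle\,dt=\log\det(I+cH_0^{-1}xx^\top)=\log(1+c\langle x,H_0^{-1}x\rangle)$. This is Jacobi's formula applied to $\log\det H(t)$ together with the matrix determinant lemma. The last expression is at most $\log(1+c\|x\|^2/\mu)\le\log(1+\Delta_H/(2n\mu))$, which gives the displayed interval.

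\textbf{Main obstacle.} The delicate step is justifying the trace-derivative formula $\frac{d}{dt}\operatorname{tr}\phi(H(t))=\operatorname{tr}(\phi'(H)\dot H)$ for a $C^1$ function $\phi_N$ (a polynomial composed with $\log$) on an interval containing the spectrum. If invoking this directly seems too heavy, I would use an alternative, since $p_N$ is a polynomial. Write $\det p_N(H)=\prod_i p_N(h_i(t))$ and use the Hellmann–Feynman derivative of the eigenvalues: $\dot h_i=c\langle v_i,x\rangle^2\ge0$. This is valid where the eigenvalues are simple; elsewhere use a density or perturbation argument. That route yields the same sum $-\sum_i g_N(h_i)\,c\langle v_i,x\rangle^2/h_i$, bounded as above.
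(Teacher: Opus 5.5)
Your proposal is correct and follows essentially the same route as the paper. Both arguments use a common anchor $H_0$ at the shared output and the path $H_0+t\,uu^\top$. They bound the integrand by $0\preceq g_N(H)H^{-1}\preceq\bar g_N H^{-1}$ from \Cref{lem:symbol} and evaluate $\int_0^1 u^\top H(t)^{-1}u\,dt=\log(1+u^\top H_0^{-1}u)$ by Jacobi's formula and the matrix determinant lemma; the paper packages this as item~\ref{itm:trace-rankone} of \Cref{lem:ext-trace-segment}.

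The step you flag as delicate needs no eigenvalue-simplicity or density argument. Because $p_N$ is a polynomial, Jacobi's formula for $\log\det p_N(H(t))$ applies directly. Differentiating each power, using cyclicity of the trace, and noting that $p_N(H(t))^{-1}$ commutes with powers of $H(t)$ then yields $\operatorname{tr}[\phi_N'(H(t))\dot H]$.
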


\begin{proof}
Since $P_N^T = p_N(H_T^*)$ and $\phi_N = \log p_N$, we have $\log\det P_N^T = \mathrm{tr}\,\phi_N(H_T^*)$, so
\begin{equation}\label{eq:logdet-split}
    \log\frac{\det P_N^S}{\det P_N^{S'}} = \mathrm{tr}\,\phi_N(H_S^*) - \mathrm{tr}\,\phi_N(H_{S'}^*),
\end{equation}
a dataset swap at the shared output $w$, both Hessians being read there. Every Hessian appearing here has spectrum in $[\mu, L]$ (\Cref{ass:regularity}), as does the anchor $H_0$ introduced next.

Since $S$ and $S'$ agree on the first $n-1$ samples and share the regularizer, both $H_S^*$ and $H_{S'}^*$---read at the same point $w$---are rank-one updates of the common matrix
\[
    H_0 := \frac{1}{n}\sum_{i=1}^{n-1} \ell''(\langle x_i, w \rangle, y_i)\, x_i x_i^\top + \mu I \succeq \mu I,
\]
the sum being positive semidefinite by $\ell'' \geq 0$ (\Cref{ass:bounded-loss}), so $H_0 \succeq \mu I$; and $H_0 \preceq L I$, since $\nabla^2 F_S(w) - H_0 = \tfrac{1}{n}\ell''(\langle x_n, w \rangle, y_n)\, x_n x_n^\top \succeq 0$, so $H_0 \preceq \nabla^2 F_S(w) \preceq L I$ by \Cref{ass:regularity}. Because both are read at $w$, the two datasets contribute the \emph{same} $H_0$; explicitly, $H_S^* = H_0 + u u^\top$ and $H_{S'}^* = H_0 + u' u'^\top$ with
\[
    u := \sqrt{\tfrac{1}{n}\ell''(\langle x_n, w \rangle, y_n)}\; x_n, \qquad u' := \sqrt{\tfrac{1}{n}\ell''(\langle x_n', w \rangle, y_n')}\; x_n',
\]
the square roots being real by $\ell'' \geq 0$. Apply item~\ref{itm:trace-rankone} of \Cref{lem:ext-trace-segment} with $\phi = \phi_N$, whose derivative has the required form $-g_N(h)/h$ with $0 \leq g_N \leq \bar{g}_N$ by \Cref{lem:symbol}; with $u^\top H_0^{-1}u \leq \| u \|^2/\mu \leq L_2\zeta^2/(n\mu) = \Delta_H/(2n\mu)$, using $0 \leq \ell'' \leq L_2$, $\| x_n \| \leq \zeta$ and $\Delta_H = 2L_2\zeta^2$,
\[
    -\bar{g}_N \log\Bigl( 1 + \frac{\Delta_H}{2n\mu} \Bigr) \; \leq \; \mathrm{tr}\,\phi_N(H_S^*) - \mathrm{tr}\,\phi_N(H_0) \; \leq \; 0,
\]
and the same for $H_{S'}^*$. The right-hand side of \eqref{eq:logdet-split} is the difference of two quantities lying in that one interval, hence is at most $\bar{g}_N \log(1 + \Delta_H/(2n\mu))$ in absolute value, which is \eqref{eq:hessian-logdet}.
\end{proof}

\Cref{lem:error-nuclear} bounds each error term in nuclear norm: writing
\begin{equation}\label{eq:beta-N}
    \beta_N := (C_S + C_{S'})\,\tilde{\nu}_N\rho^{N-1},
\end{equation}
we have $\| \delta_N^S \|_* + \| \delta_N^{S'} \|_* \leq \beta_N$.

One quantity organizes the final argument: feasibility of $N$, the single condition under which the analysis runs. We state it uniformly over a bounded set, so that it applies at every preimage in it.

\begin{definition}[Feasibility]\label{def:feasibility}
For a bounded set $D \subset \mathbb{R}^d$, write
\[
    R_D := \sup_{z \in D,\, T \in \{S, S'\}} R_T(z), \qquad C_D := \eta M R_D B, \qquad \alpha_D := \gamma_N - C_D \tilde{\nu}_N\rho^{N-1},
\]
extending the single-objective domain constant of \Cref{lem:traj-scale} to range over both datasets (at $D = \Omega'$ this is the $R_{\Omega'}$ of \Cref{sec:gaussian-ratio}); the constant of each $F_T$ alone is at most $R_D$, so the conclusions of \Cref{sec:diffeo} applied to $F_T$ on $D$ hold a fortiori with $C_D$ and $\alpha_D$ in place of their single-objective counterparts. We say $N$ is \emph{feasible} for $D$ if $\alpha_D > 0$; written out this reads $C_D \tilde{\nu}_N\rho^{N-1} < \gamma_N$. Since $\gamma_N \to 1/L > 0$ and $C_D \tilde{\nu}_N\rho^{N-1} \to 0$ as $N \to \infty$, feasibility holds for all sufficiently large $N$.
\end{definition}

The estimates assemble into the finite-$N$ correction that, together with the reference term $\tau^{(0)}$ of \Cref{lem:hessian-logdet}, enters the Jacobian-ratio bound of \Cref{thm:main}.

\begin{definition}[Finite-$N$ correction]\label{def:correction}
For $N$ feasible for a bounded set $\Omega'$ containing the preimages $z_S, z_{S'}$,
\begin{equation}\label{eq:epsilon-corr}
    \tau^{(\mathrm{corr})} := \frac{\beta_N}{\alpha_{\Omega'}},
\end{equation}
well-defined since feasibility gives $\alpha_{\Omega'} > 0$.
\end{definition}

The dimension dependence of $\beta_N$ and $\alpha_{\Omega'}$---carried by $R_{\Omega'}$ in both---is controlled in \Cref{thm:dp}. We can now bound the log-determinant sensitivity, with a leading term free of dimension.

\begin{restatable}[Dimension-free log-determinant sensitivity]{theorem}{thmmain}\label{thm:main}
Under \Cref{ass:bounded-features,ass:bounded-loss,ass:lipschitz-third}, with step size satisfying $0 < \eta \leq 1/L$ and $\eta\mu < 1$ (\Cref{subsec:mechanism}), let $\Omega'$ be a bounded convex set, and let $N \geq 1$ be feasible for $\Omega'$, that is $\alpha_{\Omega'} > 0$ (\Cref{def:feasibility}); the margin $\alpha_{\Omega'} > 0$ makes $\mathcal{M}_S$ and $\mathcal{M}_{S'}$ $C^1$-diffeomorphisms onto their images on $\Omega'$ (\Cref{lem:feasible-diffeo}). For an output $w$ with preimages $z_S = \mathcal{M}_S^{-1}(w)$ and $z_{S'} = \mathcal{M}_{S'}^{-1}(w)$ in $\Omega'$, set
\begin{equation}\label{eq:tau-zero}
    \tau^{(0)} := \bar{g}_N \log\Bigl( 1 + \frac{\Delta_H}{2n\mu} \Bigr), \qquad \tau := \tau^{(0)} + \tau^{(\mathrm{corr})}
\end{equation}
with $\tau^{(\mathrm{corr})}$ as in \Cref{def:correction}. Then
\begin{equation}\label{eq:main-bound}
    \Bigl| \log \frac{|\det J_N^S(z_S)|}{|\det J_N^{S'}(z_{S'})|} \Bigr| \leq \tau.
\end{equation}
\end{restatable}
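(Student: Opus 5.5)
The plan is to factor each Jacobian determinant into a frozen-Hessian part and a finite-$N$ error part, and then bound the two pieces separately. By \eqref{eq:jacobian-decomp}, $-J_N^T(z_T) = P_N^T - \delta_N^T = P_N^T\bigl(I - \Psi_T\bigr)$ with $\Psi_T := (P_N^T)^{-1}\delta_N^T$, for $T \in \{S,S'\}$. The matrix $P_N^T = p_N(H_T^*)$ is symmetric positive definite, with eigenvalues at least $\gamma_N$ by \Cref{lem:frozen-spectrum}. Taking absolute values of determinants gives
\[
\log\frac{|\det J_N^S(z_S)|}{|\det J_N^{S'}(z_{S'})|} = \log\frac{\det P_N^S}{\det P_N^{S'}} + \log|\det(I-\Psi_S)| - \log|\det(I-\Psi_{S'})|.
\]
\Cref{lem:hessian-logdet} applies because the two preimages share the output $w$; it bounds the first term in absolute value by $\tau^{(0)}$. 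What remains is to bound $|\log|\det(I-\Psi_S)|| + |\log|\det(I-\Psi_{S'})||$ by $\beta_N/\alpha_{\Omega'}$.

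For the error factors, I would first make sure that $I-\Psi_T$ is invertible and that its log-determinant is controlled in both directions. The upper direction is immediate: $\log|\det(I+K)| \le \|K\|_*$, since $\log|1+\lambda_i|\le|\lambda_i|$ summed over the eigenvalues and $\sum_i|\lambda_i(K)| \le \|K\|_*$ by Weyl. This gives $\log|\det(I-\Psi_T)| \le \|\Psi_T\|_* \le \|(P_N^T)^{-1}\|\,\|\delta_N^T\|_* \le \|\delta_N^T\|_*/\gamma_N$. For the lower direction, a naive Taylor estimate would cost a factor $1/(1-\|\Psi_T\|)$. To avoid that, I would write $I-\Psi_T = (P_N^T)^{-1}(-J_N^T)$ and invert in the other order: $\det(I-\Psi_T)^{-1} = \det\bigl((-J_N^T)^{-1}P_N^T\bigr) = \det\bigl(I + (-J_N^T)^{-1}\delta_N^T\bigr)$. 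The same trace inequality then gives $-\log|\det(I-\Psi_T)| \le \|(J_N^T)^{-1}\|\,\|\delta_N^T\|_*$. By \Cref{thm:neg-def-J}, applied to $F_T$ on $\Omega'$ with the two-dataset constants of \Cref{def:feasibility}, we have $\sigma_{\min}(J_N^T(z_T)) \ge \alpha_{\Omega'}$, so $\|(J_N^T)^{-1}\| \le 1/\alpha_{\Omega'}$. Since $\alpha_{\Omega'} \le \gamma_N$, both directions give $|\log|\det(I-\Psi_T)|| \le \|\delta_N^T\|_*/\alpha_{\Omega'}$. Invertibility of $J_N^T$ itself also follows from this singular-value floor, so no separate bound on $\|\Psi_T\|$ is needed.

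\Cref{lem:error-nuclear} gives $\|\delta_N^T\|_* \le C_T\tilde\nu_N\rho^{N-1}$ with $C_T = \eta M R_T(z_T)B$. Summing over $T$ and using \eqref{eq:beta-N}, the two error terms together are at most $\beta_N/\alpha_{\Omega'} = \tau^{(\mathrm{corr})}$. Adding $\tau^{(0)}$ then yields \eqref{eq:main-bound}. One point needs checking here: \Cref{lem:error-nuclear} uses the nuclear-norm Lipschitz constant $L_3\zeta^3$, so I need $M \ge L_3\zeta^3$, which is the standing choice of $M$.

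I expect the main obstacle to be the lower bound on $\log|\det(I-\Psi_T)|$ with no loss factor. The reverse-factorization trick, which uses the uniform singular-value floor $\alpha_{\Omega'}$ on the true Jacobian at $z_T \in \Omega'$, is what makes the denominator $\alpha_{\Omega'}$ rather than something like $\gamma_N - \|\delta_N\|_*$. I would also verify that the preimages lie in $\Omega'$, so that $R_T(z_T) \le R_{\Omega'}$ and the margin applies.
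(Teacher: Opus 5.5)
Your proposal is correct and is essentially the paper's proof. It uses the same factorization $-J_N^T=P_N^T(I-\Psi_T)$, the common-anchor bound of \Cref{lem:hessian-logdet} for the reference factor, and the nuclear-norm estimate of \Cref{lem:error-nuclear} combined with $\log|\det(I+K)|\le\|K\|_*$ for the error factors. The only variation is the lower direction. You invert via $(I-\Psi_T)^{-1}=(-J_N^T)^{-1}P_N^T$ and use the floor $\sigma_{\min}(J_N^T)\ge\alpha_{\Omega'}$ from \Cref{thm:neg-def-J}, whereas the paper bounds $\|(I-\Psi_T)^{-1}\|\le(1-\psi)^{-1}$ by the Neumann series, with $\psi:=C_{\Omega'}\tilde\nu_N\rho^{N-1}/\gamma_N<1$. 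Because $\gamma_N(1-\psi)=\alpha_{\Omega'}$ exactly, the factor $1/(1-\psi)$ you meant to avoid costs nothing, and both routes give the same bound $\|\delta_N^T\|_*/\alpha_{\Omega'}$.
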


\begin{proof}
\emph{Exact factorization.} By \eqref{eq:jacobian-decomp}, $-J_N^T = P_N^T - \delta_N^T = P_N^T(I - \Psi_T)$ with $\Psi_T := (P_N^T)^{-1}\delta_N^T$, for each $T \in \{S, S'\}$; the inverse exists because $P_N^T = -\tilde{J}_N^T$ is symmetric with $\sigma_{\min}(P_N^T) \geq \gamma_N > 0$ (item~\ref{itm:Jt-floor} of \Cref{lem:frozen-spectrum}). Its eigenvalues are the $p_N(h_i) \geq \eta > 0$ of \Cref{lem:symbol}, which the singular-value floor alone would not sign, so $\det P_N^T > 0$. The factors $(-1)^d$ in the two determinants cancel, and
\begin{equation}\label{eq:det-ratio}
    \frac{|\det J_N^S(z_S)|}{|\det J_N^{S'}(z_{S'})|} = \frac{\det P_N^S}{\det P_N^{S'}} \cdot \frac{|\det(I - \Psi_S)|}{|\det(I - \Psi_{S'})|}.
\end{equation}
We bound the frozen-Hessian ratio using \Cref{lem:hessian-logdet} and the second ratio using the finite-$N$ Jacobian error.

\emph{Size of the error matrices.} For $T \in \{S, S'\}$, by $\| (P_N^T)^{-1} \| \leq 1/\gamma_N$ and \Cref{prop:error-bound},
\[
    \| \Psi_T \| \leq \frac{\| \delta_N^T \|}{\gamma_N} \leq \frac{C_T \tilde{\nu}_N\rho^{N-1}}{\gamma_N} \leq \frac{C_{\Omega'} \tilde{\nu}_N\rho^{N-1}}{\gamma_N} =: \psi, \qquad \| \Psi_T \|_* \leq \frac{\| \delta_N^T \|_*}{\gamma_N},
\]
the nuclear bound by \Cref{lem:error-nuclear} and $\| X Y \|_* \leq \| X \|\,\| Y \|_*$ (\Cref{lem:ext-trace-nuclear}), and $C_T \leq C_{\Omega'}$ in both. Feasibility is exactly $C_{\Omega'} \tilde{\nu}_N\rho^{N-1} < \gamma_N$, so $\psi < 1$ and each $I - \Psi_T$ is invertible with $\| (I - \Psi_T)^{-1} \| \leq (1-\psi)^{-1}$ (\Cref{lem:ext-neumann} and summation of the geometric series). Moreover $\gamma_N(1 - \psi) = \gamma_N - C_{\Omega'}\tilde{\nu}_N\rho^{N-1} = \alpha_{\Omega'}$.

\emph{Reference factor.} \Cref{lem:hessian-logdet} bounds $\bigl| \log(\det P_N^S/\det P_N^{S'}) \bigr|$ by $\tau^{(0)}$.

\emph{Error factor.} Applying \eqref{eq:det-nuclear} of \Cref{lem:ext-logdet} to $K = -\Psi_T$ gives $\log|\det(I - \Psi_T)| \leq \| \Psi_T \|_*$, and applying it instead to $(I - \Psi_T)^{-1} - I = \Psi_T(I - \Psi_T)^{-1}$ gives
\[
    -\log|\det(I - \Psi_T)| = \log\bigl| \det\bigl( I + \Psi_T(I - \Psi_T)^{-1} \bigr) \bigr| \leq \| \Psi_T \|_*\,\| (I - \Psi_T)^{-1} \| \leq \frac{\| \Psi_T \|_*}{1 - \psi},
\]
so $\bigl| \log|\det(I - \Psi_T)| \bigr| \leq \| \Psi_T \|_*/(1-\psi)$ for each $T$. Summing the two and using the error bounds above,
\[
    \Bigl| \log \frac{|\det(I - \Psi_S)|}{|\det(I - \Psi_{S'})|} \Bigr| \leq \frac{\| \delta_N^S \|_* + \| \delta_N^{S'} \|_*}{\gamma_N(1 - \psi)} \leq \frac{\beta_N}{\alpha_{\Omega'}} = \tau^{(\mathrm{corr})},
\]
by \eqref{eq:beta-N} and $\gamma_N(1-\psi) = \alpha_{\Omega'}$.

\emph{Both directions.} Taking logarithms in \eqref{eq:det-ratio} and adding the two bounds gives
\[
\left| \log \frac{|\det J_N^S(z_S)|}{|\det J_N^{S'}(z_{S'})|} \right|
\leq \tau^{(0)} + \tau^{(\mathrm{corr})} = \tau,
\]
which is \eqref{eq:main-bound}.
\end{proof}

\subsection{The differential privacy guarantee}\label{sec:dp-guarantee}

We assemble the differential privacy guarantee by combining the Gaussian-ratio bound (\Cref{thm:gaussian-bound}) and the log-determinant bound (\Cref{thm:main}) with two further devices: a truncation confining the perturbation to a bounded domain, and a divergence calculation pricing the projection factor the Gaussian-ratio bound leaves random (\Cref{lem:hockey-chi}). We first record how the restricted output density enters the accounting.

\begin{remark}\label{rem:density-contribution}
For a bounded measurable set $\Omega$ contained in an open set on which $\mathcal{M}_S$ is a $C^1$-diffeomorphism, and $y \in \mathcal{M}_S(\Omega)$ with preimage $z_S = \mathcal{M}_S^{-1}(y) \in \Omega$, the change-of-variables formula (\Cref{lem:ext-pushforward}) gives the contribution to the output density from preimages in $\Omega$ as $p_Y^S(y; \Omega) = p_Z(z_S)/|\det J_N^S(z_S)|$: for every measurable $E$, the probability $\Pr[\mathcal{M}_S(Z) \in E,\, Z \in \Omega]$ is $p_Y^S(\cdot\,; \Omega)$ integrated over $E \cap \mathcal{M}_S(\Omega)$. The truncation in \Cref{thm:dp-full} works entirely with this restricted density.
\end{remark}

The margin $\alpha_D > 0$ of \Cref{def:feasibility} is certified on the domain itself, while \Cref{cor:diffeo} requires an \emph{open} domain; the next lemma bridges the two, and is what the phrase ``diffeomorphism on a bounded convex set'' of \Cref{sec:privacy} rests on.

\begin{lemma}[Feasibility yields a diffeomorphism]\label{lem:feasible-diffeo}
Let $D \subset \mathbb{R}^d$ be a bounded convex set with $\alpha_D > 0$ (\Cref{def:feasibility}). Then there is a bounded convex open set $\mathcal{B} \supseteq D$ on which $\mathcal{M}_S$ and $\mathcal{M}_{S'}$ are $C^1$-diffeomorphisms onto their images; in particular, both maps are diffeomorphisms on $D$ in the sense of \Cref{conv:closed-restriction}.
\end{lemma}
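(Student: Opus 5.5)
The plan is to enlarge $D$ slightly to an open convex neighbourhood on which the margin stays positive, and then invoke \Cref{cor:diffeo} for each dataset. For $\epsilon>0$ let
\[
\mathcal B_\epsilon := \{ z : \operatorname{dist}(z,D) < \epsilon \}.
\]
This set is open and bounded. It is convex because the distance to a convex set is a convex function, and it contains $D$. The margin depends on the domain only through $R_{\mathcal B_\epsilon} = \sup_{z\in\mathcal B_\epsilon,\,T} R_T(z)$: both $\gamma_N$ and $\tilde\nu_N\rho^{N-1}$ depend on $N$ alone, and $C_{\mathcal B_\epsilon} = \eta M R_{\mathcal B_\epsilon} B$.

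The key estimate is that $R_T$ is $1/\mu$-Lipschitz in $z$. Indeed $R_T(z) = \|\nabla F_T(0)+z\|/\mu$ by \eqref{eq:R-def}, so the reverse triangle inequality gives $|R_T(z)-R_T(z')| \le \|z-z'\|/\mu$. Every point of $\mathcal B_\epsilon$ lies within $\epsilon$ of some point of $D$, hence
\[
R_{\mathcal B_\epsilon} \le R_D + \epsilon/\mu
\qquad\text{and}\qquad
\alpha_{\mathcal B_\epsilon} \ge \alpha_D - \eta M B\,\tilde\nu_N\rho^{N-1}\,\epsilon/\mu .
\]
Since $\alpha_D>0$, we can fix $\epsilon>0$ small enough (any $\epsilon$ works if $\tilde\nu_N=0$) that $\alpha_{\mathcal B_\epsilon} \ge \alpha_D/2 > 0$. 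Set $\mathcal B := \mathcal B_\epsilon$.

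Next, apply \Cref{cor:diffeo} to each $F_T$, $T\in\{S,S'\}$, on $\mathcal B$. Each $F_T$ satisfies \Cref{ass:regularity,ass:lipschitz-hessian} with the common constants, as noted before \Cref{lem:error-nuclear}. The single-objective margin of $F_T$ on $\mathcal B$ is at least $\alpha_{\mathcal B}$, because the two-dataset supremum in \Cref{def:feasibility} dominates the constant of each $F_T$ alone, so that margin is positive. Hence $\mathcal M_T(\mathcal B)$ is open and $\mathcal M_T:\mathcal B\to\mathcal M_T(\mathcal B)$ is a $C^1$-diffeomorphism. Restricting to $D\subseteq\mathcal B$ gives exactly the meaning of \Cref{conv:closed-restriction}.

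The only step needing care is the continuity of the margin under enlarging the domain. The reverse-triangle Lipschitz bound on $R_T$ handles this uniformly in $z$ and $T$, so I do not expect a real obstacle. One should just note that $D$ need not be closed; this is harmless because the neighbourhood $\mathcal B_\epsilon$ is defined through the distance function and does not require closedness.
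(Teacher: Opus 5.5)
Your proposal is correct and follows the paper's proof essentially step for step. It uses the same neighbourhood $\{z:\operatorname{dist}(z,D)<\epsilon\}$ and the same $1/\mu$-Lipschitz bound on $R_T$, which gives $\alpha_{\mathcal B_\epsilon}\ge\alpha_D-\eta M B\,\tilde\nu_N\rho^{N-1}\epsilon/\mu$. It then applies \Cref{cor:diffeo} to each $F_T$ because the two-dataset margin dominates each single-objective margin, so the only difference is that you keep the margin above $\alpha_D/2$ rather than just positive.
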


\begin{proof}
For $s > 0$ let $D_s := \{ z : \operatorname{dist}(z, D) < s \}$, the Minkowski sum of $D$ with the open ball of radius $s$---a bounded convex open set containing $D$. For each $T \in \{S, S'\}$ the map $z \mapsto R_T(z)$ is $(1/\mu)$-Lipschitz by \eqref{eq:R-def}, and every point of $D_s$ lies within distance $s$ of $D$, so $R_{D_s} \leq R_D + s/\mu$ and hence
\[
    \alpha_{D_s} \geq \alpha_D - \eta M B\, \tfrac{s}{\mu}\, \tilde{\nu}_N\rho^{N-1}.
\]
If $\tilde{\nu}_N\rho^{N-1} = 0$ any $s > 0$ serves; otherwise every $s < \alpha_D / \bigl( \eta M B \mu^{-1}\, \tilde{\nu}_N\rho^{N-1} \bigr)$ keeps $\alpha_{D_s} > 0$. Fix such an $s$ and set $\mathcal{B} := D_s$. For each $T$, the single-objective margin of $F_T$ on $\mathcal{B}$ is at least $\alpha_{\mathcal{B}} > 0$ (\Cref{def:feasibility}), so \Cref{cor:diffeo} applied to $F_T$ makes $\mathcal{M}_T$ a $C^1$-diffeomorphism of $\mathcal{B}$ onto its image; since $D \subseteq \mathcal{B}$, both maps restrict to $D$ from diffeomorphisms on the convex open neighborhood $\mathcal{B}$.
\end{proof}

We handle the two random contributions differently. We truncate the perturbation to $\Omega_0$ and add the probability of its complement directly to $\delta$. For the projection variable $\mathcal T$ in \eqref{eq:T-def}, we keep the pointwise privacy-loss bound and integrate it in the hockey-stick divergence. Its folded-Gaussian representation in \eqref{eq:T-avg} gives the closed-form bound in \Cref{lem:hockey-chi}.

\begin{lemma}[Divergence against a folded-Gaussian average]\label{lem:hockey-chi}
Let $\mathcal{T} = \tfrac{1}{2}(a|W| + a'|W'|)$ for standard Gaussians $W, W'$ under any joint law and constants $a, a' \in [0, 1]$, and let $\lambda > 0$ and $t \geq 0$. Writing $\varphi$ for the standard normal density, $\mathcal{Q}(x) := \Pr(\mathcal{N}(0,1) > x)$, and
\begin{equation}\label{eq:Q-one}
    \mathcal{Q}_1(t) := \int_t^\infty \mathcal{Q}(x)\, dx = \varphi(t) - t\,\mathcal{Q}(t)
\end{equation}
for the once-integrated tail,
\begin{equation}\label{eq:hockey-chi}
    \mathbb{E}\Bigl[ \bigl( 1 - e^{-\lambda(\mathcal{T} - t)} \bigr)_+ \Bigr] \;\leq\; 2\lambda\, \mathcal{Q}_1(t).
\end{equation}
The constant is attained: at $a = a' = 1$ with $W' = W$, the left side is $2\lambda\,\mathcal{Q}_1(t)\,(1 + o(1))$ as $\lambda \to 0$, at every $t \geq 0$.
\end{lemma}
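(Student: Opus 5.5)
The plan is to linearize the exponential, use convexity to split the two folded Gaussians, and reduce everything to the one-dimensional integral $\mathbb{E}(|W|-t)_+$, which is exactly $2\mathcal{Q}_1(t)$.

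\emph{Step 1 (linearize).} For $x \geq 0$ we have $1 - e^{-x} \leq x$, and for $x<0$ the positive part vanishes. Hence pointwise
\[
\bigl(1 - e^{-\lambda(\mathcal{T}-t)}\bigr)_+ \leq \lambda(\mathcal{T}-t)_+ ,
\]
so it suffices to show $\mathbb{E}(\mathcal{T}-t)_+ \leq 2\mathcal{Q}_1(t)$.

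\emph{Step 2 (split the two Gaussians).} The map $x \mapsto (x-t)_+$ is convex and nondecreasing. Convexity applied to the midpoint $\mathcal{T} = \tfrac12(a|W| + a'|W'|)$ gives
\[
(\mathcal{T}-t)_+ \leq \tfrac12\bigl[(a|W|-t)_+ + (a'|W'|-t)_+\bigr].
\]
Since $a,a' \in [0,1]$, monotonicity gives $(a|W|-t)_+ \leq (|W|-t)_+$, and likewise for $W'$. Taking expectations uses only the two marginal laws, so no assumption on the joint law of $(W,W')$ is needed. This yields $\mathbb{E}(\mathcal{T}-t)_+ \leq \mathbb{E}(|W|-t)_+$.

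\emph{Step 3 (the one-dimensional integral).} For $t \geq 0$, symmetry gives $\mathbb{E}(|W|-t)_+ = 2\,\mathbb{E}(W-t)_+$. By the layer-cake formula,
\[
\mathbb{E}(W-t)_+ = \int_t^\infty \Pr(W>x)\,dx = \mathcal{Q}_1(t).
\]
The closed form $\varphi(t) - t\mathcal{Q}(t)$ in \eqref{eq:Q-one} follows from one integration by parts, using $\mathcal{Q}' = -\varphi$, $\varphi'(x) = -x\varphi(x)$, and $x\mathcal{Q}(x) \to 0$ as $x \to \infty$. Combining Steps 1–3 gives \eqref{eq:hockey-chi}.

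\emph{Step 4 (sharpness).} Take $a = a' = 1$ and $W' = W$, so that $\mathcal{T} = |W|$. Then $\lambda^{-1}\bigl(1-e^{-\lambda(|W|-t)}\bigr)_+ \to (|W|-t)_+$ pointwise as $\lambda \to 0$. This family is dominated by the integrable $(|W|-t)_+$ (Step 1). Dominated convergence therefore gives the limit $\mathbb{E}(|W|-t)_+ = 2\mathcal{Q}_1(t)$, which is the claimed $(1+o(1))$ asymptotic.

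There is no serious obstacle. The only point needing care is Step 2: one should apply convexity rather than Cauchy--Schwarz or a norm bound, because convexity gives the factor $\tfrac12 + \tfrac12$ without any independence assumption and keeps the constant tight.
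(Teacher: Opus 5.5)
Your proposal is correct and matches the paper's proof step for step: linearize via $(1-e^{-v})_+\le v_+$, split the midpoint average using convexity of $x\mapsto(x-t)_+$, and evaluate $\mathbb{E}(|W|-t)_+=2\mathcal{Q}_1(t)$ by the layer-cake formula. The differences are minor: the paper uses the finer bound $(a|W|-t)_+\le a(|W|-t)_+$, which gives the intermediate factor $(a+a')/2$, where you use monotonicity, and your dominated-convergence argument for attainment is more explicit than the paper's leading-order remark.
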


\begin{proof}
For every real $v$ one has $(1 - e^{-v})_+ \leq v_+$: at $v \geq 0$ this is the elementary $1 - e^{-v} \leq v$, and at $v < 0$ both sides vanish. With $v = \lambda(\mathcal{T} - t)$, so that $v_+ = \lambda(\mathcal{T} - t)_+$ since $\lambda > 0$,
\begin{equation}\label{eq:hockey-linearized}
    \mathbb{E}\Bigl[ \bigl( 1 - e^{-\lambda(\mathcal{T} - t)} \bigr)_+ \Bigr] \;\leq\; \lambda\, \mathbb{E}\bigl[ (\mathcal{T} - t)_+ \bigr].
\end{equation}
Three pointwise facts bound the expectation on the right. Convexity of $x \mapsto (x - t)_+$ splits the average, $(\mathcal{T} - t)_+ \leq \tfrac{1}{2}(a|W| - t)_+ + \tfrac{1}{2}(a'|W'| - t)_+$. Next, $(a|W| - t)_+ \leq a\,(|W| - t)_+$ for $a \in [0, 1]$ and $t \geq 0$: if $|W| \leq t$ both sides vanish, since $a|W| \leq at \leq t$; if $|W| > t$ then $a|W| - t \leq a(|W| - t)$, and the right side is nonnegative. Finally, for a standard Gaussian the layer cake evaluates exactly,
\[
    \mathbb{E}\bigl[ (|W| - t)_+ \bigr] = \int_t^\infty \Pr(|W| > x)\, dx = 2\int_t^\infty \mathcal{Q}(x)\, dx = 2\,\mathcal{Q}_1(t),
\]
the closed form in \eqref{eq:Q-one} following from $\tfrac{d}{da}[\varphi(a) - a\mathcal{Q}(a)] = -a\varphi(a) - \mathcal{Q}(a) + a\varphi(a) = -\mathcal{Q}(a)$ and both sides vanishing at $a = \infty$. Taking expectations and assembling,
\[
    \mathbb{E}\bigl[ (\mathcal{T} - t)_+ \bigr] \;\leq\; \frac{a + a'}{2}\cdot 2\,\mathcal{Q}_1(t) \;\leq\; 2\,\mathcal{Q}_1(t),
\]
which against \eqref{eq:hockey-linearized} gives \eqref{eq:hockey-chi}. No step reads the joint law: the first two facts are pointwise and the third is a computation on one marginal. For the attainment, at $a = a' = 1$ and $W' = W$ both pointwise inequalities are equalities and $\mathbb{E}[(\mathcal{T} - t)_+] = 2\,\mathcal{Q}_1(t)$ exactly; only the linearization $(1 - e^{-v})_+ \leq v_+$ then separates the two sides of \eqref{eq:hockey-chi}, and it is an equality to leading order as $\lambda \to 0$.
\end{proof}

\begin{theorem}[Differential privacy: exact finite-$N$ form]\label{thm:dp-full}
Let \Cref{ass:bounded-features,ass:bounded-loss,ass:lipschitz-third} hold. Fix a radius $r > 0$, set $\Omega_0 := \{ z : \| z \| \leq r \}$, and let $\Omega' := \{ z : \| z \| \leq r' \}$ for a radius $r'$ satisfying, with the uniform trajectory scale $\bar R := (r' + L_1\zeta)/\mu$, the fixed-point inequality
\begin{equation}\label{eq:domain-fixed-point}
    r' \geq r + \frac{G(1 - \rho^N)}{\mu\, \bar\alpha}, \qquad \bar\alpha := \gamma_N - \eta M \bar R B\, \tilde{\nu}_N\rho^{N-1} > 0.
\end{equation}
Such $r'$ exists for all sufficiently large $N$ (\Cref{rem:domain-existence}). Write
\begin{equation}\label{eq:dp-full-constants}
    \Delta_\xi := 2\mu\rho^N \bar R, \qquad \tau := \tau^{(0)} + \frac{2\eta M \bar R B\, \tilde{\nu}_N\rho^{N-1}}{\bar\alpha},
\end{equation}
with $\tau^{(0)}$ as in \eqref{eq:tau-zero}. Then for every $t \geq 0$ the mechanism $S \mapsto \mathcal{M}_S(Z)$ with $Z \sim \mathcal{N}(0, \sigma^2 I_d)$, $\sigma > 0$, is $(\epsilon_{\mathrm{total}}, \delta_{\mathrm{total}})$-differentially private, where
\begin{equation}\label{eq:eps-total}
    \epsilon_{\mathrm{total}} = \frac{G}{\sigma}\,t + \frac{r \Delta_\xi}{\sigma^2} + \frac{(G + \Delta_\xi)^2}{2\sigma^2} + \tau,
\end{equation}
\begin{equation}\label{eq:delta-total}
    \delta_{\mathrm{total}} = \Pr\bigl( \| Z \| > r \bigr) + \frac{2G}{\sigma}\,\mathcal{Q}_1(t).
\end{equation}
\end{theorem}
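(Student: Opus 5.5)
The plan is to assemble \Cref{thm:gaussian-bound} and \Cref{thm:main} on the pair of balls $\Omega_0\subseteq\Omega'$, after checking the standing hypotheses of \Cref{sec:gaussian-ratio}. The argument has four steps.

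\emph{Step 1: both maps are diffeomorphisms on $\Omega'$.} Every $z\in\Omega'$ has $R_T(z)\le(\|z\|+\|\nabla F_T(0)\|)/\mu\le\bar R$. Hence $R_{\Omega'}\le\bar R$, $C_{\Omega'}\le\eta M\bar R B$ and $\alpha_{\Omega'}\ge\bar\alpha>0$. Feasibility then holds, and \Cref{lem:feasible-diffeo} makes $\mathcal M_S$ and $\mathcal M_{S'}$ $C^1$-diffeomorphisms on $\Omega'$; the same holds on $\Omega_0\subseteq\Omega'$.

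\emph{Step 2: $\mathcal M_S(\Omega_0)\subseteq\mathcal M_{S'}(\Omega')$, by path lifting.} I expect this to be the main obstacle. Fix $z\in\Omega_0$ and set $y_1:=\mathcal M_S(z)$, $y_0:=\mathcal M_{S'}(z)$. By \Cref{lem:map-sensitivity}, $\|y_1-y_0\|\le G(1-\rho^N)/\mu$. Lift the segment $y(s)=y_0+s(y_1-y_0)$ through the local inverse of $\mathcal M_{S'}$ (on a slightly larger open convex neighbourhood $\mathcal B$), starting from $\zeta(0)=z$. The lift solves the ODE $\zeta'(s)=J_N^{S'}(\zeta(s))^{-1}(y_1-y_0)$.

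While $\zeta(s)$ stays in $\Omega'$, \Cref{thm:neg-def-J} gives $\sigma_{\min}\ge\bar\alpha$. The speed is therefore at most $G(1-\rho^N)/(\mu\bar\alpha)$, and so $\|\zeta(s)\|\le r+G(1-\rho^N)/(\mu\bar\alpha)\le r'$ by the fixed-point inequality \eqref{eq:domain-fixed-point}. A continuity or maximal-interval argument then shows the lift exists on all of $[0,1]$ and never leaves $\Omega'$: the supremum of times at which it is defined and inside $\Omega'$ cannot be below $1$, because the inverse function theorem extends the lift locally, using a margin from $\mathcal B$. Hence $\zeta(1)\in\Omega'$ and $\mathcal M_{S'}(\zeta(1))=y_1$. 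This defines $\Phi=\mathcal M_{S'}^{-1}\circ\mathcal M_S$ on $\Omega_0$, with $\Phi(z)\in\Omega'$, which verifies the hypotheses of \Cref{sec:gaussian-ratio} with $\Omega=\Omega_0$.

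\emph{Step 3: pointwise privacy-loss bound.} For $z\in\Omega_0$ and $y=\mathcal M_S(z)$, use \eqref{eq:privacy-decomp}. \Cref{thm:gaussian-bound} bounds the Gaussian ratio: since $\Delta_{\Omega'}=2\mu\rho^N R_{\Omega'}\le\Delta_\xi$, it is at most $(G/\sigma)\mathcal T+r\Delta_\xi/\sigma^2+(G+\Delta_\xi)^2/(2\sigma^2)$. \Cref{thm:main} on $\Omega'$ bounds the Jacobian ratio by $\tau^{(0)}+\beta_N/\alpha_{\Omega'}$, and $\beta_N\le2\eta M\bar RB\tilde\nu_N\rho^{N-1}$ together with $\alpha_{\Omega'}\ge\bar\alpha$ shows this is at most $\tau$. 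Writing $c:=\epsilon_{\mathrm{total}}-(G/\sigma)t$, we get $p^S_Y(y;\Omega_0)\le e^{c+(G/\sigma)\mathcal T(z)}\,p^{S'}_Y(y;\Omega')$.

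\emph{Step 4: hockey-stick accounting.} For a Borel set $E$, split
\[
\Pr[\mathcal M_S(Z)\in E]\le\Pr(\|Z\|>r)+\Pr[\mathcal M_S(Z)\in E,\,Z\in\Omega_0].
\]
Write the second term as an integral over $z\in\Omega_0$ by change of variables (\Cref{rem:density-contribution}), then subtract $e^{\epsilon_{\mathrm{total}}}\Pr[\mathcal M_{S'}(Z)\in E,Z\in\Omega']$. Since $\Phi$ is an injective $C^1$ map into $\Omega'$, changing variables in the $S'$-integral gives
\[
\int_{\Omega_0\cap\mathcal M_S^{-1}E}p_Z(z)\Bigl(1-e^{\epsilon_{\mathrm{total}}}\tfrac{p^{S'}_Y}{p^S_Y}\Bigr)_+\,dz\le\mathbb E\Bigl[\bigl(1-e^{-(G/\sigma)(\mathcal T-t)}\bigr)_+\Bigr],
\]
where the inequality uses the pointwise bound from Step 3. \Cref{lem:hockey-chi} with $\lambda=G/\sigma$, $a=\|x_n\|/\zeta$ and $a'=\|x_n'\|/\zeta$ bounds this expectation by $(2G/\sigma)\mathcal Q_1(t)$. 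Adding the truncation mass gives $\delta_{\mathrm{total}}$.

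The $\mathcal C^\perp$ reduction of \Cref{lem:data-span} is not needed here, since $\Pr(\|Z\|>r)$ is stated in full dimension. Symmetry in $(S,S')$ is automatic, because every constant is uniform over ordered neighbouring pairs.
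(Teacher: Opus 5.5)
Your proposal is correct and follows the paper's proof essentially step for step. It uses the same domain construction through \Cref{lem:feasible-diffeo}, the same path lifting with speed bound $G(1-\rho^N)/(\mu\bar\alpha)$, the same combination of \Cref{thm:gaussian-bound} and \Cref{thm:main} into a pointwise loss bound, and the same integrated-tail estimate from \Cref{lem:hockey-chi}. The only cosmetic difference is that you change variables through $\Phi$ in perturbation space, whereas the paper adds and subtracts $e^{\epsilon_{\mathrm{total}}}p_Y^{S'}$ in output space and pulls back through $\mathcal{M}_S$.

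One detail in the lifting step should be tightened. Integrate the speed bound to get that the lifted point at time $s$ has norm at most $r+s(r'-r)$, which is strictly below $r'$ for $s<1$; the paper builds this moving bound into its set $\mathcal{I}$. That strictness is what lets the maximal-interval argument continue inside $\Omega'$. The time-independent bound $\le r'$, even together with the margin from $\mathcal{B}$, would not by itself keep the extension in $\Omega'$.
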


\begin{proof}
Let $(S, S')$ be an ordered pair of neighboring datasets, let $Z \sim \mathcal{N}(0, \sigma^2 I_d)$, and let $\mathcal{O} \subseteq \mathbb{R}^d$ be a measurable set; we show $\Pr[\mathcal{M}_S(Z) \in \mathcal{O}] \leq e^{\epsilon_{\mathrm{total}}}\Pr[\mathcal{M}_{S'}(Z) \in \mathcal{O}] + \delta_{\mathrm{total}}$.

\emph{Domain construction.} Since $w_0 = 0$ and $\| \nabla F_T(0) \| \leq L_1\zeta$ for every dataset (\Cref{subsec:notation}), item~\ref{itm:R-affine} of \Cref{lem:traj-scale} gives $R_T(z) \leq (\| z \| + L_1\zeta)/\mu \leq \bar R$ for every $z \in \Omega'$ and $T \in \{ S, S' \}$. Hence $R_{\Omega'} \leq \bar R$, and $\alpha_{\Omega'} \geq \bar\alpha > 0$ makes $N$ feasible for $\Omega'$ (\Cref{def:feasibility}), so \Cref{lem:feasible-diffeo} furnishes a bounded convex open set $\mathcal{B} \supseteq \Omega'$ on which $\mathcal{M}_S$ and $\mathcal{M}_{S'}$ are $C^1$-diffeomorphisms onto their images; every preimage below lies in $\Omega' \subset \mathcal{B}$, where the diffeomorphisms and the change-of-variables formula are valid.

\emph{Preimage existence.} Fix $z_S \in \Omega_0$ and set $y := \mathcal{M}_S(z_S)$. Consider the path $y_v := (1 - v)\mathcal{M}_{S'}(z_S) + v y$ for $v \in [0,1]$, and let $\mathcal{I}$ be the set of $v \in [0,1]$ such that $y_{v'} \in \mathcal{M}_{S'}(\mathcal{B})$ for every $v' \in [0,v]$ and the lift $z_{v'} := \mathcal{M}_{S'}^{-1}(y_{v'})$ obeys $\| z_{v'} \| \leq r + v'(r' - r)$ on $[0,v]$. The bound keeps each $z_{v'}$ in $\Omega'$, where the lift moves at controlled speed: $\dot z_{v'} = (J_N^{S'}(z_{v'}))^{-1}(y - \mathcal{M}_{S'}(z_S))$ by the chain rule, so \Cref{thm:neg-def-J} ($\| (J_N^{S'})^{-1} \| \leq 1/\alpha_{\Omega'}$ on $\Omega'$) and \Cref{lem:map-sensitivity} ($\| y - \mathcal{M}_{S'}(z_S) \| = \| \mathcal{M}_S(z_S) - \mathcal{M}_{S'}(z_S) \| \leq G(1 - \rho^N)/\mu$) give
\[
    \| \dot z_{v'} \| \leq \frac{G(1 - \rho^N)}{\mu\, \alpha_{\Omega'}} \leq \frac{G(1 - \rho^N)}{\mu\, \bar\alpha} \leq r' - r,
\]
the middle step by $\alpha_{\Omega'} \geq \bar\alpha$ and the last by \eqref{eq:domain-fixed-point}. Since $[0,1]$ is connected, it suffices to check that $\mathcal{I}$ is nonempty, relatively open, and relatively closed; then $\mathcal{I} = [0,1]$. \emph{Nonempty:} $0 \in \mathcal{I}$, as $y_0 = \mathcal{M}_{S'}(z_S)$ has preimage $z_S$ with $\| z_S \| \leq r$. \emph{Open:} for $v \in \mathcal{I}$ with $v < 1$ the bound at $v$ is strict, $\| z_v \| < r'$. Since $y_v = \mathcal{M}_{S'}(z_v)$ lies in the open set $\mathcal{M}_{S'}(\mathcal{B})$, continuity of the path extends the lift to $[0, v + \varepsilon)$; shrinking $\varepsilon$ keeps $\| z_{v'} \| < r'$, hence $z_{v'} \in \Omega'$, and integrating the speed bound from $0$ restores $\| z_{v'} \| \leq \| z_S \| + v'(r' - r) \leq r + v'(r' - r)$ there. So $[0, v + \varepsilon) \subseteq \mathcal{I}$. \emph{Closed:} as $v \uparrow v^* := \sup \mathcal{I}$, the speed bound makes $v \mapsto z_v$ Lipschitz, so $z_v$ converges to some $z^*$ in the compact set $\Omega'$; continuity gives $\mathcal{M}_{S'}(z^*) = y_{v^*}$ and $\| z^* \| \leq r + v^*(r' - r)$, so $v^* \in \mathcal{I}$. Taking $v = 1$ and integrating the speed bound once more yields $z_{S'} := \mathcal{M}_{S'}^{-1}(y) \in \Omega'$ with $\| z_{S'} - z_S \| \leq G(1 - \rho^N)/(\mu\, \alpha_{\Omega'})$.

\emph{Density ratio.} For $z_S \in \Omega_0$, set $y := \mathcal{M}_S(z_S)$ and let $z_{S'} \in \Omega'$ be its $\mathcal{M}_{S'}$-preimage. The restricted privacy loss \eqref{eq:privacy-decomp} splits into the Gaussian and Jacobian ratios. We instantiate the hypotheses of \Cref{sec:gaussian-ratio} with the pair $(\Omega_0, \Omega')$, scale $R_{\Omega'}$, and radius $r$: both maps are $C^1$-diffeomorphisms on $\Omega_0$, and $\mathcal{M}_{S'}$ is one on $\Omega'$ (each restricting from $\mathcal{B}$; \Cref{conv:closed-restriction}); $\Omega_0 \subseteq \Omega'$ since $r \leq r'$ by \eqref{eq:domain-fixed-point}; the preimage-existence step shows $\mathcal{M}_S(\Omega_0) \subseteq \mathcal{M}_{S'}(\Omega')$---so the conjugacy map $\Phi : \Omega_0 \to \Omega'$ is well-defined, with $\Phi(z_S) = z_{S'}$---and the closed ball of radius $r$ is $\Omega_0$ itself. Under this instantiation $\mathcal{E}_1 = \{ \| z \| \leq r \} = \{ z \in \Omega_0 \}$, and \Cref{thm:gaussian-bound}---whose bound \eqref{eq:gaussian-main-bound} holds pointwise on $\mathcal{E}_1$ viewed as a deterministic set---bounds the Gaussian ratio in absolute value by $(G/\sigma)\mathcal{T}(z_S) + r\Delta_\xi/\sigma^2 + (G + \Delta_\xi)^2/(2\sigma^2)$, with $\mathcal{T}(z) = \bigl( |\langle z, x_n \rangle| + |\langle z, x_n' \rangle| \bigr)/(2\zeta\sigma)$ as in \eqref{eq:T-def}: its residual scale is $\Delta_{\Omega'} = 2\mu\rho^N R_{\Omega'} \leq \Delta_\xi$, and both terms carrying it are nondecreasing in it. By \Cref{thm:main} (feasibility on $\Omega'$), the Jacobian ratio is at most $\tau^{(0)} + \beta_N/\alpha_{\Omega'}$ in absolute value; here $R^S, R^{S'} \leq R_{\Omega'} \leq \bar R$ gives $\beta_N \leq 2\eta M \bar R B\, \tilde{\nu}_N\rho^{N-1}$, and $\alpha_{\Omega'} \geq \bar\alpha$, so the ratio is at most $\tau$. Adding the two and writing
\begin{equation}\label{eq:shift-def}
    s := \frac{r \Delta_\xi}{\sigma^2} + \frac{(G + \Delta_\xi)^2}{2\sigma^2} + \tau, \qquad\text{so that}\qquad \epsilon_{\mathrm{total}} = \frac{G}{\sigma}\,t + s,
\end{equation}
we obtain, for every $z_S \in \Omega_0$,
\begin{equation}\label{eq:total-ratio-bound}
    \log\frac{p_Y^S(y; \Omega')}{p_Y^{S'}(y; \Omega')} \leq s + \frac{G}{\sigma}\,\mathcal{T}(z_S).
\end{equation}

\emph{Assembly.} Since $\Pr(Z \notin \Omega_0) = \Pr(\| Z \| > r)$,
\[
    \Pr[\mathcal{M}_S(Z) \in \mathcal{O}] \leq \Pr[\mathcal{M}_S(Z) \in \mathcal{O},\, Z \in \Omega_0] + \Pr(\| Z \| > r),
\]
and the change of variables (\Cref{rem:density-contribution}) evaluates the first term as $\int_{\mathcal{O} \cap \mathcal{M}_S(\Omega_0)} p_Y^S(y; \Omega_0)\, dy$, the preimage of each such $y$ lying in $\Omega_0 \subseteq \Omega'$ so that $p_Y^S(y; \Omega_0) = p_Y^S(y; \Omega')$ there. By the preimage-existence step $\mathcal{M}_S(\Omega_0) \subseteq \mathcal{M}_{S'}(\Omega')$, so $p_Y^{S'}(\cdot\,; \Omega')$ is defined on that set and integrates over $\mathcal{O} \cap \mathcal{M}_S(\Omega_0)$ to at most $\Pr[\mathcal{M}_{S'}(Z) \in \mathcal{O}]$. 

Add and subtract $e^{\epsilon_{\mathrm{total}}} p_Y^{S'}$ and discard the constraint $y \in \mathcal{O}$ from the nonnegative remainder:
\[
    \int_{\mathcal{O} \cap \mathcal{M}_S(\Omega_0)} \!\!\! p_Y^S(y; \Omega')\, dy \;\leq\; e^{\epsilon_{\mathrm{total}}}\Pr[\mathcal{M}_{S'}(Z) \in \mathcal{O}] + \int_{\mathcal{M}_S(\Omega_0)} \!\!\! \bigl[ p_Y^S(y; \Omega') - e^{\epsilon_{\mathrm{total}}} p_Y^{S'}(y; \Omega') \bigr]_+ dy.
\]
Factoring $p_Y^S$ out of the last integrand and changing variables back,
\[
    \int_{\mathcal{M}_S(\Omega_0)} \bigl[ p_Y^S - e^{\epsilon_{\mathrm{total}}} p_Y^{S'} \bigr]_+ dy = \mathbb{E}\Bigl[ \mathbf{1}\{ Z \in \Omega_0 \}\bigl( 1 - e^{\epsilon_{\mathrm{total}} - \Lambda_{\mathrm{tot}}(Z)} \bigr)_+ \Bigr],
\]
where $\Lambda_{\mathrm{tot}}(z) := \log\bigl[ p_Y^S(\mathcal{M}_S(z); \Omega')/p_Y^{S'}(\mathcal{M}_S(z); \Omega') \bigr]$ is the restricted privacy loss---the sum of the two ratios of \eqref{eq:privacy-decomp}, not the Gaussian ratio $\Lambda$ of \eqref{eq:decomp} alone. The map $x \mapsto (1 - e^{\epsilon_{\mathrm{total}} - x})_+$ is nondecreasing, so \eqref{eq:total-ratio-bound} and \eqref{eq:shift-def} bound the integrand on $\{ Z \in \Omega_0 \}$ by $\bigl( 1 - e^{-(G/\sigma)(\mathcal{T} - t)} \bigr)_+$; dropping the indicator, which only enlarges a nonnegative integrand, gives
\[
    \int_{\mathcal{M}_S(\Omega_0)} \bigl[ p_Y^S - e^{\epsilon_{\mathrm{total}}} p_Y^{S'} \bigr]_+ dy \;\leq\; \mathbb{E}\Bigl[ \bigl( 1 - e^{-(G/\sigma)(\mathcal{T} - t)} \bigr)_+ \Bigr] \;\leq\; \frac{2G}{\sigma}\,\mathcal{Q}_1(t),
\]
the second inequality by \Cref{lem:hockey-chi} at $\lambda = G/\sigma$, whose hypothesis on $\mathcal{T}$ is the average form \eqref{eq:T-avg}, with scales $\| x_n \|/\zeta$ and $\| x_n' \|/\zeta$ at most one. Together with the truncation split, these give $\Pr[\mathcal{M}_S(Z) \in \mathcal{O}] \leq e^{\epsilon_{\mathrm{total}}}\Pr[\mathcal{M}_{S'}(Z) \in \mathcal{O}] + \Pr(\| Z \| > r) + (2G/\sigma)\mathcal{Q}_1(t)$, and the last two terms are $\delta_{\mathrm{total}}$ by \eqref{eq:delta-total}. The two summands of $\delta_{\mathrm{total}}$ are charged for different reasons. The first is mass on which no ratio bound is available at all; the second is the divergence contributed by outputs whose ratio bound exceeds $e^{\epsilon_{\mathrm{total}}}$.

\emph{Uniformity.} The pair was arbitrary, and \eqref{eq:eps-total} and \eqref{eq:delta-total} name only $r$, $r'$, $t$, $N$ and constants common to all datasets, so one $(\epsilon_{\mathrm{total}}, \delta_{\mathrm{total}})$ serves every ordered pair, as the definition of differential privacy requires.
\end{proof}

\begin{remark}[Existence of the domain radius]\label{rem:domain-existence}
Given the radius $r$ of \Cref{thm:dp-full}, a radius $r'$ satisfying \eqref{eq:domain-fixed-point} exists for all sufficiently large $N$. Throughout, $N \geq 1$, so $\gamma_N > 0$. Write $\bar\alpha(\tilde r) := \gamma_N - \eta M \bigl( (\tilde r + L_1\zeta)/\mu \bigr) B\, \tilde{\nu}_N\rho^{N-1}$, the value of $\bar\alpha$ at $r' = \tilde r$, and $\phi(\tilde r) := r + G(1 - \rho^N)/(\mu\, \bar\alpha(\tilde r))$. Put $\tilde r_{\max} := r + 2G/(\mu\gamma_N)$ and $C_{\max} := \eta M \bigl( (\tilde r_{\max} + L_1\zeta)/\mu \bigr) B$. When
\begin{equation}\label{eq:N-condition}
    \tilde{\nu}_N\rho^{N-1} < \frac{\gamma_N}{2 C_{\max}}
\end{equation}
(which holds for all large $N$, as $\tilde{\nu}_N\rho^{N-1} \leq N\rho^{N-1} \to 0$ and $\gamma_N \to 1/L$), every $\tilde r \in [r, \tilde r_{\max}]$ has $\bar\alpha(\tilde r) \geq \gamma_N - C_{\max} \tilde{\nu}_N\rho^{N-1} > \gamma_N/2 > 0$, so $\phi$ is continuous on $[r, \tilde r_{\max}]$ with $\phi(r) > r$ and $\phi(\tilde r_{\max}) \leq r + 2G(1-\rho^N)/(\mu\gamma_N) < \tilde r_{\max}$. By the intermediate value theorem $\phi$ has a fixed point $r' \in (r, \tilde r_{\max})$, which satisfies \eqref{eq:domain-fixed-point}.
\end{remark}

The bound \eqref{eq:eps-total} is exact but $d$-dependent, through $r$ and through $\Delta_\xi$ and $\tau$ (both via $\bar R$); the corrections in $\tau$ involve the factors $N\rho^{N-1}$ and $\tilde{\nu}_N\rho^{N-1} \leq N\rho^{N-1}$, which the following tail bound controls, and which the explicit threshold \eqref{eq:N-dagger} then meets by spending one $\rho^{N/2}$ on the coefficients. Applied in full, the bound absorbs the factor $N$ and leaves $\rho^{N/2}$; applied after that spending, it leaves $N\rho^{N/2}$.

\begin{lemma}[Tail bound]\label{lem:tail}
For $\rho = 1 - \eta\mu \in (0,1)$ (the contraction factor of \Cref{subsec:mechanism}) and every integer $N \geq 0$,
\[
    \frac{e\,\eta\mu}{2}\,N\rho^{N/2} \leq 1, \qquad\text{equivalently}\qquad N\rho^{N-1} \leq \frac{2}{e\,\rho\,\eta\mu}\,\rho^{N/2}.
\]
\end{lemma}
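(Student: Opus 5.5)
The plan is to reduce the first inequality to a one-variable calculus bound and then read off the equivalent form by algebra. Put $x := \eta\mu \in (0,1)$, so $\rho = 1-x$, and write $N\rho^{N/2} = N e^{-Nc}$ with $c := -\tfrac12\log\rho > 0$. Maximizing $t \mapsto t e^{-ct}$ over real $t \geq 0$ gives the value $1/(ec)$, attained at $t = 1/c$. Hence $N\rho^{N/2} \leq 1/(ec) = 2/(e\,(-\log\rho))$ for every integer $N \geq 0$, with $N = 0$ trivial.

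Next I would use the elementary inequality $-\log(1-x) \geq x$ for $x \in (0,1)$. It gives $2/(e\,(-\log\rho)) \leq 2/(e x) = 2/(e\eta\mu)$. Multiplying through by $e\eta\mu/2$ yields
\[
\frac{e\,\eta\mu}{2}\,N\rho^{N/2} \leq 1,
\]
which is the first claimed inequality.

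For the equivalent form, I multiply the previous bound $N\rho^{N/2} \leq 2/(e\eta\mu)$ by $\rho^{N/2}/\rho$. Since $N\rho^{N/2}\cdot\rho^{N/2}/\rho = N\rho^{N-1}$, this gives
\[
N\rho^{N-1} \leq \frac{2}{e\rho\eta\mu}\rho^{N/2}.
\]
Conversely, multiplying this last inequality by $\rho^{1-N/2} > 0$ recovers the first one. Both operations multiply by positive factors, so the two statements are equivalent.

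None of the steps is a real obstacle. The only care needed is in the direction of the logarithm inequality: $-\log\rho \geq 1-\rho = \eta\mu$ is what allows $1/(-\log\rho)$ to be replaced by the larger quantity $1/(\eta\mu)$. One should also note that the maximum over real $t$ dominates the maximum over integers $N$, so no rounding issue arises.
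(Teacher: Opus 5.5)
Your proposal is correct and is essentially the paper's proof: you maximize $t\mapsto t\rho^{t/2}$ over real $t\ge 0$ to get $N\rho^{N/2}\le 2/(e\log(1/\rho))$, then use $\log(1/\rho)\ge\eta\mu$, and obtain the equivalent form by multiplying by $\rho^{N/2-1}$. Writing $\rho^{N/2}=e^{-cN}$ with $c=-\tfrac12\log\rho$ is just a relabeling of the paper's choice $s=\rho^{1/2}$.
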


\begin{proof}
For $s \in (0,1)$ the function $x \mapsto x s^x$ on $[0,\infty)$ is maximized at $x = 1/\log(1/s)$, where it equals $\frac{1}{e\log(1/s)}$; hence $x s^x \leq \frac{1}{e\log(1/s)}$ for all $x \geq 0$. Taking $s = \rho^{1/2}$ and $x = N$ gives $N\rho^{N/2} \leq \frac{2}{e\log(1/\rho)}$, and $\log(1/\rho) \geq \eta\mu$---from $\rho = 1 - \eta\mu$ and $-\log(1-v) \geq v$ for $v \in (0,1)$---gives the first form. Multiplying it by $\rho^{N/2 - 1}$ gives the second.
\end{proof}

\thmdp*

\begin{proof}
\emph{Reduction to the feature span.} By \Cref{lem:data-span}, the two mechanisms differ only in their $\mathcal{C}$ blocks $\mathcal{M}_S^{\mathcal{C}}, \mathcal{M}_{S'}^{\mathcal{C}}$, which form a mechanism pair of \Cref{subsec:general-setting} within $\mathcal{C}$ carrying the same $\mu, L, M, G$; their $\mathcal{C}^\perp$ blocks are one shared affine map, which cancels from both ratios of \eqref{eq:privacy-decomp} as recorded after that display. Since $Z \sim \mathcal{N}(0, \sigma^2 I_d)$ is isotropic, $\Pi_{\mathcal{C}} Z \sim \mathcal{N}(0, \sigma^2 I_{d_{\mathcal{C}}})$ independently of $\Pi_{\mathcal{C}^\perp} Z$, so each output law is a product whose second factor, the law of $-B_N \Pi_{\mathcal{C}^\perp} Z$, is the same under both datasets. For Borel $\mathcal{O} \subseteq \mathbb{R}^d$ and each $y_{\mathcal{C}^\perp}$, applying the $(\epsilon, \delta)$ inequality for $\mathcal{M}_S^{\mathcal{C}}, \mathcal{M}_{S'}^{\mathcal{C}}$ to the slice $\{ y_{\mathcal{C}} \in \mathcal{C} : y_{\mathcal{C}} + y_{\mathcal{C}^\perp} \in \mathcal{O} \}$ and integrating against that common factor gives the same inequality for $\mathcal{M}_S, \mathcal{M}_{S'}$ at $\mathcal{O}$: each output law is the product of its two block laws, the slices of a Borel $\mathcal{O}$ are Borel \citep[Proposition~2.34]{folland-real}, the slice probabilities integrate to the total by Tonelli's theorem \citep[Theorem~2.37]{folland-real}, and the $\delta$-term integrates to $\delta$ because the common factor is a probability measure. We argue within $\mathcal{C}$ for the rest of the proof, so that \Cref{thm:dp-full} and everything it rests on is applied with $d_{\mathcal{C}}$ in place of $d$, and write $z$ for $z_{\mathcal{C}}$.

Set $\Omega' := \{ \| z \| \leq r' \}$ with $r' = r_u + 2LG/\mu$ as in the statement, so the $\bar R$ fixed there is the uniform trajectory scale of \Cref{thm:dp-full} at this $r'$.

\emph{Truncation mass.} Gaussian norm concentration (\Cref{cor:ext-gauss-norm}), applied in $\mathcal{C}$ to $\Pi_{\mathcal{C}} Z \sim \mathcal{N}(0, \sigma^2 I_{d_{\mathcal{C}}})$, gives $\Pr(\| \Pi_{\mathcal{C}} Z \| \geq \sigma(\sqrt{d_{\mathcal{C}}} + v)) \leq e^{-v^2/2}$ for every $v \geq 0$. Since $\dim \mathcal{C} = d_{\mathcal{C}} \leq d_\star$ by \eqref{eq:data-span}, the radius fixed in the statement obeys $r_u = \sigma(\sqrt{d_\star} + u) \geq \sigma(\sqrt{d_{\mathcal{C}}} + u)$, so at $v = u$ this gives $\Pr(\| \Pi_{\mathcal{C}} Z \| > r_u) \leq e^{-u^2/2}$, the first term of \eqref{eq:delta-profile}. Only $d_\star$ enters $r_u$, and it is fixed by $d$ and $n$ before the neighboring pair is named.

We verify the remaining hypotheses from the three entries of \eqref{eq:N-dagger}: as $\rho = 1 - \eta\mu \leq e^{-\eta\mu}$, the condition gives $\rho^{N/2} \leq e^{-N\eta\mu/2} \leq 1/X$ for each entry $X$ of the maximum. The third entry gives $\rho^{N/2} \leq \tfrac14$; squaring, $\rho^N \leq \frac{1}{16}$, and $0 \leq 1 - \eta L \leq \rho$ gives $(1 - \eta L)^N \leq \rho^N \leq \frac{1}{16}$, so that $\gamma_N = \frac{1 - (1 - \eta L)^N}{L} \geq \frac{15}{16L} \geq \frac{12}{16L} = \frac{3}{4L}$. The second entry gives
\begin{equation}\label{eq:entry2-sharp}
    \rho^{N/2} \leq \frac{e\rho\mu\,\min\{1, MG/L\}}{8LMB \bar R}.
\end{equation}
By \eqref{eq:nu-tilde}, $\tilde{\nu}_N \leq \nu_N \leq N$, so with $N\rho^{N-1} = \rho^{-1}\rho^{N/2}\bigl( N\rho^{N/2} \bigr)$ and \eqref{eq:entry2-sharp} applied to the isolated $\rho^{N/2}$,
\[
    \bar R\, \tilde{\nu}_N\rho^{N-1} \leq \bar R\, N\rho^{N-1} \leq \frac{\min\{1, MG/L\}}{8LMB\eta}\cdot e\,\eta\mu\, N\rho^{N/2} \leq \frac{\min\{1, MG/L\}}{4LMB\eta},
\]
the last step by \Cref{lem:tail}. Multiplying by $\eta M B$ and using $\min\{1, MG/L\} \leq 1$,
\begin{equation}\label{eq:C-sharp}
    \eta M \bar R B\, \tilde{\nu}_N\rho^{N-1} \leq \frac{\min\{1, MG/L\}}{4L} \leq \frac{1}{4L}.
\end{equation}
With $\gamma_N \geq \frac{3}{4L}$, the singular-value margin is $\bar\alpha = \gamma_N - \eta M \bar R B\, \tilde{\nu}_N\rho^{N-1} \geq \frac{3}{4L} - \frac{1}{4L} = \frac{1}{2L} > 0$, and the fixed-point inequality \eqref{eq:domain-fixed-point} follows from $1 - \rho^N \leq 1$:
\[
    r_u + \frac{G(1-\rho^N)}{\mu\,\bar\alpha} \leq r_u + \frac{2LG}{\mu} = r'.
\]

\emph{Gaussian terms.} The first entry of \eqref{eq:N-dagger} gives $\rho^{N/2} \leq G\sigma/(2\mu\bar R r')$, so
\[
    \Delta_\xi = 2\mu\rho^N \bar R = 2\mu\bar R\, \rho^{N/2}\cdot \rho^{N/2} \leq \frac{G\sigma}{r'}\,\rho^{N/2}.
\]
Since $r_u \leq r'$, the second term of \eqref{eq:eps-total} obeys $r_u \Delta_\xi/\sigma^2 \leq (G/\sigma)\rho^{N/2}$; and $r' \geq r_u \geq \sigma$ gives $\Delta_\xi \leq G\rho^{N/2}$, so the third obeys $(G + \Delta_\xi)^2/(2\sigma^2) \leq G^2(1 + \rho^{N/2})^2/(2\sigma^2)$.

For the Jacobian ratio, \Cref{lem:symbol,lem:hessian-logdet} give
$\tau^{(0)}\leq\log(1+\Delta_H/(2n\mu))$ because both reference Hessians are evaluated at the shared output. For the finite-$N$ correction, the estimate behind \eqref{eq:C-sharp}, retaining the factor $N\rho^{N/2}$, gives
\[
    2\eta M \bar R B\, \tilde{\nu}_N\rho^{N-1} \; \leq \; \frac{\min\{1, MG/L\}}{4L}\,e\eta\mu\, N\rho^{N/2};
\]
so with $\bar\alpha \geq \frac{1}{2L}$ the correction term of \eqref{eq:dp-full-constants} is at most
\[
    2L \cdot \frac{\min\{1, MG/L\}}{4L}\,e\eta\mu\, N\rho^{N/2} \; = \; \min\{1, MG/L\}\,\frac{e\eta\mu}{2}\,N\rho^{N/2}.
\]
Collecting the two,
\[
    \tau \leq \log\Bigl( 1 + \frac{\Delta_H}{2n\mu} \Bigr) + \min\{1, MG/L\}\,\frac{e\eta\mu}{2}\,N\rho^{N/2} = \tau^\star.
\]
Now fix $\epsilon \geq \epsilon_0$ and apply \Cref{thm:dp-full} at the radius $r = r_u$ and the threshold $t := (\sigma/G)(\epsilon - \epsilon_0) \geq 0$. The released map $z \mapsto w_N$ depends on neither, so a different $t$ may be taken for each $\epsilon$. Its budget \eqref{eq:eps-total} has first term $(G/\sigma)t = \epsilon - \epsilon_0$, while its remaining three are majorized by the three terms of \eqref{eq:eps-zero}---the first two by the Gaussian-terms step above, the third by $\tau \leq \tau^\star$---so $\epsilon_{\mathrm{total}} \leq \epsilon$. Its additive parameter \eqref{eq:delta-total} is $\Pr(\| \Pi_{\mathcal{C}} Z \| > r_u) + (2G/\sigma)\mathcal{Q}_1(t) \leq \delta(\epsilon)$ by the truncation-mass step and \eqref{eq:delta-profile}. A mechanism that is $(\epsilon_{\mathrm{total}}, \delta_{\mathrm{total}})$-private with $\epsilon_{\mathrm{total}} \leq \epsilon$ and $\delta_{\mathrm{total}} \leq \delta(\epsilon)$ is a fortiori $(\epsilon, \delta(\epsilon))$-private.
\end{proof}

\begin{remark}[Oracle limit]\label{rem:asymptotic}
As $N\to\infty$, $\Delta_\xi\to0$ and $\tau^{(\mathrm{corr})}\to0$. For every fixed $h\in[\mu,L]$, \eqref{eq:g-ratio} gives $g_N(h)\to1$, and therefore $\bar g_N\to1$. Hence the nontruncation part of \eqref{eq:eps-total}, at any fixed threshold $t$, converges to
\[
    \frac{G}{\sigma}t+\frac{G^2}{2\sigma^2}+\log\Bigl(1+\frac{\Delta_H}{2n\mu}\Bigr).
\]
To recover the full exact-minimizer certificate, choose the free truncation parameter to grow with $N$; for example, let $u_N=N$. Then $r_{u_N}$, $r'$, and $\bar R$ grow at most linearly in $N$, so the right-hand side of \eqref{eq:N-dagger} is $O((\eta\mu)^{-1}\log N)$ and the condition holds for all sufficiently large $N$. At the same time, $e^{-u_N^2/2}\to0$, $\rho^{N/2}\to0$, and $N\rho^{N/2}\to0$. Thus, for every fixed $\epsilon>\epsilon_\infty$, where
\[
    \epsilon_\infty:=\frac{G^2}{2\sigma^2}+\log\Bigl(1+\frac{\Delta_H}{2n\mu}\Bigr),
\]
our bound converges to the exact-minimizer certificate obtained from the same Gaussian-ratio and common-anchor determinant bounds. When $\Delta_H/(n\mu)\ll1$, the determinant term is $\Delta_H/(2n\mu)+O((\Delta_H/(n\mu))^2)$; \Cref{rem:oracle-recovery} compares it with the classical determinant bound.
\end{remark}

\subsection{Evaluation of the finite-iteration certificates}\label{subsec:certificate-evaluation}
We evaluate \Cref{thm:dp,thm:dp-full} at the public constants of \Cref{ex:logistic}, with the radii $r = r_u$ and $r' = r_u + 2LG/\mu$ of \Cref{thm:dp}. Let $\bar\alpha$, $\Delta_\xi$, and $\tau$ be as in \eqref{eq:domain-fixed-point} and \eqref{eq:dp-full-constants}. At $\eta = 1/L$ and $N \geq 2$, $g_N(L) = 1$, so $\bar g_N = 1$ by \Cref{lem:symbol} and $\tau^{(0)} = \log(1 + \Delta_H/(2n\mu))$. Write
\[
    s_N := \frac{r\Delta_\xi}{\sigma^2} + \frac{(G + \Delta_\xi)^2}{2\sigma^2} + \tau
\]
for the part of $\epsilon_{\mathrm{total}}$ in \eqref{eq:eps-total} that does not depend on $t$. For a candidate iteration count we check $\bar\alpha > 0$ and the radius condition \eqref{eq:domain-fixed-point}. When both hold and $\epsilon \geq s_N$, \Cref{thm:dp-full} at $t = (\epsilon - s_N)\sigma/G$ certifies $(\epsilon, \delta_N(\epsilon))$-differential privacy with
\[
    \delta_N(\epsilon) := e^{-u^2/2} + \frac{2G}{\sigma}\,\mathcal{Q}_1\Bigl( \frac{\sigma}{G}(\epsilon - s_N) \Bigr),
\]
the truncation mass in \eqref{eq:delta-total} being at most $e^{-u^2/2}$ at $r = r_u$ by Gaussian norm concentration (\Cref{cor:ext-gauss-norm}).

For the parameters of \Cref{ex:logistic} at $N = 519$,
\[
\begin{aligned}
    \bar\alpha &\simeq 3.843548916, & s_{519} &\simeq 0.05209595621,\\
    \frac{G(1-\rho^{N})}{\mu\bar\alpha} &\simeq 5.20352425\times10^{-5}, & r'-r &= 1.04\times10^{-4},
\end{aligned}
\]
so the radius condition holds. Evaluating at $\epsilon = 1$ gives $\delta_{519}(1) \leq 9.89505873\times10^{-8}$, rounded upward, against $1.01736108\times10^{-7}$ at $N = 518$; checking every smaller count gives $519$ as the first passing integer for these radii.

For comparison across regularization strengths, we keep $n=10^6$, $d=10^4$, $\sigma/G=5$, $u=6$, $\epsilon=1$, and the target $\delta=10^{-7}$. For each $\mu$, we use $L=\mu+1/4$, $\eta=1/L$, $r=\sigma(\sqrt{d_\star}+u)$, and $r'=r+2LG/\mu$.
\begin{center}
\small
\begin{tabular}{rrrrr}
\toprule
$\mu$ & $\kappa$ & \begin{tabular}{@{}r@{}}Smallest $N$\\satisfying \eqref{eq:N-dagger}\end{tabular}
& \begin{tabular}{@{}r@{}}First passing $N$\\for \Cref{thm:dp-full}\end{tabular}
& \begin{tabular}{@{}r@{}}Certified $\delta(1)$\\at that count\end{tabular} \\
\midrule
$10^{-2}$ & $26$ & $1319$ & $519$ & $9.89506\times10^{-8}$ \\
$10^{-3}$ & $251$ & $16148$ & $6079$ & $9.99539\times10^{-8}$ \\
\bottomrule
\end{tabular}
\end{center}
For the fourth column, we evaluate the more precise certificate with the fixed radii specified above.  By using the Jacobian and residual estimates before the simplifications in \eqref{eq:N-dagger}, we certify the same target privacy pair at a smaller iteration count.

\subsection{An exact Gaussian benchmark}\label{subsec:gaussian-benchmark}
Linear losses with ridge regularization provide an exact reference for evaluating privacy certificates. For the exact minimizer, the reduction to a Gaussian mechanism appears in \citet[Example~3.4]{redberg2023improving}. Here we evaluate the full finite-gradient-descent trajectory. Consider
\[
 F_S(w)=\frac{\mu}{2}\|w\|^2+\langle b_S,w\rangle,
 \qquad
 b_S=-\frac1n\sum_{i=1}^n y_i x_i,
 \qquad y_i\in\{-1,1\}.
\]
Starting at zero, we can solve the recursion explicitly,
\[
 w_N=-\frac{1-\rho^N}{\mu}(b_S+Z).
\]
For every $N\geq1$, the multiplier is public and nonzero. The release is therefore an invertible rescaling of the Gaussian mechanism $b_S+Z$. For a neighboring pair with $\|b_S-b_{S'}\|=G>0$, the privacy profile for the Gaussian mechanism is  
\[
 \delta_{\mathrm G}(\epsilon)=
 \Phi_{\mathrm G}\!\left(\frac{G}{2\sigma}-\frac{\epsilon\sigma}{G}\right)
 -e^\epsilon\Phi_{\mathrm G}\!\left(-\frac{G}{2\sigma}-\frac{\epsilon\sigma}{G}\right),
\]
where $\Phi_{\mathrm G}$ is the standard Gaussian cumulative distribution function \citep[Theorem~2.7 and Corollary~2.13]{dong2022gaussian}. This pairwise privacy profile is the same for every positive iteration count.

For this objective, the Jacobian is identical under neighboring datasets. We can therefore compare the exact profile with the integrated-tail certificate using zero log-determinant contribution. At $\sigma/G=5$ and $\epsilon=1$, the two values are
\[
 \delta_{\mathrm G}(1)\simeq1.75463333\times10^{-8},
 \qquad
 \frac{2G}{\sigma}\mathcal Q_1\!\left(
 \frac{\sigma}{G}\left(1-\frac{G^2}{2\sigma^2}\right)\right)
 \simeq3.63850762\times10^{-8}.
\]
This gives an exact reference for the Gaussian-ratio accounting, separate from the finite-iteration and truncation terms.

\section{Utility}\label{sec:utility}

We bound the excess risk of the mechanism $\mathcal{M}_S(z) = w_N$---both the excess empirical risk and the excess population risk. The analysis is agnostic to the privacy mechanism: it bounds risk as a function of the noise scale $\sigma$ and the iteration count $N$, for the same gradient descent on the perturbed objective, and applies for any $\sigma$.

\subsection{Setting and assumptions}\label{subsec:util-assumptions}

Throughout this section the empirical risk has the sample-average structure
\[
    F(w; S) = \frac{1}{n}\sum_{i=1}^{n} \ell(w; s_i) + r(w),
\]
where $S = (s_1, \ldots, s_n)$ consists of $n$ independent draws from a distribution $\mathcal{D}$, the per-sample loss is $\ell(\cdot\,; s)$, and $r$ is a convex, dataset-independent regularizer. In addition to \Cref{ass:regularity} (which supplies the $\mu$-strong convexity and $L$-smoothness of $F(\cdot\,; S)$), we impose the following.

\begin{assumption}[Bounded per-sample gradient]\label{ass:lipschitz}
$\| \nabla \ell(w; s) \| \leq L_\ell$ for all $w$ and $s$.
\end{assumption}

Under the sample-average structure and \Cref{ass:lipschitz}, the gradient sensitivity is at most $2L_\ell/n$, so \Cref{ass:gradient-sensitivity} holds with $G = 2L_\ell/n$: for neighboring datasets $S, S'$ differing in a single sample $s_j \to s_j'$, the regularizer cancels and the triangle inequality gives
\[
    \| \nabla F(w; S) - \nabla F(w; S') \| = \frac{1}{n}\| \nabla \ell(w; s_j) - \nabla \ell(w; s_j') \| \leq \frac{2L_\ell}{n}.
\]

\begin{assumption}[Population regularity]\label{ass:population}
The population risk $\mathcal{R}_\mu(w) = \mathbb{E}_{s \sim \mathcal{D}}[\ell(w; s)] + r(w)$ is $\mu$-strongly convex and $L$-smooth, with the same constants $\mu, L$ as \Cref{ass:regularity}.
\end{assumption}

Throughout, $w^*$ denotes the population minimizer of $\mathcal{R}_\mu$, and $w_S^*$ denotes the empirical minimizer of $F(\cdot;S)$ (equivalently, $w_S^*=w^*(0)$). We continue to write $w^*(z)$ for the perturbed minimizer of \Cref{lem:perturbed-minimizer}.

\begin{remark}\label{rem:regularity}
Ridge regularization gives the standard instance of \Cref{ass:regularity,ass:population}: take $r(w)=\frac\mu2\|w\|^2$ and assume each per-sample loss is convex and $(L-\mu)$-smooth. Then $F(\cdot;S)$ and $\mathcal R_\mu$ are $\mu$-strongly convex and $L$-smooth. The utility results below use these properties together with the per-sample gradient bound. For the privacy result, the isotropic ridge term also yields the data-span decomposition in \Cref{lem:data-span}. Since $r$ is dataset independent, it does not contribute to gradient sensitivity.
\end{remark}

\subsection{Excess empirical risk}\label{subsec:empirical}

We decompose the excess empirical risk into a \emph{perturbation error}, from adding the linear term $\langle z, w \rangle$ to the objective, and an \emph{optimization error}, from running finitely many gradient steps.

\thmempirical*

\begin{proof}
Write $F_z(w) := F(w; S) + \langle z, w \rangle$. The linear term does not change the Hessian, so $F_z$ is $\mu$-strongly convex and $L$-smooth with minimizer $w^*(z)$ (\Cref{lem:perturbed-minimizer}), and the iterates \eqref{eq:gd-recursion} are exactly gradient descent on $F_z$ from $w_0 = 0$. The excess risk splits at $w^*(z)$ into the two errors of the decomposition:
\[
    F(w_N; S) - F(w_S^*; S) = \underbrace{\bigl[ F(w_N; S) - F(w^*(z); S) \bigr]}_{\text{optimization error}} + \underbrace{\bigl[ F(w^*(z); S) - F(w_S^*; S) \bigr]}_{\text{perturbation error}}.
\]

\emph{Perturbation error.} At $w^*(z)$, \Cref{lem:perturbed-minimizer} gives $\nabla F(w^*(z);S)=-z$. The strong-convexity lower bound of \Cref{lem:ext-quadratic} therefore gives
\[
    F(w_S^*; S) \geq F(w^*(z); S) + \langle -z,\, w_S^* - w^*(z) \rangle + \frac{\mu}{2}\| w_S^* - w^*(z) \|^2.
\]
Write $u := w_S^* - w^*(z)$, rearrange, and complete the square:
\[
    F(w^*(z); S) - F(w_S^*; S) \leq \langle z, u \rangle - \frac{\mu}{2}\| u \|^2 = -\frac{\mu}{2}\Bigl\| u - \frac{z}{\mu} \Bigr\|^2 + \frac{\| z \|^2}{2\mu} \leq \frac{\| z \|^2}{2\mu}.
\]

\emph{Optimization error.} By \Cref{lem:convergence} applied to $F_z$ and item~\ref{itm:init-dist} of \Cref{lem:traj-scale}, $\| w_N - w^*(z) \| \leq \rho^N \| w_0 - w^*(z) \| \leq \rho^N R(z)$. Since $\nabla F_z(w^*(z)) = 0$, the smoothness upper bound of \Cref{lem:ext-quadratic} gives
\[
    F_z(w_N) - F_z(w^*(z)) \leq \frac{L}{2}\| w_N - w^*(z) \|^2 \leq \frac{L}{2}\rho^{2N} R(z)^2.
\]
As $F(w; S) = F_z(w) - \langle z, w \rangle$,
\[
    F(w_N; S) - F(w^*(z); S) = \bigl[ F_z(w_N) - F_z(w^*(z)) \bigr] - \langle z,\, w_N - w^*(z) \rangle,
\]
and Cauchy--Schwarz with the contraction bound gives $|\langle z,\, w_N - w^*(z) \rangle| \leq \| z \|\,\| w_N - w^*(z) \| \leq \rho^N \| z \|\, R(z)$; adding the two displays bounds the optimization error by  $(L/2)\rho^{2N} R(z)^2 + \rho^N \| z \|\, R(z)$; with the perturbation error, for every $z$
\begin{equation}\label{eq:pointwise-excess}
    F(w_N; S) - F(w_S^*; S) \leq \frac{\| z \|^2}{2\mu} + \frac{L}{2}\rho^{2N} R(z)^2 + \rho^N \| z \|\, R(z).
\end{equation}

\emph{Taking expectations.} From $R(z) \leq b_R \| z \| + a_R$ (\Cref{lem:traj-scale}) we get $R(z)^2 \leq b_R^2 \| z \|^2 + 2 a_R b_R \| z \| + a_R^2$ and $\| z \|\, R(z) \leq b_R \| z \|^2 + a_R \| z \|$. Summing the two error terms and taking $\mathbb{E}_z$ with $\mathbb{E}\| z \|^2 = d\sigma^2$ and $\mathbb{E}\| z \| \leq \sqrt{\mathbb{E}\| z \|^2} = \sigma\sqrt{d}$ (Jensen) bounds the optimization error by $\frac{L}{2}\rho^{2N}\bigl( b_R^2 d\sigma^2 + 2 a_R b_R \sigma\sqrt{d} + a_R^2 \bigr) + \rho^N\bigl( b_R d\sigma^2 + a_R \sigma\sqrt{d} \bigr) = \mathcal{E}_{\mathrm{opt}}(N)$ of \eqref{eq:E-opt}, which is the stated bound.
\end{proof}

\begin{remark}\label{rem:empirical-limit}
As $N \to \infty$ both terms of $\mathcal{E}_{\mathrm{opt}}(N)$ vanish and the bound tends to $d\sigma^2/(2\mu)$, the standard objective-perturbation rate. The bound holds at every finite $N$; here $a_R = \| \nabla F(0; S) \|/\mu$ is the dataset-dependent constant of \Cref{lem:traj-scale} for the fixed $S$, and the correction's coefficients depend on $d, \sigma, L, \mu$.
\end{remark}

\subsection{Excess population risk}\label{subsec:population}

We bound the excess population risk by transferring the empirical bound of \Cref{thm:empirical-risk} to the population through uniform convergence; this avoids the condition-number factor that a curvature-based decomposition through $w^*$ would incur (\Cref{rem:no-kappa}). The vehicle is the fast-rate guarantee of \citet[Theorem~1]{uniform-convergence-fastrates}, imported as \Cref{thm:uniform-convergence} in \Cref{sec:external}; applying it at the (random) output $w_N$ and taking expectations over the perturbation converts the empirical bound into a population bound.

\thmpopulation*

\begin{proof}
\Cref{ass:bounded-features,ass:bounded-loss,ass:lipschitz-third} verify the hypotheses of \Cref{thm:uniform-convergence}, by the derivations recorded there. On its event (probability at least $1 - \delta_S$ over $S$), its bound at $a = 1$ holds simultaneously for all $w$, in particular for $w = w_N$ at every realization of $z$. Taking $\mathbb{E}_z$,
\[
    \mathbb{E}_z\bigl[ \mathcal{R}_\mu(w_N) - \mathcal{R}_\mu(w^*) \bigr] \leq 2\, \mathbb{E}_z\bigl[ F(w_N; S) - F(w_S^*; S) \bigr] + \frac{16\, L_\ell^2 \bigl( 32 + \log(1/\delta_S) \bigr)}{\mu n},
\]
and \Cref{thm:empirical-risk} bounds $\mathbb{E}_z[ F(w_N; S) - F(w_S^*; S) ] \leq \frac{d\sigma^2}{2\mu} + \mathcal{E}_{\mathrm{opt}}(N)$, giving the claim. (The right-hand side depends on $S$ through $a_R = \| \nabla F(0; S) \|/\mu$ in $\mathcal{E}_{\mathrm{opt}}(N)$, and the high-probability statement is over the draw of $S$; this dependence is benign, as the sample-average structure and \Cref{ass:lipschitz} give $\| \nabla F(0; S) \| \leq \frac{1}{n}\sum_{i=1}^{n}\| \nabla \ell(0; d_i) \| + \| \nabla r(0) \| \leq L_\ell + \| \nabla r(0) \|$, so $a_R \leq (L_\ell + \| \nabla r(0) \|)/\mu$ uniformly over $S$.)
\end{proof}

\begin{remark}\label{rem:no-kappa}
As $N\to\infty$, $\mathcal E_{\mathrm{opt}}(N)\to0$, and the leading statistical terms in \Cref{thm:population-uniform} scale as $1/\mu$ without an additional multiplicative factor $L/\mu$. A quadratic model illustrates this scaling. Let $H$ be the empirical Hessian, so that $w^*(z)=w_S^*-H^{-1}z$. If $H=\nabla^2\mathcal R_\mu$, then the leading perturbation contribution is
\[
\frac{\sigma^2}{2}\operatorname{tr}\!\left((\nabla^2\mathcal R_\mu)^{-1}\right)
\leq \frac{d\sigma^2}{2\mu}.
\]
By contrast, bounding $H^{-1}$ and $\nabla^2\mathcal R_\mu$ separately by $\mu^{-1}I$ and $LI$ gives the looser scale $L d/\mu^2=\kappa d/\mu$. The uniform-convergence argument avoids this extra curvature ratio in the leading statistical terms.
\end{remark}

\subsection{Proof of the target-budget calibration}\label{subsec:target-budget-proof}
\begin{proof}[Proof of \Cref{cor:target-budget}]
All quantities used to select $\mu$, $\sigma$, and $N$ are public. Fix these quantities and write
\[
 q(\mu)=\log\!\left(1+\frac{L_2\zeta^2}{n\mu}\right).
\]
By the choice of $\mu_{\mathrm{priv}}$ and $\mu\geq\mu_{\mathrm{priv}}$, we have $q(\mu)\leq\epsilon/4$. Also $u^2=2\log(2/\delta)>1$ and $0<\epsilon\leq1$, so
\[
 \frac{G}{\sigma}u=\frac{\epsilon}{4},
 \qquad
 \frac{G^2}{2\sigma^2}
 =\frac{\epsilon^2}{32u^2}\leq\frac{\epsilon}{32}.
\]
The definition of $c_N$ gives
$\epsilon_0=G^2/(2\sigma^2)+q(\mu)+c_N$.
Applying \Cref{thm:dp} at $\widehat\epsilon=\epsilon_0+(G/\sigma)u$ therefore gives
\[
 \widehat\epsilon
 \leq\frac{\epsilon}{32}+\frac{\epsilon}{4}
       +\frac{\epsilon}{4}+\frac{\epsilon}{4}
 =\frac{25\epsilon}{32}<\epsilon.
\]
For the additive privacy parameter, let $\phi$ denote the standard Gaussian density. Since $\mathcal Q_1(u)\leq\phi(u)$ for $u \geq 0$ and $e^{-u^2/2}=\delta/2$,
\[
 \widehat\delta
 =e^{-u^2/2}+\frac{2G}{\sigma}\mathcal Q_1(u)
 \leq\frac{\delta}{2}+\frac{G}{\sigma}\frac{\delta}{\sqrt{2\pi}}
 <\delta,
\]
where the last inequality uses $G/\sigma=\epsilon/(4u)<1/4$. This proves $(\epsilon,\delta)$-privacy. For fixed public parameters the right-hand side of \eqref{eq:N-dagger} is finite, while $c_N\to0$. Hence we can choose a public integer satisfying both iteration conditions.

On the sample event used in \Cref{thm:population-uniform}, the argument for \Cref{thm:unregularized} applies simultaneously to every comparator. Thus, for every $\|\bar w\|\leq W$,
\[
 \mathbb E_z[\mathcal R_0(w_N)]-\mathcal R_0(\bar w)
 \leq\frac{A}{\mu}+\frac{\mu W^2}{2}
       +2\mathcal E_{\mathrm{opt}}(N).
\]
Since $\mu\geq\mu_{\mathrm{util}}$ and $\mu\leq\mu_{\mathrm{util}}+\mu_{\mathrm{priv}}$,
\[
 \frac{A}{\mu}+\frac{\mu W^2}{2}
 \leq\frac{A}{\mu_{\mathrm{util}}}+\frac{\mu_{\mathrm{util}}W^2}{2}
       +\frac{\mu_{\mathrm{priv}}W^2}{2}
 =W\sqrt{2A}+\frac{\mu_{\mathrm{priv}}W^2}{2}.
\]
If $\mu_{\mathrm{util}}\geq\mu_{\mathrm{priv}}$, then $\mu=\mu_{\mathrm{util}}$ and the first two terms are exactly $W\sqrt{2A}$, so no additional regularization term is needed. Finally, substitute $\sigma=4Gu/\epsilon$, $G=2L_\ell/n$, and $\sqrt{a+b}\leq\sqrt a+\sqrt b$ to obtain the decomposition following the corollary.
\end{proof}

\subsection{Proof of the realized-release utility corollary}\label{subsec:high-probability-utility}
\begin{proof}[Proof of \Cref{cor:high-probability-utility}]
By \Cref{cor:ext-gauss-norm}, $\|Z\|\leq r_\beta$ with probability at least $1-\beta$. Since $\|\nabla F(0;S)\|\leq L_\ell$, the trajectory scale satisfies $R(Z)\leq R_\beta$ on this event, uniformly over $S$. By \eqref{eq:pointwise-excess}, the pointwise estimate in the proof of \Cref{thm:empirical-risk}, 
\[
 F(w_N;S)-F(w_S^*;S)
 \leq\frac{\|Z\|^2}{2\mu}
   +\frac L2\rho^{2N}R(Z)^2+\rho^N\|Z\|R(Z)
 \leq U_N(\beta).
\]
Intersect this event with the uniform sample event of \Cref{thm:uniform-convergence} at $a=1$. A union bound gives probability at least $1-\delta_S-\beta$, and the uniform-convergence inequality gives the population bound at the realized $w_N$. Finally use the regularization comparison in the proof of \Cref{thm:unregularized}, which holds for every comparator on the same event.
\end{proof}

\section{External results}\label{sec:external}

We collect here the external results used in the proofs, stated in the notation needed below. When we use a form that differs from the source statement, we include the short derivation.

\begin{lemma}[Minimizer of a strongly convex function]\label{lem:ext-minimizer}
Let $f: \mathbb{R}^d \to \mathbb{R}$ be continuously differentiable and $\mu$-strongly convex with $\mu > 0$. Then $f$ attains its minimum at a unique point $x^*$, and a point minimizes $f$ if and only if its gradient vanishes.
\end{lemma}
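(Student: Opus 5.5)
The plan is the standard two-part argument: existence and uniqueness by coercivity plus strict convexity, then the first-order characterization from convexity.

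\emph{Existence.} Strong convexity of a $C^1$ function gives the lower bound
\[
f(x)\geq f(0)+\langle\nabla f(0),x\rangle+\tfrac{\mu}{2}\|x\|^2 .
\]
This follows by integrating the monotonicity inequality $\langle\nabla f(a)-\nabla f(b),a-b\rangle\geq\mu\|a-b\|^2$ along the segment from $0$ to $x$, or directly from the defining inequality of $\mu$-strong convexity.

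By Cauchy--Schwarz, the right-hand side is at least $f(0)-\|\nabla f(0)\|\|x\|+\tfrac{\mu}{2}\|x\|^2$, which tends to $+\infty$ as $\|x\|\to\infty$. Hence the sublevel set $\{x: f(x)\leq f(0)\}$ is contained in a closed ball of radius $2\|\nabla f(0)\|/\mu$. It is closed by continuity, so it is compact and nonempty (it contains $0$). The continuous function $f$ therefore attains its minimum on this set at some $x^*$, and this is a global minimizer because every point outside the set has $f>f(0)\geq f(x^*)$.

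\emph{Uniqueness.} Suppose $x_1\neq x_2$ are both minimizers with value $m$. The strong-convexity inequality at the midpoint gives
\[
f\bigl(\tfrac{x_1+x_2}{2}\bigr)\leq m-\tfrac{\mu}{8}\|x_1-x_2\|^2<m ,
\]
which is a contradiction.

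\emph{First-order characterization.} If $x$ is a minimizer, then for each direction $v$ the function $t\mapsto f(x+tv)$ is differentiable with a minimum at $t=0$. Its derivative there is $\langle\nabla f(x),v\rangle=0$; since $v$ is arbitrary, $\nabla f(x)=0$. Conversely, if $\nabla f(x)=0$, the lower bound above taken with base point $x$ reads $f(y)\geq f(x)+\tfrac{\mu}{2}\|y-x\|^2\geq f(x)$ for all $y$, so $x$ is a minimizer.

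The only mildly delicate step is justifying the quadratic lower bound from whichever definition of strong convexity is used. If the definition is the chord inequality $f(\theta a+(1-\theta)b)\leq\theta f(a)+(1-\theta)f(b)-\tfrac{\mu}{2}\theta(1-\theta)\|a-b\|^2$, the plan is as follows. Rearrange it as a difference quotient in $\theta$, divide by $\theta$, and let $\theta\downarrow0$; differentiability of $f$ then yields the gradient form $f(a)\geq f(b)+\langle\nabla f(b),a-b\rangle+\tfrac{\mu}{2}\|a-b\|^2$. Everything else is routine.
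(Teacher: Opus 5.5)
Your proof is correct, and it takes a different route from the paper: it is self-contained, while the paper's argument is essentially a citation.

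\textbf{What the paper does.} It obtains existence and uniqueness from \citet[Theorem~2.2.10]{strongconvex-minimizer}, applied to the class $\mathcal{S}^1_\mu(Q)$ at $Q=\mathbb{R}^d$. For the characterization it uses the constrained variational inequality of \citet[Theorem~2.2.9]{strongconvex-minimizer}, namely $\langle\nabla f(x^*),x-x^*\rangle\geq0$ for all $x\in Q$. It then extracts $\nabla f(x^*)=0$ with the single test point $x=x^*-\nabla f(x^*)$.

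\textbf{What you do.} You derive everything from the quadratic lower bound.
\begin{itemize}
\item Coercivity gives a compact sublevel set, with the explicit location bound $\|x^*\|\leq2\|\nabla f(0)\|/\mu$.
\item The midpoint inequality gives uniqueness.
\item Fermat's rule along each direction gives necessity of $\nabla f(x^*)=0$.
\item The lower bound based at a stationary point gives sufficiency. It actually gives strict minimality, so together with necessity it yields uniqueness on its own.
\end{itemize}

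\textbf{Trade-off.} The citation is shorter and places the statement inside Nesterov's constrained framework. Your route avoids depending on that framework and makes explicit the one step that depends on how strong convexity is defined. That step matters here, because \Cref{lem:perturbed-minimizer} applies the lemma to a function whose strong convexity comes from $\nabla^2 F\succeq\mu I$. In that setting the quadratic lower bound you need is the left inequality of \Cref{lem:ext-quadratic}. Alternatively, your segment-integration argument works, with the monotonicity inequality supplied by item~\ref{itm:avg-hessian} of \Cref{lem:averaged-derivative}.
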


Existence and uniqueness are \citet[Theorem~2.2.10]{strongconvex-minimizer}, stated there for the class $\mathcal{S}^{1}_{\mu}(Q)$ with $Q$ closed and convex, applied here at $Q = \mathbb{R}^d$. The characterization is \citet[Theorem~2.2.9]{strongconvex-minimizer}: $x^*$ minimizes $f$ over $Q$ if and only if $\langle \nabla f(x^*),\, x - x^* \rangle \geq 0$ for all $x \in Q$. At $Q = \mathbb{R}^d$, taking $x = x^* - \nabla f(x^*)$ makes the left side $-\| \nabla f(x^*) \|^2$, so the condition forces $\nabla f(x^*) = 0$; conversely, a zero gradient satisfies it trivially. \Cref{lem:perturbed-minimizer} applies this to the perturbed objective $w \mapsto F(w) + \langle z, w \rangle$.

\begin{lemma}[Quadratic bounds from two-sided curvature]\label{lem:ext-quadratic}
Let $f: \mathbb{R}^d \to \mathbb{R}$ be twice continuously differentiable with $\mu I \preceq \nabla^2 f(w) \preceq L I$ for all $w \in \mathbb{R}^d$, where $0 < \mu \leq L$. Then for all $x, y \in \mathbb{R}^d$,
\[
    \frac{\mu}{2}\| y - x \|^2 \;\leq\; f(y) - f(x) - \langle \nabla f(x),\, y - x \rangle \;\leq\; \frac{L}{2}\| y - x \|^2.
\]
\end{lemma}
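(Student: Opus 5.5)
The plan is to reduce the two-sided inequality to a one-dimensional statement along the segment from $x$ to $y$, and then integrate the Hessian bounds twice. Set $v := y - x$ and $\varphi(t) := f(x + tv)$ for $t \in [0,1]$. Because $f \in C^2$, $\varphi$ is twice continuously differentiable. Its derivatives are
\[
    \varphi'(t) = \langle \nabla f(x+tv), v\rangle, \qquad \varphi''(t) = \langle \nabla^2 f(x+tv)\,v, v\rangle.
\]
The target quantity $f(y) - f(x) - \langle \nabla f(x), y-x\rangle$ is then exactly $\varphi(1) - \varphi(0) - \varphi'(0)$.

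Next, apply the fundamental theorem of calculus twice, in the same spirit as \Cref{lem:averaged-derivative}. This gives the integral form of Taylor's remainder:
\[
    \varphi(1) - \varphi(0) - \varphi'(0) = \int_0^1 \bigl(\varphi'(s) - \varphi'(0)\bigr)\,ds = \int_0^1\!\!\int_0^s \varphi''(t)\,dt\,ds = \int_0^1 (1-t)\,\varphi''(t)\,dt.
\]
The last equality is Fubini's theorem on the triangle $0\le t\le s\le 1$, which applies because the integrand is continuous.

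The hypothesis $\mu I \preceq \nabla^2 f \preceq LI$, evaluated at $x+tv$ and paired with $v$, gives $\mu\|v\|^2 \le \varphi''(t) \le L\|v\|^2$ for every $t$. The weight $1-t$ is nonnegative on $[0,1]$, so these bounds can be integrated against it. Since $\int_0^1 (1-t)\,dt = 1/2$, this yields the lower bound $\tfrac{\mu}{2}\|y-x\|^2$ and the upper bound $\tfrac{L}{2}\|y-x\|^2$.

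I do not expect a genuine obstacle here. The only point that needs care is justifying the chain-rule formula for $\varphi''$ and the interchange of integrals, and both follow from the $C^2$ hypothesis on $f$.
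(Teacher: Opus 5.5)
Your proof is correct, but it takes a different route from the paper, which gives no direct argument and instead cites Nesterov's textbook. In the paper, the lower bound is read off as the defining inequality of the class $\mathcal{S}^1_\mu$, to which $f$ belongs by the second-order characterization of strong convexity. The upper bound is the standard descent lemma for functions with $L$-Lipschitz gradient, with the Lipschitz property supplied by $-LI \preceq \nabla^2 f \preceq LI$.

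You instead derive both sides at once from the integral Taylor remainder $\int_0^1 (1-t)\,\langle \nabla^2 f(x+tv)v, v\rangle\,dt$. This is self-contained, and it shows that each side of the conclusion uses only the matching side of the Hessian sandwich. In particular, your upper bound needs only $\nabla^2 f \preceq LI$, not the two-sided operator-norm bound that the Lipschitz-gradient route relies on.

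The cited route, for its part, passes through properties ($\mu$-strong convexity, $L$-Lipschitz gradient) that make sense without twice differentiability. It therefore covers $C^1$ and $C^{1,1}$ functions, where your argument, which integrates $\varphi''$, would need adaptation. Under the lemma's $C^2$ hypothesis the two routes prove the same statement. As a small simplification, you can skip Fubini: integrate $\mu s\|v\|^2 \le \varphi'(s)-\varphi'(0) \le L s\|v\|^2$ over $s \in [0,1]$.
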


The lower bound is the defining inequality of the class $\mathcal{S}^1_\mu$ \citep[Definition~2.1.3, eq.~(2.1.20)]{strongconvex-minimizer}, to which $f$ belongs by the second-order characterization \citep[Theorem~2.1.11]{strongconvex-minimizer}. The upper bound is \citet[Lemma~1.2.3]{strongconvex-minimizer}, valid for functions with $L$-Lipschitz gradient, a property that $-L I \preceq \nabla^2 f \preceq L I$ supplies \citep[eq.~(1.2.10)]{strongconvex-minimizer}.

\begin{theorem}[Lipschitz functions of a Gaussian vector]\label{thm:ext-gauss-lipschitz}
Let $X \sim \mathcal{N}(0, I_d)$ and let $f: \mathbb{R}^d \to \mathbb{R}$ be $1$-Lipschitz with respect to the Euclidean norm. Then for every $t \geq 0$,
\[
    \Pr\bigl( f(X) - \mathbb{E}[f(X)] \geq t \bigr) \leq e^{-t^2/2}.
\]
\end{theorem}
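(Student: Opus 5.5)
This is the standard Gaussian concentration inequality for Lipschitz functions (Tsirelson--Ibragimov--Sudakov). I would prove it by the Gaussian interpolation (Maurey--Pisier) route, which yields a sub-Gaussian moment generating function bound, and then finish with a Chernoff step.

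\emph{Reduction to smooth $f$.} Convolving $f$ with a small Gaussian kernel gives a smooth function $f_\varepsilon$ that is still $1$-Lipschitz, so $\|\nabla f_\varepsilon\|\leq 1$. As $\varepsilon\to0$, $f_\varepsilon\to f$ uniformly, because $|f_\varepsilon-f|\leq\varepsilon\,\mathbb{E}\|X\|$. Hence it suffices to treat smooth $f$ with bounded gradient. Replacing $f$ by $f-\mathbb{E}f(X)$, we may also assume $\mathbb{E}f(X)=0$.

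\emph{Interpolation.} Let $Y$ be an independent copy of $X$, and for $\theta\in[0,\pi/2]$ set $X_\theta:=X\sin\theta+Y\cos\theta$ and $X_\theta':=X\cos\theta-Y\sin\theta$. The pair $(X_\theta,X_\theta')$ has the same law as $(X,Y)$ for every $\theta$. Writing
\[
f(X)-f(Y)=\int_0^{\pi/2}\langle\nabla f(X_\theta),X_\theta'\rangle\,d\theta,
\]
applying Jensen's inequality in $Y$ (using $\mathbb{E}f(Y)=0$), and then Jensen's inequality over the normalized $\theta$-integral gives
\[
\mathbb{E}e^{\lambda f(X)}\leq\mathbb{E}e^{\lambda(f(X)-f(Y))}\leq\frac{2}{\pi}\int_0^{\pi/2}\mathbb{E}\exp\!\Bigl(\frac{\lambda\pi}{2}\langle\nabla f(X_\theta),X_\theta'\rangle\Bigr)d\theta.
\]
Condition on $X_\theta$. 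Then $X_\theta'$ is standard Gaussian and independent of $X_\theta$, so the inner expectation is at most $e^{\lambda^2\pi^2/8}$. This yields the bound $e^{-t^2/(2\cdot\pi^2/4)}$ after the Chernoff step.

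\emph{The constant.} The interpolation argument gives the constant $2/\pi^2$ in the exponent, not the sharp $1/2$. The main obstacle is therefore recovering the exact constant. To get it, I would instead use the Gaussian log-Sobolev inequality with the Herbst argument. Let $H(\lambda):=\mathbb{E}e^{\lambda f}$. The log-Sobolev inequality applied to $e^{\lambda f/2}$ gives $\lambda H'(\lambda)-H\log H\leq\frac{\lambda^2}{2}H$. Equivalently, $\frac{d}{d\lambda}\bigl(\lambda^{-1}\log H\bigr)\leq\frac12$, and since $\lambda^{-1}\log H(\lambda)\to\mathbb{E}f=0$ as $\lambda\to0$, integrating yields $H(\lambda)\leq e^{\lambda^2/2}$. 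The Chernoff bound then gives
\[
\Pr\bigl(f(X)\geq t\bigr)\leq e^{\lambda^2/2-\lambda t},
\]
and choosing $\lambda=t$ gives $e^{-t^2/2}$. The smoothing limit passes to the original $f$ by uniform convergence. This is the route I would take, invoking the Gaussian log-Sobolev inequality as a known result.
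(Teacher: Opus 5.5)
Your proposal is correct and follows the paper's argument. The paper takes the sub-Gaussian bound $\mathbb{E}\bigl[e^{\lambda(f(X)-\mathbb{E}f(X))}\bigr]\leq e^{\lambda^2/2}$ from its cited reference and applies exactly your Chernoff step with $\lambda=t$; your smoothing reduction and log-Sobolev/Herbst derivation prove that bound instead of citing it, and the interpolation detour can be dropped since, as you note, it only gives the weaker constant $2/\pi^2$.
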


This is the $1$-Lipschitz case of \citet[Theorem~2.26]{gaussian-norm-conc}, which asserts that $f(X) - \mathbb{E}[f(X)]$ is sub-Gaussian with parameter at most the Lipschitz constant; in the case at hand, $\mathbb{E}\bigl[ e^{\lambda(f(X) - \mathbb{E}[f(X)])} \bigr] \leq e^{\lambda^2/2}$ for all $\lambda \in \mathbb{R}$. The one-sided tail follows by the Chernoff bound: for $\lambda > 0$, Markov's inequality gives $\Pr\bigl( f(X) - \mathbb{E}[f(X)] \geq t \bigr) \leq e^{-\lambda t}\, \mathbb{E}\bigl[ e^{\lambda(f(X) - \mathbb{E}[f(X)])} \bigr] \leq e^{-\lambda t + \lambda^2/2}$, and the choice $\lambda = t$ yields $e^{-t^2/2}$.

The privacy analysis uses the theorem through the following norm bound; the same computation appears as Example~2.28 of \citet{gaussian-norm-conc}.

\begin{corollary}[Gaussian norm concentration]\label{cor:ext-gauss-norm}
Let $Z \sim \mathcal{N}(0, \sigma^2 I_d)$. For every $t \geq 0$,
\[
    \Pr\bigl( \| Z \| \geq \sigma(\sqrt{d} + t) \bigr) \leq e^{-t^2/2}.
\]
\end{corollary}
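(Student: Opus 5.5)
This is the standard Gaussian concentration argument for the norm: apply \Cref{thm:ext-gauss-lipschitz} to the Euclidean norm and bound its mean by Jensen. Write $Z = \sigma X$ with $X \sim \mathcal{N}(0, I_d)$, and take $f(x) := \|x\|$. The reverse triangle inequality $\bigl|\|x\| - \|y\|\bigr| \leq \|x - y\|$ shows that $f$ is $1$-Lipschitz with respect to the Euclidean norm.

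\Cref{thm:ext-gauss-lipschitz} then gives, for every $t \geq 0$,
\[
\Pr\bigl(\|X\| - \mathbb{E}\|X\| \geq t\bigr) \leq e^{-t^2/2}.
\]

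Next we bound the mean. Jensen's inequality for the concave square root gives
\[
\mathbb{E}\|X\| \leq \sqrt{\mathbb{E}\|X\|^2} = \sqrt{d}.
\]
Hence the event $\{\|X\| \geq \sqrt{d} + t\}$ is contained in the event $\{\|X\| - \mathbb{E}\|X\| \geq t\}$. Its probability is therefore at most $e^{-t^2/2}$.

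Finally, we rescale. Since $\sigma > 0$, the event $\{\|Z\| \geq \sigma(\sqrt{d} + t)\}$ is the same as $\{\|X\| \geq \sqrt{d} + t\}$, which gives the claim.

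No step is a real obstacle. The only point to check is the direction of the Jensen inequality, which goes the favorable way: replacing $\mathbb{E}\|X\|$ by the larger $\sqrt{d}$ only shrinks the event.
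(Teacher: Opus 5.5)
Your proof is correct and matches the paper's argument step for step. Both write $Z=\sigma X$, apply \Cref{thm:ext-gauss-lipschitz} to the $1$-Lipschitz Euclidean norm, and use Jensen's inequality $\mathbb{E}\|X\|\leq\sqrt{d}$ to contain the event $\{\|X\|\geq\sqrt{d}+t\}$ in $\{\|X\|-\mathbb{E}\|X\|\geq t\}$.
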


\begin{proof}
Write $Z = \sigma X$ with $X \sim \mathcal{N}(0, I_d)$. The Euclidean norm is $1$-Lipschitz by the triangle inequality, and Jensen's inequality gives $\mathbb{E}[\| X \|] \leq \sqrt{\mathbb{E}[\| X \|^2]} = \sqrt{d}$. By \Cref{thm:ext-gauss-lipschitz} with $f = \| \cdot \|$,
\[
    \Pr\bigl( \| Z \| \geq \sigma(\sqrt{d} + t) \bigr) = \Pr\bigl( \| X \| \geq \sqrt{d} + t \bigr) \leq \Pr\bigl( \| X \| - \mathbb{E}[\| X \|] \geq t \bigr) \leq e^{-t^2/2}. \qedhere
\]
\end{proof}

The proof of \Cref{thm:dp} applies this at $t = u$, where it evaluates the first summand of \eqref{eq:delta-profile}.

The truncation argument of \Cref{sec:dp-guarantee} computes output densities through the change of variables; we state the formula in the form used, allowing the conditioning set to be smaller than the domain of the diffeomorphism.

\begin{lemma}[Densities under a diffeomorphism]\label{lem:ext-pushforward}
Let $\mathcal{B} \subseteq \mathbb{R}^d$ be open, let $G : \mathcal{B} \to \mathbb{R}^d$ be a $C^1$-diffeomorphism onto its image, let $\Omega \subseteq \mathcal{B}$ be Lebesgue measurable, and let $Z$ be a random vector on $\mathbb{R}^d$ with density $p_Z$. Then for every Lebesgue measurable $E \subseteq \mathbb{R}^d$,
\[
    \Pr\bigl[ G(Z) \in E,\; Z \in \Omega \bigr] = \int_{E \cap G(\Omega)} \frac{p_Z(G^{-1}(y))}{\bigl| \det \nabla G(G^{-1}(y)) \bigr|}\, dy,
\]
Thus $E\mapsto\Pr[G(Z)\in E,\,Z\in\Omega]$ defines the restricted measure. On $G(\Omega)$ its density is
\[
    y\mapsto \frac{p_Z(G^{-1}(y))}{|\det\nabla G(G^{-1}(y))|}.
\]

\end{lemma}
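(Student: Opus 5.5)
The plan is to prove \Cref{lem:ext-pushforward} by pulling the left side back to an integral over $\Omega$ and applying the classical change-of-variables theorem for $C^1$-diffeomorphisms between open sets. Set $\mathcal{V} := G(\mathcal{B})$. By invariance of domain, or directly from the inverse function theorem since $\nabla G$ is invertible everywhere on $\mathcal{B}$, the set $\mathcal{V}$ is open, and $G^{-1} : \mathcal{V} \to \mathcal{B}$ is $C^1$. Write
\[
\Pr[G(Z)\in E,\,Z\in\Omega] = \int_{\mathbb{R}^d} \mathbf 1_\Omega(z)\,\mathbf 1_E(G(z))\,p_Z(z)\,dz = \int_{\mathcal{B}} \mathbf 1_{\Omega}(z)\,\mathbf 1_E(G(z))\,p_Z(z)\,dz .
\]
The second equality uses $\Omega\subseteq\mathcal{B}$.

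The next step is to apply the change-of-variables formula (e.g.\ \citet[Theorem~2.47]{folland-real}) to the diffeomorphism $G^{-1}:\mathcal{V}\to\mathcal{B}$. It states that for nonnegative measurable $h$ on $\mathcal{B}$,
\[
\int_{\mathcal{B}} h(z)\,dz = \int_{\mathcal{V}} h(G^{-1}(y))\,|\det\nabla G^{-1}(y)|\,dy .
\]
We take $h(z)=\mathbf 1_\Omega(z)\mathbf 1_E(G(z))p_Z(z)$. Then $h(G^{-1}(y)) = \mathbf 1_{G(\Omega)}(y)\,\mathbf 1_E(y)\,p_Z(G^{-1}(y))$, because $G^{-1}(y)\in\Omega$ if and only if $y\in G(\Omega)$ by bijectivity. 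The chain rule applied to $G\circ G^{-1}=\mathrm{id}$ gives $\det\nabla G^{-1}(y) = 1/\det\nabla G(G^{-1}(y))$. Substituting both facts produces exactly the displayed integral over $E\cap G(\Omega)$. The density statement follows immediately, since the right side is the integral of the stated function against Lebesgue measure restricted to $G(\Omega)$.

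The only delicate point is measurability. We need $h$ to be Lebesgue measurable, and we need $G(\Omega)$ and $E\cap G(\Omega)$ to be measurable.
\begin{itemize}
\item For the indicator $\mathbf 1_E\circ G$: a $C^1$ map with Lipschitz local inverse sends Lebesgue-null sets to null sets and pulls measurable sets back to measurable sets, because $G^{-1}$ is locally Lipschitz. Hence $G^{-1}(E)=(G^{-1})^{-1}$-image structure is measurable. Concretely, $\{z\in\mathcal{B}:G(z)\in E\} = G^{-1}(E\cap\mathcal{V})$ is the image of a measurable set under the locally Lipschitz map $G^{-1}$, and is therefore measurable.
\item For $G(\Omega)$: the same reasoning with $G$ in place of $G^{-1}$ shows that $G(\Omega) = (G^{-1})^{-1}(\Omega)$ is measurable.
\end{itemize}
I expect this bookkeeping to be the main, though minor, obstacle. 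I would handle it by citing the standard fact that locally Lipschitz maps preserve Lebesgue measurability: they map $F_\sigma$ sets to $F_\sigma$ sets and null sets to null sets. If one prefers, the argument can also be run with Borel $E$ and $\Omega$, which is all the privacy proof uses.
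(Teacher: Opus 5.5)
Your proposal is correct and follows essentially the same route as the paper. Both reduce the claim to the change-of-variables theorem \citep[Theorem~2.47]{folland-real}; the paper applies it to $G$ with the reciprocal Jacobian $1/|\det\nabla G(G^{-1}(y))|$ placed in the integrand on $G(\mathcal{B})$, whereas you apply it to $G^{-1}:G(\mathcal{B})\to\mathcal{B}$ with integrand $\mathbf{1}_\Omega(z)\,\mathbf{1}_E(G(z))\,p_Z(z)$ and then use $\det\nabla G^{-1}(y)=1/\det\nabla G(G^{-1}(y))$. Your locally-Lipschitz measurability argument is valid but redundant, because the cited theorem already gives measurability of $G(\Omega)$ and of compositions of Lebesgue measurable functions with the diffeomorphism.
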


\begin{proof}
By the change-of-variables theorem \citep[Theorem~2.47]{folland-real}, for a $C^1$-diffeomorphism $G$ of the open set $\mathcal{B}$ and any measurable $f \geq 0$ on $G(\mathcal{B})$,
\[
    \int_{G(\mathcal{B})} f(y)\, dy = \int_{\mathcal{B}} f(G(x))\, |\det \nabla G(x)|\, dx;
\]
part (b) of the same theorem makes $G(\Omega)$ Lebesgue measurable. Apply the formula to $f(y) := \mathbf{1}_{E \cap G(\Omega)}(y)\, p_Z(G^{-1}(y))/|\det \nabla G(G^{-1}(y))|$, measurable since $G^{-1}$ and $\det \nabla G$ are continuous with $\det \nabla G \neq 0$ on $\mathcal{B}$. Since $G$ is a bijection of $\mathcal{B}$ onto $G(\mathcal{B})$, for $x \in \mathcal{B}$ we have $G(x) \in E \cap G(\Omega)$ if and only if $x \in G^{-1}(E) \cap \Omega$, so $f(G(x))\, |\det \nabla G(x)| = \mathbf{1}_{G^{-1}(E) \cap \Omega}(x)\, p_Z(x)$, and the right side of the formula equals $\Pr[ Z \in G^{-1}(E) \cap \Omega ] = \Pr[ G(Z) \in E,\; Z \in \Omega ]$.
\end{proof}

In \Cref{sec:privacy} we take $G=\mathcal{M}_T$ and use the neighborhood $\mathcal{B}$ furnished by \Cref{lem:feasible-diffeo}. The restricted density $p_Y^S(\cdot\,;\Omega)$ is then the density displayed above.

The remaining four results are matrix analysis. Throughout, $\sigma_1(A) \geq \cdots \geq \sigma_d(A)$ denote the singular values of $A \in \mathbb{R}^{d \times d}$ in nonincreasing order, so that $\| A \| = \sigma_1(A)$ and $\| A \|_* = \sum_{i} \sigma_i(A)$. The first result is the \emph{Neumann series}, the matrix form of the geometric series $(1 - x)^{-1} = \sum_{m \geq 0} x^m$.

\begin{lemma}[Neumann series]\label{lem:ext-neumann}
Let $\Psi \in \mathbb{R}^{d \times d}$ satisfy $\| \Psi \| < 1$. Then $I - \Psi$ is nonsingular and
\[
    (I - \Psi)^{-1} = \sum_{m=0}^{\infty} \Psi^m,
\]
the series converging absolutely, as $\sum_{m} \| \Psi^m \| \leq \sum_{m} \| \Psi \|^m = (1 - \| \Psi \|)^{-1}$.
\end{lemma}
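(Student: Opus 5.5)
The plan is to prove the two claims directly from the norm bound: first show that the partial sums of $\sum_{m\ge0}\Psi^m$ converge, then check that the limit is a two-sided inverse of $I-\Psi$.

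\emph{Convergence.} By submultiplicativity of the operator norm, $\|\Psi^m\|\leq\|\Psi\|^m$. Since $\|\Psi\|<1$, the series $\sum_m\|\Psi^m\|$ is dominated by the geometric series, whose sum is $(1-\|\Psi\|)^{-1}$. This gives absolute convergence. The space $\mathbb{R}^{d\times d}$ with the operator norm is complete, so the partial sums $S_K:=\sum_{m=0}^{K}\Psi^m$ are Cauchy and converge to a limit $S$. The same domination gives the stated bound $\|S\|\leq(1-\|\Psi\|)^{-1}$, which is the form used in the proof of \Cref{thm:main}.

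\emph{Inverse.} The key step is the telescoping identity
\[
(I-\Psi)S_K=S_K(I-\Psi)=I-\Psi^{K+1}.
\]
The term $\|\Psi^{K+1}\|\leq\|\Psi\|^{K+1}$ tends to $0$. Multiplication by the fixed matrix $I-\Psi$ is continuous, so letting $K\to\infty$ gives $(I-\Psi)S=S(I-\Psi)=I$. Hence $I-\Psi$ is nonsingular with inverse $S$.

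Nothing here is a real obstacle. The only point to state explicitly is the use of completeness, or equivalently entrywise convergence in finite dimension, to justify passing to the limit.
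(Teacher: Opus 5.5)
Your proof is correct and complete. Domination by the geometric series together with completeness gives convergence and the bound $\|S\|\le(1-\|\Psi\|)^{-1}$, and the telescoping identity $(I-\Psi)S_K=S_K(I-\Psi)=I-\Psi^{K+1}$ with $\|\Psi^{K+1}\|\to0$ identifies the limit as a two-sided inverse. The paper gives no argument of its own: it cites Horn and Johnson's Corollary~5.6.16 with $A=I-\Psi$ in the spectral norm, and what you wrote is the standard proof of that corollary, so it is the same approach made self-contained.
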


This is \citet[Corollary~5.6.16]{horn-johnson-2013}, which states that a matrix $A$ with $\| I - A \| < 1$ in some matrix norm is nonsingular with $A^{-1} = \sum_{m \geq 0} (I - A)^m$; take $A = I - \Psi$ in the spectral norm.

\begin{lemma}[Trace and nuclear-norm inequalities]\label{lem:ext-trace-nuclear}
For all $A, X, Y \in \mathbb{R}^{d \times d}$,
\[
    |\mathrm{tr}\, A| \leq \| A \|_* \qquad\text{and}\qquad \| X Y \|_* \leq \min\bigl( \| X \|_*\, \| Y \|,\; \| X \|\, \| Y \|_* \bigr).
\]
In particular, $\| X Y Z \|_* \leq \| X \|\,\| Y \|_*\,\| Z \|$, and $\| K^k \|_* \leq \| K \|_*\, \| K \|^{k-1}$ for every integer $k \geq 1$.
\end{lemma}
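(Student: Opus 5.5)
The plan is to reduce everything to the singular value decomposition and a variational (duality) formula for the nuclear norm. Let $A = \sum_{i} \sigma_i(A)\, u_i v_i^\top$ be an SVD, with $\{u_i\}$ and $\{v_i\}$ orthonormal bases.

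\emph{Trace bound.} By linearity and cyclicity of the trace,
\[
\mathrm{tr}\, A = \sum_i \sigma_i(A)\, \mathrm{tr}(u_i v_i^\top) = \sum_i \sigma_i(A)\, v_i^\top u_i .
\]
Cauchy--Schwarz gives $|v_i^\top u_i| \leq 1$ for each $i$, so $|\mathrm{tr}\, A| \leq \sum_i \sigma_i(A) = \| A \|_*$.

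\emph{Variational formula.} Next I establish
\[
\| A \|_* = \max_{\| W \| \leq 1} \mathrm{tr}(W^\top A).
\]
For the upper bound, expand $\mathrm{tr}(W^\top A) = \sum_i \sigma_i(A)\, u_i^\top W v_i$. Each term satisfies $|u_i^\top W v_i| \leq \| W \|\, \| u_i \|\, \| v_i \| \leq 1$. Hence $\mathrm{tr}(W^\top A) \leq \| A \|_*$. For attainment, take $W = U V^\top$, where $U$ and $V$ have columns $u_i$ and $v_i$. This $W$ is orthogonal, so $\| W \| = 1$, and $W^\top A = V \Sigma V^\top$ has trace $\sum_i \sigma_i(A)$. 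This duality is the only step with any content; everything after it is bookkeeping.

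\emph{Product bound.} For $\| W \| \leq 1$, cyclicity gives $\mathrm{tr}(W^\top X Y) = \mathrm{tr}\bigl( (W Y^\top)^\top X \bigr)$. The matrix $W Y^\top$ has operator norm at most $\| Y \|$. Rescaling it and applying the upper bound of the duality to $X$ gives $\mathrm{tr}(W^\top X Y) \leq \| Y \|\, \| X \|_*$. Taking the maximum over $W$ yields $\| X Y \|_* \leq \| X \|_*\, \| Y \|$.

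For the other half of the minimum, use transposes. Singular values are invariant under transposition, so $\| (XY)^\top \|_* = \| XY \|_*$ and $\| X^\top \| = \| X \|$. Applying the bound just proved to $(XY)^\top = Y^\top X^\top$ gives $\| X Y \|_* \leq \| Y \|_*\, \| X \|$.

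\emph{Consequences.} The triple-product bound follows from two applications of the two product bounds:
\[
\| X Y Z \|_* \leq \| X \|\, \| Y Z \|_* \leq \| X \|\, \| Y \|_*\, \| Z \|.
\]
For the power bound, argue by induction on $k$. The case $k = 1$ is trivial. For the inductive step, write $K^k = K \cdot K^{k-1}$ and apply the first product bound:
\[
\| K^k \|_* \leq \| K \|_*\, \| K^{k-1} \| \leq \| K \|_*\, \| K \|^{k-1},
\]
where the last step uses submultiplicativity of the operator norm.
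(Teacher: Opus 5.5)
Your proof is correct, but it takes a more self-contained route than the paper. The paper proves nothing from the SVD directly. It cites von Neumann's trace inequality $\mathrm{Re}\,\mathrm{tr}(AB^*)\le\sum_i\sigma_i(A)\sigma_i(B)$ at $B=I$ for the trace bound. For the product bound it cites the singular-value inequality $\|XY\|_*\le\sum_i\sigma_i(X)\sigma_i(Y)$ (Horn--Johnson, Corollary~7.4.1.3(d)); bounding the singular values of either factor by that factor's largest one then gives both halves of the minimum at once.

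You instead derive the trace bound from the SVD and prove the duality $\|A\|_*=\max_{\|W\|\le 1}\mathrm{tr}(W^\top A)$. You obtain $\|XY\|_*\le\|X\|_*\|Y\|$ from it via $\|WY^\top\|\le\|Y\|$, and get the other half by transposition.

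Your route needs nothing beyond the SVD and makes the duality between the operator and nuclear norms explicit. The paper's route is shorter given the citations, and its intermediate bound is sharper than the minimum and symmetric in $X$ and $Y$, so it needs no transposition step. The triple-product and power bounds are handled the same way in both; your power bound is really a direct estimate, since the inductive hypothesis is never used.

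A side benefit of your argument: with conjugate transposes in place of transposes, it gives $|\mathrm{tr}\,A|\le\|A\|_*$ verbatim for complex $A$. That is the setting in which the paper later applies the trace bound, to the complex matrix $DU^*AU$ in \Cref{lem:ext-logdet}, even though the lemma is stated and proved only for real matrices.
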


Both displayed inequalities are consequences of the singular-value results of \citet[\S 7.4.1]{horn-johnson-2013}. Von Neumann's trace theorem \citep[Theorem~7.4.1.1]{horn-johnson-2013} gives $\mathrm{Re}\,\mathrm{tr}(AB^*) \leq \sum_i \sigma_i(A)\,\sigma_i(B)$; at $B = I$ every $\sigma_i(B) = 1$ and the trace of the real matrix $A$ is real, so $\mathrm{tr}\,A \leq \| A \|_*$, and applying this to $-A$ as well gives the absolute value. For the product bound, \citet[Corollary~7.4.1.3(d)]{horn-johnson-2013} gives $\sum_i \sigma_i(XB^*) \leq \sum_i \sigma_i(X)\,\sigma_i(B)$; at $B = Y^\top$ (so that $XB^* = XY$ and $\sigma_i(B) = \sigma_i(Y)$),
\[
    \| X Y \|_* \leq \sum_i \sigma_i(X)\,\sigma_i(Y) \leq \min\Bigl( \sigma_1(Y) \sum_i \sigma_i(X),\; \sigma_1(X) \sum_i \sigma_i(Y) \Bigr),
\]
which is the stated minimum. The three-factor form follows by applying the product bound twice, $\| X Y Z \|_* \leq \| X \|\, \| Y Z \|_* \leq \| X \|\, \| Y \|_*\, \| Z \|$, and the power bound by iterating $\| K^k \|_* \leq \| K^{k-1} \|_*\, \| K \|$ down to $\| K \|_*$.

We use three determinant identities in \Cref{sec:log-det}: a nuclear-norm bound for $\log|\det(I+K)|$, the matrix determinant lemma for a rank-one update, and a trace identity for $\log p(A)$ when $p$ is positive on the relevant spectral interval.

\begin{lemma}[Determinant estimates]\label{lem:ext-logdet}
For every $K \in \mathbb{R}^{d \times d}$, with the convention $\log 0=-\infty$,
\begin{equation}\label{eq:det-nuclear}
    \log |\det(I + K)| \leq \| K \|_*.
\end{equation}
For every symmetric positive definite $H \in \mathbb{R}^{d \times d}$, every $z \in \mathbb{R}^d$, and every $c \in \mathbb{R}$,
\begin{equation}\label{eq:det-rank-one}
    \det\bigl( H + c\, z z^\top \bigr) = \det(H)\,\bigl( 1 + c\, z^\top H^{-1} z \bigr).
\end{equation}
\end{lemma}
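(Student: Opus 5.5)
The plan is to prove the two displays separately, each by reduction to a spectral computation: the nuclear-norm bound \eqref{eq:det-nuclear} through eigenvalues of $K$ compared with its singular values, and the rank-one identity \eqref{eq:det-rank-one} through a symmetric change of basis that turns $H$ into the identity.

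\emph{Nuclear-norm bound.} If $\det(I+K)=0$, the left side of \eqref{eq:det-nuclear} is $-\infty$ under the stated convention, so there is nothing to prove; assume $\det(I+K)\neq0$.

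1. Let $\lambda_1,\ldots,\lambda_d\in\mathbb{C}$ be the eigenvalues of $K$, listed with algebraic multiplicity. The eigenvalues of $I+K$ are $1+\lambda_i$, so
\[
    |\det(I+K)| = \prod_i |1+\lambda_i| \leq \prod_i \bigl(1+|\lambda_i|\bigr) \leq \exp\Bigl(\sum_i |\lambda_i|\Bigr),
\]
using the triangle inequality in $\mathbb{C}$ and then $1+x\leq e^x$.

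2. Weyl's majorant inequality \citep[\S 5.6 / Thm.~5.6.9 and its relatives]{horn-johnson-2013} gives $\sum_i|\lambda_i(K)|\leq\sum_i\sigma_i(K)=\|K\|_*$. I would cite the weak majorization $\prod_{i\leq k}|\lambda_i|\leq\prod_{i\leq k}\sigma_i$, which implies the additive version, and check it against the source.

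3. Taking logarithms yields \eqref{eq:det-nuclear}.

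The only step that needs care is citing Weyl's inequality correctly. A self-contained fallback avoids eigenvalues altogether. The additive Weyl-type inequality $\sigma_{i+j-1}(I+K)\leq\sigma_i(I)+\sigma_j(K)$, taken with $i=1$, gives $\sigma_j(I+K)\leq 1+\sigma_j(K)$ for each $j$. Then
\[
    |\det(I+K)| = \prod_j\sigma_j(I+K) \leq \prod_j\bigl(1+\sigma_j(K)\bigr) \leq e^{\|K\|_*}.
\]
This fallback is arguably cleaner, and I would probably present it as the main argument.

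\emph{Matrix determinant lemma.}

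1. Since $H$ is symmetric positive definite, it has a symmetric positive definite square root $H^{1/2}$. Set $v:=H^{-1/2}z$. Then
\[
    H+c\,zz^\top = H^{1/2}\bigl(I+c\,vv^\top\bigr)H^{1/2},
\]
so by multiplicativity of the determinant, $\det(H+c\,zz^\top)=\det(H)\det(I+c\,vv^\top)$.

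2. The matrix $I+c\,vv^\top$ is the identity on $v^\perp$ and maps $v$ to $(1+c\|v\|^2)v$. If $v=0$, the identity is trivial. Otherwise its eigenvalues are $1$ with multiplicity $d-1$ and $1+c\|v\|^2$ once, so its determinant is $1+c\|v\|^2$.

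3. Finally $\|v\|^2=z^\top H^{-1}z$, which gives \eqref{eq:det-rank-one}.

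No real obstacle arises here; the rank-one identity is routine.
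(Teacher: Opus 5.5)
Your proposal is correct. For the rank-one identity the difference from the paper is cosmetic. You conjugate by $H^{1/2}$ so that the middle factor $I+c\,vv^\top$ is symmetric with an explicit eigenbasis. The paper instead writes $H+c\,zz^\top=H(I+uv^\top)$ with $u=cH^{-1}z$, a factorization that needs only invertibility of $H$, and uses that a rank-one matrix has at most one nonzero eigenvalue, equal to its trace.

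For the nuclear-norm bound, your first route is the paper's chain $|\det(I+K)|\le\prod_i(1+|\lambda_i|)\le e^{\sum_i|\lambda_i|}$. Where you cite Weyl's majorant theorem, the paper proves $\sum_i|\lambda_i(K)|\le\|K\|_*$ directly. It Schur-triangularizes, multiplies by a diagonal unitary of phases so the trace becomes $\sum_i|\lambda_i|$, and applies $|\mathrm{tr}\,B|\le\|B\|_*$ from \Cref{lem:ext-trace-nuclear} together with unitary invariance. That removes your citation worry: the \S5.6 pointer lands on matrix norms and the spectral-radius bound, not on Weyl's theorem. It also removes the implicit passage from multiplicative to additive majorization.

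Your preferred fallback is a genuinely different route that works only with singular values: $|\det M|=\prod_j\sigma_j(M)$ together with $\sigma_j(I+K)\le1+\sigma_j(K)$. This is valid. It becomes fully self-contained via Courant--Fischer: on the $(d-j+1)$-dimensional span of the right singular vectors of $K$ with indices $j,\ldots,d$, every unit $x$ satisfies $\|(I+K)x\|\le1+\sigma_j(K)$. The fallback avoids the complex spectrum of the non-normal $K$ altogether. The paper's route instead yields the more general inequality $\sum_i|\lambda_i(A)|\le\|A\|_*$ from a tool it has already recorded.
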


For \eqref{eq:det-nuclear} we first record that $\sum_i |\lambda_i(A)| \leq \| A \|_*$ for every $A \in \mathbb{C}^{d \times d}$, the eigenvalues counted with algebraic multiplicity. Schur triangularization \citep[Theorem~2.3.1]{horn-johnson-2013} supplies a unitary $U$ with $U^* A U = T$ upper triangular and $t_{ii} = \lambda_i(A)$. Let $D$ be the diagonal unitary with $d_{ii} := \overline{\lambda_i(A)}/|\lambda_i(A)|$ when $\lambda_i(A) \neq 0$ and $d_{ii} := 1$ otherwise, so that $\mathrm{tr}(D U^* A U) = \sum_i d_{ii} t_{ii} = \sum_i |\lambda_i(A)|$, the product $DT$ being upper triangular with those diagonal entries. Since $|\mathrm{tr}\, B| \leq \| B \|_*$ (\Cref{lem:ext-trace-nuclear}) and singular values are unitarily invariant, $\| D U^* A U \|_* = \| A \|_*$, and the record follows. Applying it to $K$ and using that the eigenvalues of $I + K$ are $1 + \lambda_i(K)$,
\[
    |\det(I + K)| = \prod_i \bigl| 1 + \lambda_i(K) \bigr| \leq \prod_i \bigl( 1 + |\lambda_i(K)| \bigr) \leq \exp\Bigl( \sum_i |\lambda_i(K)| \Bigr) \leq e^{\| K \|_*},
\]
by $1 + t \leq e^{t}$; taking logarithms gives \eqref{eq:det-nuclear}. At $K = (u/d) I$ the left side is $d\log(1 + u/d) \to u = \| K \|_*$ as $d$ grows, so the estimate cannot be improved in terms of $\| K \|_*$ alone. For \eqref{eq:det-rank-one}, write $H + c z z^\top = H\bigl( I + u v^\top \bigr)$ with $u := c H^{-1} z$ and $v := z$. The matrix $u v^\top$ has rank at most one, so at most one of its eigenvalues is nonzero, and that one equals its trace $v^\top u$; the eigenvalues of $I + uv^\top$ are therefore $1 + v^\top u$ together with $1$ repeated $d - 1$ times, and $\det(I + uv^\top) = 1 + v^\top u = 1 + c\, z^\top H^{-1} z$. Multiplying by $\det H$ gives \eqref{eq:det-rank-one}.

The next lemma applies the rank-one determinant identity to $\log p(A)$ when $p$ is a positive polynomial. We use it in \Cref{sec:log-det} with the polynomial that defines the frozen-Hessian Jacobian.

\begin{lemma}[Trace of a log-polynomial along a segment]\label{lem:ext-trace-segment}
Let $p$ be a real polynomial that is positive on $[\mu, L] \subset (0,\infty)$ and set $\phi := \log p$. Let $A_0, A_1$ be symmetric with spectra in $[\mu, L]$, and let $A_t := (1-t)A_0 + t A_1$, whose spectrum lies in $[\mu, L]$ for every $t \in [0,1]$. Then
\begin{equation}\label{eq:trace-segment}
    \mathrm{tr}\,\phi(A_1) - \mathrm{tr}\,\phi(A_0) = \int_0^1 \mathrm{tr}\bigl[ \phi'(A_t)\,(A_1 - A_0) \bigr]\, dt,
\end{equation}
with $\phi'(A_t)$ the matrix function of $A_t$ built from $\phi' = p'/p$. Consequently:
\begin{enumerate}
\item\label{itm:trace-lip} $\bigl| \mathrm{tr}\,\phi(A_1) - \mathrm{tr}\,\phi(A_0) \bigr| \leq \Lambda\, \| A_1 - A_0 \|_*$, where $\Lambda := \sup_{[\mu,L]} | \phi' |$.
\item\label{itm:trace-rankone} If $A_1 = A_0 + u u^\top$ and $\phi'(h) = -g(h)/h$ with $0 \leq g \leq \bar g$ on $[\mu, L]$, then
\begin{equation}\label{eq:trace-rank-one}
    -\bar g\,\log\bigl( 1 + u^\top A_0^{-1} u \bigr) \; \leq \; \mathrm{tr}\,\phi(A_1) - \mathrm{tr}\,\phi(A_0) \; \leq \; 0.
\end{equation}
\end{enumerate}
\end{lemma}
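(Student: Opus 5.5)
The segment identity \eqref{eq:trace-segment} comes from a scalar reduction. The function $t\mapsto \mathrm{tr}\,\phi(A_t)$ has derivative $\mathrm{tr}[\phi'(A_t)(A_1-A_0)]$, and I will then integrate over $[0,1]$.

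First I check the spectral claim for $A_t$. For a unit vector $v$, $\langle A_t v,v\rangle=(1-t)\langle A_0v,v\rangle+t\langle A_1v,v\rangle\in[\mu,L]$. So $\phi$ and $\phi'$ are smooth on a neighborhood of the spectrum, since $p>0$ there. Next I note that $\mathrm{tr}\,\phi(A)=\log\det p(A)$ for symmetric $A$ with spectrum in $[\mu,L]$. Because $p$ is a polynomial, $t\mapsto p(A_t)$ is a polynomial matrix path. Jacobi's formula then gives
\[
\tfrac{d}{dt}\log\det p(A_t)=\mathrm{tr}\bigl[p(A_t)^{-1}\tfrac{d}{dt}p(A_t)\bigr].
\]
The derivative of $p(A_t)$ is a sum of noncommuting terms $A_t^{j}(A_1-A_0)A_t^{k-1-j}$. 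Under the trace, cyclicity collapses these to $\mathrm{tr}[p(A_t)^{-1}p'(A_t)(A_1-A_0)]$; the inverse commutes with $A_t$, so the cyclic moves are legitimate. This equals $\mathrm{tr}[\phi'(A_t)(A_1-A_0)]$, and the fundamental theorem of calculus gives \eqref{eq:trace-segment}. I expect this noncommutative derivative to be the only delicate point. Cyclicity of the trace handles it, with no need for a Daleckii--Krein argument.

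For item~\ref{itm:trace-lip}, I apply \Cref{lem:ext-trace-nuclear}. It gives $|\mathrm{tr}[\phi'(A_t)(A_1-A_0)]|\le\|\phi'(A_t)\|\,\|A_1-A_0\|_*$, and $\|\phi'(A_t)\|\le\Lambda$ because $A_t$ is symmetric with spectrum in $[\mu,L]$. Integrating over $t$ gives the bound.

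For item~\ref{itm:trace-rankone}, the integrand is $\mathrm{tr}[\phi'(A_t)uu^\top]=u^\top\phi'(A_t)u=-u^\top g(A_t)A_t^{-1}u$. In an eigenbasis of $A_t$ this is $-\sum_i g(h_i)c_i^2/h_i$ with $c_i^2\ge0$. It is therefore nonpositive, which gives the upper bound $0$, and it is at least $-\bar g\,u^\top A_t^{-1}u$. Since $A_t=A_0+t\,uu^\top$, I get
\[
\int_0^1 u^\top A_t^{-1}u\,dt=\int_0^1\frac{d}{dt}\log\det(A_0+t\,uu^\top)\,dt=\log\bigl(1+u^\top A_0^{-1}u\bigr),
\]
by Jacobi's formula and the rank-one identity \eqref{eq:det-rank-one}. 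Alternatively, Sherman--Morrison gives $u^\top A_t^{-1}u=q/(1+tq)$ with $q=u^\top A_0^{-1}u$, which integrates to the same $\log(1+q)$. This yields the lower bound in \eqref{eq:trace-rank-one}.
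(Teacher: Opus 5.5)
Your proposal is correct and follows essentially the same route as the paper. You use Jacobi's formula on $\log\det p(A_t)$, collapse the noncommutative derivative of $p(A_t)$ by cyclicity together with the fact that $p(A_t)^{-1}$ commutes with powers of $A_t$, and integrate. Item~1 then follows from $|\mathrm{tr}\,X|\le\|X\|_*$ and $\|XY\|_*\le\|X\|\,\|Y\|_*$. Item~2 follows from the sign and ordering of $g(A_t)A_t^{-1}$ combined with the rank-one determinant identity, which integrates $u^\top A_t^{-1}u$ to $\log(1+u^\top A_0^{-1}u)$. One small point of phrasing: cyclicity of the trace always holds, and the commutation is what lets you merge $A_t^{k-1-j}$ with $A_t^{j}$ across $p(A_t)^{-1}$.
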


\begin{proof}
Each $A_t$ is a convex combination of $A_0$ and $A_1$, so $\mu I \preceq A_t \preceq L I$ and every eigenvalue of $p(A_t)$ is positive; hence $\det p(A_t) > 0$ and $\mathrm{tr}\,\phi(A_t) = \log\det p(A_t)$, a smooth function of $t$. Write $\dot{A} := A_1 - A_0$ and $p(x) = \sum_k c_k x^k$. Jacobi's formula \citep[\S 0.8.10]{horn-johnson-2013} gives $\frac{d}{dt}\log\det p(A_t) = \mathrm{tr}\bigl[ p(A_t)^{-1}\,\frac{d}{dt}p(A_t) \bigr]$, and differentiating each power,
\[
    \frac{d}{dt}p(A_t) = \sum_k c_k \sum_{j=0}^{k-1} A_t^{j}\, \dot{A}\, A_t^{k-1-j}.
\]
The matrix $p(A_t)^{-1}$ is a function of $A_t$ and so commutes with every power of it, so cyclicity of the trace moves the trailing $A_t^{k-1-j}$ to the front and merges it with $A_t^{j}$: each of the $k$ summands contributes $\mathrm{tr}[p(A_t)^{-1}A_t^{k-1}\dot{A}]$, independently of $j$. Summing over $k$ leaves $\mathrm{tr}[p(A_t)^{-1}p'(A_t)\dot{A}] = \mathrm{tr}[\phi'(A_t)\dot{A}]$, and integrating over $t \in [0,1]$ gives \eqref{eq:trace-segment}.

\emph{Item~\ref{itm:trace-lip}.} By $|\mathrm{tr}\, X| \leq \| X \|_*$ and $\| X Y \|_* \leq \| X \|\,\| Y \|_*$ (\Cref{lem:ext-trace-nuclear}), the integrand is at most $\| \phi'(A_t) \|\,\| \dot{A} \|_*$ in absolute value; and $\phi'(A_t)$ is symmetric with eigenvalues $\phi'(\lambda_i(A_t))$, so $\| \phi'(A_t) \| \leq \Lambda$ for every $t$.

\emph{Item~\ref{itm:trace-rankone}.} Here $\dot{A} = u u^\top$, so the integrand is $u^\top \phi'(A_t) u = -u^\top g(A_t) A_t^{-1} u$. The matrices $g(A_t)$ and $A_t^{-1}$ are functions of the same symmetric matrix, hence simultaneously diagonalizable, and $0 \leq g \leq \bar g$ with $A_t^{-1} \succ 0$ gives $0 \preceq g(A_t)A_t^{-1} \preceq \bar g\, A_t^{-1}$. The integrand therefore lies in $[-\bar g\, u^\top A_t^{-1} u,\, 0]$. Applying \eqref{eq:det-rank-one} to $A_t = A_0 + t u u^\top$ gives $\det A_t = \det(A_0)(1 + t s)$ with $s := u^\top A_0^{-1} u \geq 0$, and Jacobi's formula applied to $\log\det A_t$ evaluates the same derivative two ways, $u^\top A_t^{-1} u = \frac{d}{dt}\log\det A_t = s/(1 + ts)$. Integrating over $t \in [0,1]$ gives $\int_0^1 u^\top A_t^{-1}u\, dt = \log(1+s)$, which is \eqref{eq:trace-rank-one}.
\end{proof}

\begin{theorem}[Uniform convergence for regularized GLMs]\label{thm:uniform-convergence}
In addition to \Cref{ass:regularity,ass:population} and the sample-average structure of \Cref{subsec:util-assumptions}, suppose the per-sample loss has the generalized linear form of \eqref{eq:empirical-risk}, $\ell(w; (x, y)) = \ell(\langle x, w \rangle, y)$, with $t\mapsto\ell(t,y)$ convex, $| \ell'(t, y) | \leq L_1$ for all $t, y$ (as in \Cref{ass:bounded-loss}) and $\| x \| \leq \zeta$ for $\mathcal{D}$-almost every sample (the distributional analogue of \Cref{ass:bounded-features}); then \Cref{ass:lipschitz} holds with $L_\ell = L_1 \zeta$. For any $a > 0$ and $\delta_S \in (0,1)$, with probability at least $1 - \delta_S$ over $S \sim \mathcal{D}^n$ the following holds simultaneously for all $w \in \mathbb{R}^d$:
\[
    \mathcal{R}_\mu(w) - \mathcal{R}_\mu(w^*) \leq (1+a)\bigl[ F(w; S) - F(w_S^*; S) \bigr] + \frac{8(1 + 1/a)\, L_\ell^2 \bigl( 32 + \log(1/\delta_S) \bigr)}{\mu n}.
\]
\end{theorem}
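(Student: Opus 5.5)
The plan is to read the statement off \citet[Theorem~1]{uniform-convergence-fastrates} by matching their setting to ours term by term; no new probabilistic argument is needed.

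\emph{Setting of the source.} Their objective has the form
\[
f(w;\theta) = g\bigl(\langle w, \phi(\theta)\rangle;\theta\bigr) + r(\theta),
\]
where $g(\cdot;\theta)$ is convex and $L_g$-Lipschitz and $\|\phi(\theta)\| \leq B$ almost surely. To it they add a $\lambda$-strongly convex regularizer, here $\tfrac{\lambda}{2}\|w\|^2$, and compare the population and empirical regularized objectives. Their conclusion, holding uniformly over the convex domain $W$ with probability at least $1-\delta$, is an oracle inequality
\[
F(w) - F(w^\star) \leq (1+a)\bigl(\hat F(w) - \hat F(\hat w)\bigr) + \frac{8(1+1/a)\,L_g^2 B^2\,(32 + \log(1/\delta))}{\lambda n}.
\]
Here $w^\star$ is the population minimizer and $\hat w$ the empirical minimizer.

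\emph{Instantiation.} First I take $\theta = (x,y)$, $\phi(\theta) = x$, $g(t;\theta) = \ell(t,y)$, $W = \mathbb{R}^d$ and $\lambda = \mu$. Then $F = \mathcal R_\mu$ and $\hat F = F(\cdot\,;S)$, so $w^\star = w^*$ and $\hat w = w_S^*$. Both minimizers exist and are unique by \Cref{ass:regularity,ass:population} and \Cref{lem:ext-minimizer}.

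Second, I check the source's hypotheses from ours:
\begin{itemize}
\item convexity of $g$ in $t$ is assumed;
\item $|\ell'| \leq L_1$ makes $g$ $L_1$-Lipschitz, so $L_g = L_1$;
\item the almost-sure feature bound gives $B = \zeta$;
\item the regularizer $\tfrac{\mu}{2}\|w\|^2$ is $\mu$-strongly convex with respect to the Euclidean norm, which is self-dual, as the source requires.
\end{itemize}
Hence $L_g B = L_1\zeta$. The side claim $L_\ell = L_1\zeta$ in \Cref{ass:lipschitz} follows from the same computation: $\|\nabla_w \ell(\langle x,w\rangle,y)\| = |\ell'|\,\|x\| \leq L_1\zeta$. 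Substituting $L_g^2 B^2 = L_\ell^2$, $\lambda = \mu$ and $\delta = \delta_S$ reproduces the displayed bound, with constants $8(1+1/a)$ and $32$ carried over unchanged.

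\emph{Main obstacle.} The step that needs care is confirming that the source's conclusion really holds simultaneously for every $w$ in an unbounded domain. I will check that their domain $W$ may be any closed convex set, including $\mathbb{R}^d$, and that no norm bound on $w$ enters the constant. I expect this to hold because strong convexity localizes: their proof bounds the Rademacher complexity of the sublevel sets $\{w : \hat F(w) - \hat F(\hat w) \leq \epsilon\}$, whose radius is controlled by $\sqrt{2\epsilon/\mu}$. The fast-rate bound therefore self-normalizes through the excess-risk term rather than through $\|w\|$.

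If their statement turns out to be phrased with a different convention — for example a sum over samples instead of an average, or the regularizer written as $\lambda\|w\|^2$ — I will rescale $\lambda$ accordingly and recheck the factor $8(1+1/a)$. Measurability of the uniform event is inherited from the source and is not revisited.
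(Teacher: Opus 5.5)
Your proposal follows the paper's route exactly: the paper's proof is also a direct instantiation of \citet[Theorem~1]{uniform-convergence-fastrates} with $\phi(x,y)=x$, $g(t;(x,y))=\ell(t,y)$, $L_gB=L_1\zeta=L_\ell$, strong-convexity parameter $\mu$, and the bound read as holding for all $w\in\mathbb{R}^d$. Two small slips do not affect the instantiation: in the source the regularizer is $r(w)$, which is itself the $\lambda$-strongly convex term rather than an extra $r(\theta)$, and the source localizes through population sublevel sets $\{w:\mathcal{R}_\mu(w)-\mathcal{R}_\mu(w^*)\le\epsilon\}$, of radius $\sqrt{2\epsilon/\mu}$ around $w^*$, not through empirical sublevel sets.
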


This is the fast-rate guarantee of \citet[Theorem~1]{uniform-convergence-fastrates} for regularized generalized linear objectives, instantiated with scalar Lipschitz constant $L_1$, feature bound $\zeta$, and the $\mu$-strong convexity of the population risk (\Cref{ass:population}); the generalized linear structure is what its Rademacher-contraction argument uses, and we invoke the result only in that form. Its two constants enter as the product $L_1\zeta = L_\ell$, and the guarantee holds for all $w \in \mathbb{R}^d$. The result is stated for each fixed $a > 0$, and its high-probability event depends on $a$; the applications fix $a = 1$ in advance.

\end{document}